\newcommand{\td}[1]{\Tilde{#1}}
\def\eqref#1{equation~\ref{#1}}
\def\1{\bm{1}}
\def\xitilde{{\Tilde{\mathbf{x}}_i}}
\newcommand{\parfrac}[2]{\frac{\partial#1}{\partial#2}}
\def\rvx{{\mathbf{x}}}
\def\rvy{{\mathbf{y}}}
\def\rvz{{\mathbf{z}}}
\DeclareMathAlphabet{\mathsfit}{\encodingdefault}{\sfdefault}{m}{sl}
\SetMathAlphabet{\mathsfit}{bold}{\encodingdefault}{\sfdefault}{bx}{n}
\def\gD{{\mathcal{D}}}
\def\gL{{\mathcal{L}}}
\def\gN{{\mathcal{N}}}
\def\gP{{\mathcal{P}}}
\def\gR{{\mathcal{R}}}
\def\gW{{\mathcal{W}}}
\def\gX{{\mathcal{X}}}
\def\gY{{\mathcal{Y}}}
\DeclareMathOperator*{\E}{\mathbb{E}}
\newcommand{\R}{\mathbb{R}}
\DeclareMathOperator*{\argmin}{arg\,min}
\newcommand{\iidsim}{\overset{\text{i.i.d}}{\scalebox{1.5}[1]{\ensuremath{\sim}}}}
\newcommand{\lstd}[2]{\mathcal{L}^{\text{std}}\left(#1,#2\right)}
\newcommand{\laug}[2]{\mathcal{L}^{\text{aug}}\left(#1,#2\right)}
\NewDocumentCommand{\tr}{s o m}{%
  \IfBooleanTF{#1}
    {\operatorname*{Tr}}
    {\operatorname{tr}}%
  \IfNoValueTF{#2}
    {#3}
    {_{#2}#3}%
}
\let\oldlim\lim
\renewcommand{\lim}{\mathop{\oldlim}\limits}
\newcommand{\smallimplies}{\scalebox{0.75}{$\implies$}}
\newcommand{\langdaug}{LangDAug }
\pgfplotsset{compat=1.18}
\theoremstyle{plain}
\newtheorem{theorem}{Theorem}[section]
\newtheorem{lemma}[theorem]{Lemma}
\newtheorem{corollary}[theorem]{Corollary}
\theoremstyle{definition}
\theoremstyle{remark}
\newcolumntype{C}[1]{>{\centering\arraybackslash}p{#1}}
\icmltitlerunning{LangDAug: Langevin Data Augmentation for Multi-Source Domain Generalization in Medical Image Segmentation}
\begin{document}

\twocolumn[
\icmltitle{LangDAug: Langevin Data Augmentation for Multi-Source Domain Generalization in Medical Image Segmentation}



\icmlsetsymbol{equal}{*}

\begin{icmlauthorlist}
\icmlauthor{Piyush Tiwary}{yyy}
\icmlauthor{Kinjawl Bhattacharyya}{yyy}
\icmlauthor{Prathosh A.P.}{yyy}
\end{icmlauthorlist}

\icmlaffiliation{yyy}{Department of Electrical Communication Engineering, Indian Institute of Science, Bangalore, India}

\icmlcorrespondingauthor{Piyush Tiwary}{piyushtiwary@iisc.ac.in}

\icmlkeywords{Medical Image Segmentation, Domain Generalization, Energy-Based Models, ICML}

\vskip 0.3in
]



\printAffiliationsAndNotice{}  

\begin{abstract}
Medical image segmentation models often struggle to generalize across different domains due to various reasons. Domain Generalization (DG) methods overcome this either through representation learning or data augmentation (DAug). While representation learning methods seek domain-invariant features, they often rely on ad-hoc techniques and lack formal guarantees. DAug methods, which enrich model representations through synthetic samples, have shown comparable or superior performance to representation learning approaches. We propose LangDAug, a novel \textbf{Lang}evin \textbf{D}ata \textbf{Aug}mentation for multi-source domain generalization in 2D medical image segmentation. LangDAug leverages Energy-Based Models (EBMs) trained via contrastive divergence to traverse between source domains, generating intermediate samples through Langevin dynamics. Theoretical analysis shows that LangDAug induces a regularization effect, and for GLMs, it upper-bounds the Rademacher complexity by the intrinsic dimensionality of the data manifold. Through extensive experiments on Fundus segmentation and 2D MRI prostate segmentation benchmarks, we show that LangDAug outperforms state-of-the-art domain generalization methods and effectively complements existing domain-randomization approaches. The codebase for our method is available at \url{https://github.com/backpropagator/LangDAug}.

\end{abstract}    
\section{Introduction}
\label{sec:intro}

\begin{figure*}[!t]
    \centering
    \resizebox{\textwidth}{!}{\input{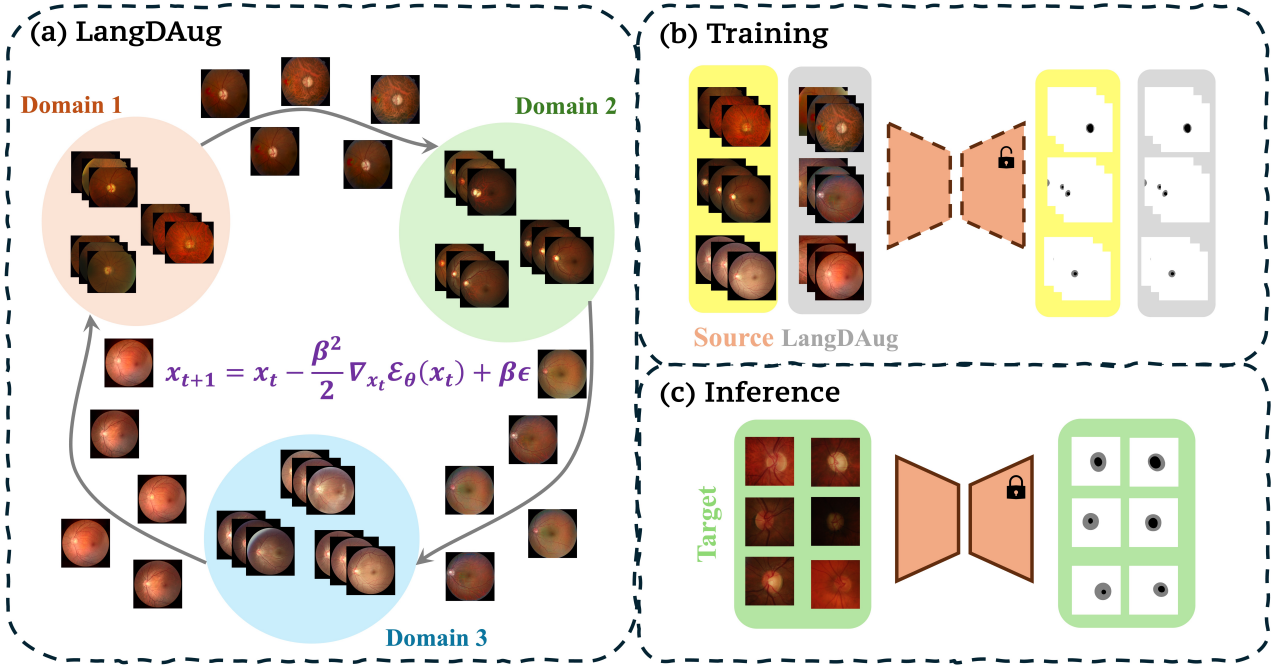}}
\caption{Overview of the proposed method: (a) We run Langevin dynamics (LD) using trained EBMs to transverse between different domains. The intermediate iterates of LD (called Langevin samples) are stored and used for augmentation, (b) The segmentation model is trained with original domain samples as well as Langevin samples, (c) The trained model is then deployed on unseen target domains.}
\label{fig:block_diagram}
\end{figure*}


Recent years have seen remarkable advancements in the field of Medical image segmentation, particularly through supervised learning approaches leveraging Convolutional Neural Networks (CNNs)~\citep{vnet, unet, unet++, dunet, nnunet, dlmia, misegsurv, misegrev, TIWARY2024103031}. However, a persistent challenge remains: these models often struggle to generalize from the training set (source domain) to unseen test sets (target domain)~\citep{dat, uda, dgafl}.

\par The generalization problem stems from the limitations of the 
Empirical Risk Minimization (ERM) framework, which often performs poorly on 
out-of-distribution (OOD) test or target domains~\citep{shai_shalev,pml1Book,
duda_hart,hajek_slt}. This is primarily due to the i.i.d. assumption underlying ERM, 
where models minimize the loss solely with respect to the training data distribution. 
As a result, ERM-trained models exhibit degraded performance on test data from 
different distributions, violating the i.i.d. assumption and leading to sub-optimal 
generalization to unseen domains~\citep{vapnik_slt}.

\par The challenge is particularly pronounced in medical imaging, where domain 
shifts significantly impact model performance. These shifts arise from factors 
such as differences in imaging protocols across institutions, variations in 
equipment specifications, diverse patient attributes (e.g., age, gender, demographics), 
and disparities between modalities (e.g., CT, MRI, Ultrasound). As noted by 
\citet{damiasurv}, such shifts hinder the deployment of machine learning models 
in new clinical settings. The discrepancies between training and target domains 
can result in inconsistent performance, potentially compromising the model's 
clinical utility and patient care. Addressing these issues is critical for 
developing robust and generalizable models.

\par Various approaches have been proposed to address domain 
generalization. A key strategy is representation learning, which seeks to learn 
either domain-invariant (DI) features or sufficiently diverse features to generalize 
to unseen domains. These methods often disentangle domain-invariant and domain-specific (DS) 
features, with some approaches discarding domain-specific features entirely and others combining 
DI-DS pairs to create richer features. Another approach is data augmentation, 
which enhances model representations by supplementing original samples with 
augmented ones. As shown by \cite{lost_dg}, data augmentation can achieve comparable 
performance to representation learning while being simpler to implement.

\par Our work adopts the data augmentation approach to improve multi-source domain generalization in 2D medical image segmentation. We propose a novel \textbf{Lang}evin \textbf{D}ata \textbf{Aug}mentation (\textbf{LangDAug}) scheme that leverages Langevin dynamics (LD) iterates to augment the original dataset. Specifically, we train Energy-Based Models (EBMs) to traverse between different source domains using a contrastive-divergence (CD) objective. This traversal is achieved via iterative gradient-based MCMC methods like LD, with the iterates representing samples from the manifold bridging the source domains. These samples are then incorporated into the ERM framework for downstream task training, potentially enhancing the model's ability to generalize to unseen domains. Our key contributions are:
\begin{enumerate}
    \item We introduce LangDAug, an augmentation technique for multi-source domain generalization in 2D medical image segmentation. By training EBMs for each pair of source domains via CD, we generate intermediate domain samples that augment the source domains within the ERM framework.
    \item We provide theoretical insights into LangDAug's effects, demonstrating its regularization impact on ERM training. This leads to Hessian-smoothing and flatter minima, aiding generalization to unseen domains.
    \item We conduct rigorous evaluations on two medical image segmentation benchmarks: fundus segmentation and 2D MRI image segmentation. Our results show that LangDAug outperforms state-of-the-art domain generalization methods and can enhance the performance of domain-randomization based generalization approaches.
\end{enumerate}

\section{Related Works}
\label{sec:related_works}

\subsection{Domain Generalization}
\label{sec:dom_gen}

Domain Generalization (DG) frameworks~\citep{dag, survdg, dgsurv, dgal, dcanet} aim to train a single model using one or multiple source domains to achieve robust performance on unseen target domains. These methods typically focus on either learning domain-invariant features or developing diverse features capable of generalizing to target domains. 
Meta-learning DG methods~\citep{metadg, ssmetadg, feddg, prosegmeta, maml, metareg, metavib, tiwary2024bayesian, tiwary2025adapt} simulate domain shifts by splitting source data into meta-train/test sets. 
\par Current state-of-the-art DG approaches can be broadly categorized into two main types~\citep{survdg}: representation learning-based methods and data manipulation-based methods. Representation learning methods~\citep{dgcir, dgifr, dfi, dmt, fdmdg, umdg, fadg, mixstyle, dofe, trid, ibnnet, dcac} are further subdivided into domain-invariant (DI) representation learning and feature disentanglement approaches.
Domain-invariant methods~\citep{mtl, mda, dgafl, approxdg, ibnnet, irm} minimize cross-domain feature discrepancies—e.g., Fish aligns inter-domain gradients, Fishr regularizes gradient variances, and Hutchinson matches Hessian matrices. While effective, these may sacrifice domain-specific (DS) features to balance single- and multi-domain performance.
Another line of work uses test time adaptation (TTA) for DAug using EBMs~\citep{tta-ebm,tea-ebm,TIWARY2024103031}. E.g., \citet{tta-ebm} first trains EBM on source domain and during testing, they use LD to convert test samples using these trained EBMs. However, this method has a fundamental flaw akin to that of data coverage problem faced in score-based models~\citep{ncsn}. Specifically, since the trained EBMs have never seen the test data, the assigned energy to test data manifold could be arbitrary. 
This renders TTA unreliable for domain adaptation.

\par Feature disentanglement approaches~\citep{expdg, domain2vec, diva, l2l, stabledg, dgcausal} aim to separate representations into domain-invariant and domain-specific components, allowing combinations that yield diverse representations for downstream tasks. Mixstyle~\citep{mixstyle} facilitates generalization by blending instance-level feature statistics across domains, generating novel styles in feature space. RAM~\citep{ram_dsir} promotes segmentation generalization by mixing amplitude spectra from different domains and incorporating a domain-specific image restoration module. TriD~\citep{trid} supports generalization in 2D medical image segmentation through uniform sampling and channel-wise style mixing of feature statistics, broadening feature space coverage and enhancing robustness. Our work focuses on this promising stream of data manipulation methods and propose a novel approach called Langevin Data Augmentation (LangDAug) for domain generalization.

\subsection{Data Manipulation based DG}



Owing to the difficulty in obtaining large datasets, data manipulation-based domain generalization (DG) methods \cite{randconv, geotextaug, textrand, bigaug, imggendg, modeldg, multiitm, gandg, mts, ram_dsir, feddg} are often sought in medical settings, to enhance training set diversity. These methods are categorized \cite{survdg} into (i) data augmentation (DAug) and (ii) data generation. DAug techniques \cite{dgsr, sdr, ddaig, ssdgss} apply some combination of operations such as flipping, rotation, scaling, cropping, or model-based transformations to reduce overfitting, while data generation methods \cite{mcitdg, dgis, mixup, gnddg, pden} utilize generative models  to create diverse data that mirrors the training distribution.
MTS \cite{mts} enhances model-agnostic meta-learning for DG via task augmentation through mixed task sampling and a meta-update objective, effectively mitigating task-level overfitting. 
RandConv\citep{randconv} uses multi-scale random convolutions as a DAug technique to improve model robustness.
FedDG \cite{feddg} adopts a federated DG strategy that preserves privacy by exchanging amplitude spectra between clients and employs continuous frequency space interpolation and boundary-oriented episodic learning to bolster model generalization to unseen domains.

\section{Proposed Methodology}
\label{sec:proposed_method}

\subsection{Problem Setup and Method Overview}
\label{sec:prob_setup}

We consider the standard domain generalization setup in which, we have $n$ domains (also called \emph{source} domains) $\{\gD_i\}_{i=1}^{n}$ where each domain admits and underlying probability density $\gD_i \sim \gP_{\gD_i}(x,y)$ on a common support $\gX\times\gY$.
Further, we are given $n_i$ samples from domain $\gD_i$ that are independent and identically distributed, $\{\rvx_j, \rvy_j\}_{j=1}^{n_i} \iidsim \gP_{\gD_i}(x, y)$. The goal of an ERM-framework is to learn a parametric mapping $f_\theta(\cdot):\gX\rightarrow\gY$ such that it minimizes the empirical risk. Particularly, the parameters $\theta$ are obtained via solving the following optimization problem:
\begin{align}
    \underset{\theta}{\argmin}~\E\left[\ell\left(f_\theta(\rvx), \rvy\right)\right] \approx \underset{\theta}{\argmin}~ \frac{1}{N}\sum_{i=1}^{N} \ell\left(f_\theta(\rvx_i), \rvy_i\right) \nonumber
\end{align}
where $N = \sum_{i} n_i$. Formally, the True risk is approximated with an unbiased estimate using law of large numbers to get the Empirical risk. This is done in order to obtain a tractable objective function which can be minimized using gradient-based optimization methods. However, since the approximation only takes the provided $n$ domains into account, it is not able to generalize on unseen domain (also called \emph{target} domain) $\gD_{n+1} \notin \{\gD_i\}_{i=1}^{n}$. The goal of domain generalization is to aleviate this issue by obtaining parameter $\theta$ that is able to generalize well on unseen domains. 

\par To address this issue, we adopt a data-augmentation-based 
approach by introducing new samples in the ERM framework to cover broader support 
in the $\gX\times\gY$ domain, improving the True risk estimate. Specifically, 
we train Energy-Based Models (EBMs) to traverse between source domain pairs using 
gradient-based Markov Chain Monte Carlo (MCMC) methods, such as Langevin dynamics (LD). 
The intermediate iterates generated during this process act as `bridge' samples 
between domains. We leverage these intermediate samples as augmentation data, 
referred to as \emph{Langevin-data Augmentation}, to enhance the 
model's generalization to unseen target domains.

While LangDAug can be potentially used for normal DG scenarios, we note that there are certain factors that guided our application design. The effectiveness of LangDAug depends on the ability of EBMs to effectively traverse and interpolate between source domain distributions. Such traversal becomes especially natural when source domains share structured similarities or consistent underlying factors of variation. Medical imaging data typically demonstrates structured variations, predominantly reflected through differences in amplitude spectra across domains~\citep{ram_dsir,feddg}. EBMs have been shown to excel at capturing and modeling variations in amplitude spectra~\citep{tancik2020fourier,du2020compositional}, making them suited for the domain variations characteristic of medical imaging data. Thus, LangDAug’s capability aligns well with the domain shift challenges encountered specifically in medical image segmentation.

\subsection{EBMs for Inter-Domain Transversal}
\label{sec:ebm-transversal}

We train Energy-Based Models (EBMs)~\citep{igebm,letit} to traverse between pairs of source domains. Consider two distinct domains  $\gD_i$ and $\gD_j$ where $i\neq j$. To train an EBM, $E_{\theta_{ij}}$ that can transverse from $\gD_i$ to $\gD_j$, we employ the contrastive divergence loss objective~\citep{hinton2002training}, $\gL_{CD}$. The gradient of this objective is:
\begin{align}
    \nabla_{\theta_{ij}} \gL_{CD} =& \E_{\gP_{\gD_j}}\left[\nabla_{\theta_{ij}} E_{\theta_{ij}}(\rvx)\right] - \E_{\gP_{\theta_{ij}}}\left[\nabla_{\theta_{ij}} E_{\theta_{ij}}(\rvx)\right] \label{eq:CD} \\
    \text{where,} \gP_{\theta_{ij}} =& \frac{\exp{\left(-E_{\theta_{ij}}(\rvx)\right)}}{\left(Z_{\theta_{ij}} = \int_{\gX} \exp{\left(-E_{\theta_{ij}}(\rvx)\right)}d\rvx\right)}
\end{align}
We approximate the expectations in Eq.~\ref{eq:CD} using finite sample averages from their respective distributions. Since sampling directly from $\gP_{\theta_{ij}}$ is intractable due to the partition function $Z_{\theta_{ij}}$, we employ MCMC methods, specifically Langevin dynamics (LD).
\par The LD iteration is initialized with a sample from domain $\gD_i$ and then runs for $K$ steps using $E_{\theta_{ij}}$ as the kernel to approximate sampling from $\gP_{\theta_{ij}}$. Mathematically,
\begin{align}
    &\rvx_{t+1} = \rvx_{t} - \frac{\alpha^2}{2}\nabla_{\rvx_t}E_{\theta_{ij}}(\rvx_t) + \alpha\epsilon, \label{eq:LD}
\end{align}
where, $\rvx_0\sim \gP_{\gD_{i}}(x)$. Thus, we learn $\theta_{ij}$ using Eq.~\ref{eq:CD}, where the second term is approximated using samples from LD initialized with a sample from $\gD_i$. The LD iteration can be interpreted as starting from a sample under distribution $\gP_{\gD_i}$ and gradually transforming it into a sample from $\gP_{\gD_j}$. In other words, the LD updates traverse from domain $\gD_i$ to domain $\gD_j$.
The intermediate samples generated during the LD process can be viewed as forming a bridge between the two domains. We train such an EBM for all $2\tbinom{n}{2}$ possible pairs within $n$ source domains.

\subsection{Langevin Data Augmentation}
\label{sec:landaug}
As explained previously, we learn an EBM for transversal between each pair of domains. Here, we explain how to generate the augmentation samples. Consider the EBM $E_{\theta_{ij}}$ which can transverse from domain $\gD_i$ to domain $\gD_j$. To generate samples for augmentation, we take a sample $\{\rvx_j, \rvy_j\}\in\gD_i$ and run LD for $K$ steps, starting from $\rvx_j$:
\begin{align}
    &\rvx_{j}^{t+1} = \rvx_{j}^{t} - \frac{\beta^2}{2}\nabla_{\rvx_{j}^{t}}E_{\theta_{ij}}(\rvx_{j}^{t}) + \beta\epsilon, \label{eq:LD_aug}
    ~\text{where,}~\rvx_{j}^{0} = \rvx_j
\end{align} The above process gives rise to $K$-intermediate samples $\{\rvx_{j}^{t}\}_{t=1}^{K}$. We use the samples $\{\rvx_{j}^{t}, \rvy_j\}_{j=1,t=1}^{n_i,K}$ (named `Langevin' data) for augmentation during ERM-based learning\footnote{Note that we retain  original labels while augmentation because, we empirically observe that the Langevin samples preserve the content of $\rvx_j$ (see Supplementary for visualizations).}. On a closer look, we can see that for a given $k$, $\gD_{ij}^{k} \triangleq \{\rvx_j^k, \rvy_j\}_{j=1}^{n_i}$ follow the same distribution as they are obtained via $k$-step LD updates. This can be viewed as samples from a new domain altogether. Hence, we can see that we are able to obtain data from new novel domains by transversing between the original source domains. While the original dataset consisted of $D_{src} \triangleq \underset{i}{\bigcup}~\gD_i$, the augmented dataset consists of $D_{aug} \triangleq \underset{i\neq j, k}{\bigcup}~\gD_{ij}^k$. This shows that we can generate novel data-points by simply transversing between source domains in a meaningful way. We refer the reader to Supplementary for all the implementation details.
\section{Theoretical Analysis}
\label{sec:theory}

\textbf{Analysis Overview}: Our theoretical analysis reveals how LangDAug naturally induces regularization and improves generalization. We follow steps similar to \citet{arora_dropout, mixup_generalization} by first demonstrating that LangDAug regularizes the directional derivatives of $f_\theta(\cdot)$ for any parametric model (Thm.~\ref{thm:reg_effect}). This insight leads us to examine generalized linear models (GLMs), where we characterize these regularization effects more precisely (Cor.~\ref{cor:reg_effect_glm}). We then establish bounds on the Rademacher complexity of the resulting function class (Thm.~\ref{thm:rad_bound}) and use these to bound the generalization gap between true and empirical risk (Cor.~\ref{cor:gen_gap}).

\textbf{Notations}: We denote a general parametric loss function $\ell(\theta, \rvz)$, where $\theta\in\Theta\subseteq\R^d$ and $\rvz=(\rvx, \rvy)$ denotes the input output pair. Further, we consider the training dataset $\gD = \{\rvx_i, \rvy_i\}_{i=1}^{k} \in \gX\times\gY$ such that they follow an underlying distribution, $\rvz_i = (\rvx_i, \rvy_i) \iidsim \gP_{\gD}(\cdot)$. Further, we denote $\td{\rvx}_i(\beta) = \rvx_i - \frac{\beta^2}{2}\nabla_{\rvx}\log p(\rvx_i) + \beta\varepsilon$, $\varepsilon\sim\gN(0,I)$ which is the sample obtained by running one-step LD starting from $\rvx_0\sim\gP_{\gD}(x)$ (we drop the dependence on $\beta$ for brevity). We denote $\td{\rvz}_i = (\td{\rvx}_i, \rvy_i)$ as the data pair used for augmentation. Further, we denote the model parameterized by $\theta$ as $f_\theta:\gX\rightarrow\gY$, the gradient w.r.t $x$ and $\theta$ is denoted by $\nabla f_\theta(x)$ and $\nabla_\theta f_\theta(x)$ respectively.\\
Lastly, we denote the true risk as $\gL(\theta) = \E[\ell(\theta,\rvz)]$ where the expectation is w.r.t true data distribution. The standard empirical and the augmented empirical risks are denoted by:
\begin{align}
    \lstd{\theta}{\gD} &= \frac{1}{k}\sum_i \ell(\theta, \rvz_i) \label{eq:l_std}\\
    \laug{\theta}{\gD} &= \frac{1}{k}\sum_i \E_{\varepsilon\sim\gN(0,I)} \left[\ell(\theta, \td{\rvz}_i)\right] \label{eq:l_aug}
\end{align} We start our alalysis by characterizing the regularization effect of LangDAug:
\begin{theorem}
    Consider a real-valued loss function of the form $\ell(\theta, (\rvx,\rvy)) = h(f_\theta(\rvx)) - \rvy(f_\theta(\rvx))$ where $h(\cdot)$ and $f_\theta(\cdot)$ are twice differentiable for all $\theta\in\Theta$. Then given a dataset $\gD = \{\rvx_i,\rvy_i\}_{i=1}^{k}$, the augmented empirical risk as denoted in Eq.~\ref{eq:l_aug} can be written as:
    \begin{align}
        \laug{\theta}{\gD} = \lstd{\theta}{\gD} + \sum_{i=1}^{3}\gR_i(\theta,\gD)
    \end{align}
    where,
    \begin{align}
        \gR_1(\theta, \gD) &= -\frac{1}{k}\sum_{i=1}^k \beta^2 (h'(f_\theta(\rvx_i)) - \rvy_i) \nabla f_\theta(\rvx_i)^T\nabla \log p(\rvx_i) \nonumber \\
        \gR_2(\theta, \gD) &= \frac{1}{k}\sum_{i=1}^k \beta^2 h''(f_\theta(\rvx_i))\tr*(\nabla f_\theta(\rvx_i) \nabla f_\theta(\rvx_i)^T) \nonumber \\
        \gR_3(\theta, \gD) &= \frac{1}{k}\sum_{i=1}^k \beta^2 (h'(f_\theta(\rvx_i)) -\rvy_i) \tr*(\nabla^2 f_\theta(\rvx_i)). \nonumber
    \end{align}
    \label{thm:reg_effect}
\end{theorem}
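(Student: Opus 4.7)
The plan is to Taylor-expand the augmented per-sample loss to second order in the perturbation $\delta_i := \td{\rvx}_i - \rvx_i = -\tfrac{\beta^2}{2}\nabla\log p(\rvx_i) + \beta\varepsilon$, take the expectation over $\varepsilon\sim\gN(0,I)$, and read off the surviving contributions as $\gR_1,\gR_2,\gR_3$. This mirrors the template the paper invokes from \citet{arora_dropout} and \citet{mixup_generalization}, who extract the regularizer implicit in a stochastic augmentation scheme in the same way.

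First I would record the two moments needed. A direct computation gives $\E[\delta_i] = -\tfrac{\beta^2}{2}\nabla\log p(\rvx_i)$ and $\E[\delta_i\delta_i^\top] = \beta^2 I + \tfrac{\beta^4}{4}\nabla\log p(\rvx_i)\nabla\log p(\rvx_i)^\top$, so at the order we need $\E[\delta_i\delta_i^\top] = \beta^2 I + O(\beta^4)$. Treating $g_i(\rvx) := h(f_\theta(\rvx)) - \rvy_i f_\theta(\rvx)$ as a scalar function of $\rvx$ and applying the chain rule twice (using twice-differentiability of $h$ and $f_\theta$) yields $\nabla g_i(\rvx_i) = (h'(f_\theta(\rvx_i)) - \rvy_i)\,\nabla f_\theta(\rvx_i)$ and $\nabla^2 g_i(\rvx_i) = h''(f_\theta(\rvx_i))\,\nabla f_\theta(\rvx_i)\,\nabla f_\theta(\rvx_i)^\top + (h'(f_\theta(\rvx_i)) - \rvy_i)\,\nabla^2 f_\theta(\rvx_i)$.

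Invoking Taylor's theorem gives $g_i(\td{\rvx}_i) = g_i(\rvx_i) + \nabla g_i(\rvx_i)^\top \delta_i + \tfrac12 \delta_i^\top \nabla^2 g_i(\rvx_i)\,\delta_i + O(\|\delta_i\|^3)$; taking $\E_\varepsilon$, the linear piece contracts with $\E[\delta_i]$ and recovers $\gR_1$ after substituting the chain-rule form of $\nabla g_i$, while the quadratic piece reduces via $\E[\delta_i^\top A\delta_i] = \mathrm{tr}(A\,\E[\delta_i\delta_i^\top]) = \beta^2\,\mathrm{tr}(A) + O(\beta^4)$ and splits along the two terms of $\nabla^2 g_i$ into the $h''$ contribution $\gR_2$ and the $\nabla^2 f_\theta$ contribution $\gR_3$. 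Averaging over the $k$ samples and subtracting $\lstd{\theta}{\gD}$ yields the claimed identity, with an $O(\beta^3)$ remainder absorbed under a standard boundedness assumption on the third derivatives of $h\circ f_\theta$. The main obstacle is the bookkeeping around $\nabla^2 g_i$: one must keep the $h''\,\nabla f_\theta\,\nabla f_\theta^\top$ piece separate from the $(h'-\rvy_i)\,\nabla^2 f_\theta$ piece when tracing against $\beta^2 I$, since conflating them collapses the distinct interpretations of $\gR_2$ as a Jacobian-norm penalty and $\gR_3$ as a Laplacian-of-$f_\theta$ penalty; sign tracking (the drift supplies the minus that appears in $\gR_1$) and consistently truncating at $O(\beta^2)$ are the remaining routine points.
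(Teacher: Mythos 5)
Your proposal is correct and follows essentially the same route as the paper's proof: a second-order Taylor expansion of the per-sample loss around the unperturbed point followed by expectation over $\varepsilon\sim\gN(0,I)$, with the drift $-\tfrac{\beta^2}{2}\nabla\log p(\rvx_i)$ supplying $\gR_1$ through the first-order term and the isotropic noise covariance supplying $\gR_2$ and $\gR_3$ through the trace of $\nabla^2 g_i$ (the paper merely organizes this as a scalar expansion of $\psi_i(\beta)=\ell(\theta,\td{\rvz}_i)$ in $\beta$ rather than a multivariate expansion in the increment $\delta_i$, which is the same computation). Note that both your derivation and the paper's own final display produce coefficients $\tfrac{\beta^2}{2}$, whereas the theorem statement writes $\beta^2$ in each $\gR_i$; that factor-of-two discrepancy is internal to the paper, not a defect of your argument.
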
 The regularization terms reveal how LangDAug controls model behavior through derivatives of the prediction function. To understand these effects more concretely, we examine GLMs, which model predictive densities using exponential families with linear sufficient statistics~\citep{pml1Book}:
\begin{align}
    p(y|\rvx, \theta) = r(\rvx)\exp\left( y\theta^T\rvx - A(\theta^T\rvx) \right)
\end{align}
where $r(\rvx)$ is the base measure and $A(\cdot)$ is the log partition function. The negative log-likelihood takes the form $A(\theta^T\rvx) - y\theta^T\rvx$, allowing us to apply Thm.~\ref{thm:reg_effect} to obtain:
\begin{corollary}
    For a GLM, if $A(\cdot)$ is twice differentiable, then the regularization terms obtained via second-order approximation is given by:
    \begin{align}
        \gR_{GLM} \triangleq \frac{\beta^2}{2n} \sum_{i=1}^{k} \left(A''(\theta^T\rvx_i)\theta^T\theta - A'(\theta^T\rvx_i)\theta^T s(\rvx_i)\right) \nonumber
    \end{align}
    where, $s(\rvx_i) = \nabla\log p(\rvx_i)$ is the Stein's score function. 
    \label{cor:reg_effect_glm}
\end{corollary}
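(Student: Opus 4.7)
\textbf{Proof plan for Corollary \ref{cor:reg_effect_glm}.}

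The plan is to apply Theorem \ref{thm:reg_effect} term by term, using the specific structure of a GLM. First I would identify that the GLM negative log-likelihood
\[
\ell(\theta,(\rvx,\rvy)) \;=\; A(\theta^T\rvx) - \rvy\,\theta^T\rvx
\]
exactly fits the template $\ell(\theta,(\rvx,\rvy)) = h(f_\theta(\rvx)) - \rvy\, f_\theta(\rvx)$ with the assignment $h(u)=A(u)$ and $f_\theta(\rvx)=\theta^T\rvx$. The hypothesis that $A$ is twice differentiable, together with the fact that the linear map $\rvx\mapsto\theta^T\rvx$ is smooth in $\rvx$, gives the regularity needed to invoke Theorem \ref{thm:reg_effect}.

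Next I would compute the two $\rvx$-derivatives of $f_\theta$ that show up inside $\gR_1,\gR_2,\gR_3$. Since $f_\theta$ is linear in $\rvx$, we have $\nabla f_\theta(\rvx)=\theta$ and, crucially, $\nabla^2 f_\theta(\rvx)=0$. The vanishing Hessian immediately eliminates the third regularizer, $\gR_3(\theta,\gD)=0$. For $\gR_2$, the trace collapses to $\tr(\theta\theta^T)=\theta^T\theta$, leaving
\[
\gR_2(\theta,\gD) \;=\; \frac{\beta^2}{k}\sum_{i=1}^{k} A''(\theta^T\rvx_i)\,\theta^T\theta,
\]
and for $\gR_1$, substituting $h'=A'$ and $\nabla f_\theta = \theta$ yields
\[
\gR_1(\theta,\gD) \;=\; -\frac{\beta^2}{k}\sum_{i=1}^{k}\bigl(A'(\theta^T\rvx_i)-\rvy_i\bigr)\,\theta^T s(\rvx_i),
\]
with $s(\rvx_i)=\nabla\log p(\rvx_i)$.

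Adding the surviving two terms and rearranging into the factored form of the corollary completes the derivation. The only non-routine step is reconciling the $\rvy_i$ contribution in $\gR_1$ with the statement of the corollary, which is written purely in terms of $A'(\theta^T\rvx_i)\,\theta^T s(\rvx_i)$. I would handle this by taking the expectation of the augmented risk with respect to the GLM predictive $p(\rvy\mid\rvx,\theta)$, which satisfies the exponential-family mean identity $\E[\rvy\mid\rvx,\theta]=A'(\theta^T\rvx)$; this replaces $\rvy_i$ by $A'(\theta^T\rvx_i)$ and cancels the second factor in $\gR_1$, leaving precisely $-A'(\theta^T\rvx_i)\theta^T s(\rvx_i)$ in the summand (the overall $\frac{1}{2n}$ prefactor then follows from absorbing the Gaussian-noise normalization used in the second-order expansion behind Theorem \ref{thm:reg_effect}). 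The main obstacle in the proof is therefore not the GLM substitution itself, which is purely mechanical, but getting this bookkeeping right so that the $\rvy$-dependent term matches the stated closed form; everything else reduces to plugging $\nabla f_\theta=\theta$ and $\nabla^2 f_\theta=0$ into Theorem \ref{thm:reg_effect}.
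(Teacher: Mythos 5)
Your substitution is exactly the paper's route: set $h=A$ and $f_\theta(\rvx)=\theta^T\rvx$ in Theorem~\ref{thm:reg_effect}, use $\nabla f_\theta(\rvx)=\theta$ and $\nabla^2 f_\theta(\rvx)=0$ so that $\gR_3$ vanishes, and collapse the trace in $\gR_2$ to $\theta^T\theta$. Up to that point your computation coincides with the paper's (one-line) proof, and you are right that the only delicate point is the $\rvy_i$ in $\gR_1$: a literal application of the theorem yields $-(A'(\theta^T\rvx_i)-\rvy_i)\,\theta^T s(\rvx_i)$, whereas the corollary prints $-A'(\theta^T\rvx_i)\,\theta^T s(\rvx_i)$. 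The paper's own proof silently drops the $\rvy_i$, so you have correctly located the one genuine gap.

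However, your proposed repair does not work. If you take the expectation with respect to $p(\rvy\mid\rvx,\theta)$ and invoke $\E[\rvy\mid\rvx,\theta]=A'(\theta^T\rvx)$, the factor $(A'(\theta^T\rvx_i)-\rvy_i)$ becomes $A'(\theta^T\rvx_i)-A'(\theta^T\rvx_i)=0$, which annihilates the \emph{entire} score term of $\gR_1$; it does not ``cancel the second factor'' and leave $-A'(\theta^T\rvx_i)\,\theta^T s(\rvx_i)$ behind. Under your well-specification argument the corollary's second summand should therefore be absent altogether, which is a different statement from the one you set out to prove. To recover the printed formula you would instead need a reason to discard only the residual piece $\tfrac{\beta^2}{2k}\sum_i \rvy_i\,\theta^T s(\rvx_i)$ (e.g., as a label-dependent term that is dropped before forming the constraint set $\gW_\gamma$, which is an expectation over $\rvx$ alone), and neither you nor the paper supplies such an argument. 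The prefactor remark is also off: the $1/2$ comes from the second-order Taylor coefficient already present in the proof of Theorem~\ref{thm:reg_effect} (whose displayed $\gR_i$ carry $\beta^2$ rather than $\beta^2/2$, an inconsistency you inherit), and the $n$ in $\tfrac{\beta^2}{2n}$ should evidently be $k$; there is no separate Gaussian normalization to absorb.
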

Having obtained the above regularization term, we use the techniques in \citet{arora_dropout,mixup_generalization} to bound the Rademacher complexity of the following class of functions which are used in optimization of dual of above regularization term:
\begin{align}
\footnotesize
    \gW_\gamma \triangleq \left\{ \rvx\rightarrow\theta^T\rvx : \theta^T\E_{x}\left[A^{''}(\theta^T\rvx)\theta - A^{'}(\theta^T\rvx)s(\rvx)\right] \leq \gamma \right\} \nonumber
\end{align}
The following provides an upper-bound on Rademacher complexity of such class of functions:
\begin{theorem}
    Assume that the distribution of $\rvx_i$ is $\rho$-retentive, and let $\Sigma_x = \E[\rvx\rvx^T]$ have bounded singular values. Further assume that the norm of the parameters and $\E_{x}[\|\nabla\log p(\rvx)\|^2]$ are bounded. Then the empirical Rademacher complexity of $\gW_\gamma$ satisfies:
    \begin{align}
        Rad(\gW_\gamma, \gD) &\leq C \sqrt{\frac{rank(\Sigma_x)}{k}}  \\
        \text{where,}~~C &= \left(\frac{\gamma}{\rho}\right)^{1/2} \vee \left(\frac{\gamma}{\rho\sigma}\right)^{1/4}
    \end{align}
    here, $\sigma$ denotes the lowest singular values of $\Sigma_x$.
    \label{thm:rad_bound}
\end{theorem}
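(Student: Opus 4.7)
The plan is to follow the classical template for bounding the Rademacher complexity of a constrained linear class, but with the twist that the constraint defining $\gW_\gamma$ is implicit and data-dependent. I start by writing
\begin{align}
Rad(\gW_\gamma, \gD) = \frac{1}{k}\,\E_\sigma\!\left[\sup_{\theta \in \gW_\gamma}\sum_{i=1}^{k}\sigma_i\,\theta^T\rvx_i\right],
\end{align}
and the goal is to convert membership in $\gW_\gamma$ into a usable norm bound on $\theta$ that interacts cleanly with the covariance $\Sigma_x$.

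Step 1 (Unpacking the constraint). Expanding the defining inequality of $\gW_\gamma$ gives
\begin{align}
\|\theta\|^{2}\,\E_x[A''(\theta^T\rvx)] \;-\; \theta^T\E_x[A'(\theta^T\rvx)\,s(\rvx)] \;\leq\; \gamma.
\end{align}
Here the $\rho$-retentive assumption is used to bound the curvature term from below (roughly $\E_x[A''(\theta^T\rvx)] \gtrsim \rho$ on the effective support of the data), while Cauchy–Schwarz together with the uniform bound on $\E_x[\|s(\rvx)\|^2]$ and on $\|\theta\|$ turns the score-inner-product term into something linear in $\|\theta\|$. Applying Young's inequality balances the linear and quadratic pieces and yields a bound on the $\Sigma_x$-weighted norm $\theta^T\Sigma_x\theta$. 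The two regimes of this Young-style split produce the two terms in $C$: when $\gamma$ dominates, one obtains $\theta^T\Sigma_x\theta \lesssim \gamma/\rho$, giving the $\sqrt{\gamma/\rho}$ factor; when the Stein-score term dominates and one must invoke the smallest singular value $\sigma$ of $\Sigma_x$ to convert a bound on $\|\theta\|$ into a bound on $\theta^T\Sigma_x\theta$, a square-root loss appears and yields the $(\gamma/(\rho\sigma))^{1/4}$ factor.

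Step 2 (Standard linear-class bound). Once I have $\theta^T\Sigma_x\theta \leq B$, Cauchy–Schwarz in the $\Sigma_x$-norm gives
\begin{align}
\sup_{\theta}\,\theta^T\sum_i \sigma_i\rvx_i \;\leq\; \sqrt{B}\,\left\|\sum_i \sigma_i\rvx_i\right\|_{\Sigma_x^{\dagger}},
\end{align}
and Jensen's inequality on the Rademacher expectation produces
\begin{align}
\E_\sigma\left[\left\|\sum_i \sigma_i\rvx_i\right\|_{\Sigma_x^{\dagger}}\right] \leq \sqrt{\sum_i \rvx_i^T\Sigma_x^{\dagger}\rvx_i} \leq \sqrt{k\cdot rank(\Sigma_x)},
\end{align}
using that $\E_x[\rvx^T\Sigma_x^{\dagger}\rvx] = rank(\Sigma_x)$. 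Dividing by $k$ and substituting $B$ with the two possible values from Step~1 finishes the bound with the maximum of the two constants.

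The main obstacle I expect is Step~1. The constraint mixes a quadratic-in-$\theta$ curvature term (which under $\rho$-retentivity is well-behaved) with a data-gradient term involving $\nabla\log p(\rvx)$, and this latter term is only controlled in an averaged $L^2$ sense. Choosing the Young-inequality split carefully—so that one obtains a genuine bound on $\theta^T\Sigma_x\theta$ rather than merely $\|\theta\|^2$—is where the two-regime constant $C = (\gamma/\rho)^{1/2}\vee(\gamma/(\rho\sigma))^{1/4}$ really originates, and verifying that the looser regime is unavoidable when $\sigma$ is small is the delicate book-keeping step. Everything after that is standard symmetrization-style manipulation.
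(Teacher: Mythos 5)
Your proposal is correct and follows the same two-stage template as the paper's proof: first convert membership in $\gW_\gamma$ into a norm ball for $\theta$, then apply the standard Rademacher bound for a norm-constrained linear class. Your Step 1 is essentially the paper's argument --- Cauchy--Schwarz plus the bound on $\E_x[\|s(\rvx)\|^2]$ controls the score term, $\rho$-retentiveness lower-bounds the curvature factor by $\rho\min\{1,\theta^T\Sigma_x\theta\}$, and the two branches of the $\min$ (resolving the second one via $\theta^T\Sigma_x\theta\ge\sigma\|\theta\|^2$) produce exactly the two-regime constant $C=(\gamma/\rho)^{1/2}\vee(\gamma/(\rho\sigma))^{1/4}$. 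Where you genuinely diverge is Step 2. The paper bounds the Euclidean norm $\|\theta\|^2$ and pairs it with the Euclidean norm of $\sum_i\xi_i\rvx_i$, which after Jensen gives $\sqrt{k\,\mathrm{tr}(\Sigma_x)}$; the final $rank(\Sigma_x)$ then relies on the bounded-singular-value assumption (singular values at most $1$, so that $\mathrm{tr}(\Sigma_x)\le rank(\Sigma_x)$). You instead bound the weighted norm $\theta^T\Sigma_x\theta$ and pair it with the dual $\Sigma_x^{\dagger}$-norm, so that $\E[\rvx^T\Sigma_x^{\dagger}\rvx]=\mathrm{tr}(\Sigma_x^{\dagger}\Sigma_x)=rank(\Sigma_x)$ falls out exactly; this is the cleaner route, since the intrinsic-dimension dependence becomes structural rather than an artifact of normalizing the singular values. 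Two small caveats: (i) extracting a bound on $\theta^T\Sigma_x\theta$ (rather than on $\|\theta\|^2$, which is what the constraint's $\theta^T\theta$ term naturally controls) with the stated constants still quietly uses $\sigma_{\max}(\Sigma_x)\le 1$, so you relocate rather than remove the assumption the paper needs; and (ii) the duality step $\theta^T v\le\sqrt{\theta^T\Sigma_x\theta}\,\|v\|_{\Sigma_x^{\dagger}}$ requires $v=\sum_i\xi_i\rvx_i$ to lie in the range of $\Sigma_x$, which holds almost surely but should be stated. Neither caveat is a gap in the plan.
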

Now, using this result, we can directly use the results of \citet{bartlett2002rademacher} to bound the generalization gap as follows:
\begin{corollary}
    If $A(\cdot)$ is $L_A$-Lipchitz continuous, $\gX$, $\gY$, $\Theta$ are all bounded, then there exists constant $L, B > 0$ such that for all $\theta$ satisfying the constraint in $\gW_\gamma$, we have:
    \begin{align}
        \gL \leq \gL^{std} + 2LL_A\left( C \sqrt{\frac{rank(\Sigma_x)}{k}} \right) + B \sqrt{\frac{\log(1/\delta)}{2k}} \nonumber
    \end{align}
    with probability atleast $1-\delta$.
    \label{cor:gen_gap}
\end{corollary}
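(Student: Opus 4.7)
The plan is to derive the corollary by a direct invocation of the standard Rademacher-complexity-based uniform convergence bound of \citet{bartlett2002rademacher}, applied to the GLM loss class induced by $\gW_\gamma$ and substituting the complexity estimate of Thm.~\ref{thm:rad_bound}. The key structural observation is that the per-sample GLM loss decomposes as $\ell(\theta,(\rvx,\rvy)) = A(\theta^T\rvx) - \rvy\,\theta^T\rvx$, i.e. it is a function of the scalar linear functional $\theta^T\rvx$ with $\rvy$ entering linearly. Writing this as $\phi_{\rvy}(\theta^T\rvx)$ with $\phi_{\rvy}(u) = A(u) - \rvy u$, the $L_A$-Lipschitzness of $A$ together with the boundedness of $\gY$ makes each $\phi_{\rvy}$ Lipschitz in $u$ with a uniform constant $L$ depending only on $L_A$ and $\sup_{\rvy \in \gY}|\rvy|$; boundedness of $\gX$, $\gY$, $\Theta$ further ensures that $\ell$ is uniformly bounded by some constant $B$.

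First, I would apply Talagrand's contraction lemma to the composition $\phi_{\rvy} \circ (\theta^T\rvx)$, which transfers the empirical Rademacher complexity of the induced loss class to that of the underlying linear class $\gW_\gamma$ at the cost of a multiplicative factor $L L_A$. Next, the classical symmetrization inequality gives that the expected supremum deviation $\E\sup_{\theta \in \gW_\gamma}\bigl(\gL(\theta) - \lstd{\theta}{\gD}\bigr)$ is bounded by twice the Rademacher complexity of the loss class. Chaining these two inequalities with the bound from Thm.~\ref{thm:rad_bound} yields an in-expectation bound of $2 L L_A \cdot C\sqrt{rank(\Sigma_x)/k}$.

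To upgrade this to a high-probability statement, I would apply McDiarmid's bounded-differences inequality to the random variable $\Phi(\gD) := \sup_{\theta \in \gW_\gamma}\bigl(\gL(\theta) - \lstd{\theta}{\gD}\bigr)$. Because $\ell$ is bounded by $B$, replacing a single sample in $\gD$ perturbs $\Phi$ by at most $B/k$, so McDiarmid yields a concentration term of $B\sqrt{\log(1/\delta)/(2k)}$. Combining this with the expectation bound above gives, with probability at least $1 - \delta$, the stated inequality $\gL \leq \lstd{\theta}{\gD} + 2 L L_A\,C\sqrt{rank(\Sigma_x)/k} + B\sqrt{\log(1/\delta)/(2k)}$.

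The main obstacle is not mathematical depth but rather bookkeeping: one must verify that $\phi_{\rvy}$ is genuinely uniformly Lipschitz on the compact image $\{\theta^T\rvx : (\theta,\rvx) \in \Theta \times \gX\}$, and that the boundedness of $\ell$ on $\Theta \times \gX \times \gY$ yields a finite $B$ suitable for McDiarmid. Both follow from the boundedness assumptions via $|\theta^T \rvx| \leq \|\theta\|\,\|\rvx\|$ and the triangle inequality, so the constants $L$ and $B$ can be read off explicitly in terms of $L_A$, $\sup_{\theta \in \Theta}\|\theta\|$, $\sup_{\rvx \in \gX}\|\rvx\|$, and $\sup_{\rvy \in \gY}|\rvy|$. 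Once these constants are pinned down, the corollary is a clean plug-and-play consequence of Thm.~\ref{thm:rad_bound}.
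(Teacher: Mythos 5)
Your proposal is correct and follows essentially the same route as the paper: the paper's proof is a one-line invocation of the Bartlett--Mendelson uniform convergence lemma (stated as Lemma~\ref{lem:barlett}) combined with the Rademacher bound of Thm.~\ref{thm:rad_bound}, and your argument simply unpacks that lemma into its standard constituent steps (Talagrand contraction, symmetrization, and McDiarmid) rather than citing it as a black box. The constant bookkeeping you describe for $L$ and $B$ in terms of $L_A$ and the bounds on $\gX$, $\gY$, $\Theta$ matches what the paper implicitly assumes.
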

We note that the appearance of $rank(\Sigma_x)$ rather than the full dimensionality of $\rvx$ suggests that LangDAug's generalization bound depends on how many ``true'' degrees of freedom the data has (intrinsic dimension) rather than how many numbers we use to store each data point (ambient dimension). The dependence on intrinsic dimensionality rather than ambient dimension is particularly valuable for high-dimensional problems where the data lies on a lower-dimensional manifold~\citep{manifold_hypo_1,manifold_hypo_2}.

\section{Experiments}
\label{sec:experiments}

\subsection{Datasets and Baselines}
\label{sec:dset_baseline}


\underline{\textbf{Datasets and Metrics}}: We evaluate LangDAug's effectiveness under domain shift using two widely-used medical imaging datasets: Retinal Fundus Segmentation (RFS)~\citep{retinal-1,retinal-2,retinal-3,retinal-4} and Prostate MRI Segmentation~\citep{prosegmeta}. For both datasets, we employ a leave-one-out protocol~\citep{lost_dg} to assess domain generalization performance.

\par The RFS dataset challenges models to segment the optical cup (OC) 
and optical disc (OD) in retinal images. It contains samples from four clinical sites, 
each representing a distinct domain with its own train/test split. We use the training 
splits for training and resize all images to $256\times 256$ pixels during preprocessing. 
Segmentation accuracy is evaluated using Intersection-over-Union (IoU) and Dice Similarity  Score (DSC) metrics.
\par The Prostate MRI dataset includes 116 T2-weighted MRI scans from six 
clinical sites (domains). Preprocessing involves cropping each scan to the 3D prostate 
region, extracting 2D axial slices, and resizing them to $384\times 384$ pixels. We train on these 2D slices and evaluate using: (1) DSC between predicted and ground truth slices, and (2) Average Surface Distance (ASD) for reconstructed 3D volumes, obtained by concatenating model-predicted slices. Results are reported as the mean of three independent runs.

\underline{\textbf{Baselines}}: We evaluate our method against two categories of domain generalization approaches. First, we consider three recent methods from the DomainBed benchmark~\citep{lost_dg}: Hutchinson~\citep{hutchinson}, which matches domain statistics using Hessian information; Fish~\citep{fish}, which employs gradient-based domain alignment; and Fishr~\citep{fishr}, which utilizes gradient variance matching. Further, we compare against recent approaches specifically tailored for medical image segmentation: RandConv~\citep{randconv}, which leverages random convolution for style invariance; MixStyle~\citep{mixstyle}, which performs feature-level style mixing; FedDG~\citep{feddg}, which employs federated learning for domain generalization; and TriD~\citep{trid}, which exploits distribution-level information. Since, we follow similar steps as \citet{mixup_generalization} for our analysis\footnote{while the proof steps share similarities, we use different techniques and arrive at an independent verification of the result} we also compare our method with RAM~\citep{ram_dsir} which is a variant of MixUp designed for medical image segmentation. Additionally, we provide comparisons with CORAL~\citep{coral}, RSC~\citep{rsc}, 
For a fair comparison we use ResNet34 model~\citep{resnet} as the segmentation backbone for all the methods. All the implementation details including hyperparameters and Ablation studies are provided in Supplementary. 


\begin{figure}[!t]
    \centering
    \includegraphics[width=0.95\columnwidth]{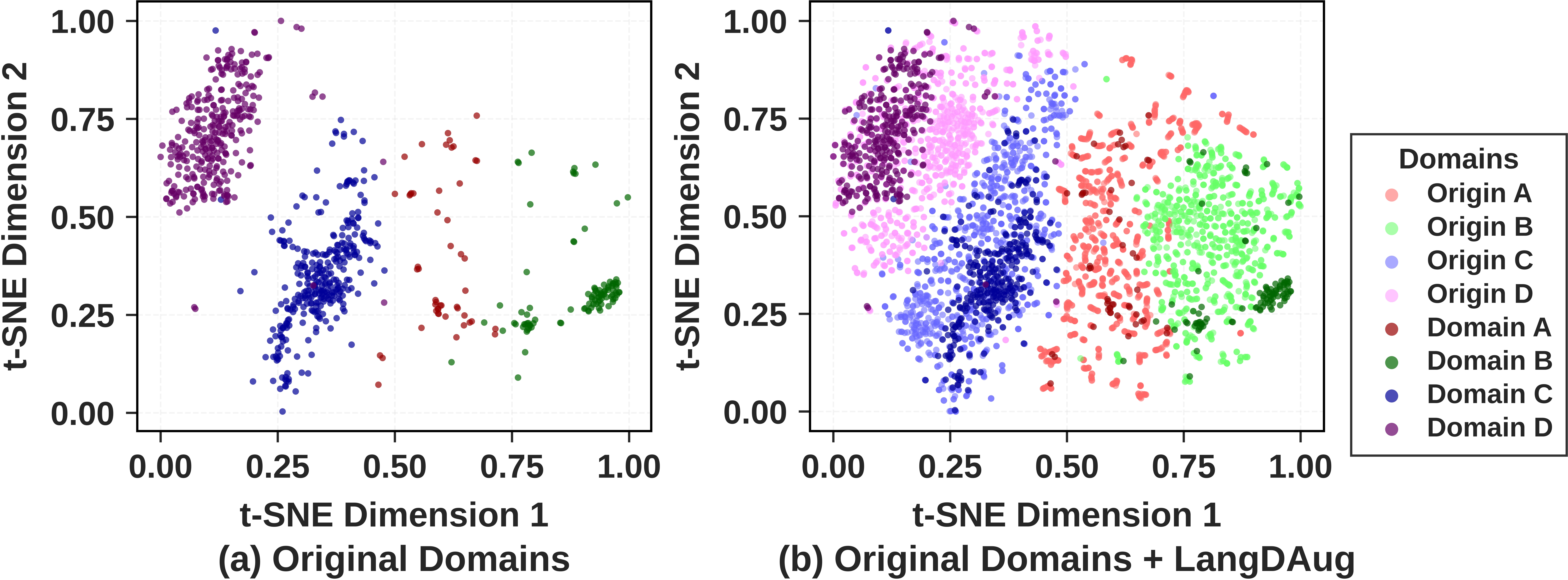}
    \caption{t-SNE visualization comparing domain distributions: (a) the original four domains, and (b) LangDAug generated augmented domains. Origin $x$ denotes samples generated by LangDAug starting from Domain $x$.}
    \label{fig:tsne_plot_fundus}
\end{figure}

\begin{figure}[!t]
    \centering
    \resizebox{\columnwidth}{!}{\input{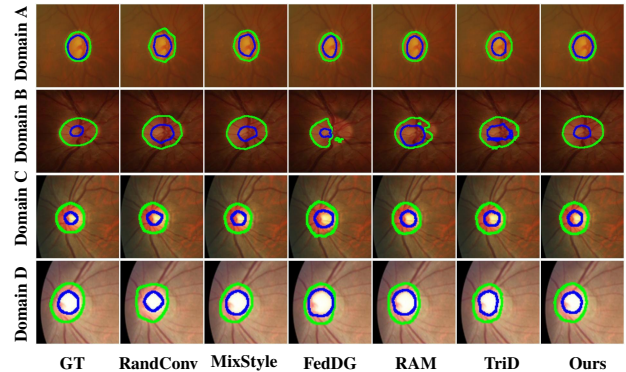}}
\caption{Visualization of retinal fundus segmentation performance of different domain generalization methods.}
\label{fig:baseline_results_fundus}
\end{figure}

\subsection{Results}
\label{sec:results}

\underline{\textbf{t-SNE Visualization}}: Before presenting the main results, we visualize the RFS dataset 
samples in the feature space using t-SNE plots (Figure~\ref{fig:tsne_plot_fundus}). 
The original domain samples show a concentrated distribution, while Langevin samples 
significantly expand feature space coverage. Notably, Langevin samples bridge gaps 
between domains, consistent with the framework in Section~\ref{sec:ebm-transversal}. 
This inter-domain traversal property allows the model to learn a more comprehensive 
representation of the domain space. These observations support our hypothesis on the 
effectiveness of LangDAug in improving domain generalization. Visualizations of 
Langevin samples are provided in Supplementary. 

\begin{table*}[!t]
\centering
\caption{Cross-domain generalization performance on Retinal Fundus Segmentation. IoU ($\%$) ($\uparrow$) and DSC ($\%$) ($\uparrow$) are presented in pairs $(x, y)$ where, $x$ denotes OC and $y$ denotes OD segmentation metric. Bold ($\boldsymbol{x}$) and Underlined ($\underline{x}$) values indicate best and second-best performance respectively for each metric in their respective columns. Average denotes the average metric across all domains.}
\label{tab:fundus-results}
\resizebox{1.00\textwidth}{!}{%
\begin{tabular}{l@{\hspace{1pt}}c@{\hspace{4pt}}c@{\hspace{4pt}}c@{\hspace{4pt}}c@{\hspace{4pt}}c@{\hspace{4pt}}c@{\hspace{4pt}}c@{\hspace{4pt}}c@{\hspace{4pt}}c@{\hspace{4pt}}c@{\hspace{4pt}}c@{\hspace{4pt}}c@{\hspace{4pt}}c@{\hspace{4pt}}c@{\hspace{4pt}}c@{\hspace{4pt}}c@{\hspace{4pt}}c@{\hspace{4pt}}c}
\toprule
\multirow{2}{*}{\textbf{ Domain}} & \multicolumn{4}{c}{\textbf{A}} & \multicolumn{4}{c}{\textbf{B}} & \multicolumn{4}{c}{\textbf{C}} & \multicolumn{4}{c}{\textbf{D}} & \multicolumn{2}{c}{\textbf{Average}} \\ 
\cmidrule(lr){2-5} \cmidrule(lr){6-9} \cmidrule(lr){10-13} \cmidrule(lr){14-17} \cmidrule(lr){18-19}
& \textbf{IoU} & \textbf{mIoU} & \textbf{DSC} & \textbf{mDSC} 
& \textbf{IoU} & \textbf{mIoU} & \textbf{DSC} & \textbf{mDSC}
& \textbf{IoU} & \textbf{mIoU} & \textbf{DSC} & \textbf{mDSC}
& \textbf{IoU} & \textbf{mIoU} & \textbf{DSC} & \textbf{mDSC}
& \textbf{IoU} & \textbf{DSC} \\
\midrule
\textbf{CORAL} & $(66.59, \mathbf{90.14})$ & $78.37$ & $(78.79, 94.31)$ & $86.55$ & $(54.80, 73.07)$ & $63.94$ & $(66.95, 81.46)$ & $74.20$ & $(71.34, 86.54)$ & $78.94$ & $(82.82, 92.59)$ & $87.71$ & $(67.82, 83.87)$ & $75.85$ & $(80.19, 91.10)$ & $85.65$ & $74.27$ & $83.53$ \\
\textbf{RSC} & $(64.67, 88.97)$ & $76.82$ & $(77.98, 94.24)$ & $86.11$ & $(53.34, 71.74)$ & $62.54$ & $(64.94, 79.90)$ & $72.42$ & $(\underline{72.78}, 86.67)$ & $79.73$ & $(83.53, 92.32)$ & $87.92$ & $(68.39, 81.11)$ & $74.75$ & $(80.70, 89.38)$ & $85.04$ & $73.46$ & $82.87$ \\
\textbf{SagNet} & $(64.45, 86.38)$ & $75.42$ & $(77.34, 92.64)$ & $84.99$ & $(46.82, 62.21)$ & $54.51$ & $(57.44, 69.94)$ & $63.69$ & $(67.29, 83.76)$ & $75.52$ & $(79.73, 90.86)$ & $85.29$ & $(55.14, 79.48)$ & $67.31$ & $(70.04, 88.36)$ & $79.20$ & $68.19$ & $78.29$ \\
\textbf{SWAD} & $(65.32, 87.06)$ & $76.19$ & $(77.09, 92.89)$ & $84.99$ & $(55.63, 75.01)$ & $65.32$ & $(68.21, 83.92)$ & $76.07$ & $(67.94, 85.89)$ & $76.91$ & $(80.43, 92.17)$ & $86.30$ & $(61.57, 82.60)$ & $72.09$ & $(75.37, 90.26)$ & $82.82$ & $72.63$ & $82.54$ \\
\textbf{Hutchinson} & $(55.19, 78.27)$ & $66.73$ & $(68.10, 86.81)$ & $77.46$ & $(55.19, 78.27)$ & $66.73$ & $(68.10, 86.81)$ & $77.46$ & $(56.17, 82.54)$ & $69.36$ & $(70.16, 90.17)$ & $80.17$ & $(50.45, 72.36)$ & $66.73$ & $(64.42, 83.07)$ & $77.46$ & $67.39$ & $78.14$ \\
\textbf{Fish} & $(63.16, 85.73)$ & $74.44$ & $(76.76, 91.16)$ & $83.96$ & $(48.91, 74.87)$ & $64.44$ & $(64.38, 87.01)$ & $75.70$ & $(63.61, 82.92)$ & $73.26$ & $(79.99, 88.87)$ & $84.43$ & $(56.27, 80.59)$ & $72.50$ & $(76.54, 89.94)$ & $83.24$ & $71.16$ & $81.83$ \\
\textbf{Fishr} & $(65.35, 86.35)$ & $75.85$ & $(77.33, 91.45)$ & $84.39$ & $(54.23, 73.82)$ & $64.03$ & $(65.17, 82.32)$ & $73.75$ & $(68.32, 82.95)$ & $75.63$ & $(80.53, 89.45)$ & $84.99$ & $(64.09, 83.11)$ & $73.60$ & $(79.99, 91.02)$ & $85.50$ & $72.28$ & $82.16$ \\
\textbf{RandConv} & $(71.44, 88.49)$ & $79.96$ & $(\underline{81.99}, 93.81)$ & $\underline{87.90}$ & $(52.96, 82.28)$ & $67.62$ & $(66.35, 90.11)$ & $78.23$ & $(67.78, 85.68)$ & $76.73$ & $(80.03, 92.12)$ & $86.07$ & $(67.78, 85.68)$ & $74.73$ & $(78.15, 91.52)$ & $84.84$ & $74.76$ & $84.26$ \\
\textbf{MixStyle} & $(71.76, 89.77)$ & $\underline{80.76}$ & $(\mathbf{82.68}, \underline{94.56})$ & $\mathbf{88.62}$ & $(58.15, 77.23)$ & $67.69$ & $(71.17, 86.75)$ & $78.96$ & $(71.65, \underline{87.93})$ & $\underline{79.79}$ & $(83.38, 93.39)$ & $88.38$ & $(67.56, 86.62)$ & $77.09$ & $(80.03, 92.72)$ & $86.37$ & $76.33$ & $85.58$ \\
\textbf{FedDG} & $(\mathbf{73.30}, 80.01)$ & $76.65$ & $(78.52, 84.42)$ & $81.47$ & $(68.70, 75.58)$ & $72.14$ & $(77.81, 88.35)$ & $83.08$ & $(69.92, 82.28)$ & $76.10$ & $(83.11, 87.32)$ & $85.21$ & $(69.01, 82.91)$ & $75.96$ & $(81.63, 88.17)$ & $84.90$ & $75.21$ & $83.67$ \\
\textbf{RAM} & $(67.28, 87.56)$ & $77.42$ & $(78.09, 91.66)$ & $84.87$ & $(\underline{68.92}, \underline{78.66})$ & $\underline{73.79}$ & $(\mathbf{82.82}, \underline{90.52})$ & $\mathbf{86.67}$ & $(72.21, 86.11)$ & $79.66$ & $(81.32, 90.02)$ & $85.67$ & $(\underline{71.61}, 85.88)$ & $78.74$ & $(77.17, 91.54)$ & $84.35$ & $77.40$ & $85.39$ \\
\textbf{TriD} & $(\underline{71.83}, \underline{90.01})$ & $\mathbf{80.92}$ & $(79.44, \mathbf{96.16})$ & $87.80$ & $(68.31, 76.59)$ & $72.45$ & $(66.01, 90.42)$ & $78.21$ & $(72.66, 86.03)$ & $79.34$ & $(\mathbf{86.22}, \underline{93.83})$ & $\mathbf{90.02}$ & $(70.52, \underline{87.41})$ & $\underline{78.96}$ & $(\underline{83.08}, \underline{92.45})$ & $\underline{87.76}$ & $\underline{77.92}$ & $\underline{85.95}$ \\
\textbf{LangDAug} & $(68.31, 89.26)$ & $78.79$ & $(79.75, 94.24)$ & $86.99$ & $(\mathbf{70.61}, \mathbf{79.50})$ & $\mathbf{75.05}$ & $(\underline{80.11}, \mathbf{91.64})$ & $\underline{85.87}$ & $(\mathbf{73.07}, \mathbf{88.94})$ & $\mathbf{81.01}$ & $(\underline{83.88}, \mathbf{93.96})$ & $\underline{88.91}$ & $(\mathbf{72.33}, \mathbf{88.70})$ & $\mathbf{80.51}$ & $(\mathbf{83.43}, \mathbf{93.93})$ & $\mathbf{88.68}$ & $\mathbf{78.84}$ & $\mathbf{87.61}$ \\
\bottomrule
\end{tabular}%
}
\end{table*}

\begin{table*}[!t]
\centering
\caption{Cross-domain generalization performance on T2-weighted Prostate MRI Segmentation. ASD (mm) ($\downarrow$) and DSC ($\%$) ($\uparrow$) are presented in pairs $(x, y)$ where, $x$ denotes OC and $y$ denotes OD segmentation metric. Bold ($\boldsymbol{x}$) and Underlined ($\underline{x}$) values indicate best and second-best performance respectively for each metric in their respective columns. Average denotes the average metric across all domains.}
\label{tab:prostate-results}
\renewcommand{\arraystretch}{1.3} 
\resizebox{0.8\textwidth}{!}{%
\begin{tabular}{l@{\hspace{4pt}}rr@{\hspace{8pt}}rr@{\hspace{12pt}}rr@{\hspace{8pt}}rr@{\hspace{12pt}}rr@{\hspace{8pt}}rr@{\hspace{12pt}}rr@{\hspace{8pt}}rr}
\toprule
\multirow{2}{*}{\textbf{Unseen Domain}} & \multicolumn{2}{c}{\textbf{A}} & \multicolumn{2}{c}{\textbf{B}} & \multicolumn{2}{c}{\textbf{C}} & \multicolumn{2}{c}{\textbf{D}} & \multicolumn{2}{c}{\textbf{E}} & \multicolumn{2}{c}{\textbf{F}} & \multicolumn{2}{c}{\textbf{Average}} \\ 
\cmidrule(lr){2-3} \cmidrule(lr){4-5} \cmidrule(lr){6-7} \cmidrule(lr){8-9} \cmidrule(lr){10-11} \cmidrule(lr){12-13} \cmidrule(lr){14-15}
& \textbf{ASD} & \textbf{DSC} & \textbf{ASD} & \textbf{DSC} & \textbf{ASD} & \textbf{DSC} & \textbf{ASD} & \textbf{DSC} & \textbf{ASD} & \textbf{DSC} & \textbf{ASD} & \textbf{DSC} & \textbf{ASD} & \textbf{DSC} \\
\midrule
\textbf{Hutchinson} & $3.28$ & $82.26$ & $1.48$ & $85.85$ & $2.07$ & $72.65$ & $3.98$ & $77.41$ & $2.78$ & $71.75$ & $1.64$ & $81.79$ & $2.54$ & $78.62$ \\
\textbf{Fish} & $1.58$ & $84.40$ & $1.46$ & $85.09$ & $3.74$ & $70.59$ & $3.37$ & $78.45$ & $3.72$ & $74.26$ & $0.76$ & $83.43$ & $2.44$ & $79.37$ \\
\textbf{Fishr} & $1.10$ & $88.03$ & $0.94$ & $88.41$ & $2.07$ & $83.21$ & $1.08$ & $83.94$ & $1.78$ & $81.41$ & $0.61$ & $84.41$ & $1.26$ & $84.90$ \\
\textbf{RandConv} & $1.27$ & $87.90$ & $1.18$ & $88.24$ & $1.90$ & $81.96$ & $0.84$ & $84.44$ & $1.54$ & $83.04$ & $0.44$ & $86.37$ & $1.20$ & $85.33$ \\
\textbf{MixStyle} & $0.72$ & $88.59$ & $0.88$ & $88.42$ & $1.62$ & $87.10$ & $0.65$ & $85.62$ & $1.59$ & $82.18$ & $0.51$ & $85.72$ & $1.00$ & $86.27$ \\
\textbf{FedDG} & $1.09$ & $86.10$ & $0.93$ & $87.03$ & $1.31$ & $88.23$ & $0.88$ & $87.52$ & $1.73$ & $81.55$ & $0.50$ & $85.27$ & $1.07$ & $85.95$ \\
\textbf{RAM} & $0.93$ & $88.72$ & $0.98$ & $87.20$ & $\underline{1.26}$ & $\underline{88.78}$ & $0.74$ & $89.18$ & $1.78$ & $81.01$ & $\mathbf{0.32}$ & $\mathbf{87.25}$ & $1.00$ & $87.02$ \\
\textbf{TriD} & $\underline{0.70}$ & $\underline{91.52}$ & $\underline{0.72}$ & $\underline{89.60}$ & $1.39$ & $87.71$ & $\underline{0.71}$ & $\underline{89.45}$ & $\mathbf{1.43}$ & $\underline{82.41}$ & $0.46$ & $85.41$ & $\underline{0.90}$ & $\underline{87.68}$ \\
\textbf{LangDAug} & $\mathbf{0.58}$ & $\mathbf{92.15}$ & $\mathbf{0.64}$ & $\mathbf{90.31}$ & $\mathbf{1.21}$ & $\mathbf{89.53}$ & $\mathbf{0.57}$ & $\mathbf{93.41}$ & $\underline{1.49}$ & $\mathbf{83.17}$ & $\underline{0.38}$ & $\underline{86.41}$ & $\mathbf{0.81}$ & $\mathbf{89.16}$ \\
\bottomrule
\end{tabular}%
}
\end{table*}

\subsubsection{Fundus Segmentation}
\label{sec:fundus_segmentation_results}

We provide the results of optical cup (OC) and optical disc (OD) segmentation on retinal fundus images in Table~\ref{tab:fundus-results}. We present the metrics in the form of (OC, OD) for convenience, further, we also provide the mean metric across OC and OD. We also provide visual qualitative comparison of segmentation results in Figure~\ref{fig:baseline_results_fundus}. We observe that LangDAug performs most consistently across all domains.

Among the DomainBed approaches (Hutchinson, Fish, and Fishr), performance is generally underwhelming. Hutchinson performed the poorest, with average IoU and DSC of $67.39$ and $78.14$ respectively. While Fishr showed some improvement over the others, reaching average IoU and DSC of $72.28$ and $82.16$ respectively. Overall these methods lagged behind other approaches. This pattern was consistent across different domains and metrics.

Domain B emerges as the most challenging for generalization, with most methods failing to achieve mDSC above $80$. The reason for this is significantly different conditions under which the fundus images are acquired. In other words, the distribution of Domain B is very different from that of other domains.
In this domain, LangDAug demonstrates superior performance with an mIoU of $75.05$, surpassing the next best method (RAM) by $1.26\%$. For optical disc segmentation, LangDAug achieves the highest DSC of $91.64$, followed by RAM at $90.52$. While RAM leads in optical cup segmentation with a DSC of $82.82$, LangDAug maintains competitive performance at $80.11$, demonstrating consistent effectiveness across all metrics in this challenging domain. This also shows robustness of LangDAug in limited source distribution setting where source domains are not able to cover the target domain. In such scenarios, previous method fail to generalize effectively whereas LangDAug maintains edge over these methods.


In Domain C, LangDAug demonstrates good performance with the highest mIoU of $81.01$, while maintaining competitive mDSC metrics second only to TriD. The method shows balanced effectiveness across both optical cup and disc segmentation, surpassing established approaches like FedDG and RAM. This performance extends to Domain D, where LangDAug leads in both mIoU and mDSC metrics, outperforming comparable methods like TriD and RAM. 


While established methods MixStyle, RandConv, and TriD demonstrate marginally better performance in Domain A (mDSC: $88.62, 87.90$, and $87.80$ respectively), they exhibit substantial performance deterioration in Domain B, with mDSC values dropping to approximately $78$. In contrast, LangDAug maintains remarkable consistency across both domains, achieving competitive mDSC of $86.99$ on Domain A while maintaining its performance with an mDSC of $85.87$ on Domain B. 


\begin{figure}[!t]
    \centering
    \resizebox{\columnwidth}{!}{\input{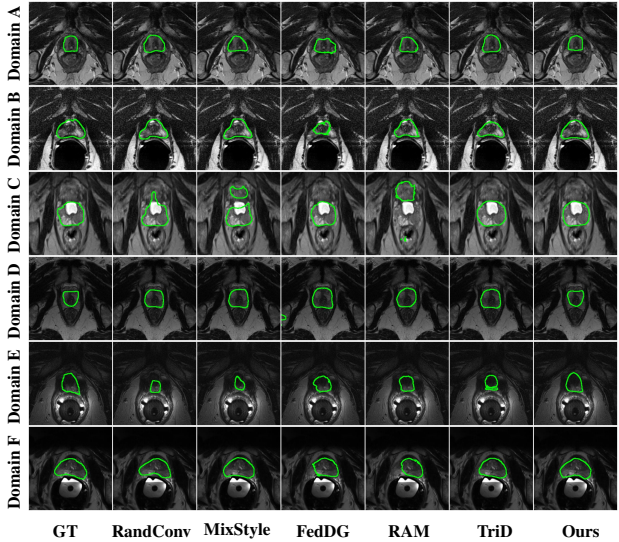}}
 \caption{Visualization of prostate segmentation performance of different domain generalization methods.}
\label{fig:baseline_results_prostate}
\end{figure}

\begin{table*}[!t]
\centering
\caption{Cross-domain generalization performance on Retinal Fundus Segmentation of Domain Randomization method, with and without LangDAug. Values in parentheses for Average columns show absolute improvement over base method.}
\label{tab:fundus-topup}
\resizebox{\textwidth}{!}{%
\begin{tabular}{l@{\hspace{1pt}}c@{\hspace{4pt}}c@{\hspace{4pt}}c@{\hspace{4pt}}c@{\hspace{4pt}}c@{\hspace{4pt}}c@{\hspace{4pt}}c@{\hspace{4pt}}c@{\hspace{4pt}}c@{\hspace{4pt}}c@{\hspace{4pt}}c@{\hspace{4pt}}c@{\hspace{4pt}}c@{\hspace{4pt}}c@{\hspace{4pt}}c@{\hspace{4pt}}c@{\hspace{4pt}}r@{\hspace{4pt}}r}
\toprule
\multirow{2}{*}{\textbf{Unseen Domain}} & \multicolumn{4}{c}{\textbf{A}} & \multicolumn{4}{c}{\textbf{B}} & \multicolumn{4}{c}{\textbf{C}} & \multicolumn{4}{c}{\textbf{D}} & \multicolumn{2}{c}{\textbf{Average}} \\ 
\cmidrule(lr){2-5} \cmidrule(lr){6-9} \cmidrule(lr){10-13} \cmidrule(lr){14-17} \cmidrule(lr){18-19}
& \textbf{IoU} & \textbf{mIoU} & \textbf{DSC} & \textbf{mDSC} 
& \textbf{IoU} & \textbf{mIoU} & \textbf{DSC} & \textbf{mDSC}
& \textbf{IoU} & \textbf{mIoU} & \textbf{DSC} & \textbf{mDSC}
& \textbf{IoU} & \textbf{mIoU} & \textbf{DSC} & \textbf{mDSC}
& \textbf{mIoU} & \textbf{mDSC} \\
\midrule
\textbf{FedDG} & $(73.30, 80.01)$ & $76.65$ & $(78.52, 84.42)$ & $81.47$ & $(68.70, 75.58)$ & $72.14$ & $(77.81, 88.35)$ & $83.08$ & $(69.92, 82.28)$ & $76.10$ & $(83.11, 87.32)$ & $85.21$ & $(69.01, 82.91)$ & $75.96$ & $(81.63, 88.17)$ & $84.90$ & $75.21$ & $83.67$ \\
\textbf{FedDG + Ours} & $(\mathbf{74.28}, \mathbf{82.37})$ & $\mathbf{78.32}$ & $(\mathbf{79.66}, \mathbf{85.29})$ & $\mathbf{82.47}$ & $(\mathbf{70.03}, \mathbf{76.10})$ & $\mathbf{73.06}$ & $(\mathbf{79.12}, \mathbf{90.62})$ & $\mathbf{84.87}$ & $(\mathbf{71.80}, \mathbf{84.41})$ & $\mathbf{78.10}$ & $(\mathbf{83.68}, \mathbf{88.58})$ & $\mathbf{86.13}$ & $(\mathbf{70.57}, \mathbf{83.22})$ & $\mathbf{76.89}$ & $(\mathbf{82.79}, \mathbf{91.37})$ & $\mathbf{87.08}$ & $\mathbf{76.59}~{(+1.38)}$ & $\mathbf{85.14}~{(+1.47)}$ \\ 
\midrule
\textbf{RAM} & $(67.28, 87.56)$ & $77.42$ & $(78.09, 91.66)$ & $84.87$ & $(68.92, 78.66)$ & $73.79$ & $(82.82, 90.52)$ & $86.67$ & $(72.21, 86.11)$ & $79.66$ & $(81.32, 90.02)$ & $85.67$ & $(71.61, 85.88)$ & $78.74$ & $(77.17, 91.54)$ & $84.35$ & $77.40$ & $85.39$ \\
\textbf{RAM + Ours} & $(\mathbf{70.29}, \mathbf{90.51})$ & $\mathbf{80.40}$ & $(\mathbf{80.66}, \mathbf{94.76})$ & $\mathbf{87.71}$ & $(\mathbf{70.38}, \mathbf{80.11})$ & $\mathbf{75.24}$ & $(\mathbf{83.38}, \mathbf{91.47})$ & $\mathbf{87.42}$ & $(\mathbf{74.71}, \mathbf{87.99})$ & $\mathbf{81.35}$ & $(\mathbf{83.84}, \mathbf{92.64})$ & $\mathbf{88.24}$ & $(\mathbf{73.70}, \mathbf{86.37})$ & $\mathbf{80.03}$ & $(\mathbf{79.81}, \mathbf{92.89})$ & $\mathbf{86.35}$ & $\mathbf{79.26}~{(+1.86)}$ & $\mathbf{87.43}~{(+2.04)}$ \\
\midrule
\textbf{TriD} & $(71.83, 90.01)$ & $80.92$ & $(79.44, 96.16)$ & $87.80$ & $(68.31, 76.59)$ & $72.45$ & $(66.01, 90.42)$ & $78.21$ & $(72.66, 86.03)$ & $79.34$ & $(86.22, 93.83)$ & $90.02$ & $(70.52, 87.41)$ & $78.96$ & $(83.08, 92.45)$ & $87.76$ & $77.92$ & $85.95$ \\
\textbf{TriD + Ours} & $(\mathbf{72.89}, \mathbf{92.73})$ & $\mathbf{82.81}$ & $(\mathbf{80.34}, \mathbf{97.82})$ & $\mathbf{89.08}$ & $(\mathbf{81.67}, \mathbf{78.52})$ & $\mathbf{80.09}$ & $(\mathbf{73.02}, \mathbf{90.95})$ & $\mathbf{81.98}$ & $(\mathbf{73.16}, \mathbf{87.93})$ & $\mathbf{80.54}$ & $(\mathbf{87.32}, \mathbf{96.35})$ & $\mathbf{91.83}$ & $(\mathbf{71.83}, \mathbf{89.02})$ & $\mathbf{80.42}$ & $(\mathbf{85.40}, \mathbf{94.09})$ & $\mathbf{89.74}$ & $\mathbf{80.97}~{(+3.05)}$ & $\mathbf{88.16}~{(+2.21)}$ \\
\bottomrule
\end{tabular}%
}
\end{table*}

\begin{table}[!t]
\centering
\caption{Cross-domain generalization performance on 2D MRI Prostate Segmentation of Domain Randomization method, with and without LangDAug. Values in parentheses for Average columns show absolute improvement over base method.}
\label{tab:prostate-topup}
\renewcommand{\arraystretch}{1.3} 
\resizebox{\columnwidth}{!}{%
\begin{tabular}{l@{\hspace{4pt}}rr@{\hspace{8pt}}rr@{\hspace{12pt}}rr@{\hspace{8pt}}rr@{\hspace{12pt}}rr@{\hspace{8pt}}rr@{\hspace{12pt}}rr@{\hspace{8pt}}rr}
\toprule
\multirow{2}{*}{\textbf{Unseen Domain}} & \multicolumn{2}{c}{\textbf{A}} & \multicolumn{2}{c}{\textbf{B}} & \multicolumn{2}{c}{\textbf{C}} & \multicolumn{2}{c}{\textbf{D}} & \multicolumn{2}{c}{\textbf{E}} & \multicolumn{2}{c}{\textbf{F}} & \multicolumn{2}{c}{\textbf{Average}} \\ 
\cmidrule(lr){2-3} \cmidrule(lr){4-5} \cmidrule(lr){6-7} \cmidrule(lr){8-9} \cmidrule(lr){10-11} \cmidrule(lr){12-13} \cmidrule(lr){14-15}
& \textbf{ASD} & \textbf{DSC} & \textbf{ASD} & \textbf{DSC} & \textbf{ASD} & \textbf{DSC} & \textbf{ASD} & \textbf{DSC} & \textbf{ASD} & \textbf{DSC} & \textbf{ASD} & \textbf{DSC} & \textbf{ASD} & \textbf{DSC} \\
\midrule
\textbf{FedDG} & $1.09$ & $86.10$ & $0.93$ & $87.03$ & $1.31$ & $88.23$ & $0.88$ & $87.52$ & $1.73$ & $81.55$ & $0.50$ & $85.27$ & $1.07$ & $85.95$ \\
\textbf{FedDG + Ours} & $\mathbf{0.90}$ & $\mathbf{88.24}$ & $\mathbf{0.66}$ & $\mathbf{89.31}$ & $\mathbf{1.16}$ & $\mathbf{90.22}$ & $\mathbf{0.75}$ & $\mathbf{88.19}$ & $\mathbf{1.68}$ & $\mathbf{82.03}$ & $\mathbf{0.48}$ & $\mathbf{86.10}$ & $\mathbf{0.94}~{(-0.13)}$ & $\mathbf{87.35}~{(+1.40)}$ \\
\midrule
\textbf{RAM} & $0.93$ & $88.72$ & $0.98$ & $87.20$ & $1.26$ & $88.78$ & $0.74$ & $89.18$ & $1.78$ & $81.01$ & $0.32$ & $87.25$ & $1.00$ & $87.02$ \\
\textbf{RAM + Ours} & $\mathbf{0.89}$ & $\mathbf{89.62}$ & $\mathbf{0.69}$ & $\mathbf{89.11}$ & $\mathbf{1.15}$ & $\mathbf{89.03}$ & $\mathbf{0.62}$ & $\mathbf{91.63}$ & $\mathbf{1.41}$ & $\mathbf{83.92}$ & $\mathbf{0.24}$ & $\mathbf{88.85}$ & $\mathbf{0.83}~{(-0.17)}$ & $\mathbf{88.69}~{(+1.67)}$ \\
\midrule
\textbf{TriD} & $0.70$ & $91.52$ & $0.72$ & $89.60$ & $1.39$ & $87.71$ & $0.71$ & $89.45$ & $1.43$ & $82.41$ & $0.46$ & $85.41$ & $0.90$ & $87.68$ \\
\textbf{TriD + Ours} & $\mathbf{0.64}$ & $\mathbf{92.73}$ & $\mathbf{0.61}$ & $\mathbf{91.05}$ & $\mathbf{1.09}$ & $\mathbf{92.31}$ & $\mathbf{0.55}$ & $\mathbf{94.22}$ & $\mathbf{1.29}$ & $\mathbf{85.17}$ & $\mathbf{0.35}$ & $\mathbf{87.99}$ & $\mathbf{0.76}~{(-0.14)}$ & $\mathbf{90.58}~{(+2.90)}$ \\
\bottomrule
\end{tabular}%
}
\end{table}

LangDAug demonstrates superior overall performance, achieving the highest average IoU ($78.84\%$) and DSC ($87.61\%$) across all domains. Notably, it exhibits better stability in cross-domain performance, as evidenced by significantly lower standard deviations (mIoU: $\pm2.43$, mDSC: $\pm1.89$) compared to its counterparts TriD (mIoU: $\pm3.76$, mDSC: $\pm5.12$) and RAM (mIoU: $\pm2.89$, mDSC: $\pm3.42$). This reduced variance underscores LangDAug's consistent generalization capabilities.

\subsubsection{Prostate Segmentation}
\label{sec:prostate_segmentation_result}


Table~\ref{tab:prostate-results} presents our results for 2D MRI prostate segmentation. We provide visual results in Figure~\ref{fig:baseline_results_prostate}.  Consistent with the findings from fundus segmentation, the DomainBed methods (Hutchinson, Fish, and Fishr) generally demonstrate lower performance, with Fishr showing the best performance and Hutchinson showing the worst performance among the three.


LangDAug demonstrates best performance across Domains A through D, consistently outperforming the second-best methods in both ASD and DSC. Specifically, in Domains A and B, LangDAug achieves approximately $0.1$mm lower ASD and nearly $0.6\%$ higher DSC compared to the next best method, TriD. In Domain D, LangDAug surpasses other methods with a substantial reduction of $0.14$mm in ASD and an increase in DSC by over $4\%$. Even in the challenging Domain C, LangDAug maintains a significant performance advantage, outperforming other methods by $0.18$mm in ASD and $1.82\%$ in DSC. Furthermore, in the most difficult Domain E (identified by the highest average ASD and lowest average DSC across methods), LangDAug secures the best DSC, demonstrating its performance consistency across domains.


While RAM achieves the best performance in Domain F (ASD: $0.32$ mm, DSC: $87.25\%$), LangDAug remains competitive in this domain (ASD: $0.38$ mm, DSC: $86.41\%$) and demonstrates the most consistent performance across all domains. This consistency is highlighted by LangDAug’s best overall average performance across domains, which is lower by $0.09$ mm in ASD and higher by $1.48\%$ in DSC compared to the second-best TriD, similar to its trend in Fundus segmentation results.

\subsection{LangDAug with Domain Randomization}
\label{sec:langdaug_topup}

LangDAug, described in Section~\ref{sec:proposed_method}, generates 
samples from intermediate domains using Langevin dynamics. This can enhance domain randomization methods, which combine domain-invariant (DI) and domain-specific (DS) features from source domains. The key assumption is that test domain features lie within the feature space formed by mixing DI-DS features from source domains. Thus, more source domains produce more diverse features, improving target domain performance. By creating new domains via Langevin dynamics, LangDAug enhances domain randomization methods. We validate this on retinal fundus and prostate segmentation datasets, testing with three leading methods: FedDG~\citep{feddg}, RAM~\citep{ram_dsir}, 
and TriD~\citep{trid}.

\textbf{Fundus Segmentation}: Table~\ref{tab:fundus-topup} presents the performance comparison of domain randomization methods augmented with LangDAug on fundus segmentation. LangDAug demonstrates consistent performance improvements across all baseline methods and domains. Specifically, FedDG augmented with LangDAug shows significant enhancements of $+1.38\%$ in average IoU and $+1.47\%$ in average DSC across all domains. RAM exhibits even more substantial gains, with improvements of $+1.86\%$ and $+2.04\%$ in average IoU and DSC respectively, notably achieving an mDSC increase from $84.87\%$ to $87.71\%$ on Domain A. Most remarkably, TriD combined with LangDAug demonstrates the highest improvements of $+3.05\%$ and $+2.21\%$ in average IoU and DSC respectively, with particularly significant enhancement in Domain B (mDSC increase from $78.21\%$ to $81.98\%$).These consistent improvements across methods and domains demonstrate \langdaug's effectiveness in enhancing domain randomization methods.

\textbf{Prostate Segmentation}: Table~\ref{tab:prostate-topup} demonstrates LangDAug's effectiveness in enhancing cross-domain generalization for prostate segmentation. Across all six domains, \langdaug consistently improves both ASD and DSC metrics, paralleling the improvements observed in fundus segmentation. The most substantial improvements are observed with TriD, yielding average ASD reduction of $0.14$mm and DSC improvement of $+2.90\%$. RAM augmented with \langdaug shows similar gains, with average ASD and DSC improvements of $-0.17$mm and $+1.67\%$ respectively, particularly notable in Domain E where ASD decreased from $1.78$ to $1.41$mm. FedDG, despite moderate baseline performance, demonstrates significant improvement when combined with \langdaug, especially in Domain B (ASD reduction: $0.93$ to $0.66$mm, DSC increase: $87.03\%$ to $89.31\%$), with overall metrics improving by $-0.13$mm ASD and $+1.40\%$ DSC. Notably, while Domain E looks most challenging across all methods, \langdaug consistently enhances performance, underscoring its efficacy in challenging cross-domain scenarios.


These observations conclusively show that \langdaug significantly boosts the performance of all domain randomization baselines.




\section{Conclusion}
\label{sec:conclusion}

In this work, we propose LangDAug, a novel Langevin data augmentation method for multi-source domain generalization in 2D medical image segmentation. LangDAug leverages Energy-Based Models (EBMs) trained via contrastive divergence to generate intermediate  samples between source domains using Langevin dynamics. These samples act as bridges across domain distributions. We establish a theoretical foundation for LangDAug, showing its induced regularization effect on parametric models. For Generalized Linear Models (GLMs), we prove that LangDAug's regularization terms upper-bound the Rademacher complexity based on the data manifold's intrinsic dimension. Comprehensive experiments on retinal fundus and prostate MRI segmentation demonstrate LangDAug's superiority over existing methods. Moreover, LangDAug effectively complements domain randomization approaches, achieving state-of-the-art performance in domain generalization tasks. \\
\textbf{Limitations}: While LangDAug delivers strong performance, it has two primary limitations. First, the larger number of training samples results in longer training times, though selective sampling can mitigate this. Second, the number of required EBMs scales with the source domains, posing scalability challenges. Future work could address this by adopting shared architectures with domain conditioning. We provide detailes quantitative computational cost in Supplementary. Further, we process 3D volumes as 2D slices. Future work could explore training EBMs directly on 3D data to bypass slicing and better model 3D spatial relationships. 
\section*{Acknowledgements}
This work was supported (in part for setting up the GPU compute) by the Indian Institute of Science through a start-up grant. Piyush is supported by Government of India via Prime Minister’s Research Fellowship. 
Kinjawl is supported by Pre-doctoral fellowship provided by Kotak IISc AI-ML Center (KIAC).
Prathosh is supported by Infosys Foundation Young investigator award.  

\section*{Impact Statement}

LangDAug advances domain generalization in medical image segmentation by enabling robust adaptation to diverse imaging environments, a critical challenge in deploying AI models across clinical settings. By generating domain-bridging samples through Langevin dynamics and Energy-Based Models, the method enhances segmentation accuracy in scenarios affected by domain shifts (e.g., variations in scanners or protocols), directly supporting more reliable diagnostics and treatment planning. Its theoretical grounding in bounding model complexity via intrinsic data dimensionality offers a principled framework for future research.

While computational costs and scalability limitations currently constrain its adoption, LangDAug’s compatibility with domain randomization and its ability to preserve anatomical fidelity position it as a versatile tool for real-world applications. Future efforts to optimize training efficiency and extend the framework to 3D volumetric data could further unlock its potential, paving the way for AI systems that generalize seamlessly across evolving medical imaging technologies.




\bibliography{example_paper}
\bibliographystyle{icml2025}

\newpage
\appendix



\clearpage
\setcounter{page}{1}

\section{Proof of Theoretical Results}
\label{sec:proof}

\begin{theorem}
    Consider a loss function of the form $\ell(\theta, (\rvx,\rvy)) = h(f_\theta(\rvx)) - \rvy(f_\theta(x))$ where $h(\cdot)$ and $f_\theta(\cdot)$ are twice differentiable for all $\theta\in\Theta$. Then given a dataset $\gD = \{\rvx_i,\rvy_i\}_{i=1}^{k}$, the augmented empirical risk as denoted in Eq.~\ref{eq:l_aug} can be written as:
    \begin{align}
        \laug{\theta}{\gD} = \lstd{\theta}{\gD} + \sum_{i=1}^{3}\gR_i(\theta,\gD)
    \end{align}
    where,
    \begin{align}
        \gR_1(\theta, \gD) &= -\frac{1}{k}\sum_{i=1}^k \beta^2 (h'(f_\theta(\rvx_i)) - \rvy_i) \nabla f_\theta(\rvx_i)^T\nabla \log p(\rvx_i) \nonumber \\
        \gR_2(\theta, \gD) &= \frac{1}{k}\sum_{i=1}^k \beta^2 h''(f_\theta(\rvx_i))\tr*(\nabla f_\theta(\rvx_i) \nabla f_\theta(\rvx_i)^T) \nonumber \\
        \gR_3(\theta, \gD) &= \frac{1}{k}\sum_{i=1}^k \beta^2 (h'(f_\theta(\rvx_i)) -\rvy_i) \tr*(\nabla^2 f_\theta(\rvx_i)) \nonumber
    \end{align}
\end{theorem}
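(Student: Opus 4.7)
The plan is to perform a second-order Taylor expansion of $\ell(\theta,(\td{\rvx}_i,\rvy_i))$ about the unperturbed point $(\rvx_i,\rvy_i)$ in the perturbation
\[
\delta_i \;:=\; \td{\rvx}_i - \rvx_i \;=\; -\tfrac{\beta^2}{2}\nabla\log p(\rvx_i) + \beta\varepsilon,
\]
and then take the expectation over $\varepsilon\sim\gN(0,I)$. The assumed twice-differentiability of $h$ and $f_\theta$ is exactly what is needed to make this expansion rigorous (up to an $o(\beta^2)$ remainder), and collecting the surviving terms should produce $\gR_1,\gR_2,\gR_3$.

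First I would record the input-gradients of $\ell$. Using $\ell(\theta,(\rvx,\rvy)) = h(f_\theta(\rvx)) - \rvy\, f_\theta(\rvx)$, the chain rule gives
\[
\nabla_{\!\rvx}\ell(\theta,(\rvx,\rvy)) = \bigl(h'(f_\theta(\rvx)) - \rvy\bigr)\,\nabla f_\theta(\rvx),
\]
\[
\nabla_{\!\rvx}^{2}\ell(\theta,(\rvx,\rvy)) = h''(f_\theta(\rvx))\,\nabla f_\theta(\rvx)\nabla f_\theta(\rvx)^{T} + \bigl(h'(f_\theta(\rvx)) - \rvy\bigr)\,\nabla^{2}f_\theta(\rvx).
\]
Then I would write the Taylor expansion $\ell(\theta,(\td{\rvx}_i,\rvy_i)) = \ell(\theta,(\rvx_i,\rvy_i)) + \nabla_{\!\rvx}\ell^{T}\delta_i + \tfrac{1}{2}\delta_i^{T}\nabla_{\!\rvx}^{2}\ell\,\delta_i + o(\|\delta_i\|^{2})$, where the derivatives are evaluated at $(\rvx_i,\rvy_i)$.

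Next I would take $\E_\varepsilon$ term by term. The noise has mean zero and unit covariance, so $\E[\delta_i] = -\tfrac{\beta^2}{2}\nabla\log p(\rvx_i)$ and $\E[\delta_i\delta_i^{T}] = \beta^{2}I + \tfrac{\beta^{4}}{4}\nabla\log p(\rvx_i)\nabla\log p(\rvx_i)^{T}$, the second piece being absorbed into the $o(\beta^{2})$ remainder. Thus the first-order expectation contributes $-\tfrac{\beta^{2}}{2}\bigl(h'(f_\theta(\rvx_i))-\rvy_i\bigr)\nabla f_\theta(\rvx_i)^{T}\nabla\log p(\rvx_i)$, which (modulo a constant factor) matches $\gR_1$, while the second-order expectation gives $\tfrac{\beta^{2}}{2}\tr*(\nabla_{\!\rvx}^{2}\ell)$, which splits by linearity of the trace into the two pieces corresponding to $\gR_2$ and $\gR_3$. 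Averaging over $i=1,\dots,k$ and recognising that the zeroth-order term reproduces $\lstd{\theta}{\gD}$ completes the decomposition.

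The main obstacle is bookkeeping rather than any deep technique: one must justify that the $o(\|\delta_i\|^{2})$ remainder in the Taylor expansion indeed vanishes after expectation in the small-$\beta$ regime, and that the $\beta^{4}$ cross-term in $\E[\delta_i\delta_i^{T}]$ may be dropped. This is where some uniform bound on the third derivatives of $\ell$ with respect to $\rvx$ (implicit in the statement's twice-differentiability assumption, but really requiring a bit more) or a local boundedness hypothesis on $\nabla\log p$ will be needed. A minor discrepancy to watch is that the stated $\gR_i$ carry a prefactor $\beta^{2}$ rather than the $\beta^{2}/2$ that falls out of the calculation; I would check whether the authors have absorbed the $1/2$ into $\beta$ or simply treat it as a convention, and flag it otherwise.
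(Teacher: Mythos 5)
Your proposal is correct and follows essentially the same route as the paper: a second-order Taylor expansion of the loss truncated at $O(\beta^2)$ followed by expectation over the Gaussian noise (the paper expands in the scalar $\beta$ along the curve $\tilde{\rvx}_i(\beta)$ and uses $\E[\varepsilon]=0$, $\E[\varepsilon\varepsilon^T]=I$ on $\psi_i'(0)$ and $\psi_i''(0)$, whereas you expand in the perturbation $\delta_i$; the surviving terms are identical). The factor-of-two discrepancy you flag is genuinely present in the paper itself --- its proof's final display carries prefactors $\beta^2/2$ while the stated $\gR_i$ carry $\beta^2$ --- so your calculation agrees with the paper's derivation and the mismatch lies between the theorem statement and both proofs.
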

\begin{proof}
    By definition, we have:
    \begin{align}
        \laug{\theta}{\gD} &= \frac{1}{k}\sum_i \E_{\varepsilon\sim\gN(0,I)} \left[\ell(\theta, \td{\rvz}_i)\right]
    \end{align}
    Consider one term of the above summation:
    \begin{align}
        \ell(\theta,\td{\rvz}_i) &= h(f_\theta(\td{\rvx}_i(\beta))) - \rvy_i(f_\theta(\td{\rvx}_i(\beta))) \triangleq \psi_i(\beta) \\
        &= \psi_i(0) + \beta\psi_i^{'}(0) + \frac{\beta^2}{2}\psi_i^{''}(0) + \beta^2\varphi(\beta)  \label{eq:taylor_expansion}
    \end{align}
    where, we have used Taylor expansion to approximate $\psi_i(\cdot)$ around $\beta=0$ and $\lim_{\beta\to 0}\varphi(\beta) = 0$. We will  evaluate each of the above terms below.\\
    Recall that $\xitilde(\beta) = \rvx_i - \frac{\beta^2}{2}\nabla_\rvx\log p(\rvx_i) + \beta\varepsilon$, hence, $\xitilde(0) = \rvx_i$. Using this, we have:
    \begin{align}
        \psi_i(0) = h(f_\theta(\rvx_i)) - \rvy_i(f_\theta(\rvx_i)) = \ell(\theta, \rvz_i) \label{eq:phi_i}
    \end{align}
    Now,
    \begin{align}
        &\psi_i^{'}(\beta) = h^{'}(f_\theta(\xitilde)) \left(\parfrac{f_\theta(\xitilde)}{\xitilde}\right)^T \parfrac{\xitilde}{\beta} \nonumber \\ &\hspace{3.5cm}- \rvy_i\left(\parfrac{f_\theta(\xitilde)}{\xitilde}\right)^T \parfrac{\xitilde}{\beta} \\
        &= \big[h^{'}(f_\theta(\xitilde)) - \rvy_i\big]\nabla f_\theta(\xitilde)^T \big[\varepsilon - \beta\nabla\log p(\rvx_i)\big] \\
        &= \big[h^{'}(f_\theta(\xitilde)) - \rvy_i\big]\nabla f_\theta(\xitilde)^T r_\varepsilon(\beta)
    \end{align}
    where, $r_\varepsilon(\beta) \triangleq \varepsilon - \beta\nabla\log p(\rvx_i) = \partial \xitilde/\partial \beta$. Hence, we have:
    \begin{align}
        &\psi_i^{'}(0) = (h^{'}(f_\theta(\rvx_i)) - \rvy_i)\nabla f_\theta(\rvx_i)^T\varepsilon \\
        \smallimplies &\E_{\varepsilon}[\psi_i^{'}(0)] = (h^{'}(f_\theta(\rvx_i)) - \rvy_i)\nabla f_\theta(\rvx_i)^T\E_{\varepsilon}[\varepsilon] = 0 \label{eq:phi_i_dash}
    \end{align}
    Next, for the second derivative, we differentiate the above expression:
    \begin{align}
        \psi_i^{''}(\beta) &= h^{''}(f_\theta(\xitilde)) \big( \nabla f_\theta(\xitilde)^Tr_\varepsilon(\beta) \big)^2 \nonumber\\
        & + \big[h^{'}(f_\theta(\xitilde)) - \rvy_i\big]\Big[ \varepsilon^T \nabla^2 f_\theta(\xitilde) r_\varepsilon(\beta) \Big] \nonumber \\
        & - \big[h^{'}(f_\theta(\xitilde)) - \rvy_i\big]\Big[\beta\nabla\log p(\rvx_i)^T \nabla^2f_\theta(\xitilde)r_\varepsilon(\beta) \nonumber\\
        &\hspace{2.5cm}+\nabla\log p(\rvx_i)^T\nabla f_\theta(\xitilde)\Big]  \\
        &= h^{''}(f_\theta(\xitilde)) r_\varepsilon(\beta)^T \left(\nabla f_\theta(\xitilde) \nabla f_\theta(\xitilde)^T\right) r_\varepsilon(\beta) \nonumber \\
        &~~~~+ \big[h^{'}(f_\theta(\xitilde)) - \rvy_i\big]  r_\varepsilon(\beta)^T \left( \nabla^2 f_\theta(\xitilde)\right) r_\varepsilon(\beta) \nonumber \\
        &~~~~ - \big[h^{'}(f_\theta(\xitilde)) - \rvy_i\big] \nabla f_\theta(\xitilde)^T \nabla\log p(\rvx_i)
    \end{align}
    Substituting $\beta=0$ in the above gives:
    \begin{align}
        \psi_i^{''}(0) &= h^{''}(f_\theta(\rvx_i)) \varepsilon^T \left(\nabla f_\theta(\rvx_i) \nabla f_\theta(\rvx_i)^T\right) \varepsilon \nonumber \\
        &~~~~+ \big[h^{'}(f_\theta(\rvx_i)) - \rvy_i\big]  \varepsilon^T \left( \nabla^2 f_\theta(\rvx_i)\right) \varepsilon \nonumber \\
        &~~~~- \big[h^{'}(f_\theta(\rvx_i)) - \rvy_i\big] \nabla f_\theta(\rvx_i)^T \nabla\log p(\rvx_i)
    \end{align}
    Taking expectation w.r.t $\varepsilon$:
    \begin{align}
        \E_{\varepsilon}[\psi_i^{''}(0)] &= h^{''}(f_\theta(\rvx_i)) \tr*(\nabla f_\theta(\rvx_i) \nabla f_\theta(\rvx_i)^T) \nonumber \\
        &~~~~+ (h^{'}(f_\theta(\rvx_i)) - \rvy_i) \tr*(\nabla^2 f_\theta(\rvx_i)) \nonumber \\
        &- (h^{'}(f_\theta(\rvx_i)) - \rvy_i) \nabla f_\theta(\rvx_i)^T \nabla\log p(\rvx_i) \label{eq:phi_i_d_dash}
    \end{align}
    Now, combining results from Eq.~\ref{eq:phi_i}, \ref{eq:phi_i_dash}, \ref{eq:phi_i_d_dash} into Eq.~\ref{eq:taylor_expansion} and Eq.~\ref{eq:l_aug}:
    \begin{align}
        \laug{\theta}{\gD} &= \frac{1}{k}\sum_i \Big\{\ell(\theta, \rvz_i) \nonumber\\
        &~~- \frac{\beta^2}{2} (h^{'}(f_\theta(\rvx_i)) - \rvy_i) \nabla f_\theta(\rvx_i)^T \nabla\log p(\rvx_i) \nonumber \\
        &~~+ \frac{\beta^2}{2} h^{''}(f_\theta(\rvx_i)) \tr*(\nabla f_\theta(\rvx_i) \nabla f_\theta(\rvx_i)^T) \nonumber \\
        &+ \frac{\beta^2}{2} (h^{'}(f_\theta(\rvx_i)) - \rvy_i) \tr*(\nabla^2 f_\theta(\rvx_i)) \Big\}  \\
        \implies &\laug{\theta}{\gD} = \lstd{\theta}{\gD} + \sum_{i=1}^{3} \gR_i(\theta,\gD)
    \end{align}
    
\end{proof}

\begin{corollary}
    For a GLM, if $A(\cdot)$ is twice differentiable, then the regularization terms obtained via second-order approximation is given by:
    \begin{align}
        \frac{\beta^2}{2n} \sum_{i=1}^{k} \left(A''(\theta^T\rvx_i)\theta^T\theta - A'(\theta^T\rvx_i)\theta^T s(\rvx_i)\right)
    \end{align}
    where, $s(\rvx_i) = \nabla\log p(\rvx_i)$ is the Stein's score function.
\end{corollary}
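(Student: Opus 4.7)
The plan is to recognize that the GLM negative log-likelihood fits the loss form of Theorem~\ref{thm:reg_effect} as a special case, then compute the three regularization terms explicitly using the linearity of the GLM predictor. Specifically, dropping the $x$-only base measure $\log r(\rvx)$ which does not affect any $\theta$-gradient, the GLM loss is $\ell(\theta,(\rvx,\rvy)) = A(\theta^T\rvx) - \rvy\,\theta^T\rvx$, which matches $\ell = h(f_\theta(\rvx)) - \rvy\,f_\theta(\rvx)$ under the identifications $f_\theta(\rvx) = \theta^T\rvx$ and $h(u) = A(u)$. Twice differentiability of $A$ is exactly the hypothesis needed to invoke the theorem.

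Next I would compute the spatial derivatives that appear inside $\gR_1,\gR_2,\gR_3$. Because $f_\theta$ is affine in $\rvx$, we have $\nabla f_\theta(\rvx) = \theta$ and $\nabla^2 f_\theta(\rvx) = 0$. The vanishing Hessian immediately kills $\gR_3$. For $\gR_2$ the outer product collapses: $\tr(\nabla f_\theta(\rvx_i)\nabla f_\theta(\rvx_i)^T) = \tr(\theta\theta^T) = \theta^T\theta$, giving a contribution proportional to $A''(\theta^T\rvx_i)\theta^T\theta$. For $\gR_1$, substituting $h'=A'$ and $\nabla f_\theta = \theta$ yields a term proportional to $(A'(\theta^T\rvx_i) - \rvy_i)\theta^T s(\rvx_i)$ with $s(\rvx_i)=\nabla\log p(\rvx_i)$.

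Summing the surviving two pieces gives the claimed expression, up to two small bookkeeping points. First, the $\tfrac{1}{2}$ in the stated $\tfrac{\beta^2}{2n}$ is the usual Taylor coefficient from the second-order expansion in $\beta$, consistent with the proof of Theorem~\ref{thm:reg_effect} (where the $\tfrac{1}{2}$ is absorbed into the $\gR_i$'s). Second, the $\rvy_i$ inside $\gR_1$ is removed by taking the conditional expectation $\E[\rvy_i\mid\rvx_i] = A'(\theta^T\rvx_i)$ under the GLM predictive model on the linear-in-$\rvy$ piece of $\gR_1$, so the residual after conditioning is the pure-$A'$ term displayed in $\gR_{GLM}$.

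The substitution itself is routine; the only mildly subtle step is the bookkeeping for the $\rvy_i$ term, where one must justify replacing it by its GLM-conditional mean rather than leaving it as a data-dependent random variable. This is the step I would write out carefully, since the rest of the derivation is direct application of Theorem~\ref{thm:reg_effect} once the derivatives are in hand.
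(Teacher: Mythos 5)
Your route is the same as the paper's: invoke Theorem~\ref{thm:reg_effect} with $h(\cdot)=A(\cdot)$ and $f_\theta(\rvx)=\theta^T\rvx$, so that $\nabla f_\theta=\theta$ and $\nabla^2 f_\theta=0$; this kills $\gR_3$, reduces $\gR_2$ to $A''(\theta^T\rvx_i)\operatorname{tr}(\theta\theta^T)=A''(\theta^T\rvx_i)\theta^T\theta$, and turns $\gR_1$ into $-(A'(\theta^T\rvx_i)-\rvy_i)\,\theta^T s(\rvx_i)$. Up to that point your derivation and the paper's are identical, including the handling of the Taylor factor $\tfrac{1}{2}$.

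The one place where you go beyond the paper is the disposal of the $\rvy_i$ inside $\gR_1$, and your proposed mechanism does not work as stated. If you replace $\rvy_i$ by its GLM conditional mean $\E[\rvy_i\mid\rvx_i]=A'(\theta^T\rvx_i)$, the factor $(A'(\theta^T\rvx_i)-\rvy_i)$ becomes identically zero, which would delete the \emph{entire} score term $-A'(\theta^T\rvx_i)\theta^T s(\rvx_i)$ from $\gR_{GLM}$ rather than leaving it behind; you cannot condition away only the $\rvy_i$ half of a single multiplicative factor. For what it is worth, the paper's own proof performs exactly the same silent omission of the $+\rvy_i\,\theta^T s(\rvx_i)$ contribution with no justification at all --- its displayed identity is simply the theorem's output with $\rvy_i$ dropped --- so you have correctly located the only non-routine step; but the conditional-expectation argument you propose is internally inconsistent and should either be replaced (e.g., by explicitly declaring that the $\rvy$-dependent piece is discarded, which is what the paper implicitly does when it later defines the constraint set $\gW_\gamma$ purely in terms of $A'$ and $s$) or the corollary should retain the full $(A'(\theta^T\rvx_i)-\rvy_i)$ factor.
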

\begin{proof}
    Using the result of Theorem~\ref{thm:reg_effect} with $h(\cdot) = A(\cdot)$ and $f_\theta(\rvx) = \theta^T\rvx$, we get:
    \begin{align}
        \ell(\theta,\td{\rvz_i}) &= \ell(\theta,\rvz_i) - \frac{\beta^2}{2}A^{'}(\theta^T\rvx_i) \theta^T \nabla\log p(\rvx_i) \nonumber \\
        &\hspace{2cm} + \frac{\beta^2}{2}A^{''}(\theta^T\rvx_i)\theta^T\theta
    \end{align}
    where we use the fact that for GLMs, $\nabla^2 f_\theta(\cdot) = 0$ and $\tr*(\theta\theta^T) = \theta^T\theta$. This completes the proof.
\end{proof}

\begin{theorem}
    Assume that the distribution of $\rvx_i$ is $\rho$-retentive, and let $\Sigma_x = \E[\rvx\rvx^T]$ have bounded singular values. Further assume that the norm of the parameters and $\E_{x}[\|\nabla\log p(\rvx)\|^2]$ are bounded. Then the empirical Rademacher complexity of $\gW_\gamma$ satisfies:
    \begin{align}
        Rad(\gW_\gamma, \gD) &\leq C \sqrt{\frac{rank(\Sigma_x)}{k}}  \\
        \text{where,}~~C &= \left(\frac{\gamma}{\rho}\right)^{1/2} \vee \left(\frac{\gamma}{\rho\sigma}\right)^{1/4}
    \end{align}
    here, $\sigma$ denotes the lowest singular values of $\Sigma_x$.
\end{theorem}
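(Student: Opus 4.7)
The plan is to adapt the covariance-based Rademacher argument of \citet{arora_dropout,mixup_generalization} to the Langevin regularizer. Because every hypothesis in $\gW_\gamma$ is linear, I can pull $\theta$ out of the Rademacher supremum and write
\begin{align*}
Rad(\gW_\gamma,\gD) \;=\; \frac{1}{k}\,\E_{\bm{\sigma}} \sup_{\theta\in\gW_\gamma} \theta^{T}\!\Bigl(\sum_{i=1}^{k}\sigma_i \rvx_i\Bigr).
\end{align*}
Inserting an identity $\hat\Sigma_x^{1/2}\hat\Sigma_x^{-1/2}$ on the range of the empirical second-moment matrix $\hat\Sigma_x=\frac{1}{k}\sum_i \rvx_i\rvx_i^T$ and applying Cauchy--Schwarz yields
\begin{align*}
Rad(\gW_\gamma,\gD)\;\le\; \Bigl(\sup_{\theta\in\gW_\gamma}\|\hat\Sigma_x^{1/2}\theta\|\Bigr)\,\cdot\,\frac{1}{k}\,\E_{\bm{\sigma}}\Bigl\|\hat\Sigma_x^{-1/2}\sum_i\sigma_i\rvx_i\Bigr\|.
\end{align*}
Jensen plus a direct trace computation gives $\frac{1}{k}\E_{\bm{\sigma}}\|\hat\Sigma_x^{-1/2}\sum_i\sigma_i\rvx_i\|\le \sqrt{rank(\Sigma_x)/k}$, which is the step responsible for producing the intrinsic dimension rather than the ambient $d$.

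The main work is then to control $\sup_{\theta\in\gW_\gamma}\|\hat\Sigma_x^{1/2}\theta\|$. Expanding the constraint defining $\gW_\gamma$ gives
\begin{align*}
\E_x\!\bigl[A''(\theta^T\rvx)\bigr]\,\|\theta\|^{2} \;-\; \E_x\!\bigl[A'(\theta^T\rvx)\,\theta^{T}s(\rvx)\bigr]\;\le\;\gamma.
\end{align*}
I would use the $\rho$-retentivity hypothesis to lower bound the first term by $\rho\|\theta\|^{2}$---this is the exact role $\rho$-retentivity plays in \citet{mixup_generalization}, converting the $\theta$-dependent weight $A''(\theta^T\rvx)$ into a uniform-in-$\theta$ ellipsoidal control---and Cauchy--Schwarz plus the boundedness of $\E_x[\|\nabla\log p(\rvx)\|^{2}]$ and $\|\theta\|$ to upper bound the cross term by $c\,\|\theta\|$ for an explicit constant $c$ depending only on the posited bounds. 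This reduces the feasibility condition to a scalar quadratic inequality $\rho\|\theta\|^{2}-c\|\theta\|\le\gamma$.

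Solving this inequality splits into two regimes. When the quadratic term absorbs $\gamma$, I get $\|\theta\|\lesssim(\gamma/\rho)^{1/2}$, which via the operator norm of $\Sigma_x$ feeds the first branch of $C$. When instead the cross term dominates, the right quantity to control is $u=\|\hat\Sigma_x^{1/2}\theta\|$ directly; rewriting $\|\theta\|\le u/\sqrt{\sigma}$ via the smallest nonzero singular value of $\Sigma_x$ converts the constraint into an inequality of the form $\rho\,u^{2}/\|\Sigma_x\|_{op}\le c'\,u/\sqrt{\sigma}+\gamma$ in $u$; balancing its two terms produces the fourth-root bound $u\lesssim(\gamma/(\rho\sigma))^{1/4}$. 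Taking the maximum over both regimes gives exactly $C=(\gamma/\rho)^{1/2}\vee(\gamma/(\rho\sigma))^{1/4}$, and combining with the $\sqrt{rank(\Sigma_x)/k}$ factor from the first paragraph finishes the argument.

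The hard part is the simultaneous presence of two different ``norms'' of $\theta$ inside the constraint: $\|\theta\|^{2}$ weighted by the data-independent quantity $\E[A''(\theta^T\rvx)]$, and $\theta^{T}s(\rvx)$ in the cross term, which is inherently coupled to the data distribution through the score. The $\rho$-retentivity assumption is precisely what converts the first into a clean quadratic lower bound independent of $\theta$; the score-driven cross term must instead be handled via Cauchy--Schwarz and a case analysis that interpolates between $\|\theta\|$ and $\|\hat\Sigma_x^{1/2}\theta\|$ through the smallest singular value $\sigma$. Getting the exponents right in this case analysis (the $1/2$ versus $1/4$ split that gives rise to the $\vee$ in the stated constant) is the most delicate part of the argument.
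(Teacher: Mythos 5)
Your overall architecture matches the paper's: extract a norm bound on $\theta$ from the constraint defining $\gW_\gamma$, then apply the standard linear-class Rademacher bound and a trace computation to obtain the $\sqrt{rank(\Sigma_x)/k}$ factor. Your whitening step (inserting $\hat\Sigma_x^{1/2}\hat\Sigma_x^{-1/2}$) is a slightly cleaner route to the rank factor than the paper's, which instead bounds $\E_\xi\|\sum_i\xi_i\rvx_i\|\le\sqrt{\sum_i\rvx_i^T\rvx_i}$ and then uses $\tr(\Sigma_x)\le rank(\Sigma_x)$ via the bounded-singular-value assumption; either is fine.

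The genuine gap is in how you derive the constant $C$, and specifically the $(\gamma/(\rho\sigma))^{1/4}$ branch. You read $\rho$-retentivity as the statement $\E_x[A''(\theta^T\rvx)]\ge\rho$, which converts the constraint into the scalar quadratic $\rho\|\theta\|^2-c\|\theta\|\le\gamma$; but solving that inequality only yields $\|\theta\|\le\max\{c/\rho,\ O(\sqrt{\gamma/\rho})\}$, and your subsequent ``balancing'' of $\rho u^2/\|\Sigma_x\|_{op}\le c'u/\sqrt{\sigma}+\gamma$ gives $u\le O(c'/(\rho\sqrt{\sigma}))$ in the linear-dominant regime --- a quantity with no dependence on $\gamma$ and no fourth root. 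The fourth root in the theorem does not come from balancing a linear against a quadratic term. In the paper's argument, $\rho$-retentivity is the condition $\E_x[A''(\theta^T\rvx)-\tfrac{\kappa_1}{\kappa_2}A'(\theta^T\rvx)]\ge\rho\min\{1,\E_x(\theta^T\rvx)^2\}=\rho\min\{1,\theta^T\Sigma_x\theta\}$, i.e.\ the cross term is absorbed \emph{inside} the retentivity lower bound (after factoring $\|\theta\|^2$ out of the whole constraint using $\E_x[\|s(\rvx)\|^2]\le\kappa_1$ and $\|\theta\|^2\ge\kappa_2$), and the $\min\{1,\cdot\}$ is what creates the two regimes: when $\theta^T\Sigma_x\theta\ge 1$ one gets $\|\theta\|^2\le\gamma/\rho$, and when $\theta^T\Sigma_x\theta<1$ one gets $\gamma\ge\rho\,\|\theta\|^2\,\theta^T\Sigma_x\theta\ge\rho\sigma\|\theta\|^4$, hence $\|\theta\|\le(\gamma/(\rho\sigma))^{1/4}$. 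By replacing the retentivity condition with a uniform lower bound $\rho$ on $\E[A'']$ and handling the score cross term as a separate additive perturbation, you have removed exactly the mechanism that produces the $\vee$ and the $1/4$ exponent, so your case analysis cannot recover the stated constant.
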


\begin{proof}
    The function class under consideration is:
    \begin{align}
        \gW_\gamma \triangleq \left\{ \rvx\rightarrow\theta^T\rvx : \theta^T\E_{x}\left[A^{''}(\theta^T\rvx)\theta - A^{'}(\theta^T\rvx)s(\rvx)\right] \leq \gamma \right\} \nonumber
    \end{align}
    The above constraint can be written as:
    \begin{align}
        \gamma &\geq \E_{x}[A^{''}(\theta^T\rvx)] \theta^T\theta - \E_{x} [A^{'}(\theta^T\rvx)\theta^T s(\rvx)] \\
        &\geq \E_{x}[A^{''}(\theta^T\rvx)] \theta^T\theta - \sqrt{\E_{x} (A^{'}(\theta^T\rvx))^2} \sqrt{\E_{x} (\theta^Ts(\rvx))^2} \\
        &\geq \E_{x}[A^{''}(\theta^T\rvx)] \theta^T\theta - \sqrt{\E_{x} (A^{'}(\theta^T\rvx))^2} \sqrt{ \|\theta\|^2 \E_{x} [\|s(\rvx)\|^2] } \\
        &\geq \E_{x}[A^{''}(\theta^T\rvx)] \theta^T\theta - \kappa_1 \|\theta\| \sqrt{\E_{x} (A^{'}(\theta^T\rvx))^2} \\
        &\geq \E_{x}[A^{''}(\theta^T\rvx)] \theta^T\theta - \frac{\kappa_1}{\kappa_2} \|\theta\|^2 \sqrt{\E_{x} (A^{'}(\theta^T\rvx))^2} \\
        &\geq \|\theta\|^2 \left( \E_{x}[A^{''}(\theta^T\rvx)] - \frac{\kappa_1}{\kappa_2} \sqrt{\E_{x} (A^{'}(\theta^T\rvx))^2}  \right) \label{eq:contraint_simplified}
    \end{align}
    where, $\E_{x}[\|s(\rvx)\|^2] \leq \kappa_1$ and $\|\theta\|^2\geq\kappa_2$. From $\rho$-retentiveness, we have:
    \begin{align}
        &\rho\min\{1,\E_{x}(\theta^T\rvx)^2\} \leq \E_{x} \left[ A^{''}(\theta^T\rvx) - \frac{\kappa_1}{\kappa_2} A^{'}(\theta^T\rvx) \right]  \\
        &\rho\min\{1,\theta^T\Sigma_x\theta\} \leq \E_{x}\left[ A^{''}(\theta^T\rvx)\right] - \frac{\kappa_1}{\kappa_2} \sqrt{\E_{x}\left[A^{'}(\theta^T\rvx)^2 \right]} 
    \end{align}
    Combining this with constraint in Eq.~\ref{eq:contraint_simplified}, we get:
    \begin{align}
        \|\theta\|^2 \leq &\frac{\gamma}{\rho\min\{1,\theta^T\Sigma_x\theta\}} \leq \max\left\{\frac{\gamma}{\rho},  \frac{\gamma}{\rho \theta^T\Sigma_x\theta}\right\} \\
        \implies \|\theta\|^2 &\leq \frac{\gamma}{\rho} \vee \sqrt{\frac{\gamma}{\rho\sigma}}
    \end{align}
    where $\sigma$ is the lowest singular value of $\Sigma_x$.
    Now, the empirical Rademacher complexity is given by:
    \begin{align}
        Rad(\gW_\gamma, \gD) &= \E_{\xi} \left[ \sup_{\gW_\gamma} \frac{1}{k}\sum_{i} \xi_i \theta^T \rvx_i \right] \\
        &\leq \E_{\xi} \left[ \sup_{\|\theta\|^2 \leq \frac{\gamma}{\rho} \vee \sqrt{\frac{\gamma}{\rho\sigma}}} \frac{1}{k} \sum_{i} \xi_i \theta^T \rvx_i \right] \\
        &\leq \frac{1}{k} \left(\frac{\gamma}{\rho}\right)^{1/2} \vee \left(\frac{\gamma}{\rho\sigma}\right)^{1/4} \E_{\xi} \left[ \| \sum_{i=1}  \xi_i \rvx_i \| \right] \\
        &\leq \frac{1}{k} C \sqrt{ \E_{\xi} \|\sum_{i} \xi_i \rvx_i \|^2 } \\
        &\leq \frac{1}{k} C \sqrt{ \sum_{i} \rvx_i^T \rvx_i } 
    \end{align}
    Taking expectation over dataset distribution completes the proof:
    \begin{align}
        Rad(\gW_\gamma, \gD) &= \E_{\gD} [Rad(\gW_\gamma, \gD)] \leq \frac{C}{k} \sqrt{\sum_{i} \E[\rvx_i^T \rvx_i]}\\
        &\leq \frac{C}{k} \sqrt{\sum_{i} \E[\tr*(\rvx_i\rvx_i^T)]} \\
        &\leq \frac{C}{k} \sqrt{\sum_{i} \tr*(\Sigma_x)} \\
        &\leq C\sqrt{\frac{rank(\Sigma_x)}{k}}
    \end{align}
\end{proof}

\begin{lemma}[\citet{bartlett2002rademacher}]
    For any $B$-uniformly bounded and $L$-Lipchitz function $\zeta$, for all $\phi\in\Phi$, with probability atleast $1-\delta$:
    \begin{align}
        \E[\zeta(\phi(\rvx))] \leq \frac{1}{k} \sum_{i} \zeta(\phi(\rvx_i)) + 2L Rad(\Phi, \gD) + B \sqrt{\frac{\log(1/\delta)}{2k}}
    \end{align}
    \label{lem:barlett}
\end{lemma}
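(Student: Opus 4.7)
The plan is to prove this uniform convergence bound via the classical three-step argument that combines bounded-differences concentration, symmetrization, and Talagrand's contraction principle. Define the one-sided worst-case gap
\begin{align}
\Psi(\gD) \triangleq \sup_{\phi \in \Phi} \left( \E[\zeta(\phi(\rvx))] - \frac{1}{k} \sum_{i=1}^{k} \zeta(\phi(\rvx_i)) \right),
\end{align}
so that the lemma follows once we show $\Psi(\gD) \leq 2L\, Rad(\Phi, \gD) + B\sqrt{\log(1/\delta)/(2k)}$ with probability at least $1 - \delta$, since for any fixed $\phi$ the quantity $\E[\zeta(\phi(\rvx))] - \frac{1}{k}\sum_i \zeta(\phi(\rvx_i))$ is dominated by $\Psi(\gD)$.

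First, I would concentrate $\Psi(\gD)$ around its expectation. Since $\zeta$ is $B$-bounded (under the convention $\zeta \in [0,B]$), replacing any single sample $\rvx_i$ by $\rvx_i'$ perturbs the empirical average $\frac{1}{k}\sum_j \zeta(\phi(\rvx_j))$ by at most $B/k$ uniformly in $\phi$, so $\Psi$ satisfies McDiarmid's bounded-differences condition with constants $c_i = B/k$. Applying McDiarmid's inequality and solving $\exp(-2k\epsilon^2/B^2) = \delta$ for $\epsilon$ yields $\Psi(\gD) \leq \E[\Psi(\gD)] + B\sqrt{\log(1/\delta)/(2k)}$ with probability at least $1 - \delta$.

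Second, I would control the expectation by the standard symmetrization trick. Introducing an independent ghost sample $\gD' = \{\rvx_i'\}_{i=1}^{k}$ and independent Rademacher signs $\xi_i \in \{\pm 1\}$, a Jensen's inequality followed by a sign-swap coupling produces $\E[\Psi(\gD)] \leq 2\, \E[Rad(\zeta \circ \Phi, \gD)]$, where $\zeta \circ \Phi \triangleq \{\rvx \mapsto \zeta(\phi(\rvx)) : \phi \in \Phi\}$. Since $\zeta$ is $L$-Lipschitz, Talagrand's contraction principle then strips off the outer layer to give $Rad(\zeta \circ \Phi, \gD) \leq L \cdot Rad(\Phi, \gD)$. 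Chaining the three bounds yields the claim.

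The main obstacle is the contraction step: Talagrand's principle is proved by an induction over the samples that swaps one Rademacher sign at a time and uses the Lipschitz property to couple the two suprema achieved under $\xi_i = +1$ and $\xi_i = -1$, which needs care in ensuring measurability and tightness of the constant. A secondary technical subtlety is that the lemma uses the empirical Rademacher complexity $Rad(\Phi, \gD)$ rather than $\E[Rad(\Phi, \gD)]$; bridging this gap requires either a second McDiarmid-type concentration of $Rad(\Phi, \gD)$ around its expectation (whose bounded-differences constant depends on the range of $\phi$ across the sample) or a one-shot application of McDiarmid to the combined quantity $\Psi(\gD) - 2L\, Rad(\Phi, \gD)$. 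Everything else, namely the bounded-differences concentration and the symmetrization coupling, is routine.
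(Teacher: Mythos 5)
The paper does not prove this lemma at all: it is imported verbatim from \citet{bartlett2002rademacher} and used as a black box in the proof of Corollary~\ref{cor:gen_gap}, so there is no in-paper argument to compare yours against. Your outline is the standard and correct three-step proof of that cited result: McDiarmid with bounded-differences constants $B/k$, symmetrization via a ghost sample to get $\E[\Psi(\gD)] \leq 2\,\E[Rad(\zeta\circ\Phi,\gD)]$, and Ledoux--Talagrand contraction to peel off the $L$-Lipschitz outer function. The one substantive caveat is the one you already flag yourself: the lemma as stated in the paper uses the \emph{empirical} Rademacher complexity $Rad(\Phi,\gD)$, whereas the clean chain of your three steps naturally terminates in the \emph{expected} Rademacher complexity. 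Passing to the empirical version requires a second McDiarmid application to concentrate $Rad(\Phi,\gD)$ around its mean and a union bound over the two events, which in Bartlett--Mendelson's own statement inflates the confidence term to roughly $3B\sqrt{\log(2/\delta)/(2k)}$ rather than the $B\sqrt{\log(1/\delta)/(2k)}$ written here. That discrepancy is inherited from the paper's loose restatement of the lemma (the paper itself conflates empirical and expected complexity in the proof of Theorem~\ref{thm:rad_bound}, where it writes $Rad(\gW_\gamma,\gD) = \E_\gD[Rad(\gW_\gamma,\gD)]$), not a defect of your argument; if you prove the expected-complexity version, your constants are exactly right and the overall structure is sound.
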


\begin{corollary}
    If $A(\cdot)$ is $L_A$-Lipchitz continuous, $\gX$, $\gY$, $\Theta$ are all bounded, then there exists constant $L, B > 0$ such that for all $\theta$ satisfying the constraint in $\gW_\gamma$, we have:
    \begin{align}
        \gL \leq \gL^{std} + 2LL_A\left( C \sqrt{\frac{rank(\Sigma_x)}{k}} \right) + B \sqrt{\frac{\log(1/\delta)}{2k}}
    \end{align}
    with probability atleast $1-\delta$.
\end{corollary}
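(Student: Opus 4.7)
The plan is to combine the Rademacher bound from Theorem~\ref{thm:rad_bound} with the uniform deviation inequality of Lemma~\ref{lem:barlett}. First, I would view the GLM loss as the composition of a Lipschitz outer map with a linear predictor: for a fixed label $y$, define $\zeta_y(t) \triangleq A(t) - yt$ so that $\ell(\theta,(\rvx,\rvy)) = \zeta_y(\theta^T\rvx)$ with $\rvx \mapsto \theta^T\rvx \in \gW_\gamma$. Since $A$ is $L_A$-Lipschitz and $\gY$ is bounded, $\zeta_y$ is $L$-Lipschitz uniformly in $y$ for some $L>0$ depending on $L_A$ and $\sup_{y \in \gY}|y|$; the boundedness of $\gX$, $\gY$, $\Theta$ further ensures a uniform bound $B>0$ on $|\zeta_y(\theta^T\rvx)|$ over the relevant range.

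Next I would invoke Talagrand's contraction lemma to peel off the outer Lipschitz map, giving $Rad(\zeta \circ \gW_\gamma, \gD) \leq L \cdot Rad(\gW_\gamma, \gD)$, and then substitute Theorem~\ref{thm:rad_bound} to obtain $Rad(\zeta \circ \gW_\gamma, \gD) \leq L \cdot C \sqrt{rank(\Sigma_x)/k}$. Finally, applying Lemma~\ref{lem:barlett} to the composed class $\{\zeta \circ \phi : \phi \in \gW_\gamma\}$ yields, with probability at least $1-\delta$, the bound $\gL(\theta) \leq \lstd{\theta}{\gD} + 2L \cdot Rad(\gW_\gamma, \gD) + B\sqrt{\log(1/\delta)/(2k)}$, which recovers the stated inequality once the constants $L$ and $L_A$ are disentangled in the statement's notation (the factor $L_A$ being carried through explicitly to emphasize the dependence on the log-partition function).

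The main obstacle I anticipate is the careful handling of the label-dependent Lipschitz constant in the contraction step, since Ledoux--Talagrand applies cleanly only when the outer function is fixed across samples. The resolution is standard: the dominant Lipschitz constant is uniform over $\gY$ (thanks to its boundedness), so one applies contraction per sample with the common constant $L$ and takes expectation over the Rademacher variables. A minor book-keeping issue is ensuring that the constants $L$ and $B$ in the final statement correctly absorb $\sup_{y \in \gY}|y|$, $\sup_{\rvx \in \gX}\|\rvx\|$, and the diameter of $\Theta$; none of these introduce a new dependence on $k$ or $\delta$, so they are safely hidden in the constants asserted to exist in the statement.
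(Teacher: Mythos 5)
Your proposal is correct and matches the paper's approach: the paper's own proof is a one-line appeal to Lemma~\ref{lem:barlett} and Theorem~\ref{thm:rad_bound}, and your write-up simply supplies the details that appeal implicitly requires (the composition $\zeta_y(\theta^T\rvx)$ with a label-uniform Lipschitz constant, the contraction step producing the $2LL_A$ factor, and the bounded-range constant $B$). No gaps; you have in fact been more careful than the paper itself about how the constants arise.
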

\begin{proof}
    The results follow directly from Lemma~\ref{lem:barlett} and result of Theorem~\ref{thm:rad_bound}.
\end{proof}

\section{Implementation Details}
\label{sec:implementation_details}
This section outlines the implementation specifics of LangDAug. The code and relevant resources are available at \url{https://github.com/backpropagator/LangDAug}.

\textbf{Energy-based Models}: LangDAug leverages trained Energy-Based Models (EBMs) to generate augmented samples, as described in the main text. Following the approach in \citep{letit}, we train EBMs in the latent space of a VQ-VAE 2 model\footnote{Implementation taken from \url{https://github.com/rosinality/vq-vae-2-pytorch}}. The energy model is implemented using the discriminator architecture of StyleGAN2\footnote{Implementation taken from \url{https://github.com/rosinality/stylegan2-pytorch}}.

First, the VQ-VAE 2 model is trained on all source domains. Subsequently, EBMs are trained in the latent space of this VQ-VAE 2 model to enable domain traversal using Langevin dynamics. The VQ-VAE 2 configuration includes a codebook dimension of 32 and a codebook size of 256. Similar to \cite{letit}, we train the VQ-VAE using only the reconstruction loss, omitting the second-stage PixelCNN training. The EBMs are optimized using the Adam optimizer with a learning rate of $0.001$. During domain traversal, the Langevin step size ($\beta$ in Eq.~\ref{eq:LD_aug}) is set to $\beta=1$, and the number of Langevin steps ($K$) is set to $K=40$.

\textbf{LangDAug}: For $n$ source domains, we train $2\tbinom{n}{2}$ EBMs to model domain pairs in both directions\footnote{As mentioned in limitations, this can be problematic if the number of source domains is too large. In that case, shared architectures can be explored with conditioning on desired domains}. To generate Langevin samples, as outlined in Section~\ref{sec:landaug}, we execute Langevin Dynamics (LD) and store intermediate samples as per Eq.~\ref{eq:LD_aug}. Instead of retaining all $K=40$ intermediate samples, we store a subset of these samples for two reasons: (a) to manage storage and computational overhead, and (b) to reduce the correlation between MCMC samples. Specifically, we store 13 samples for fundus segmentation (uniformly at iterations $k = 3, 6, \dots, 39$)\footnote{For RFS dataset, we use an $L$-channel replacement strategy to maintain the position of optical disc and optical cup while running LD. Specifically, after each LD step, we replace the $L$ channel of the updated sample with $L$ channel of original sample. This ensures that the position of OC and OD are preserved and segmentation masks remain valid.} and 5 samples for 2D MRI prostate segmentation (at iterations $k = 5, 10, \dots, 40$). These samples are precomputed for each EBM to save computational time in downstream segmentation network training.  We provide a pseudo-code in Algorithm~\ref{alg1}.

\textbf{Segmentation Network}: For downstream segmentation, we employ a ResNet34-based network~\citep{resnet}\footnote{Implementation taken from \url{https://github.com/kazuto1011/deeplab-pytorch/tree/master/libs/models}}. It comprises of a ResNet34 encoder and ASPP-based decoder.
The network is trained using a combination of cross-entropy and DICE loss, with the AdamW optimizer. The learning rate (lr) is selected from $\{1 \times 10^{-4}, 1 \times 10^{-6}\}$, and the batch size (bs) is chosen from $\{8, 32, 64\}$. The running average parameters are set to $\beta_1 = 0.9$ and $\beta_2 = 0.99$. Empirically, we observed that (lr = $1\times 10^{-4}$, bs = $8$) yields optimal performance for fundus segmentation, whereas (lr = $1\times 10^{-6}$, bs = $32/64$) works best for 2D MRI prostate segmentation.

\textbf{Baselines}: For all the baselines, we use the same ResNet34-based segmentation network.  The DomainBed methods (Fish, Fishr, and Hutchinson) are primarily used in classification tasks. For segmentation, we re-implemented these methods to ensure compatibility with the retinal fundus and prostate datasets, utilizing the ResNet-34 backbone.
These adaptations were closely aligned with the original implementations provided in DomainBed\footnote{\url{https://github.com/facebookresearch/DomainBed/tree/main}}. In addition to the default implementation, we re-scaled the losses for better stability and training of network. 

We use the publicly available codebase for RandConv\footnote{\url{https://github.com/wildphoton/RandConv}}, MixStyle\footnote{\url{https://github.com/KaiyangZhou/mixstyle-release}}, FedDG\footnote{\url{https://github.com/liuquande/FedDG-ELCFS}}, RAM\footnote{\url{https://github.com/zzzqzhou/RAM-DSIR/tree/main}} and TriD\footnote{\url{https://github.com/Chen-Ziyang/TriD}}. For MixStyle, we add additional layers after the first two ResNet blocks of segmentation model to transfer the instance level feature statistics between source domains. 
Further, for RAM and FedDG, instead of UNet encoder-decoder, we use the ResNet34 blocks and ASPP decoder for training.


\textbf{Datasets}: The datasets can be found publicly. The Retinal Fundus Dataset can be found at: \url{https://drive.google.com/file/d/1p33nsWQaiZMAgsruDoJLyatoq5XAH-TH/view}, 2D MRI Prostate Dataset can be found at: \url{https://liuquande.github.io/SAML/}. We have used the same  convention as earlier works~\citep{feddg,ram_dsir, trid} for naming the datasets. 

\begin{algorithm}[!t]
\caption{\textbf{Langevin Data Augmentation}}
\label{alg1}
\textbf{Input}: Source domains $\{\gD_i\}_{i=1}^{n}$, Langevin step size $\beta$, Number of Langevin steps $K$.  

\textbf{Output}: Augmentation datasets $\{\gD_{ij}^k\}_{i=1,j=1,k=1}^{n,n,K}$.

\begin{algorithmic}[1]
    \STATE Initialize parameters $\{\theta_{ij}\}_{i,j=1}^{n}$, where $i \neq j$.

    \vspace{-0.5em} 
    \noindent\rule{\linewidth}{0.4pt}
    \vspace{-0.5em} 
    \STATE \textcolor{red}{\texttt{Energy-Based Model (EBM) Training}}
    \vspace{-0.5em} 
    \noindent\rule{\linewidth}{0.4pt}
    \vspace{-0.5em} 

    \textcolor{blue}{\texttt{\% Loop over all domain pairs}}
    \FOR{$i \in \{1, \dots, n\}$; $j \in \{1, \dots, n\}$; $i \neq j$} 
        \REPEAT
            \STATE Sample $\rvx \sim \gD_j$, $\rvx_0 \sim \gD_i$. 
            
            \textcolor{blue}{\texttt{\% Perform Langevin Dynamics}}
            \FOR{$k = 1, \dots, K$} 
                \STATE Sample $\epsilon \sim \gN(0, I)$. 
                \STATE $\rvx_k = \rvx_{k-1} - \frac{\beta^2}{2} \nabla_{\rvx_{k-1}} E_{\theta_{ij}}(\rvx_{k-1}) + \beta \epsilon$. 
            \ENDFOR
            \STATE $\Tilde{\rvx} \gets \rvx_K$. 
            
            \textcolor{blue}{\texttt{\% Compute gradient}}
            \STATE $\nabla_{\theta_{ij}}\gL = \nabla_{\theta_{ij}} E_{\theta_{ij}}(\rvx) - \nabla_{\theta_{ij}} E_{\theta_{ij}}(\Tilde{\rvx})$. 
            \STATE $\theta_{ij} \gets \theta_{ij} - \lambda \nabla_{\theta_{ij}}\gL$. 
        \UNTIL convergence
    \ENDFOR

    \vspace{-0.5em} 
    \noindent\rule{\linewidth}{0.4pt}
    \vspace{-0.5em} 
    \STATE \textcolor{red}{\texttt{Generation of Langevin Samples}}
    \vspace{-0.5em} 
    \noindent\rule{\linewidth}{0.4pt}
    \vspace{-0.5em} 

    \textcolor{blue}{\texttt{\% Storage for augmented datasets}}
    \STATE Initialize empty sets $\{\gD_{ij}^k\}_{i=1,j=1,k=1}^{n,n,K}$. 
    
    \textcolor{blue}{\texttt{\% Loop over all domain pairs}}
    \FOR{$i \in \{1, \dots, n\}$; $j \in \{1, \dots, n\}$; $i \neq j$}  
        \item \textcolor{blue}{\texttt{\% Loop through samples in source domain} $i$}
        \FOR{$\rvx_i \in \gD_i$} 
            \STATE $\rvx_0 \gets \rvx_i$. 
            
            \textcolor{blue}{\texttt{\% Perform Langevin steps}}
            \FOR{$k = 1, \dots, K$} 
                \STATE Sample $\epsilon \sim \gN(0, I)$. 
                \STATE $\rvx_k = \rvx_{k-1} - \frac{\beta^2}{2} \nabla_{\rvx_{k-1}} E_{\theta_{ij}}(\rvx_{k-1}) + \beta \epsilon$. 
                
                \textcolor{blue}{\texttt{\% Store augmented sample}}
                \STATE $(\gD_{ij}^k).\texttt{append}(\rvx_k)$. 
            \ENDFOR
        \ENDFOR
    \ENDFOR
    
    \textcolor{blue}{\texttt{\% Return all augmented datasets}}
    \STATE \textbf{return} $\{\gD_{ij}^k\}_{i=1,j=1,k=1}^{n,n,K}$.
\end{algorithmic}
\end{algorithm}

\section{Additional Results}

\subsection{Ablation Studies on EBM Hyperparameters}

We analyze the impact of various Energy-Based Model (EBM) hyperparameters on cross-domain generalization for the Retinal Fundus Segmentation task. The results are summarized below:

\begin{itemize}
    \item \textbf{Langevin Steps ($K$):} First row of Figure~\ref{fig:ablation} shows the performance across different numbers of Langevin steps used during EBM sampling. LangDAug maintains stable performance at higher values of $K$.

    \item \textbf{Step Size ($\beta$):} Second row of Figure~\ref{fig:ablation} illustrates the effect of varying the step size in Langevin dynamics. Lower values of $\beta$ lead to more stable and reliable performance.

    \item \textbf{EBM Complexity (\texttt{\#conv\_blocks}):} We vary the number of convolutional blocks in the EBM to control model complexity. As shown in the third row of Figure~\ref{fig:ablation}, performance remains consistent, suggesting that even lightweight EBMs are sufficient.

    \item \textbf{Number of Augmented Samples (\texttt{\#samples/chain}):} Fourth row of Figure~\ref{fig:ablation} examines the number of intermediate samples stored per Langevin chain. Too few samples fail to capture vicinal distributions, while too many introduce high auto-correlation between langevin samples leading to biased learning. A moderate, well-spaced selection of samples yields the best results.
\end{itemize}








\begin{figure*}[htbp]
\centering
\input{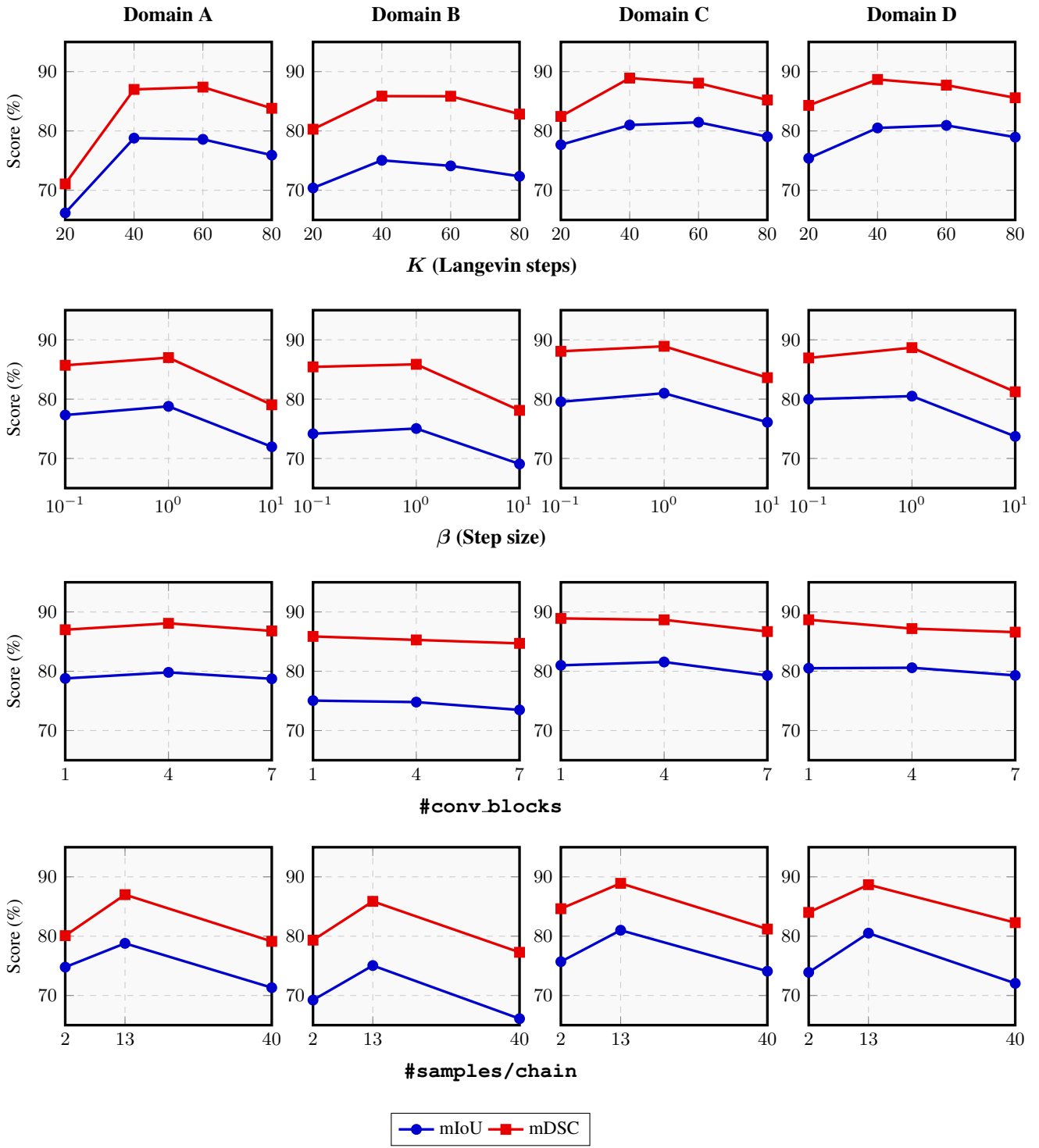}  
\caption{
Ablation analysis across four domains. 
The first row shows the effect of varying the number of Langevin steps (\(k\)); 
the second row shows the effect of varying the step size (\(\beta\)); 
the third row depicts the impact of changing the number of convolutional blocks; 
and the fourth row presents the effect of varying the number of Langevin samples per chain. 
Metrics reported are mIoU and mDSC.
}
\label{fig:ablation}
\end{figure*}




\begin{table*}[!h]
\centering
\caption{Training time (in GPU hours) and peak memory usage (in GB) for all methods. “+Ours” refers to integration of LangDAug into the baseline.}
\label{tab:comp_cost}
\resizebox{0.8\textwidth}{!}{%
\begin{tabular}{lcccccccc}
\toprule
\textbf{Metric} & \textbf{ERM} & \textbf{Ours} & \textbf{FedDG} & \textbf{FedDG+Ours} & \textbf{RAM} & \textbf{RAM+Ours} & \textbf{TriD} & \textbf{TriD+Ours} \\
\midrule
GPU hrs & 1.51 & 3.14 & 4.60 & 6.13 & 2.75 & 3.77 & 5.53 & 7.49 \\
Memory (GB) & 10.36 & 19.41 & 16.77 & 23.16 & 12.58 & 20.24 & 24.87 & 30.11 \\
\bottomrule
\end{tabular}%
}
\end{table*}

\subsection{Computational Cost Analysis}

We acknowledge the increased computational cost of LangDAug in our limitations. In Table~\ref{tab:comp_cost}, we provide a detailed comparison of training time and peak memory usage across methods on the Retinal Fundus Segmentation (RFS) task. All experiments were conducted on a single NVIDIA A6000 GPU with 48GB memory, and results are averaged across domains.

Although LangDAug increases training time, it remains comparable to methods like RAM (2.75 hrs) and is notably more efficient than FedDG (4.60 hrs), while providing superior performance. 

Additionally, the average Energy-Based Model (EBM) training time is approximately 0.357 hours per source-target domain pair. The inference cost of running a Langevin Dynamics (LD) chain is minimal, taking roughly 2 seconds.

Apart from this, the proposed method also requires additional storage requirements to store the Langevin samples. Particularly, for $n$ data points, and a saving frequency of $f$, the additional data points will equal $nK/f$.

As with many domain augmentation (DA) methods that rely on synthetic sample generation, increased training cost is an inherent trade-off. Potential directions for optimization include selective sampling strategies (e.g., coresets) and architectural sharing for EBMs to enable conditioning across domains.

\section{Inter-Domain Transversal Visualizations}
We present visual examples of the inter-domain transversal process over $K = 40$ Langevin dynamics steps. The transversal sequences for the retinal fundus dataset are depicted in \cref{fig:fundus_source_1,fig:fundus_source_2,fig:fundus_source_3,fig:fundus_source_4}, while those for the prostate dataset are shown in \cref{fig:prostate_source_BIDMC,fig:prostate_source_BMC,fig:prostate_source_HK,fig:prostate_source_I2CVB,fig:prostate_source_RUNMC,fig:prostate_source_UCL}. In each figure, rows correspond to distinct samples starting from the source domain, with each row capturing the intermediate Langevin samples, while columns represent the specific steps in the Langevin Dynamics process at which the transversal are recorded. 

\clearpage
\begin{figure*}[htbp]
    \centering
    \begin{subfigure}[b]{0.45\textwidth}
        \centering
        \includegraphics[width=\textwidth]{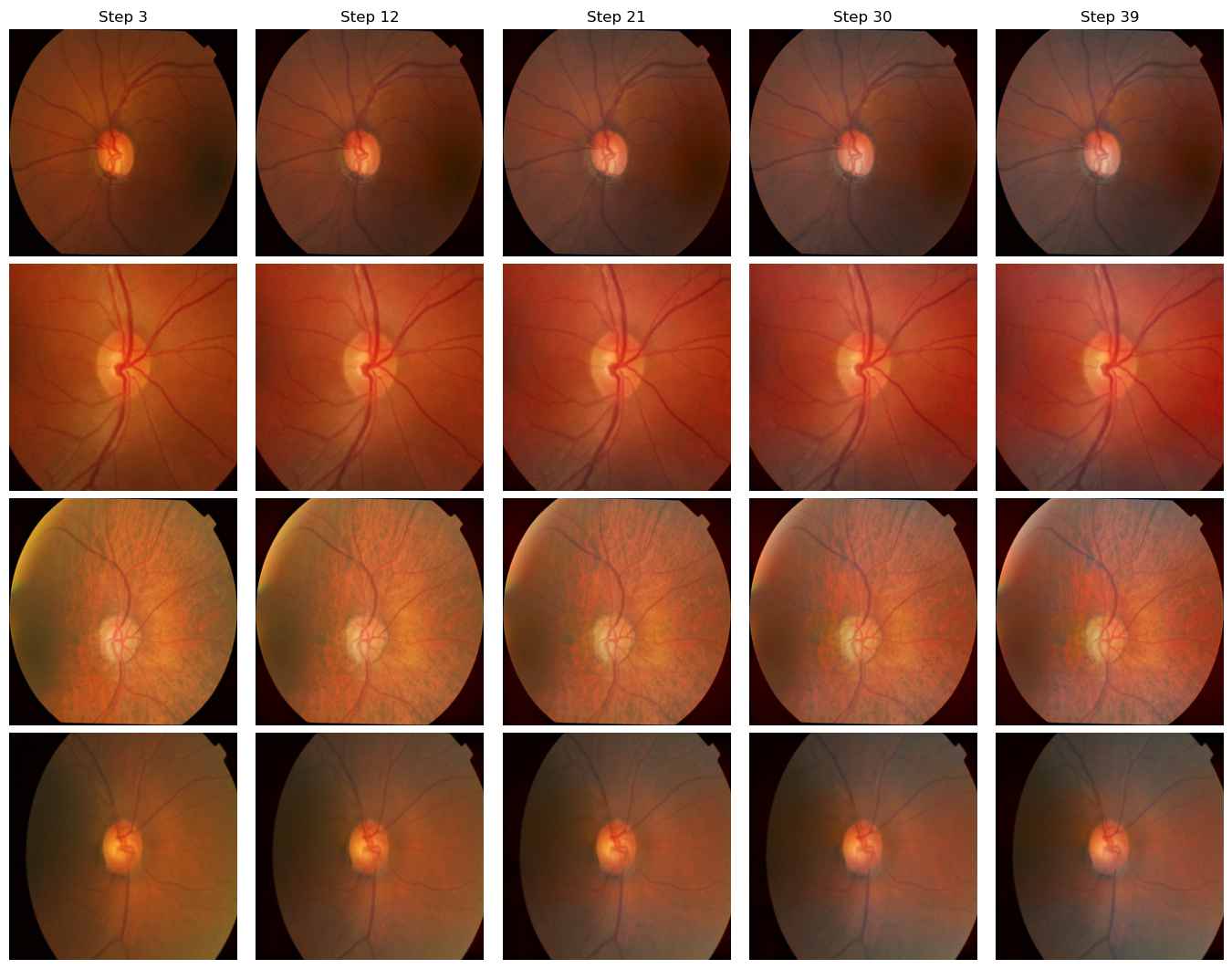}
        \caption{Ex. 1: Domain A $\rightarrow$ Domain B}
        \label{fig:figure122_1}
    \end{subfigure}
    \hspace{0.02\textwidth}
    \begin{subfigure}[b]{0.45\textwidth}
        \centering
        \includegraphics[width=\textwidth]{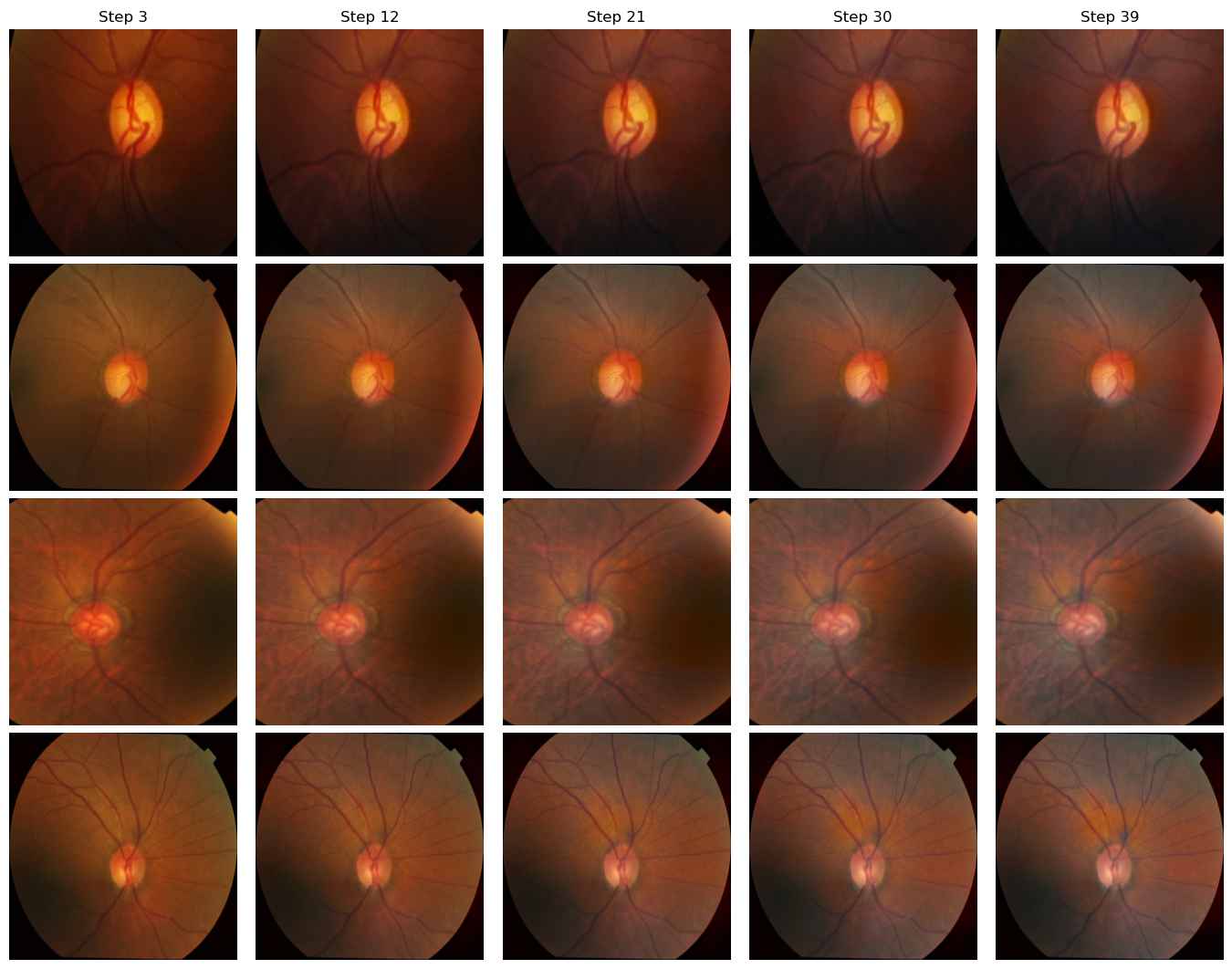}
        \caption{Ex. 2: Domain A $\rightarrow$ Domain B}
        \label{fig:figure122_2}
    \end{subfigure}
    \hspace{0.05\textwidth} 
    \begin{subfigure}[b]{0.45\textwidth}
        \centering
        \includegraphics[width=\textwidth]{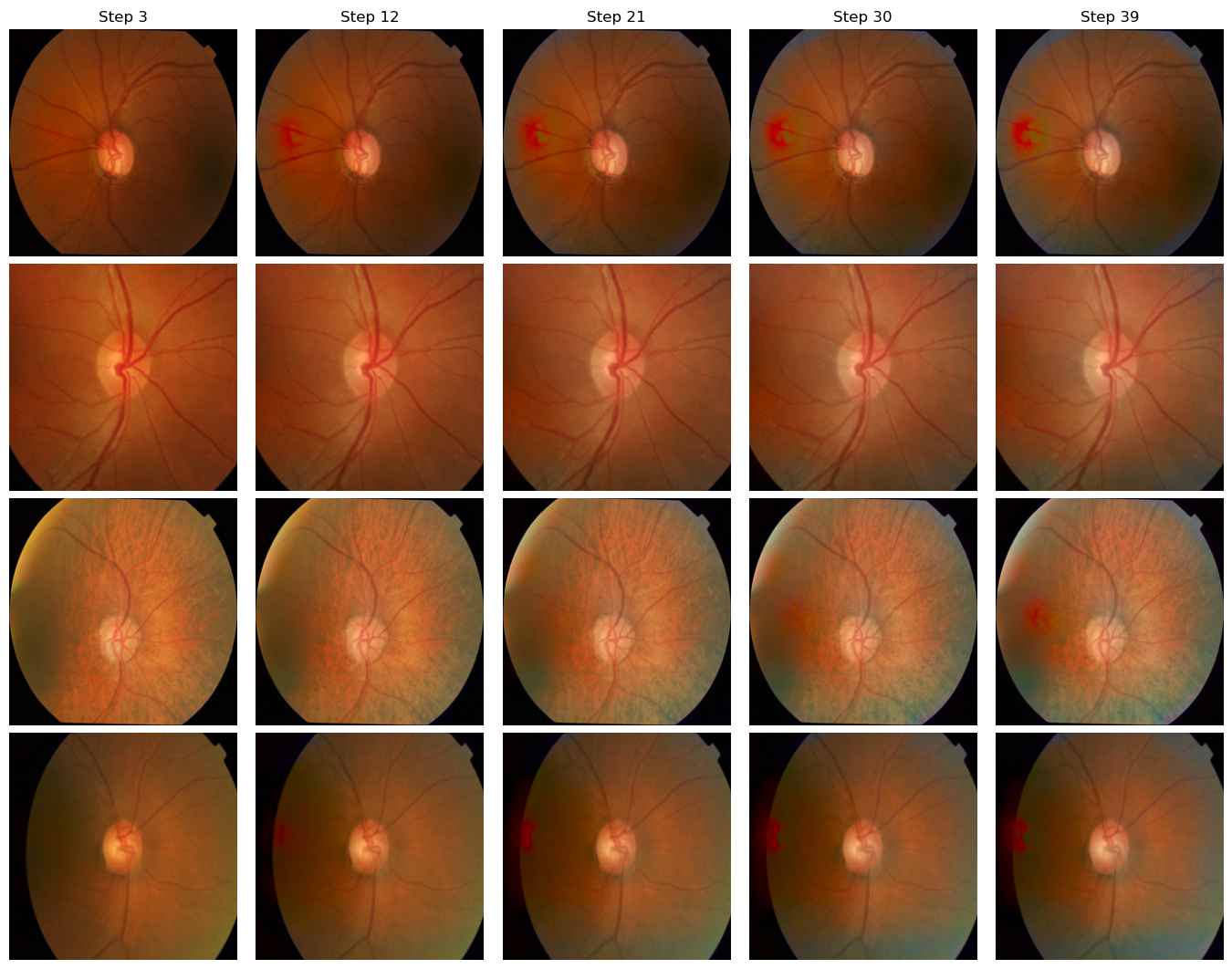}
        \caption{Ex. 1: Domain A $\rightarrow$ Domain C}
        \label{fig:figure123_1}
    \end{subfigure}
    \hspace{0.02\textwidth}
    \begin{subfigure}[b]{0.45\textwidth}
        \centering
        \includegraphics[width=\textwidth]{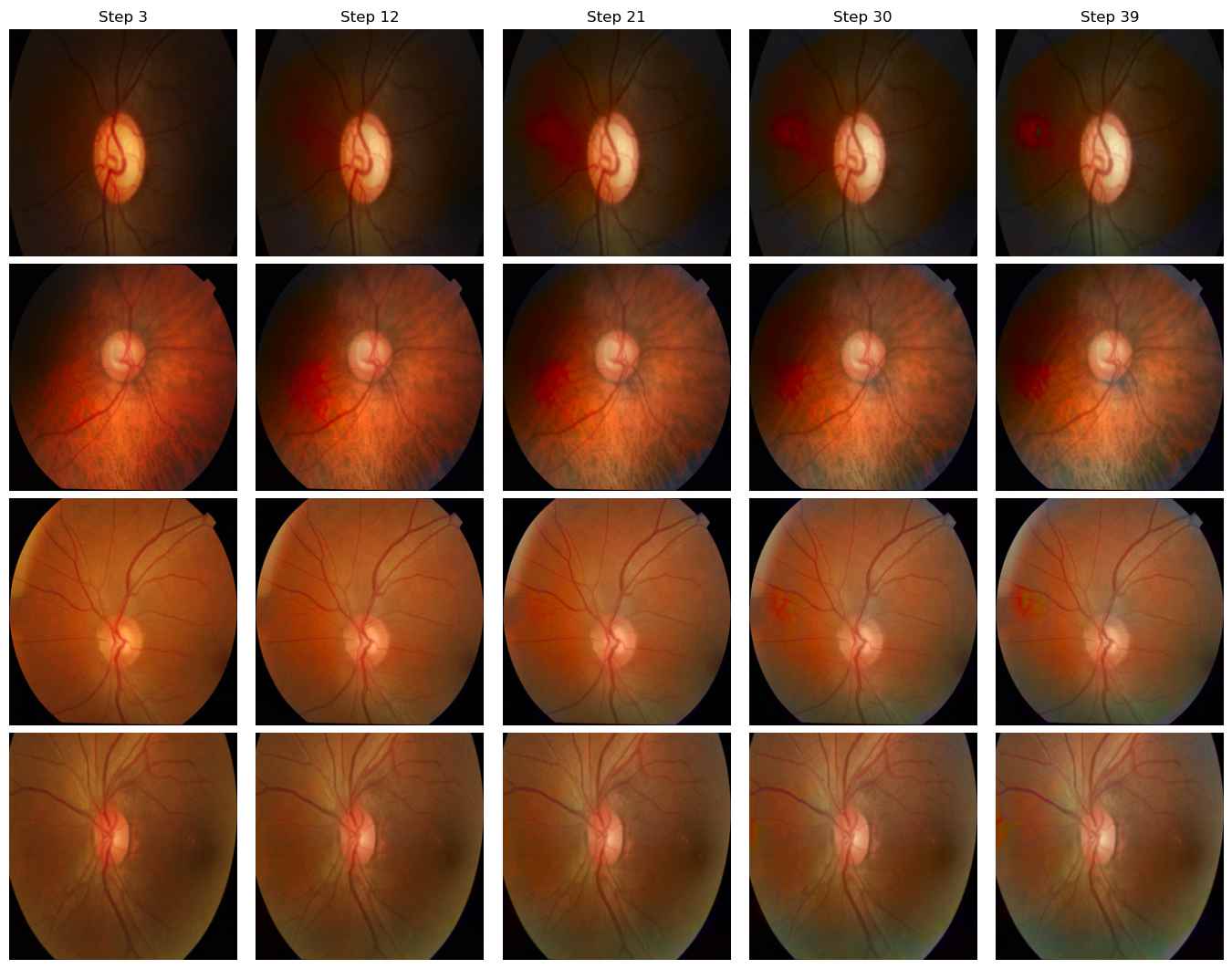}
        \caption{Ex. 2: Domain A $\rightarrow$ Domain C}
        \label{fig:figure123_2}
    \end{subfigure}
    \begin{subfigure}[b]{0.45\textwidth}
        \centering
        \includegraphics[width=\textwidth]{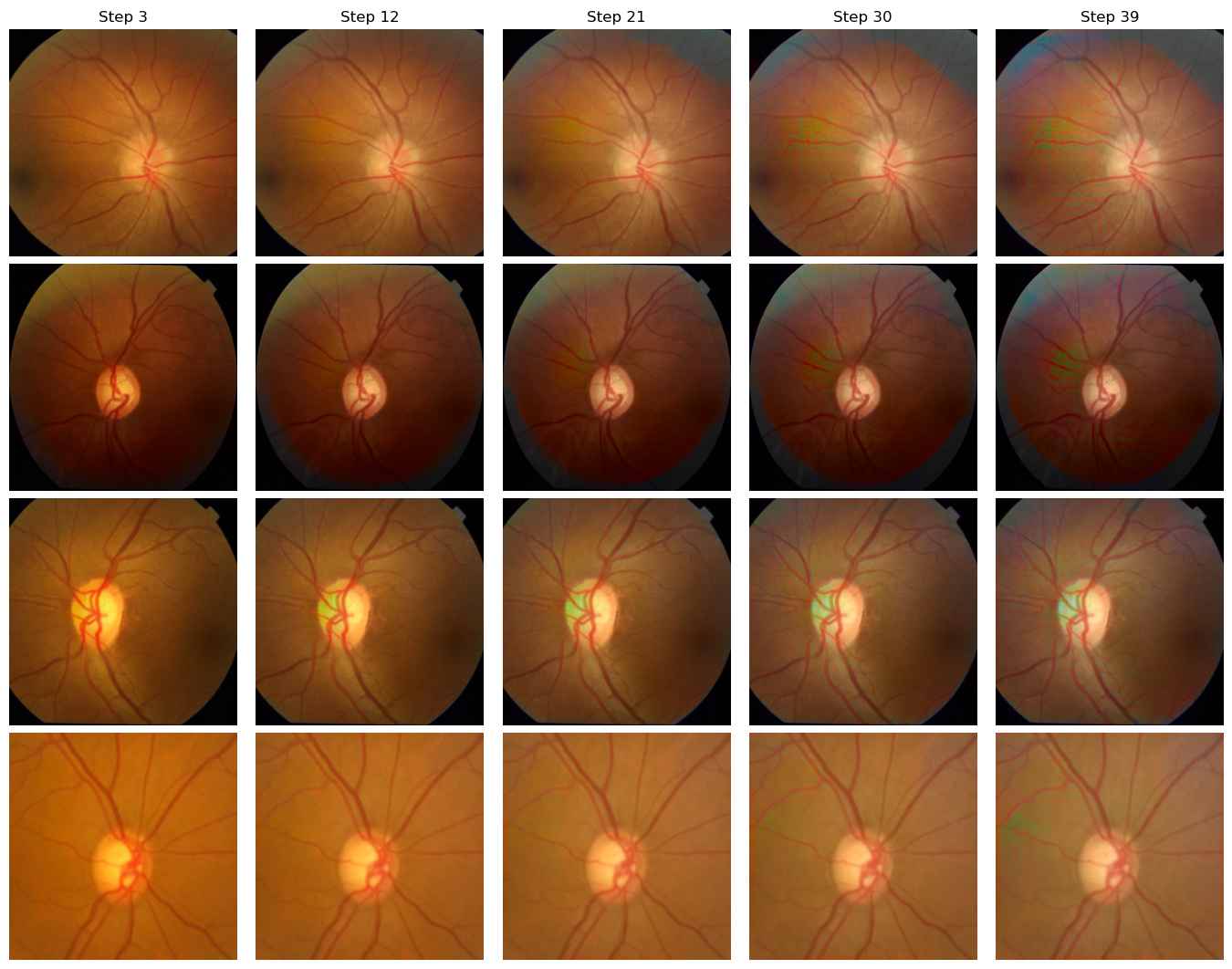}
        \caption{Ex. 1: Domain A $\rightarrow$ Domain D}
        \label{fig:figure124_1}
    \end{subfigure}
    \hspace{0.02\textwidth}
    \begin{subfigure}[b]{0.45\textwidth}
        \centering
        \includegraphics[width=\textwidth]{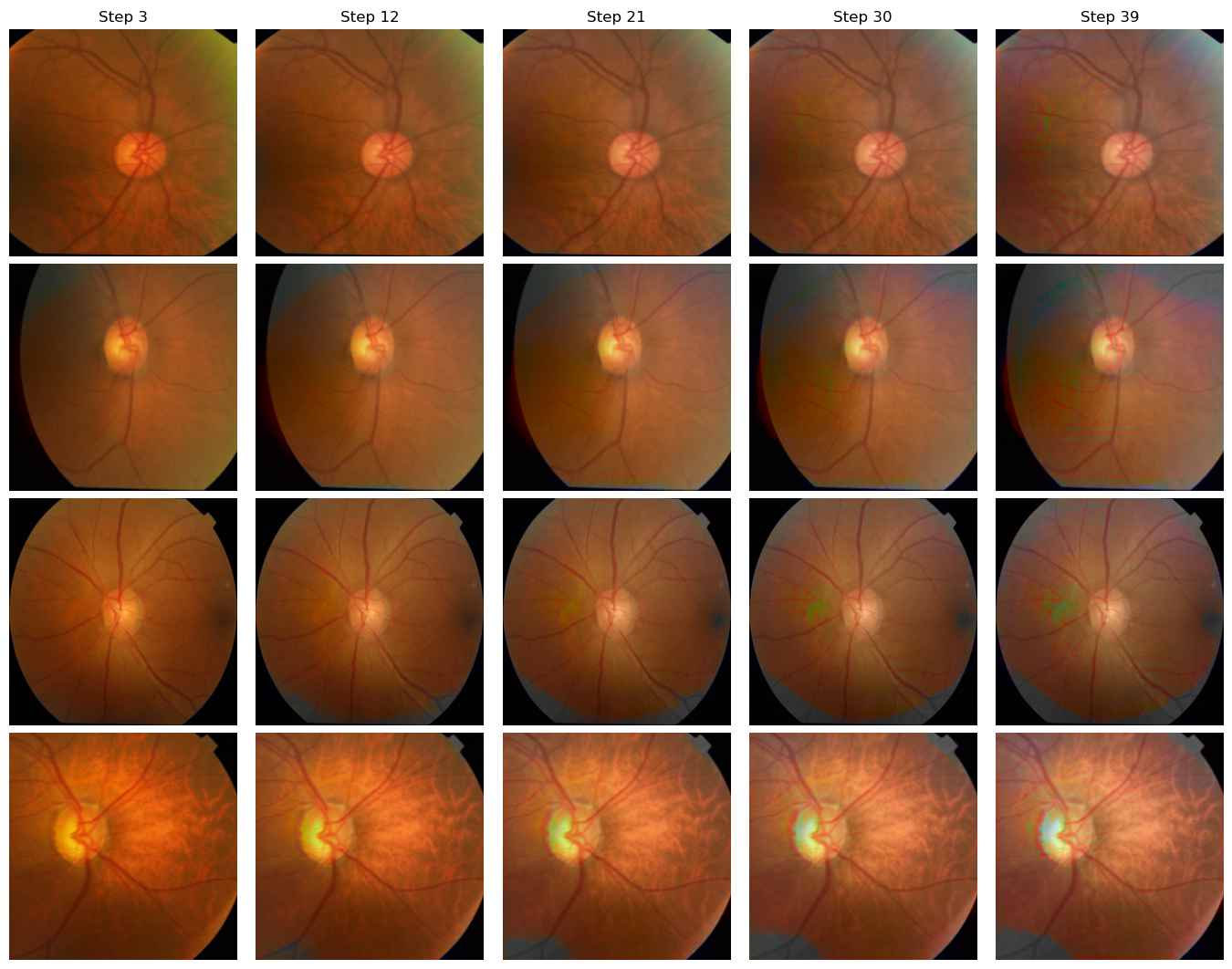}
        \caption{Ex. 2: Domain A $\rightarrow$ Domain D}
        \label{fig:figure124_2}
    \end{subfigure}
    \caption{Examples of translation from Domain A to Domains B, C and D using the proposed method on the retinal fundus dataset.}
    \label{fig:fundus_source_1}
\end{figure*}

\begin{figure*}[htbp]
    \centering
    \begin{subfigure}[b]{0.45\textwidth}
        \centering
        \includegraphics[width=\textwidth]{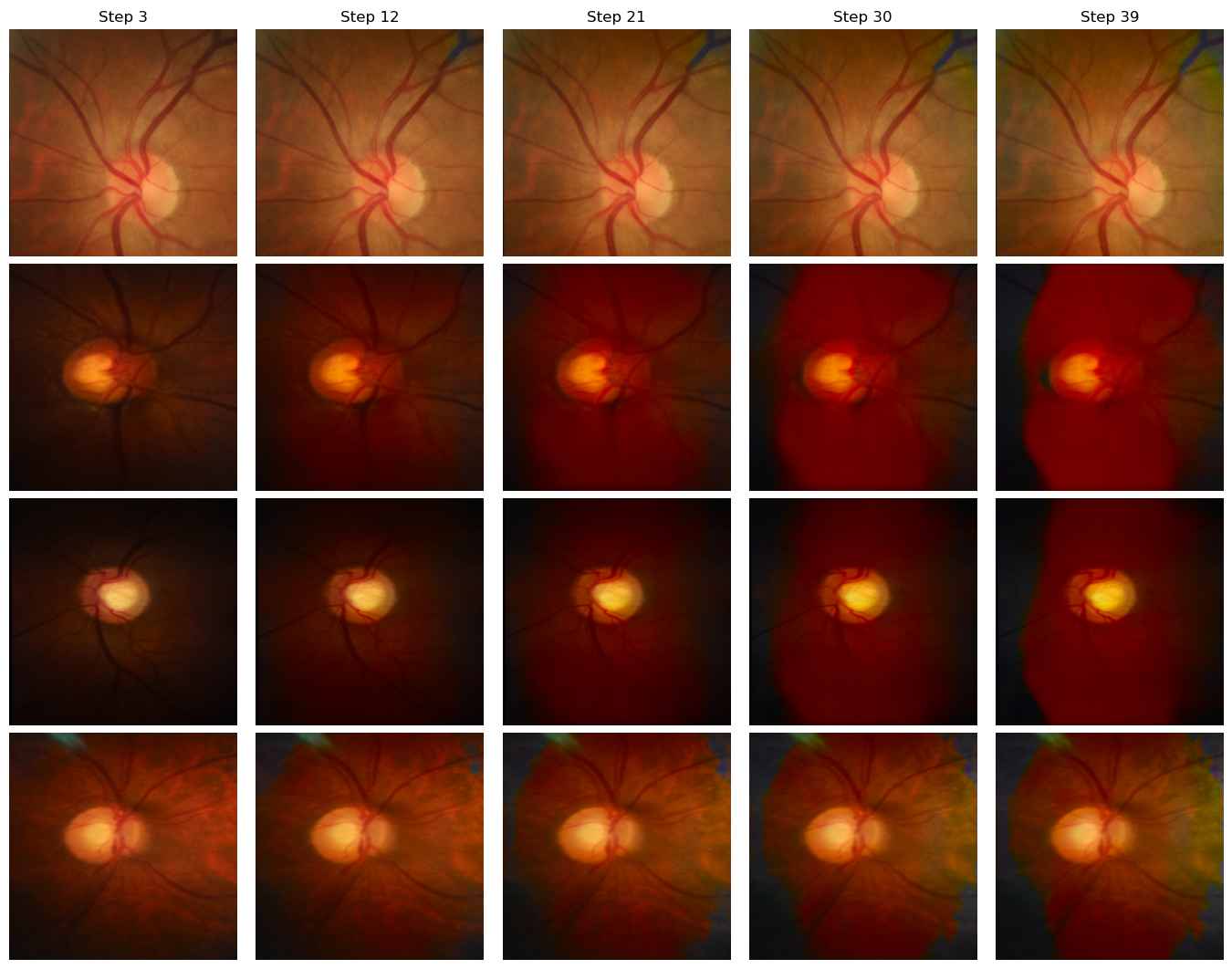}
        \caption{Ex. 1: Domain B $\rightarrow$ Domain A}
        \label{fig:figure221_1}
    \end{subfigure}
    \hspace{0.02\textwidth}
    \begin{subfigure}[b]{0.45\textwidth}
        \centering
        \includegraphics[width=\textwidth]{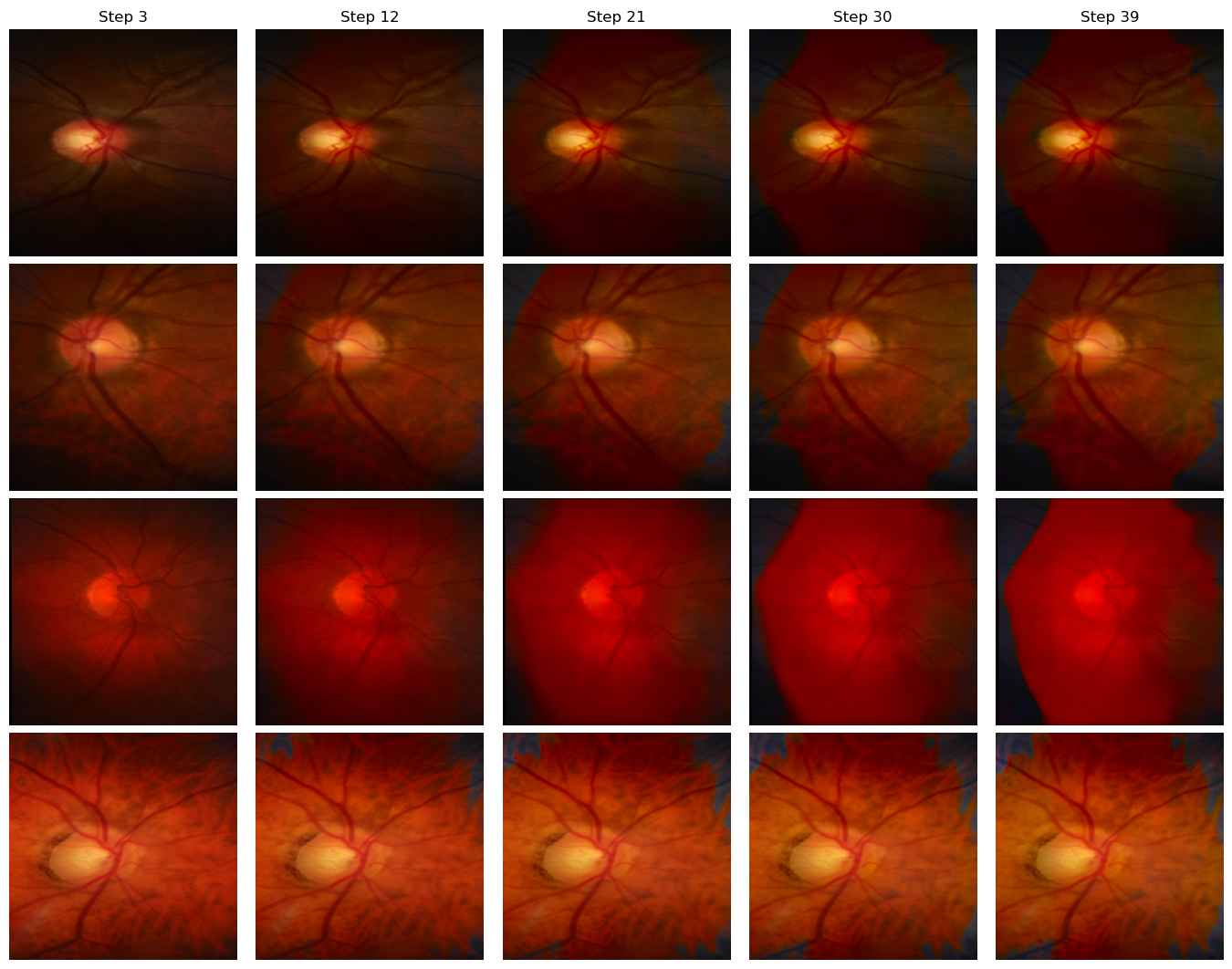}
        \caption{Ex. 2: Domain B $\rightarrow$ Domain A}
        \label{fig:figure221_2}
    \end{subfigure}
    \hspace{0.05\textwidth} 
    \begin{subfigure}[b]{0.45\textwidth}
        \centering
        \includegraphics[width=\textwidth]{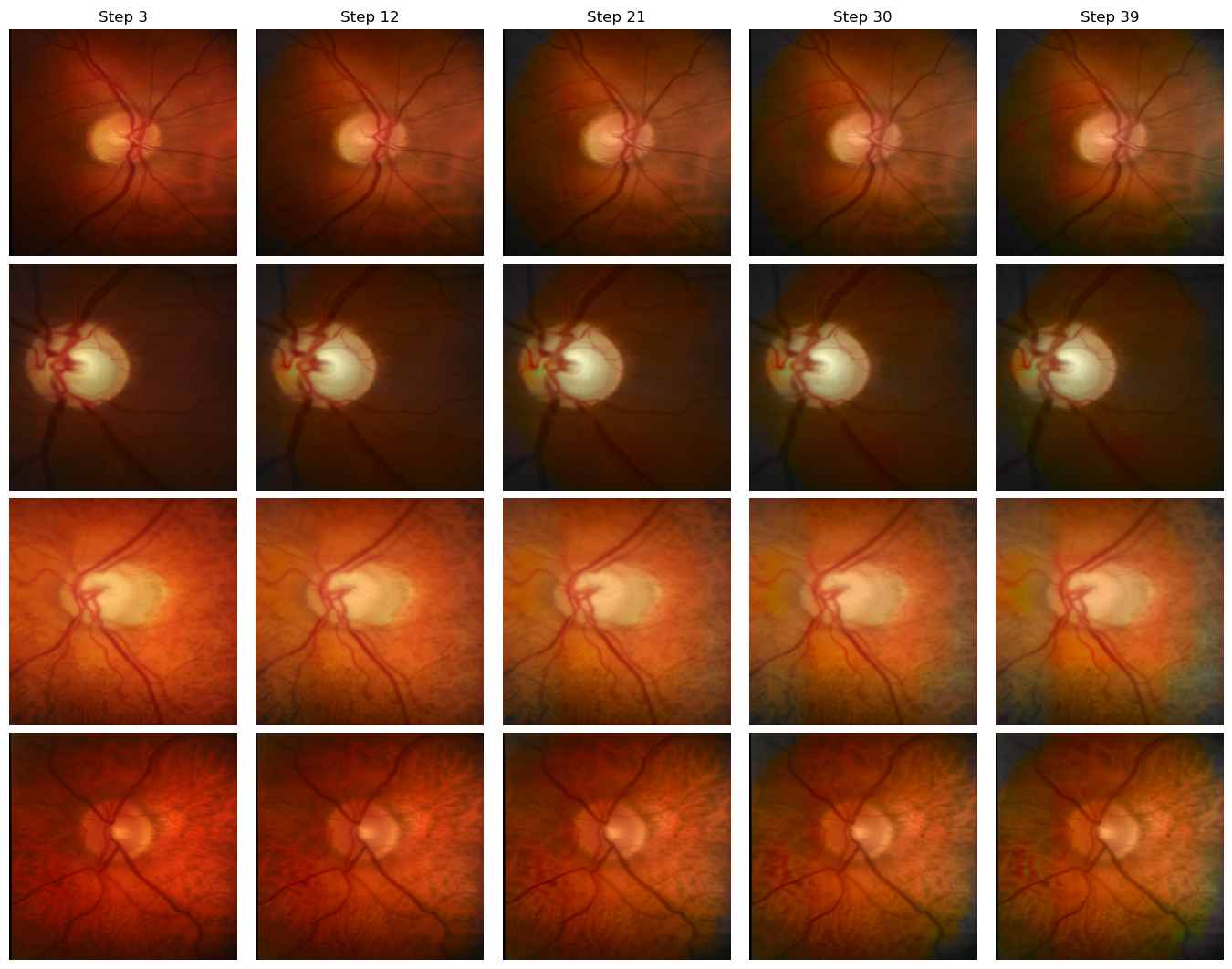}
        \caption{Ex. 1: Domain B $\rightarrow$ Domain C}
        \label{fig:figure223_1}
    \end{subfigure}
    \hspace{0.02\textwidth}
    \begin{subfigure}[b]{0.45\textwidth}
        \centering
        \includegraphics[width=\textwidth]{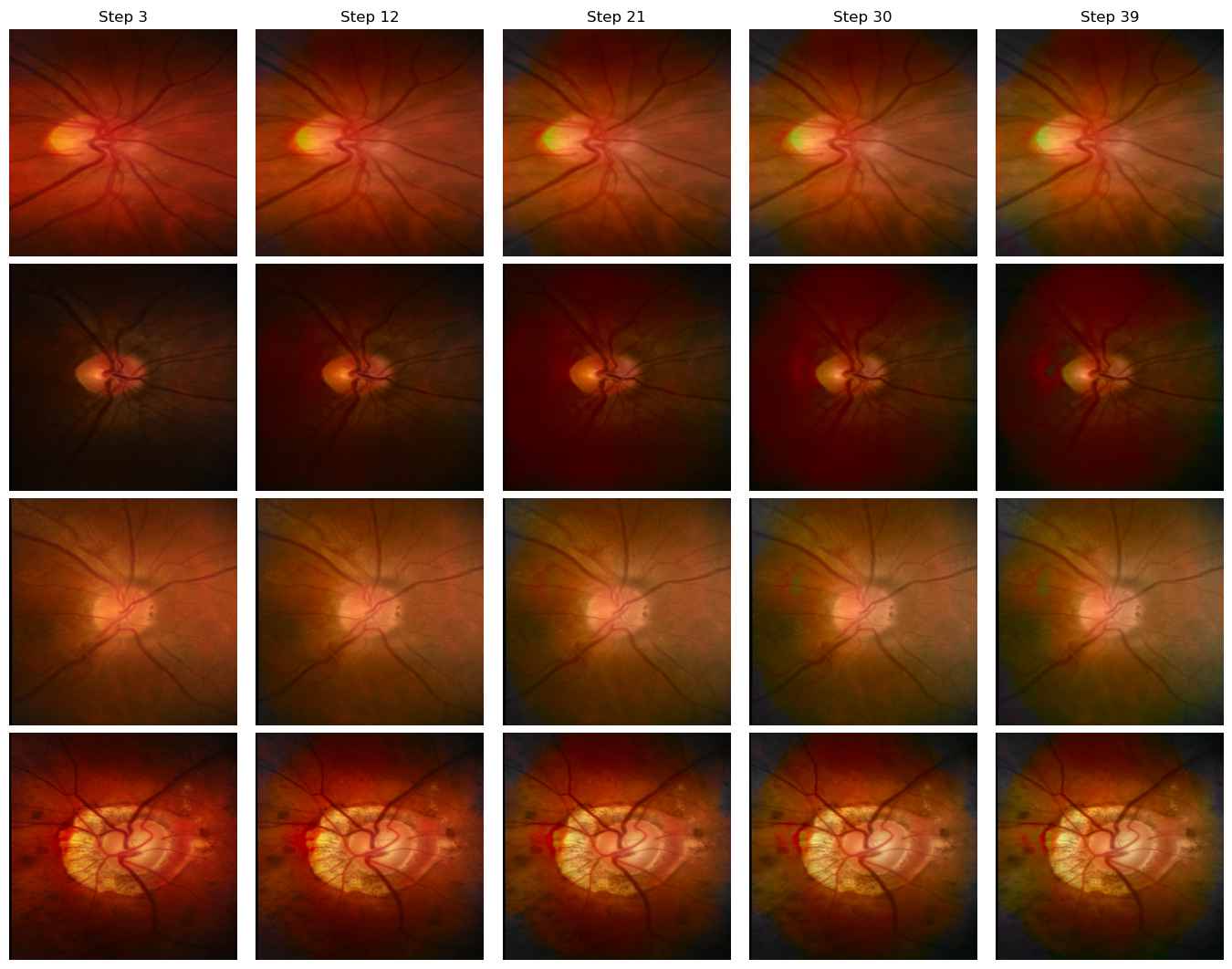}
        \caption{Ex. 2: Domain B $\rightarrow$ Domain C}
        \label{fig:figure223_2}
    \end{subfigure}
    \begin{subfigure}[b]{0.45\textwidth}
        \centering
        \includegraphics[width=\textwidth]{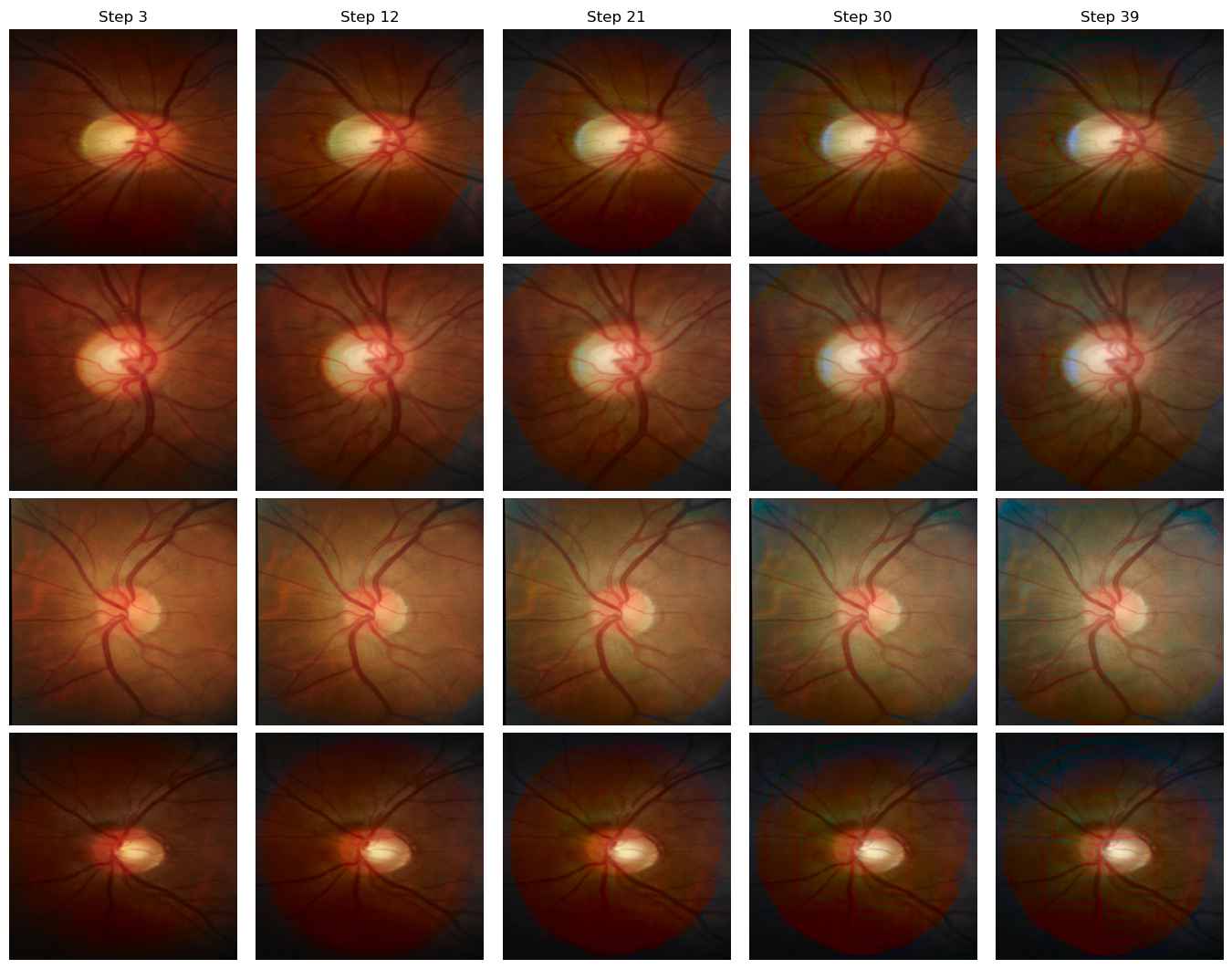}
        \caption{Ex. 1: Domain B $\rightarrow$ Domain D}
        \label{fig:figure224_1}
    \end{subfigure}
    \hspace{0.02\textwidth}
    \begin{subfigure}[b]{0.45\textwidth}
        \centering
        \includegraphics[width=\textwidth]{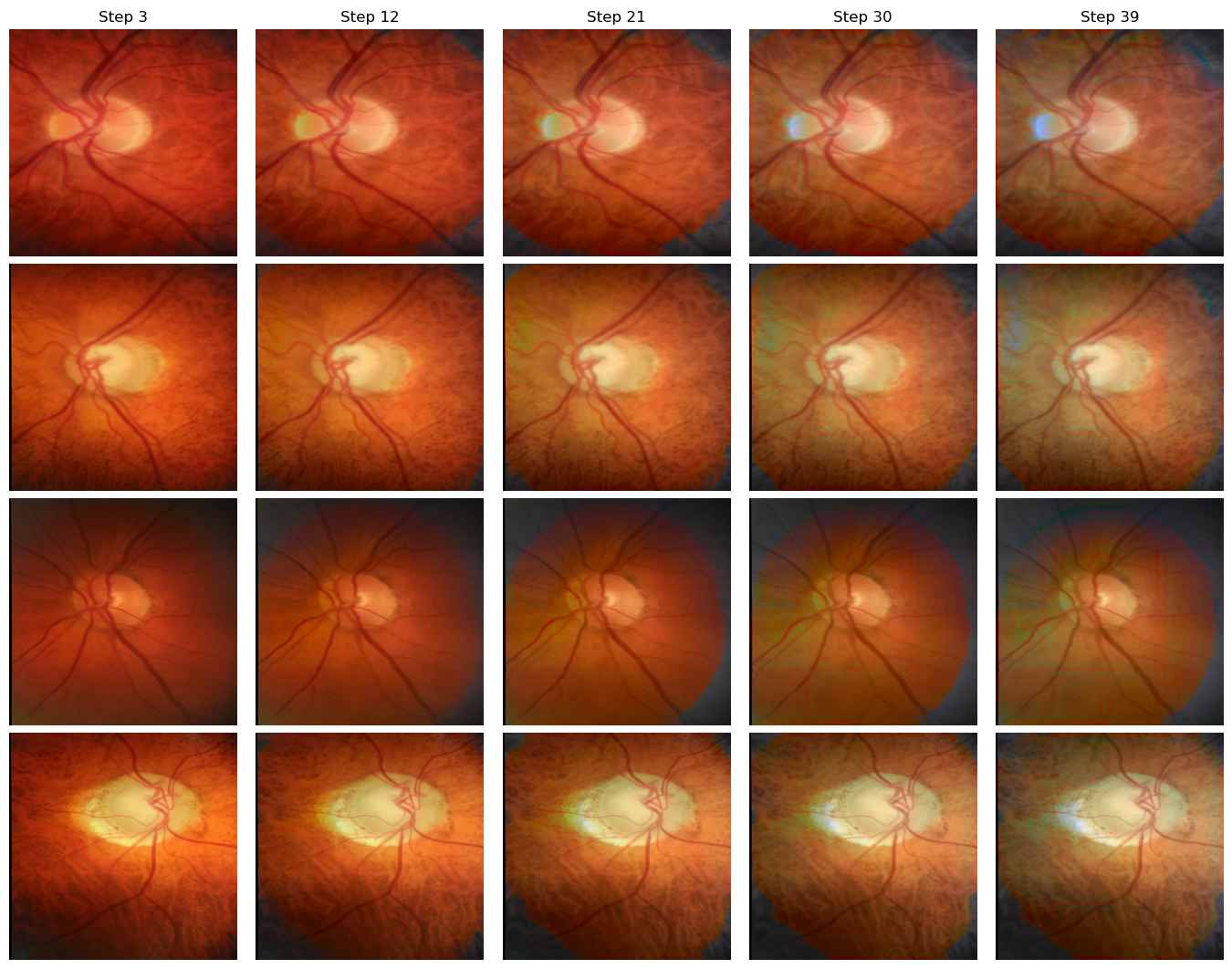}
        \caption{Ex. 2: Domain B $\rightarrow$ Domain D}
        \label{fig:figure224_2}
    \end{subfigure}
    \caption{Examples of translation from Domain B to Domains A, C and D using the proposed method on the retinal fundus dataset.}
    \label{fig:fundus_source_2}
\end{figure*}

\begin{figure*}[htbp]
    \centering
    \begin{subfigure}[b]{0.45\textwidth}
        \centering
        \includegraphics[width=\textwidth]{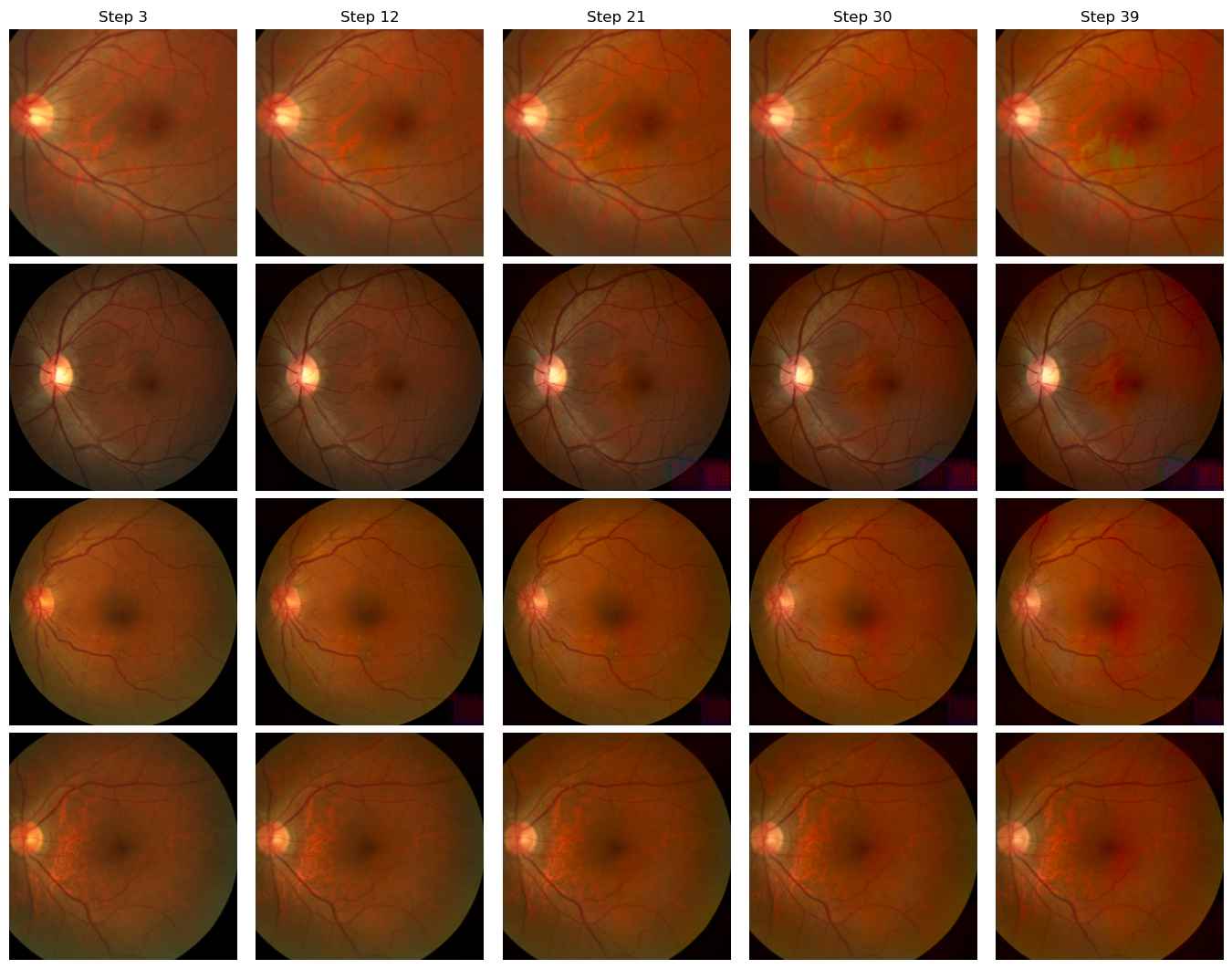}
        \caption{Ex. 1: Domain C $\rightarrow$ Domain A}
        \label{fig:figure321_1}
    \end{subfigure}
    \hspace{0.02\textwidth}
    \begin{subfigure}[b]{0.45\textwidth}
        \centering
        \includegraphics[width=\textwidth]{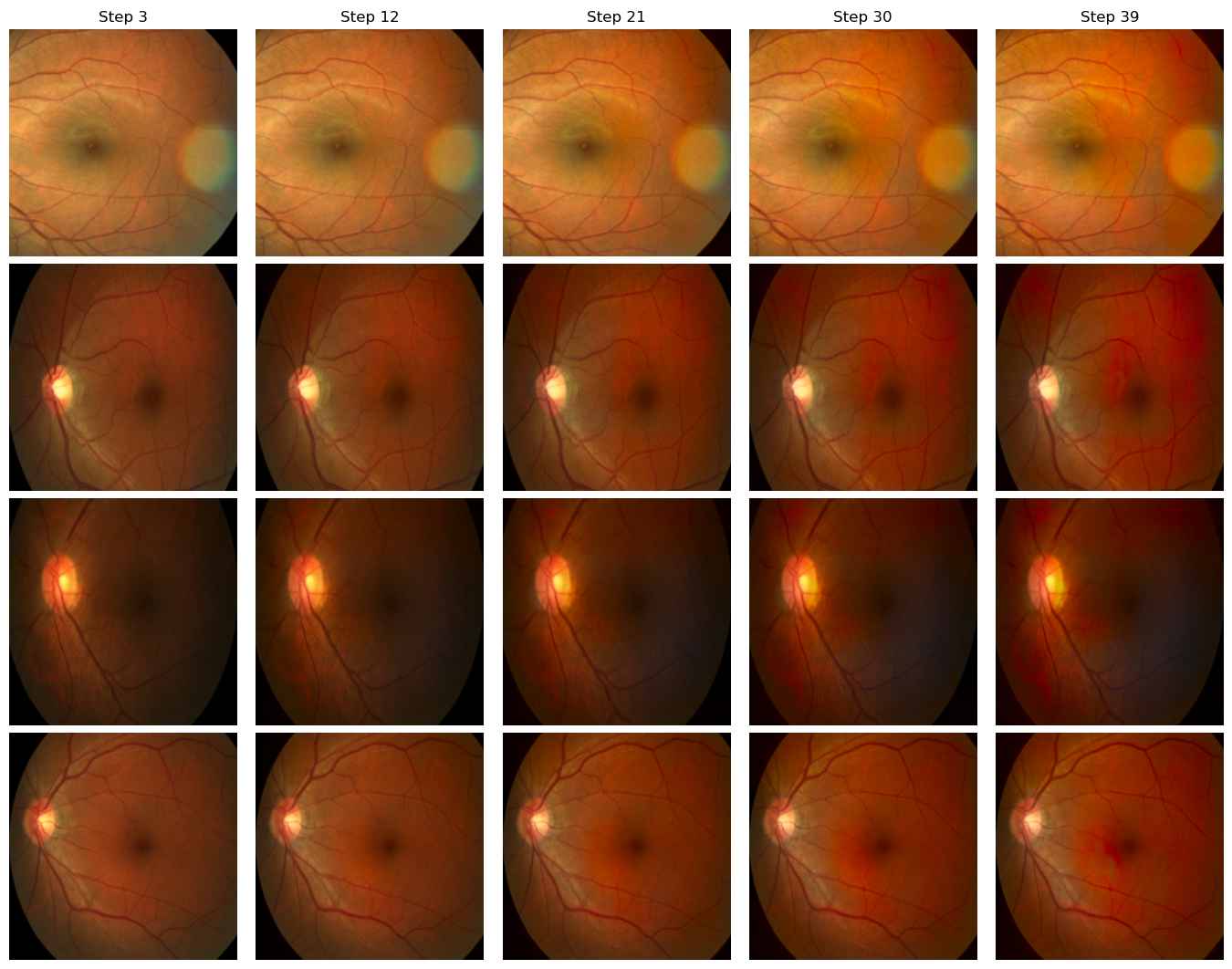}
        \caption{Ex. 2: Domain C $\rightarrow$ Domain A}
        \label{fig:figure321_2}
    \end{subfigure}
    \hspace{0.05\textwidth} 
    \begin{subfigure}[b]{0.45\textwidth}
        \centering
        \includegraphics[width=\textwidth]{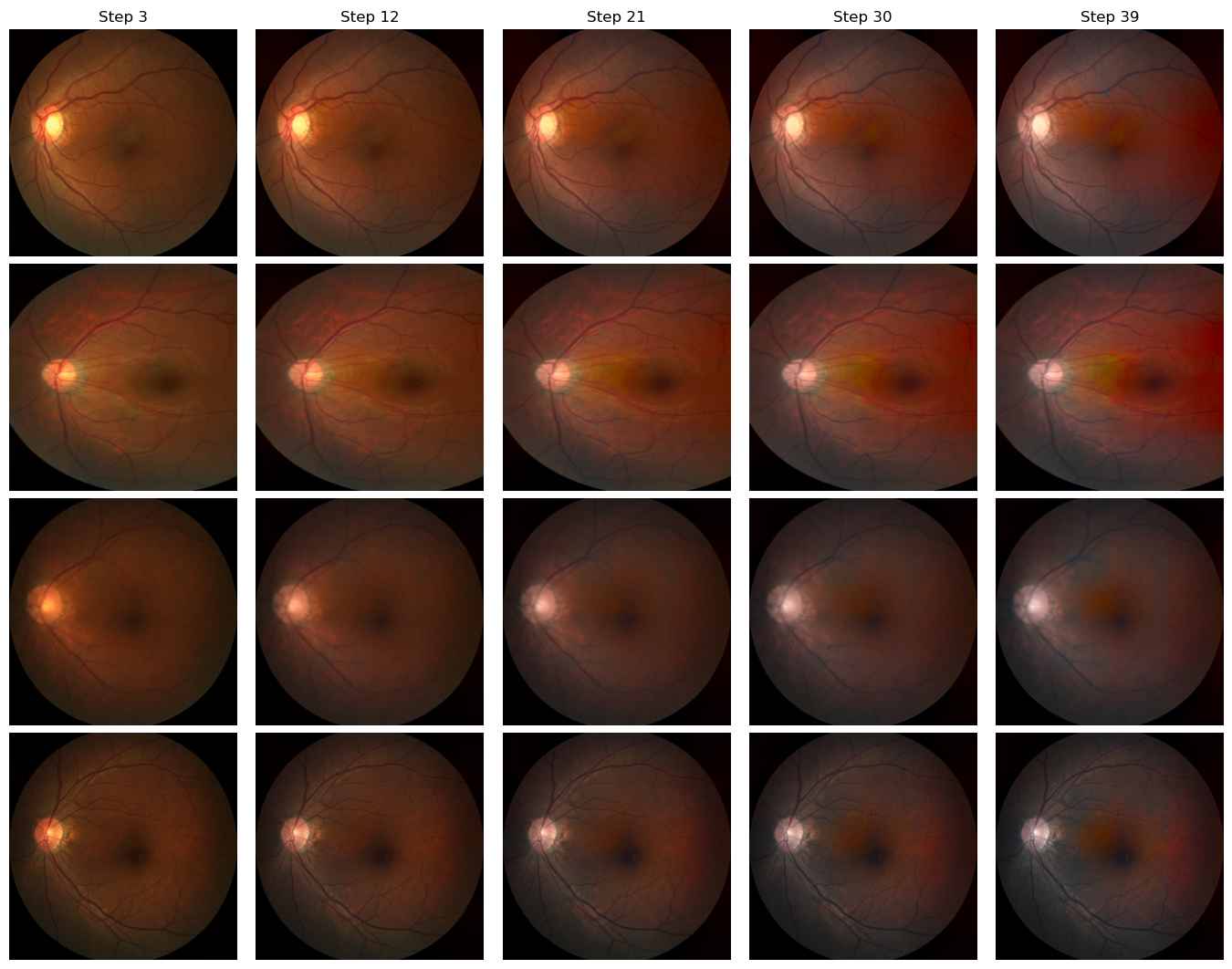}
        \caption{Ex. 1: Domain C $\rightarrow$ Domain B}
        \label{fig:figure322_1}
    \end{subfigure}
    \hspace{0.02\textwidth}
    \begin{subfigure}[b]{0.45\textwidth}
        \centering
        \includegraphics[width=\textwidth]{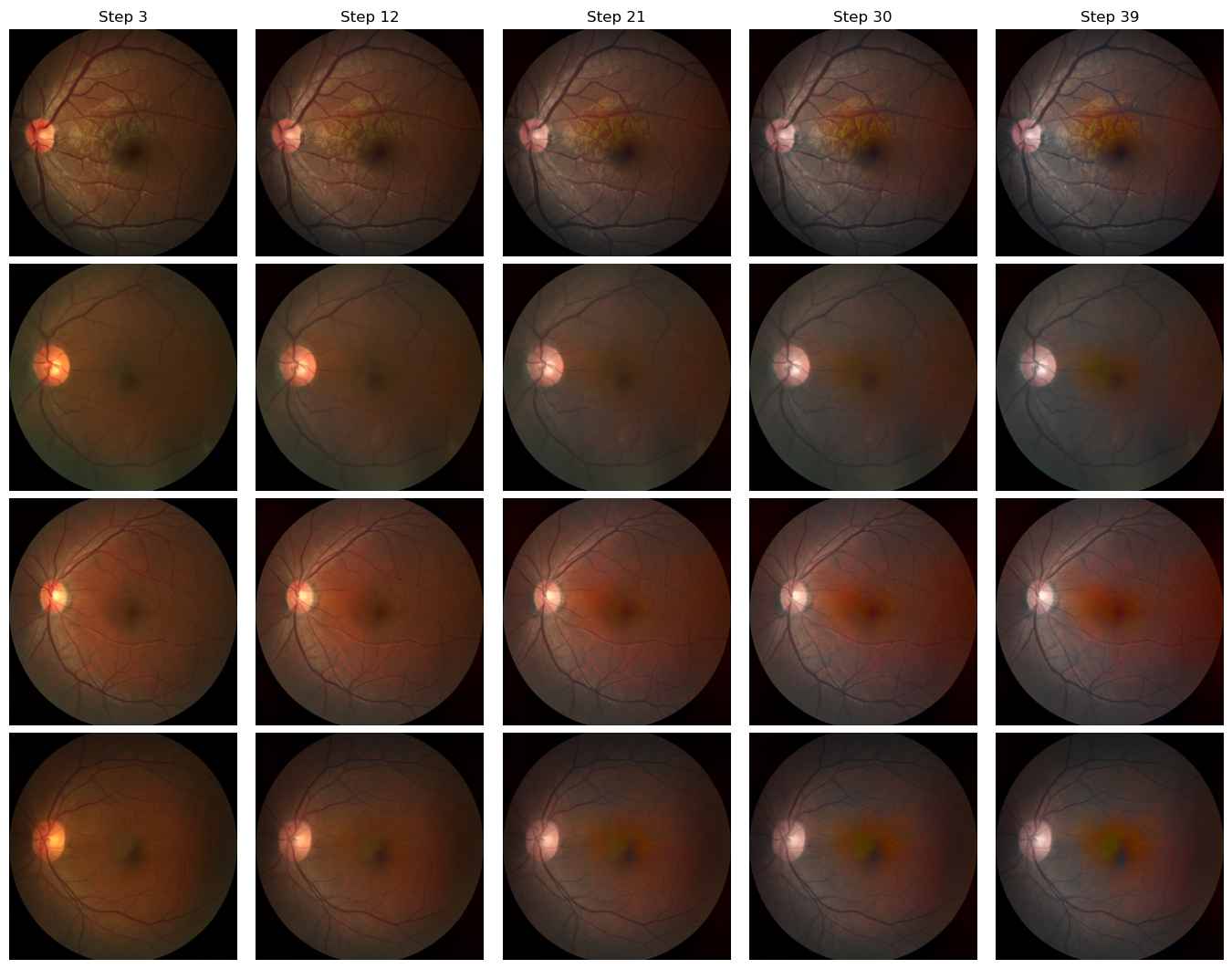}
        \caption{Ex. 2: Domain C $\rightarrow$ Domain B}
        \label{fig:figure322_2}
    \end{subfigure}
    \begin{subfigure}[b]{0.45\textwidth}
        \centering
        \includegraphics[width=\textwidth]{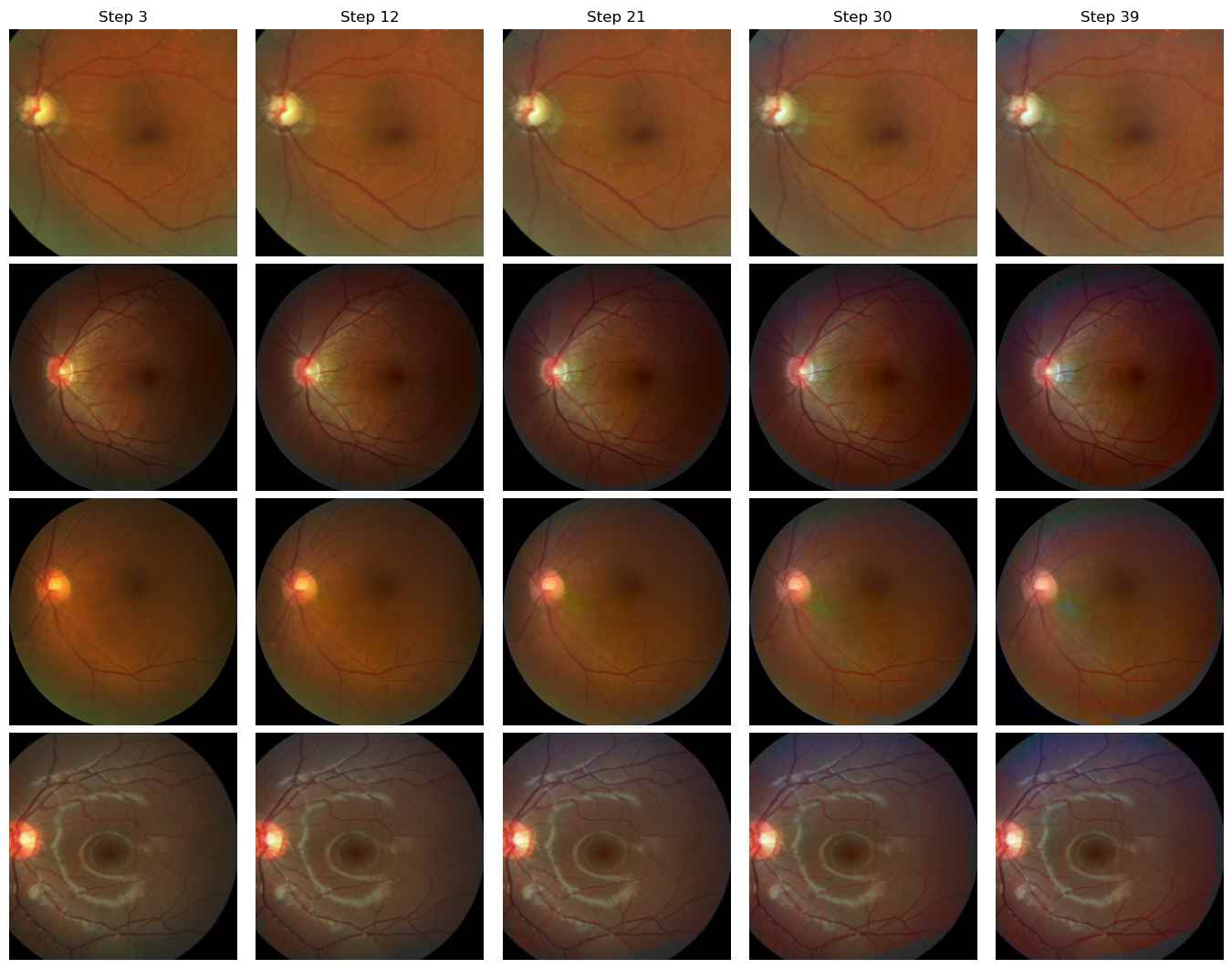}
        \caption{Ex. 1: Domain C $\rightarrow$ Domain D}
        \label{fig:figure324_1}
    \end{subfigure}
    \hspace{0.02\textwidth}
    \begin{subfigure}[b]{0.45\textwidth}
        \centering
        \includegraphics[width=\textwidth]{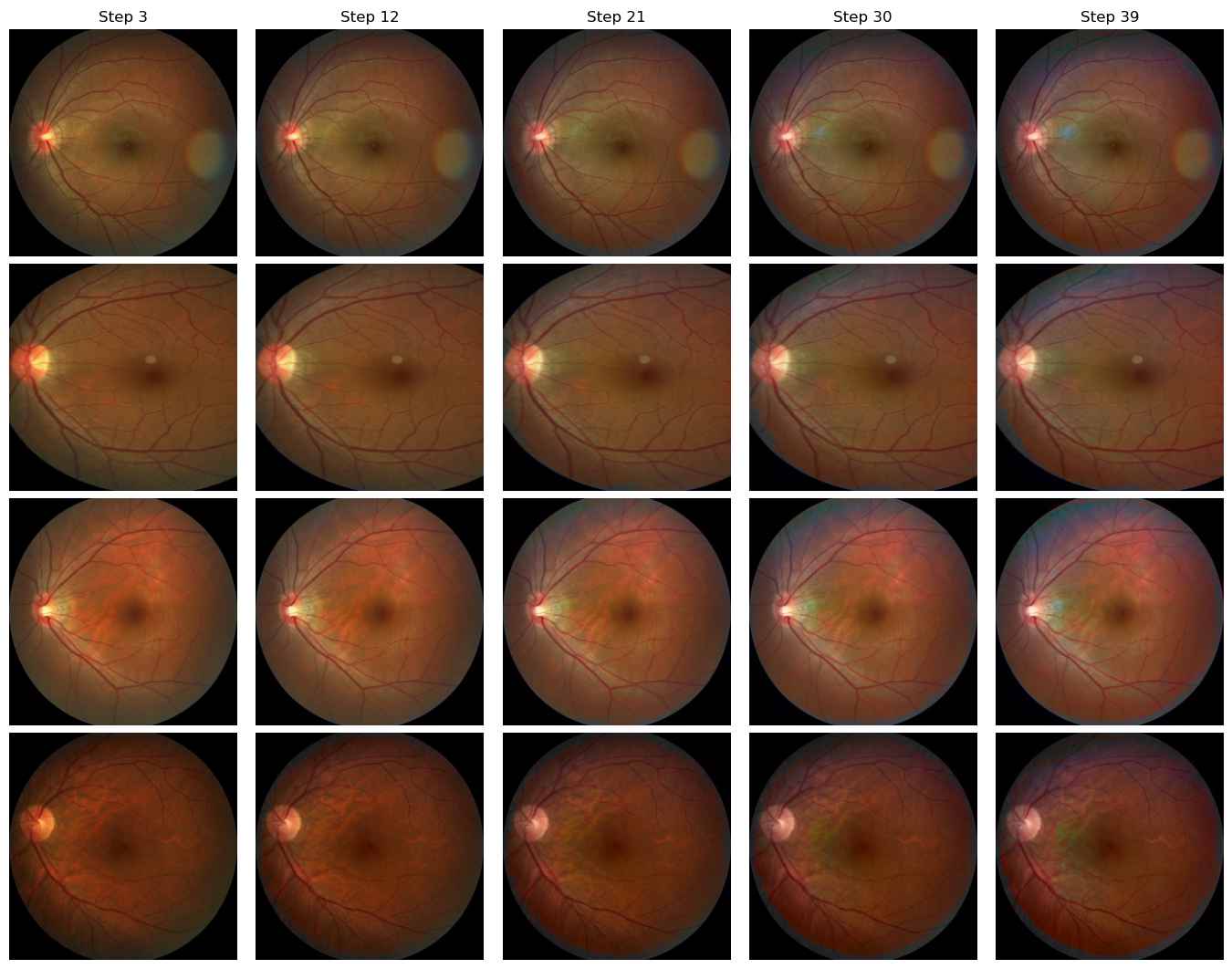}
        \caption{Ex. 2: Domain C $\rightarrow$ Domain D}
        \label{fig:figure324_2}
    \end{subfigure}
    \caption{Examples of translation from Domain C to Domains A, B and D using the proposed method on the retinal fundus dataset.}
    \label{fig:fundus_source_3}
\end{figure*}

\begin{figure*}[htbp]
    \centering
    \begin{subfigure}[b]{0.45\textwidth}
        \centering
        \includegraphics[width=\textwidth]{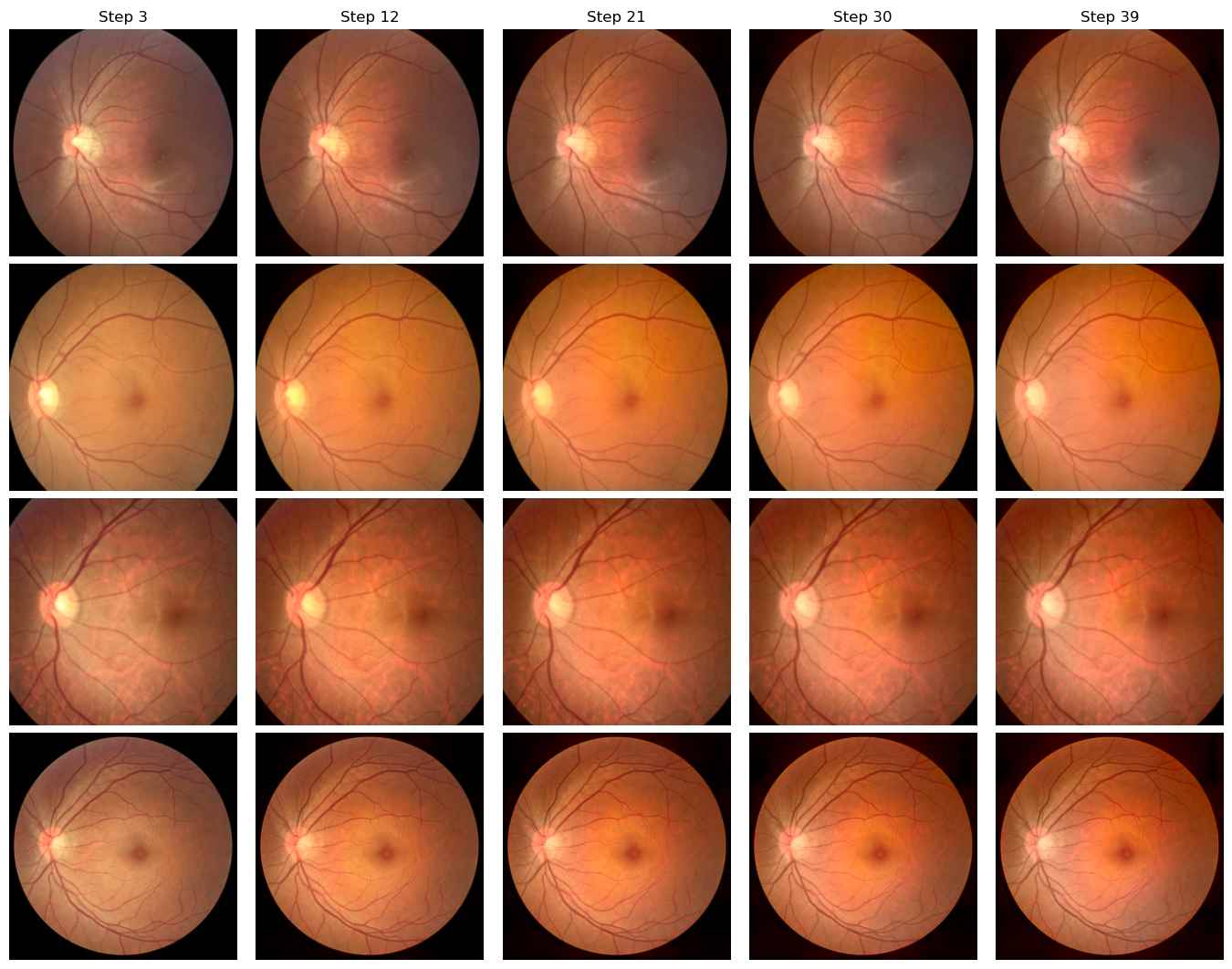}
        \caption{Ex. 1: Domain D $\rightarrow$ Domain A}
        \label{fig:figure421_1}
    \end{subfigure}
    \hspace{0.02\textwidth}
    \begin{subfigure}[b]{0.45\textwidth}
        \centering
        \includegraphics[width=\textwidth]{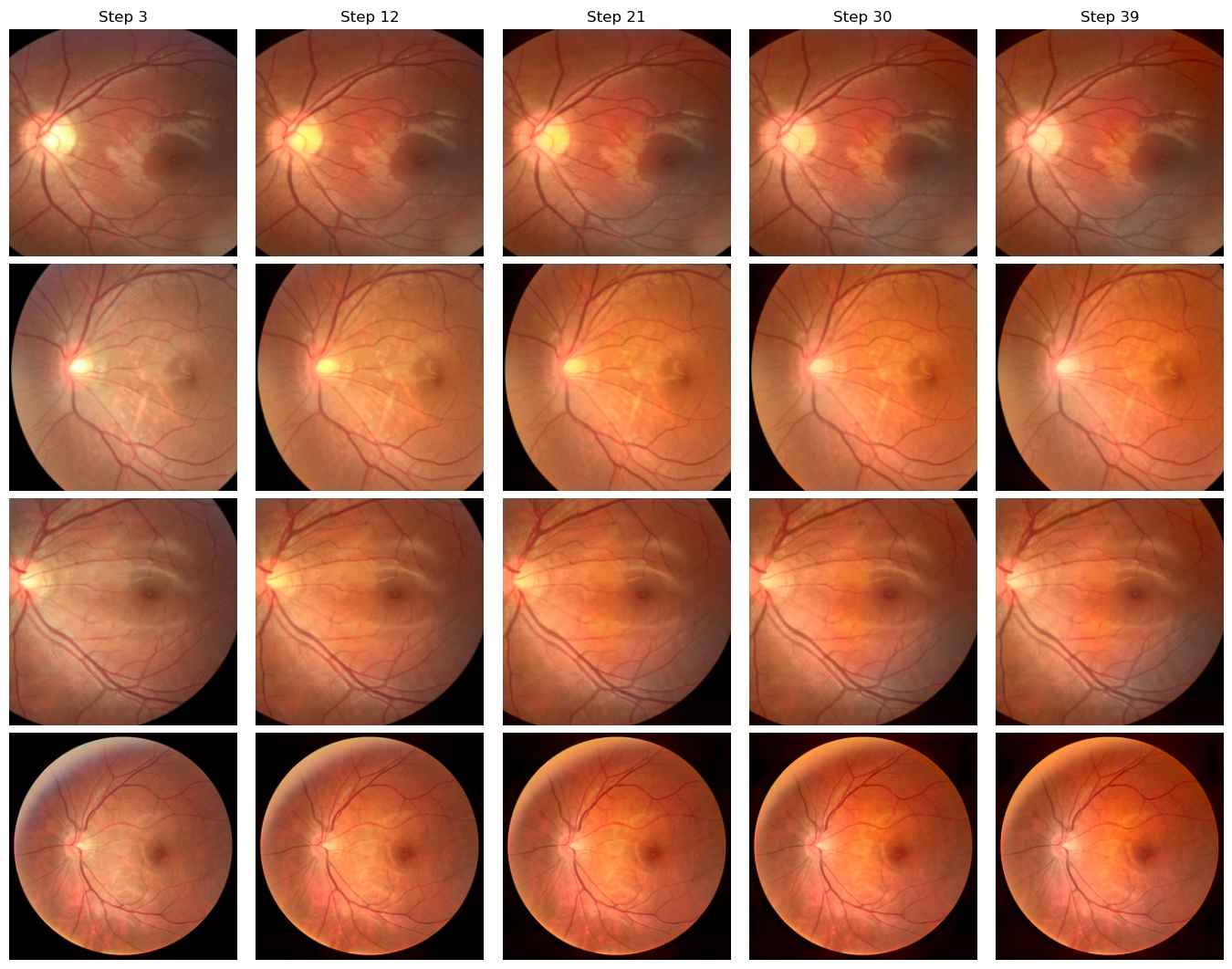}
        \caption{Ex. 2: Domain D $\rightarrow$ Domain A}
        \label{fig:figure421_2}
    \end{subfigure}
    \hspace{0.05\textwidth} 
    \begin{subfigure}[b]{0.45\textwidth}
        \centering
        \includegraphics[width=\textwidth]{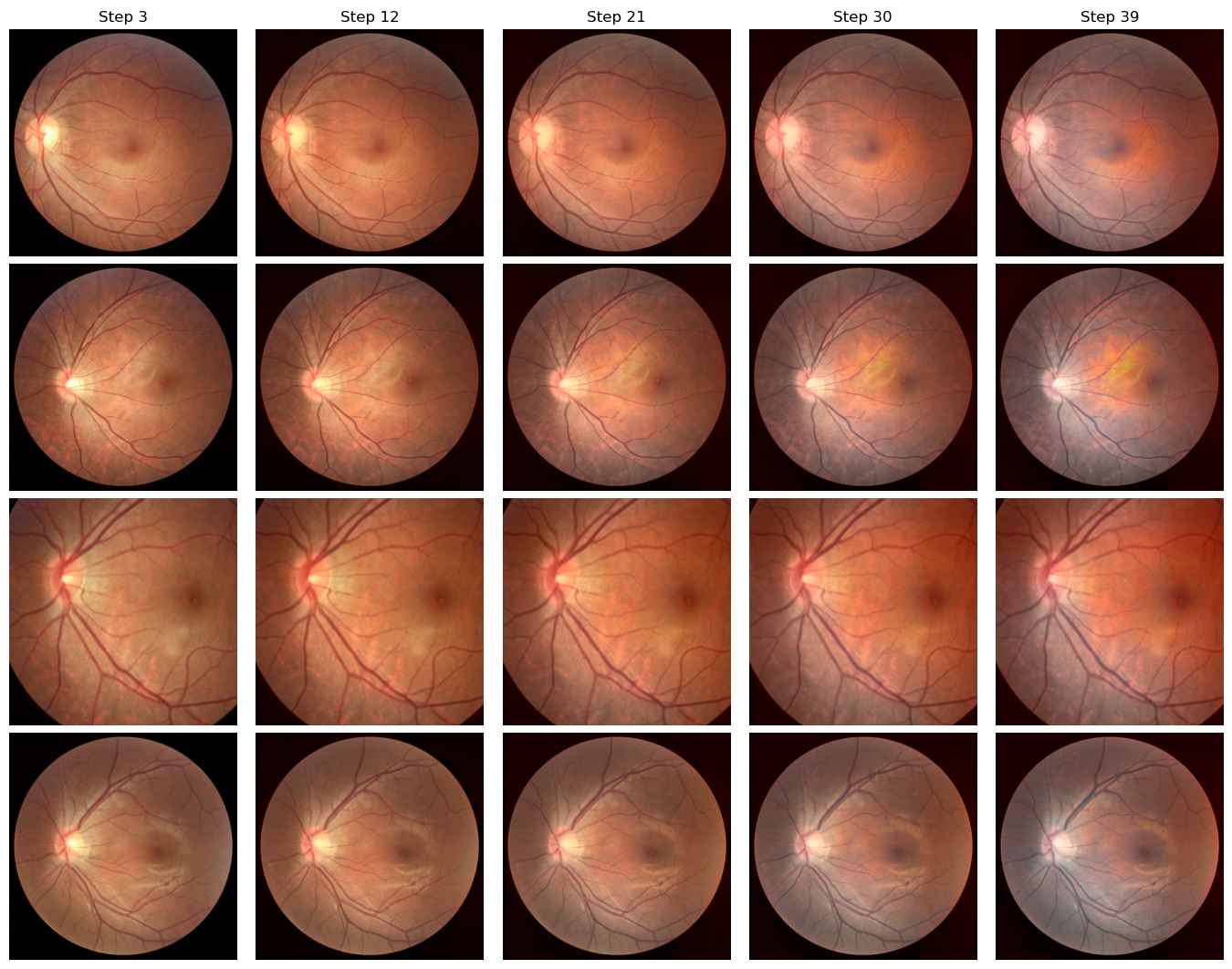}
        \caption{Ex. 1: Domain D $\rightarrow$ Domain B}
        \label{fig:figure422_1}
    \end{subfigure}
    \hspace{0.02\textwidth}
    \begin{subfigure}[b]{0.45\textwidth}
        \centering
        \includegraphics[width=\textwidth]{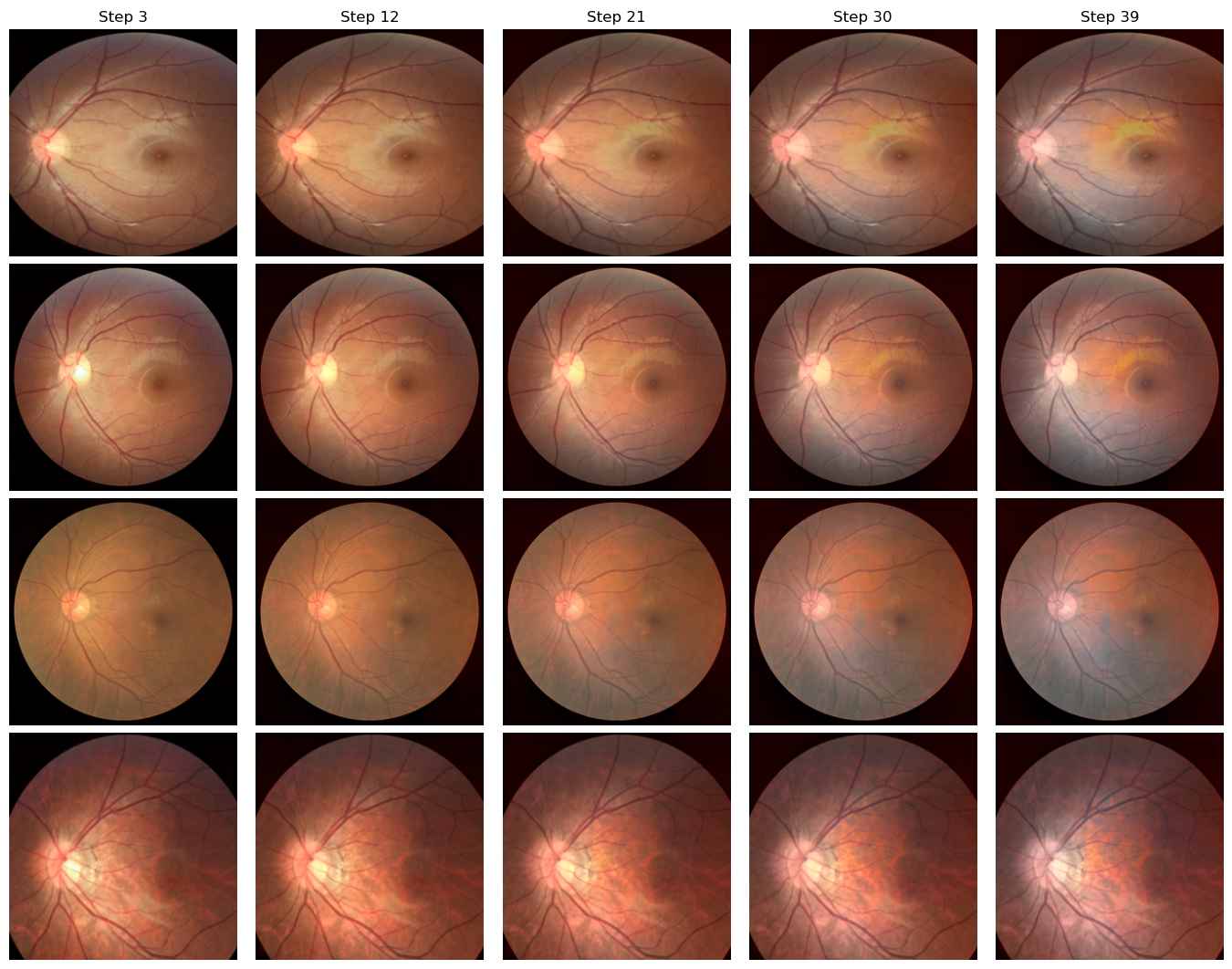}
        \caption{Ex. 2: Domain D $\rightarrow$ Domain B}
        \label{fig:figure422_2}
    \end{subfigure}
    \begin{subfigure}[b]{0.45\textwidth}
        \centering
        \includegraphics[width=\textwidth]{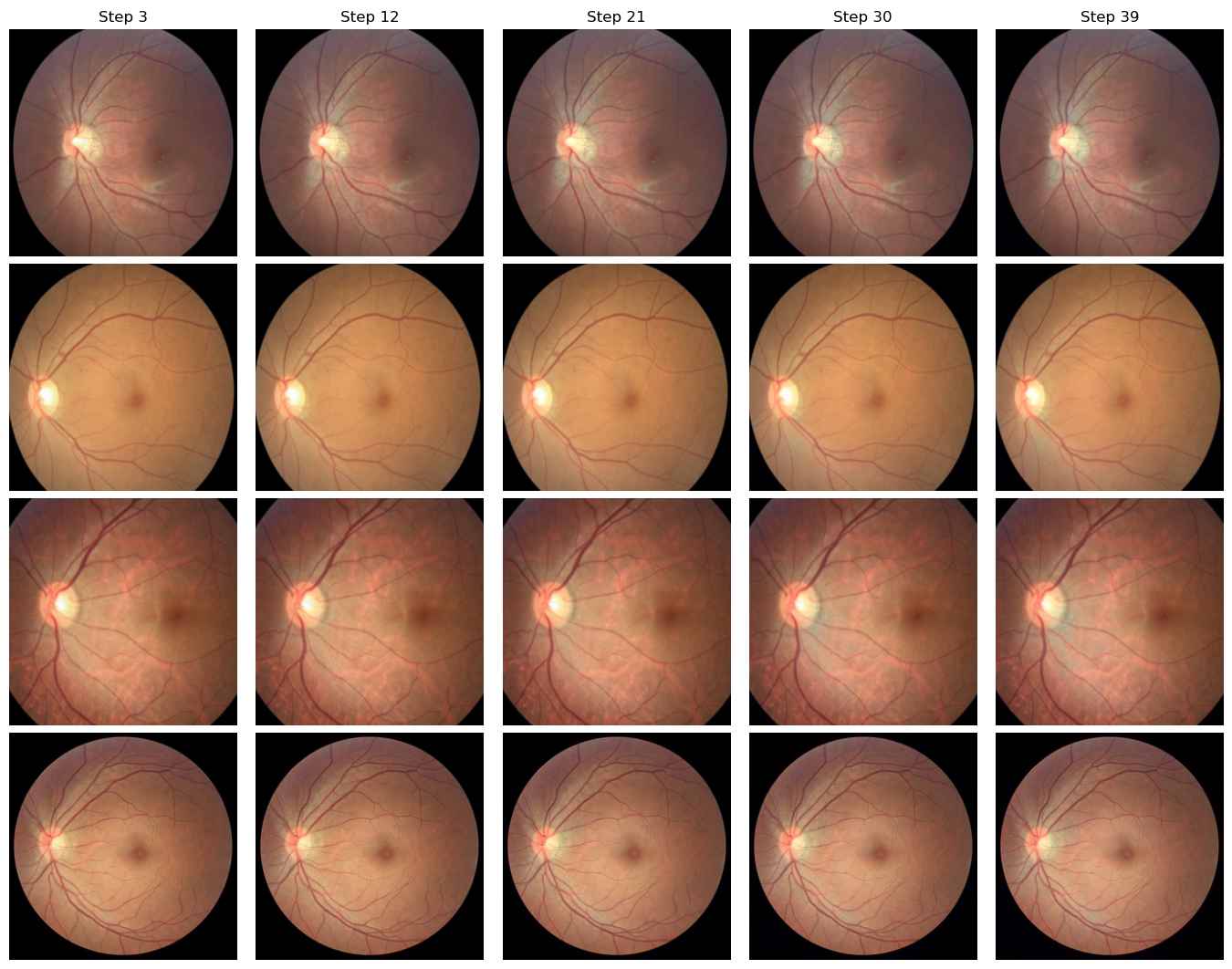}
        \caption{Ex. 1: Domain D $\rightarrow$ Domain C}
        \label{fig:figure423_1}
    \end{subfigure}
    \hspace{0.02\textwidth}
    \begin{subfigure}[b]{0.45\textwidth}
        \centering
        \includegraphics[width=\textwidth]{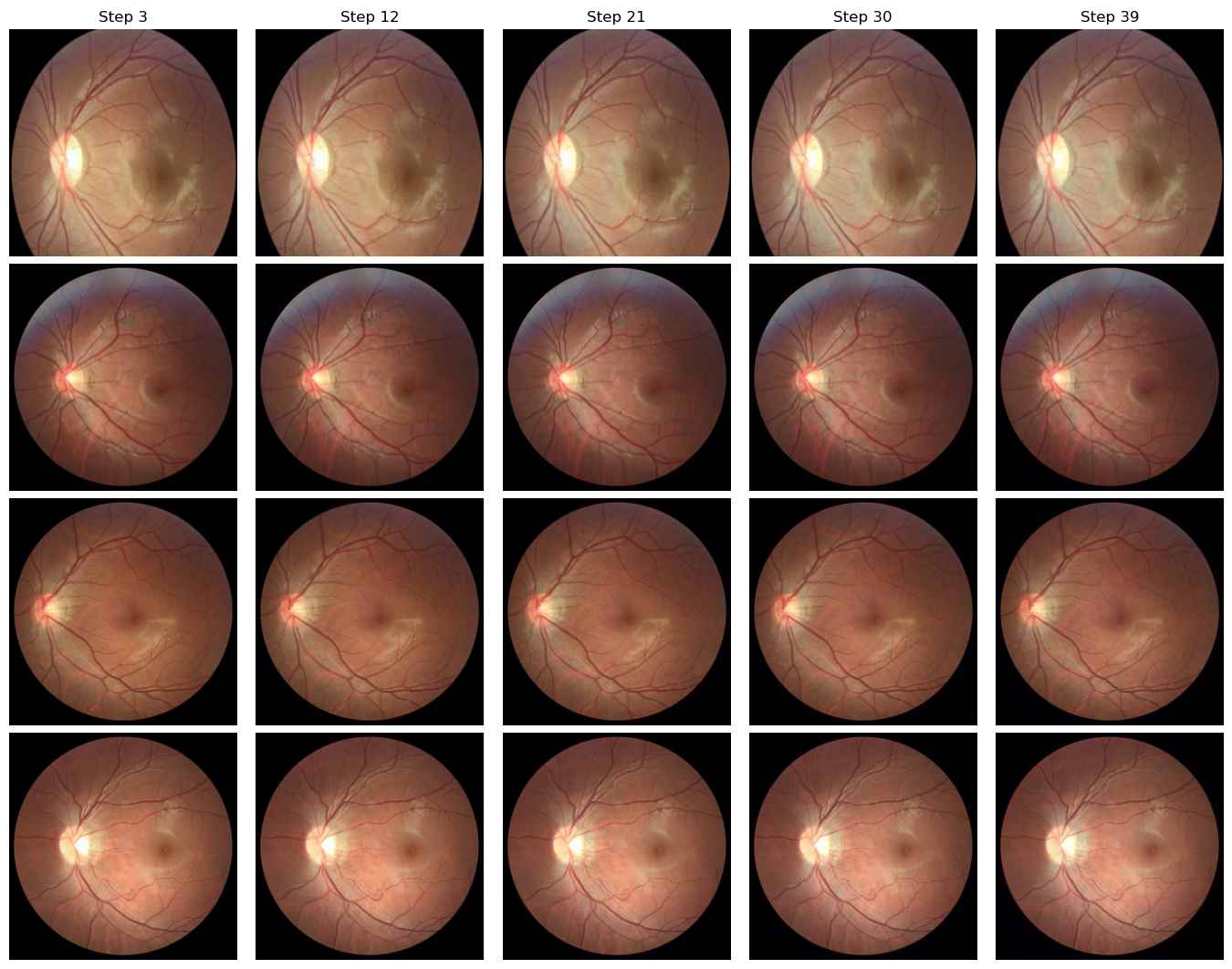}
        \caption{Ex. 2: Domain D $\rightarrow$ Domain C}
        \label{fig:figure423_2}
    \end{subfigure}
    \caption{Examples of translation from Domain D to Domains A, B and C using the proposed method on the retinal fundus dataset.}
    \label{fig:fundus_source_4}
\end{figure*}

\begin{figure*}[htbp]
    \centering

    \begin{subfigure}[b]{0.45\textwidth}
        \centering
        \includegraphics[width=\textwidth]{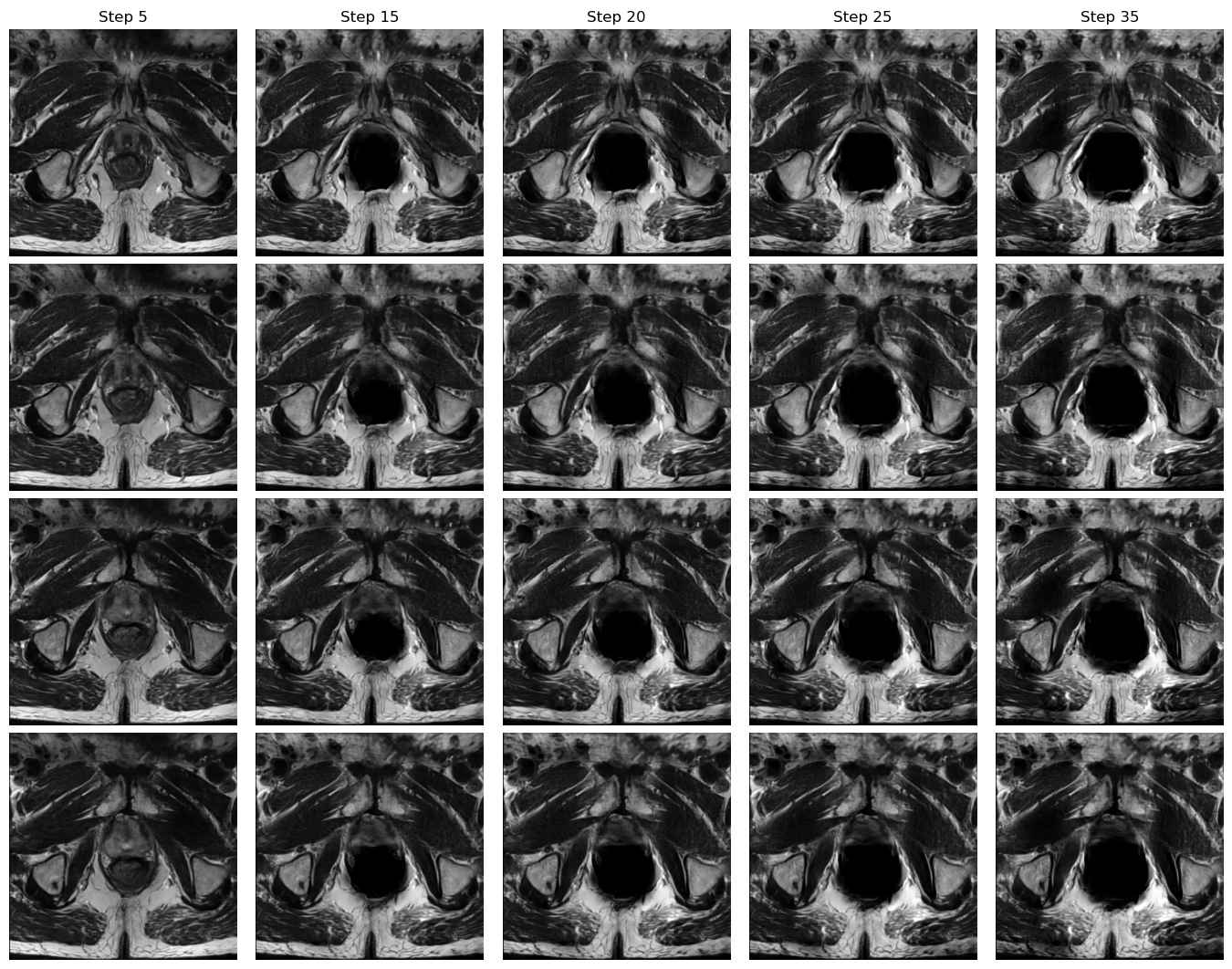}
        \caption{Ex. 1: Domain A $\rightarrow$ Domain B}
        \label{fig:figureRUNMC2BMC_1}
    \end{subfigure}
    \hspace{0.02\textwidth}
    \begin{subfigure}[b]{0.45\textwidth}
        \centering
        \includegraphics[width=\textwidth]{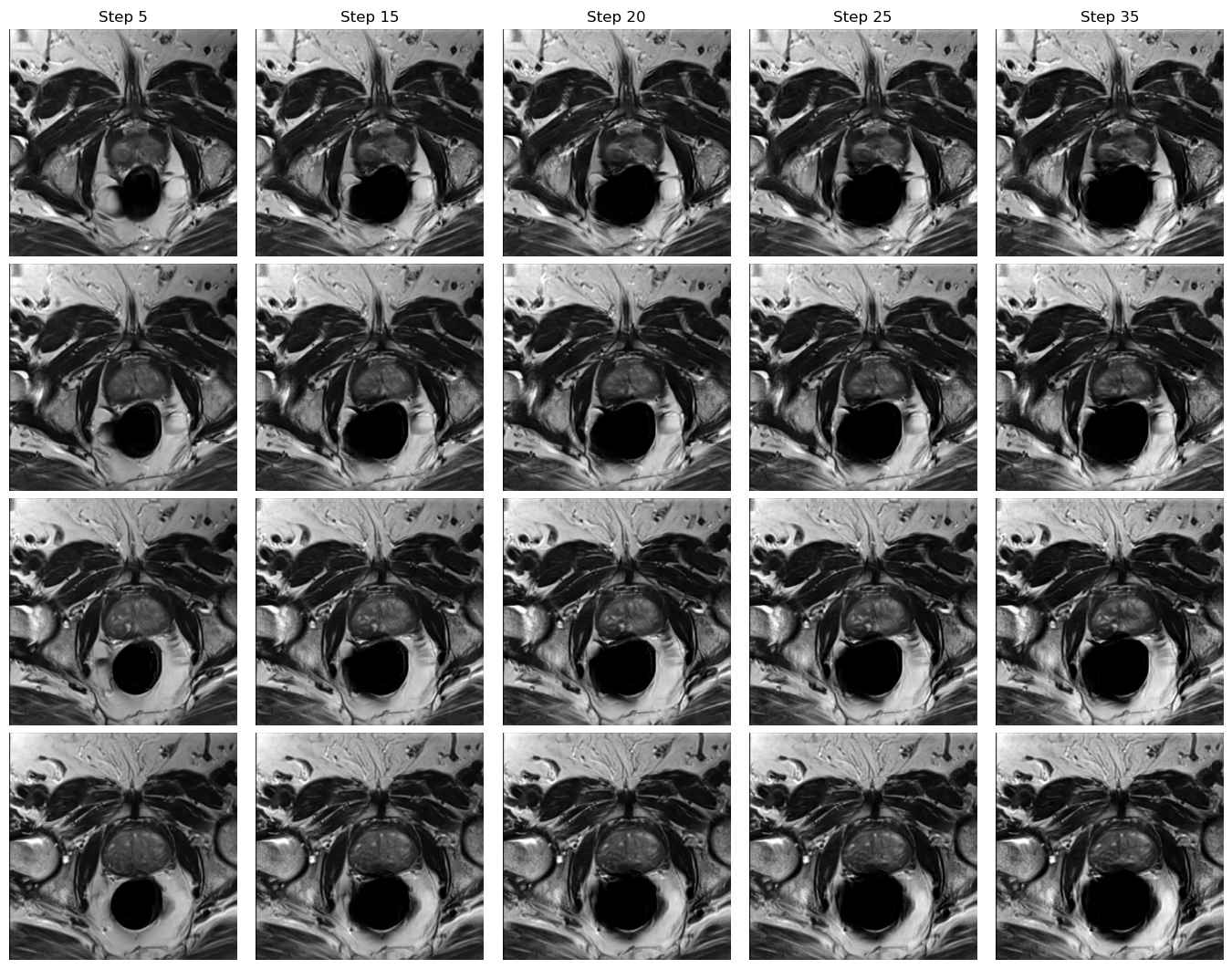}
        \caption{Ex. 2: Domain A $\rightarrow$ Domain B}
        \label{fig:figureRUNMC2BMC_2}
    \end{subfigure}

    \begin{subfigure}[b]{0.45\textwidth}
        \centering
        \includegraphics[width=\textwidth]{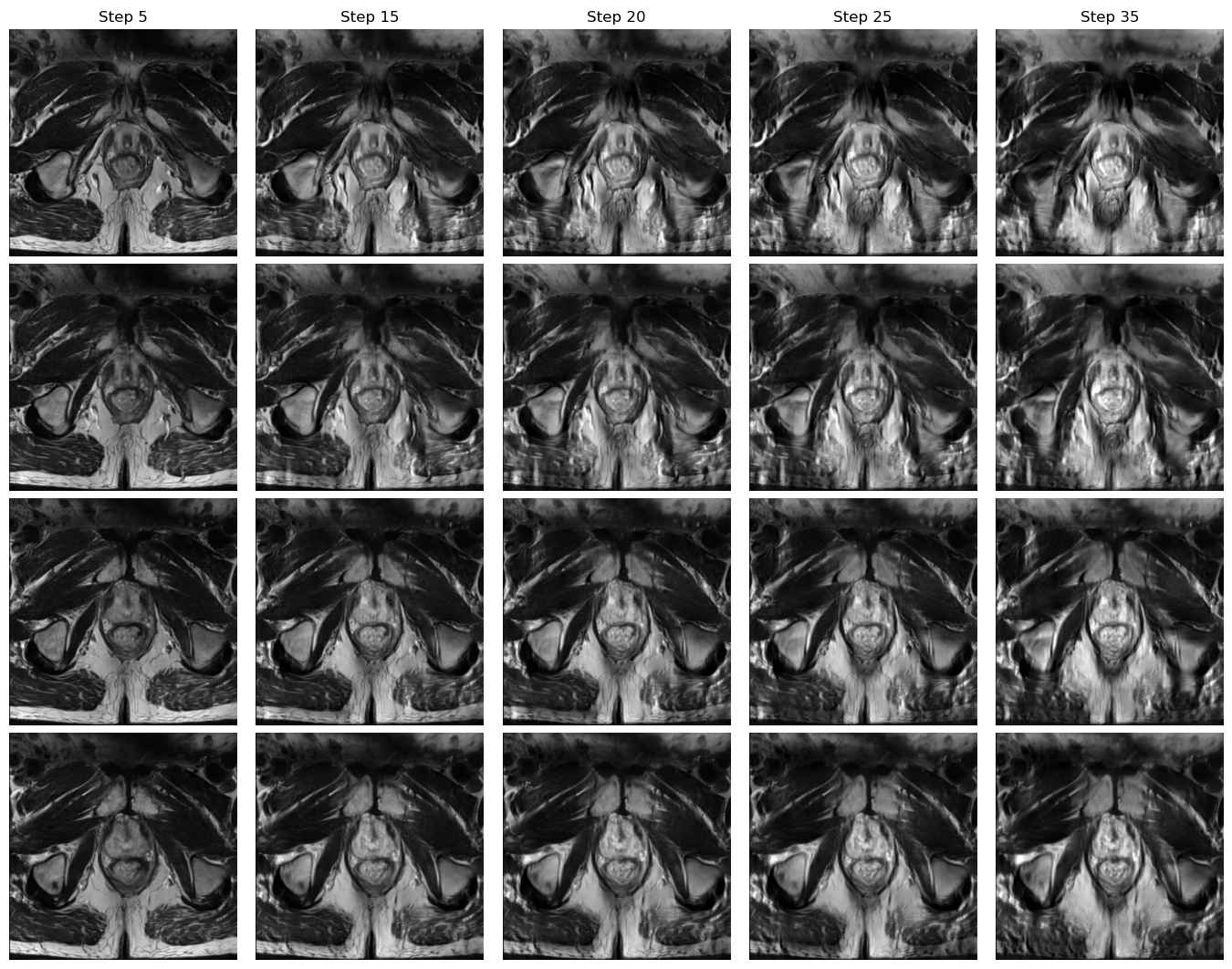}
        \caption{Ex. 1: Domain A $\rightarrow$ Domain C}
        \label{fig:figureRUNMC2I2CVB_1}
    \end{subfigure}
    \hspace{0.02\textwidth}
    \begin{subfigure}[b]{0.45\textwidth}
        \centering
        \includegraphics[width=\textwidth]{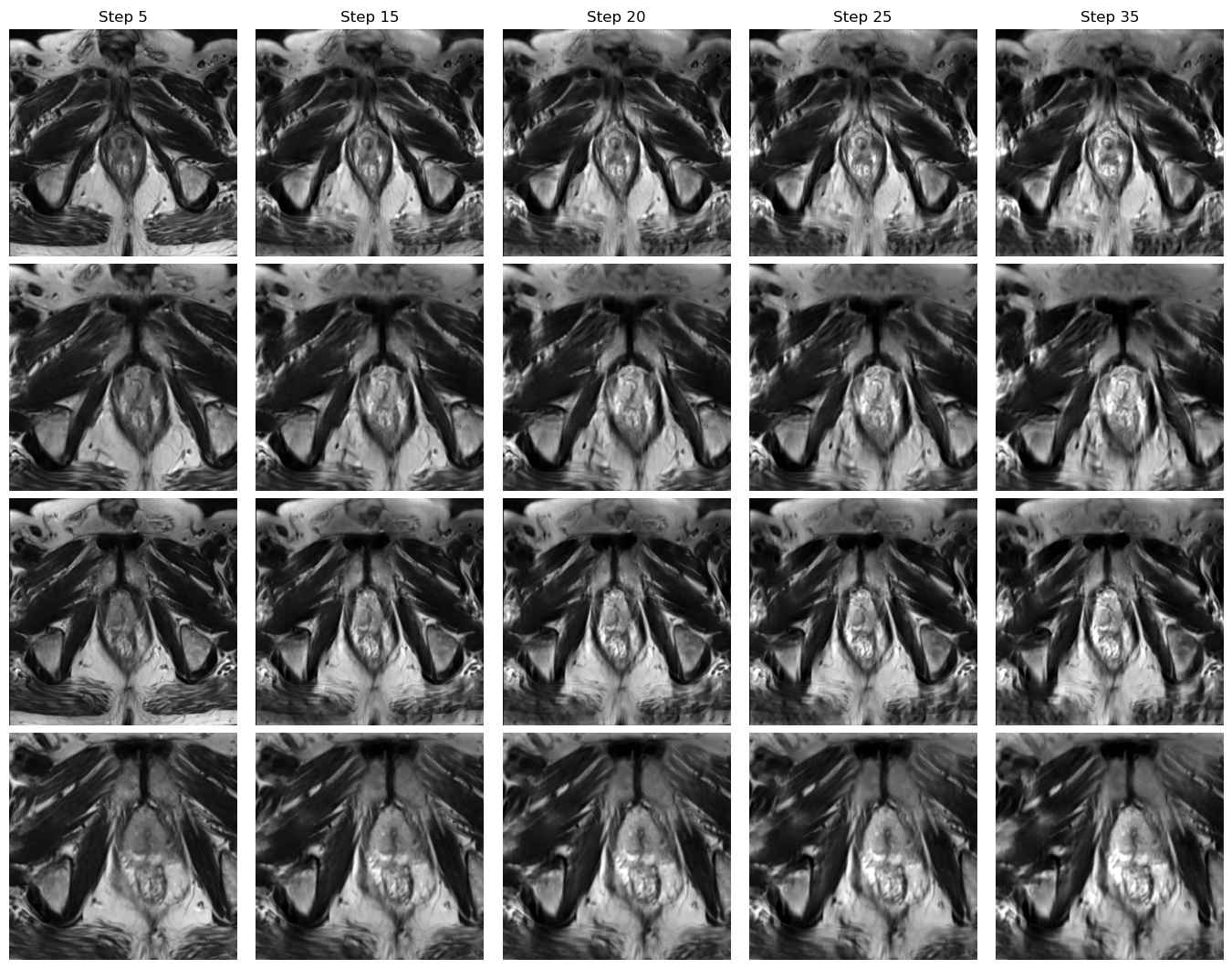}
        \caption{Ex. 2: Domain A $\rightarrow$ Domain C}
        \label{fig:figureRUNMC2I2CVB_2}
    \end{subfigure}

    \begin{subfigure}[b]{0.45\textwidth}
        \centering
        \includegraphics[width=\textwidth]{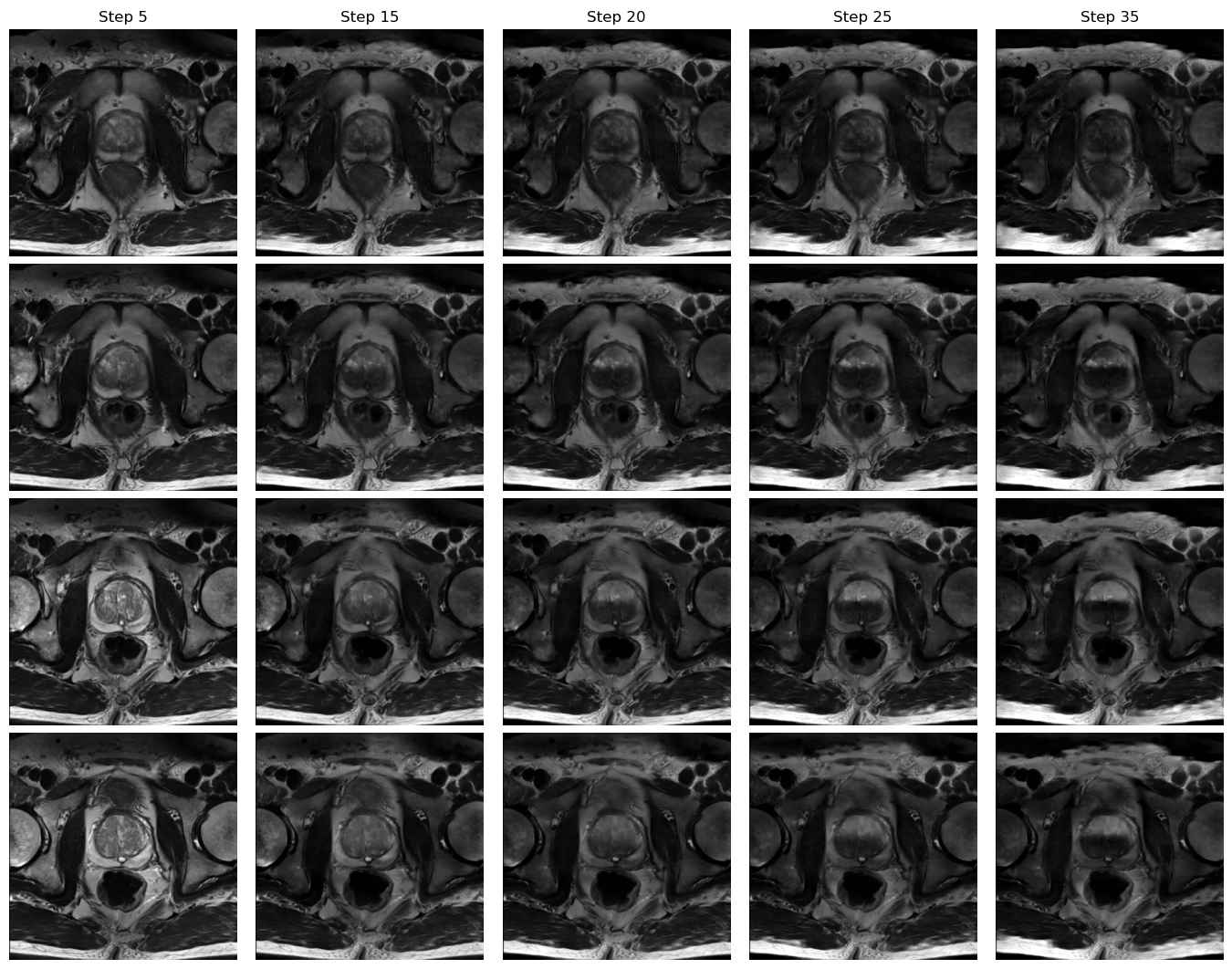}
        \caption{Ex. 1: Domain A $\rightarrow$ Domain D}
        \label{fig:figureRUNMC2UCL_1}
    \end{subfigure}
    \hspace{0.02\textwidth}
    \begin{subfigure}[b]{0.45\textwidth}
        \centering
        \includegraphics[width=\textwidth]{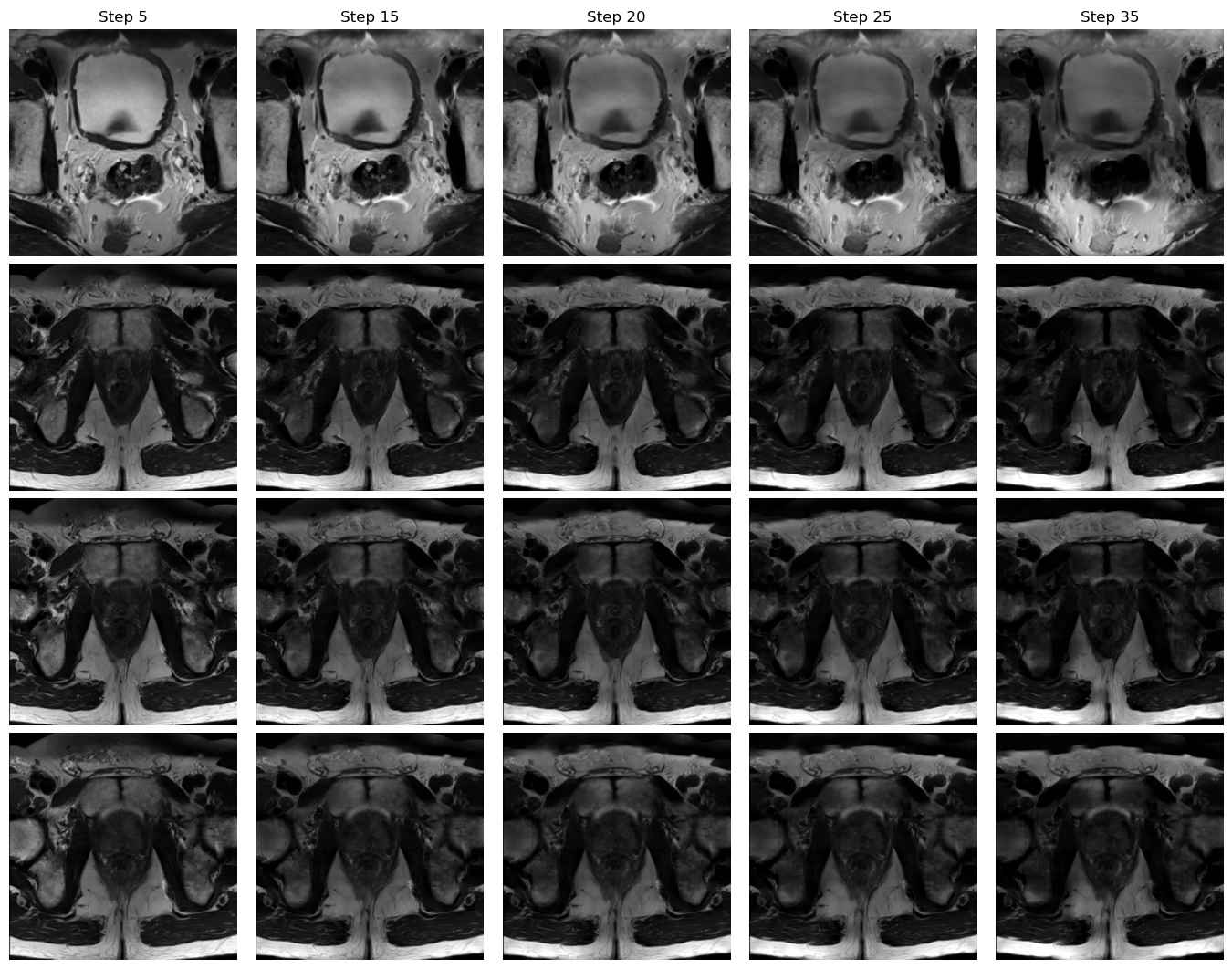}
        \caption{Ex. 2: Domain A $\rightarrow$ Domain D}
        \label{fig:figureRUNMC2UCL_2}
    \end{subfigure}
    
    \caption{Examples of translation from Domain A to Domains B, C and D using the proposed method on the prostate dataset.}
    \label{fig:prostate_source_RUNMC}
\end{figure*}

\begin{figure*}[htbp]
    \ContinuedFloat
    \centering

    \begin{subfigure}[b]{0.45\textwidth}
        \centering
        \includegraphics[width=\textwidth]{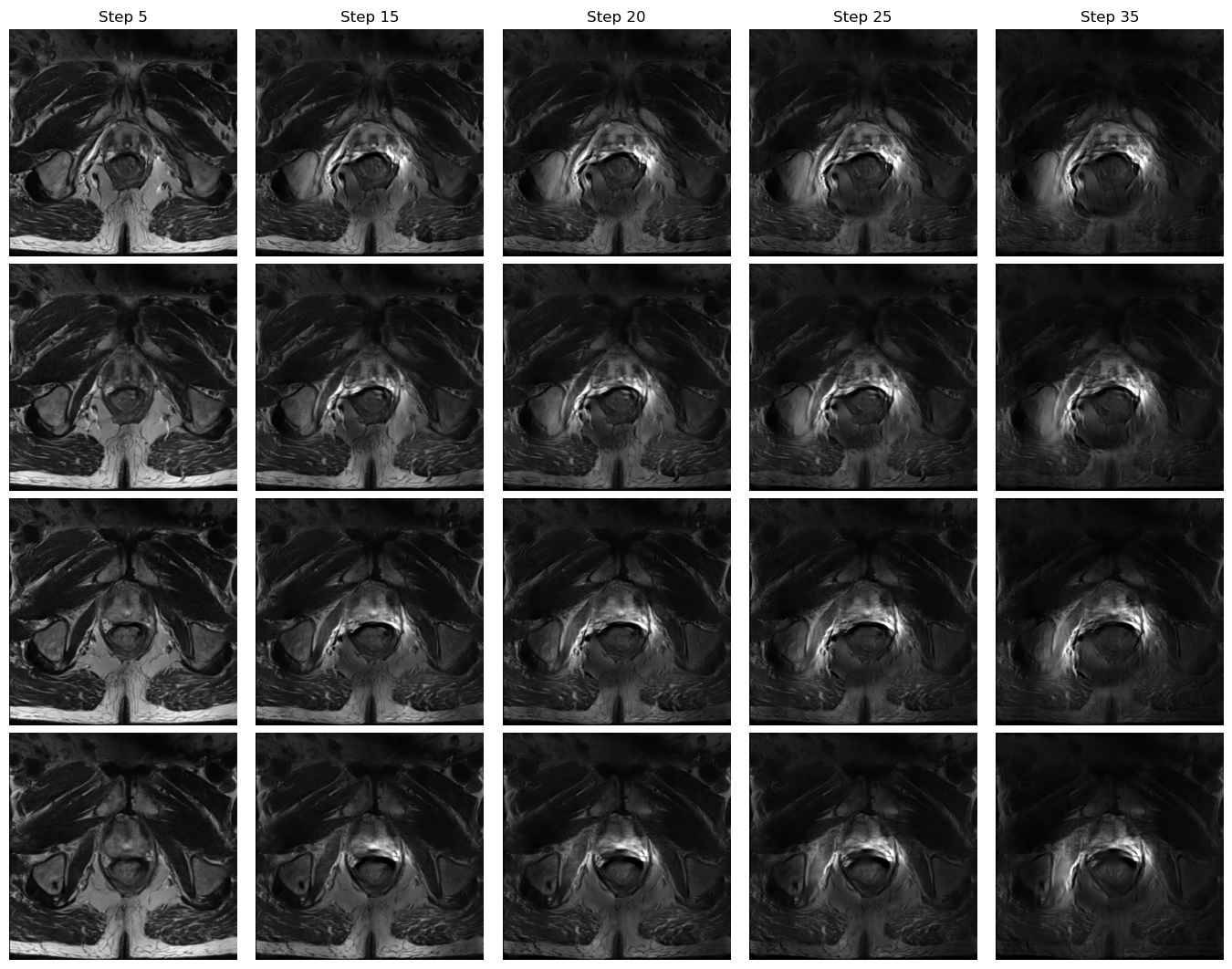}
        \caption{Ex. 1: Domain A $\rightarrow$ Domain E}
        \label{fig:figureRUNMC2BIDMC_1}
    \end{subfigure}
    \hspace{0.02\textwidth}
    \begin{subfigure}[b]{0.45\textwidth}
        \centering
        \includegraphics[width=\textwidth]{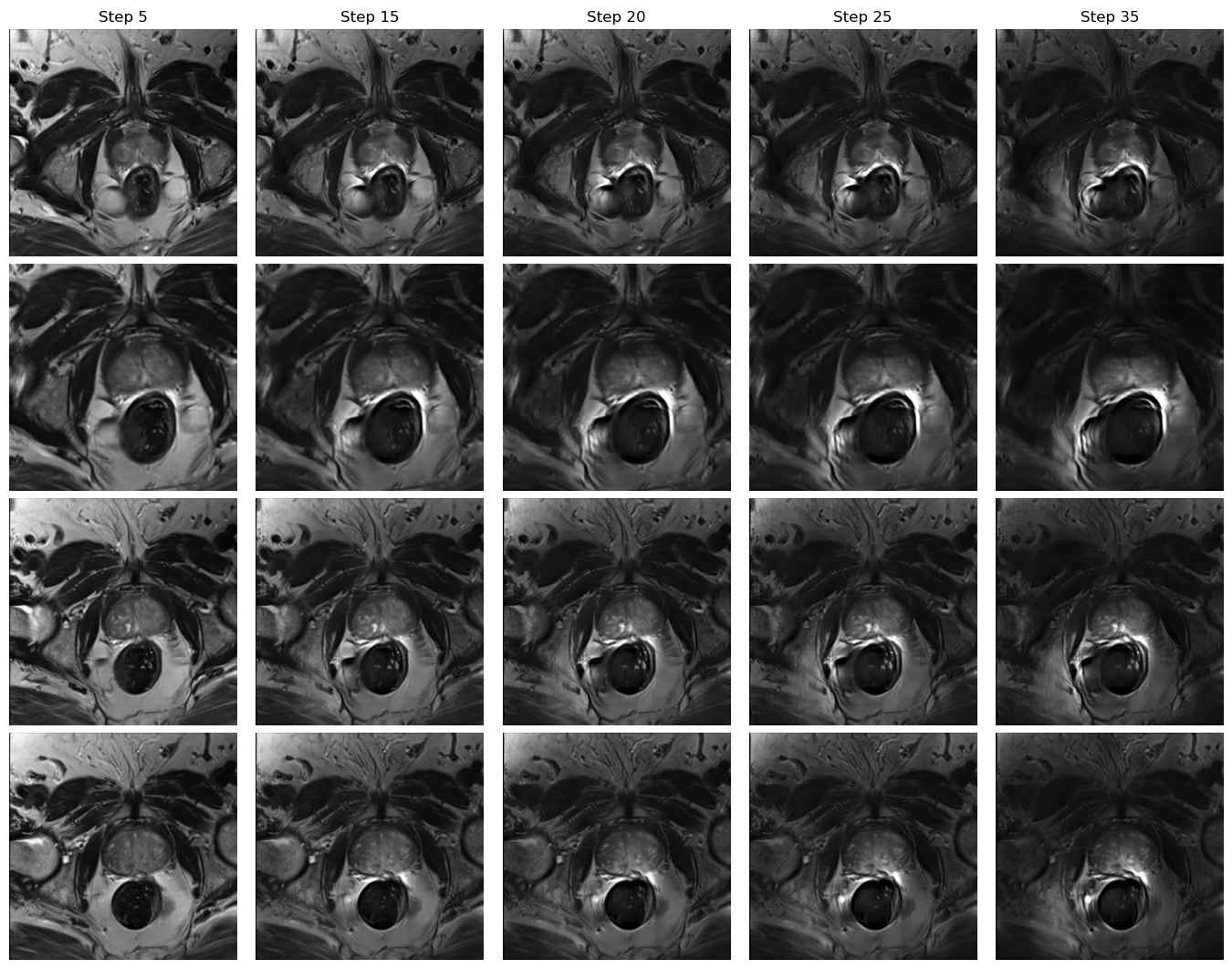}
        \caption{Ex. 2: Domain A $\rightarrow$ Domain E}
        \label{fig:figureRUNMC2BIDMC_2}
    \end{subfigure}
    
    \begin{subfigure}[b]{0.45\textwidth}
        \centering
        \includegraphics[width=\textwidth]{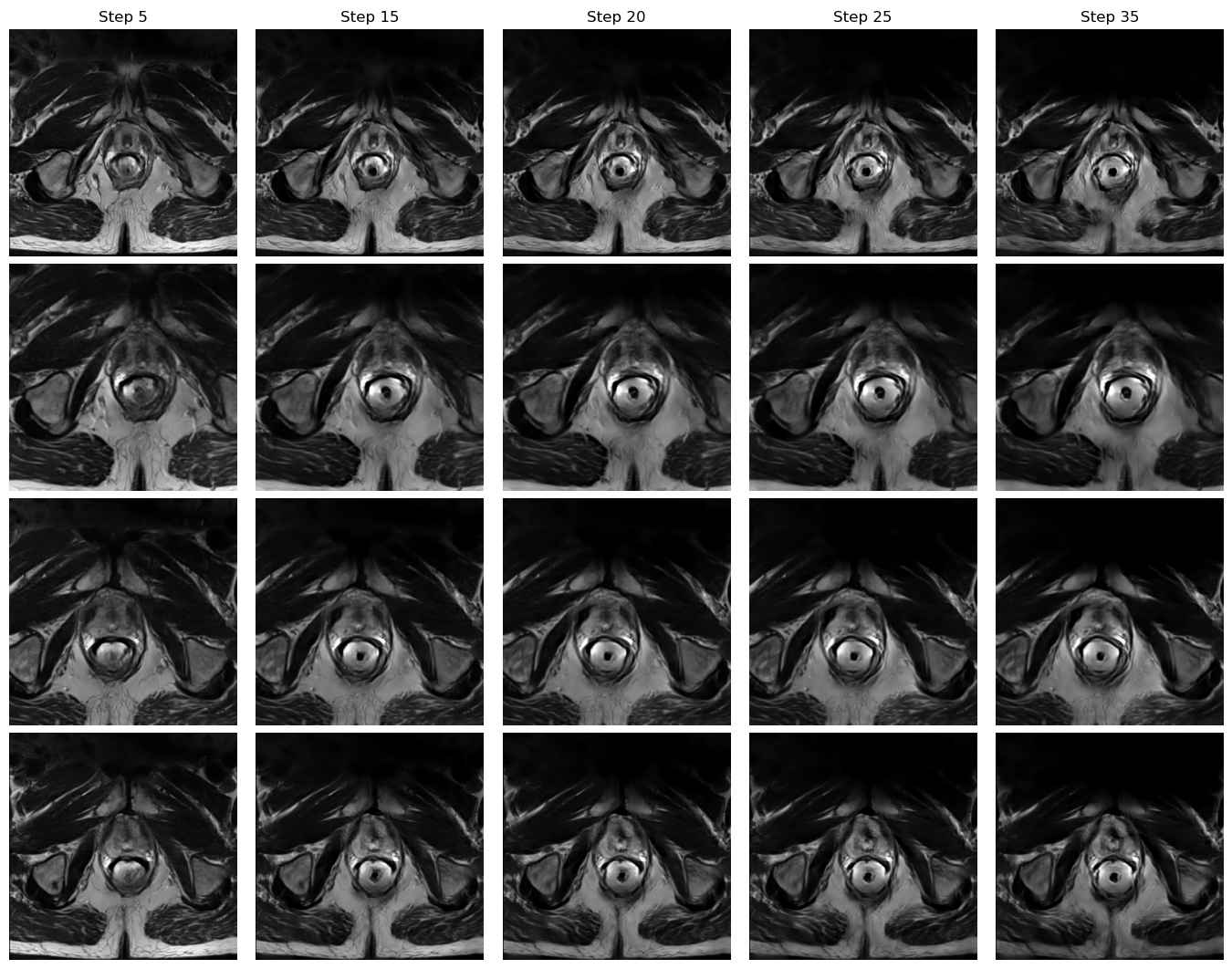}
        \caption{Ex. 1: Domain A $\rightarrow$ Domain F}
        \label{fig:figureRUNMC2HK_1}
    \end{subfigure}
    \hspace{0.02\textwidth}
    \begin{subfigure}[b]{0.45\textwidth}
        \centering
        \includegraphics[width=\textwidth]{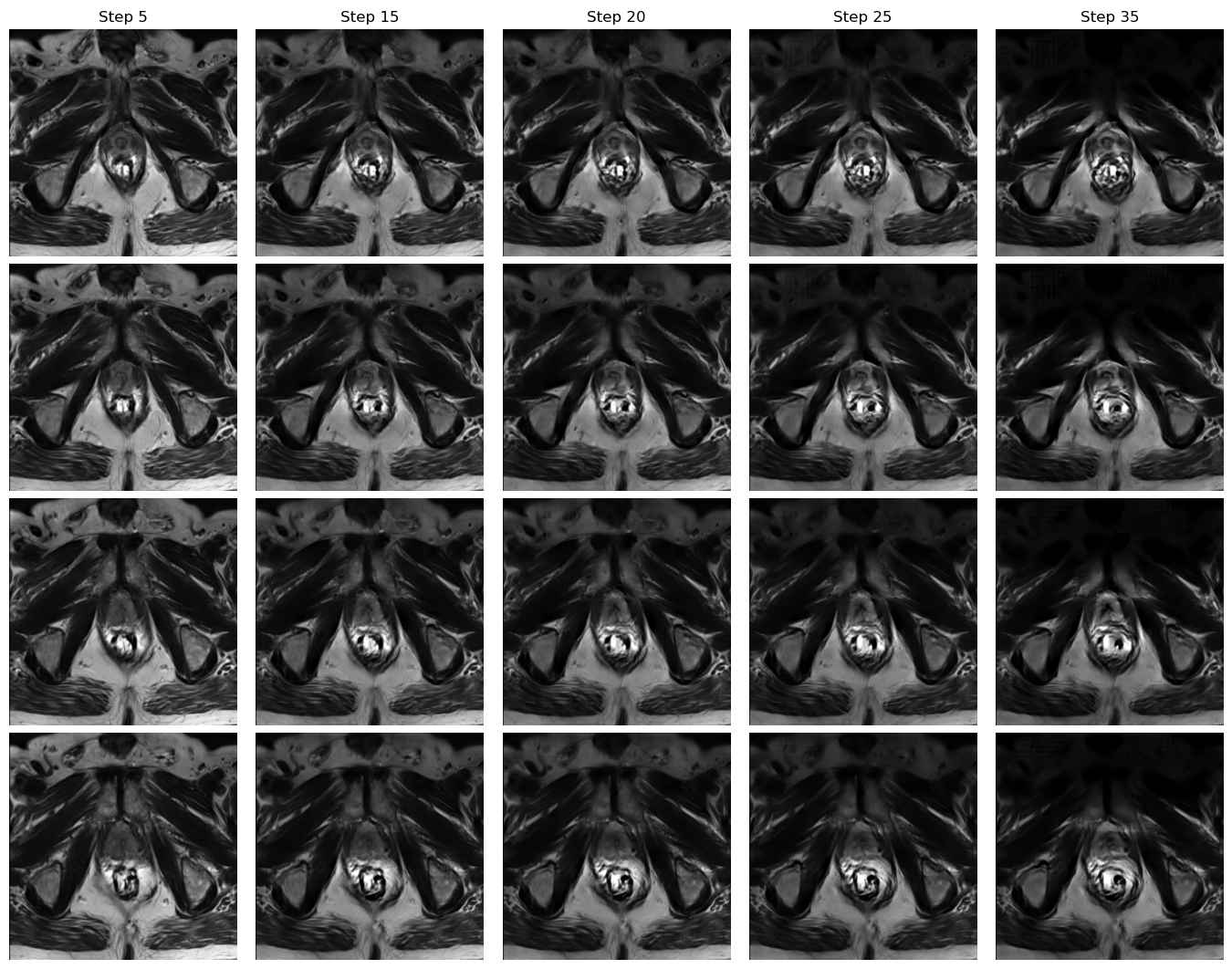}
        \caption{Ex. 2: Domain A $\rightarrow$ Domain F}
        \label{fig:figureRUNMC2HK_2}
    \end{subfigure}
    
    \caption{Examples of translation from Domain A to Domains E and F using the proposed method on the prostate dataset.}
\end{figure*}

\begin{figure*}[htbp]
    \centering
    \begin{subfigure}[b]{0.45\textwidth}
        \centering
        \includegraphics[width=\textwidth]{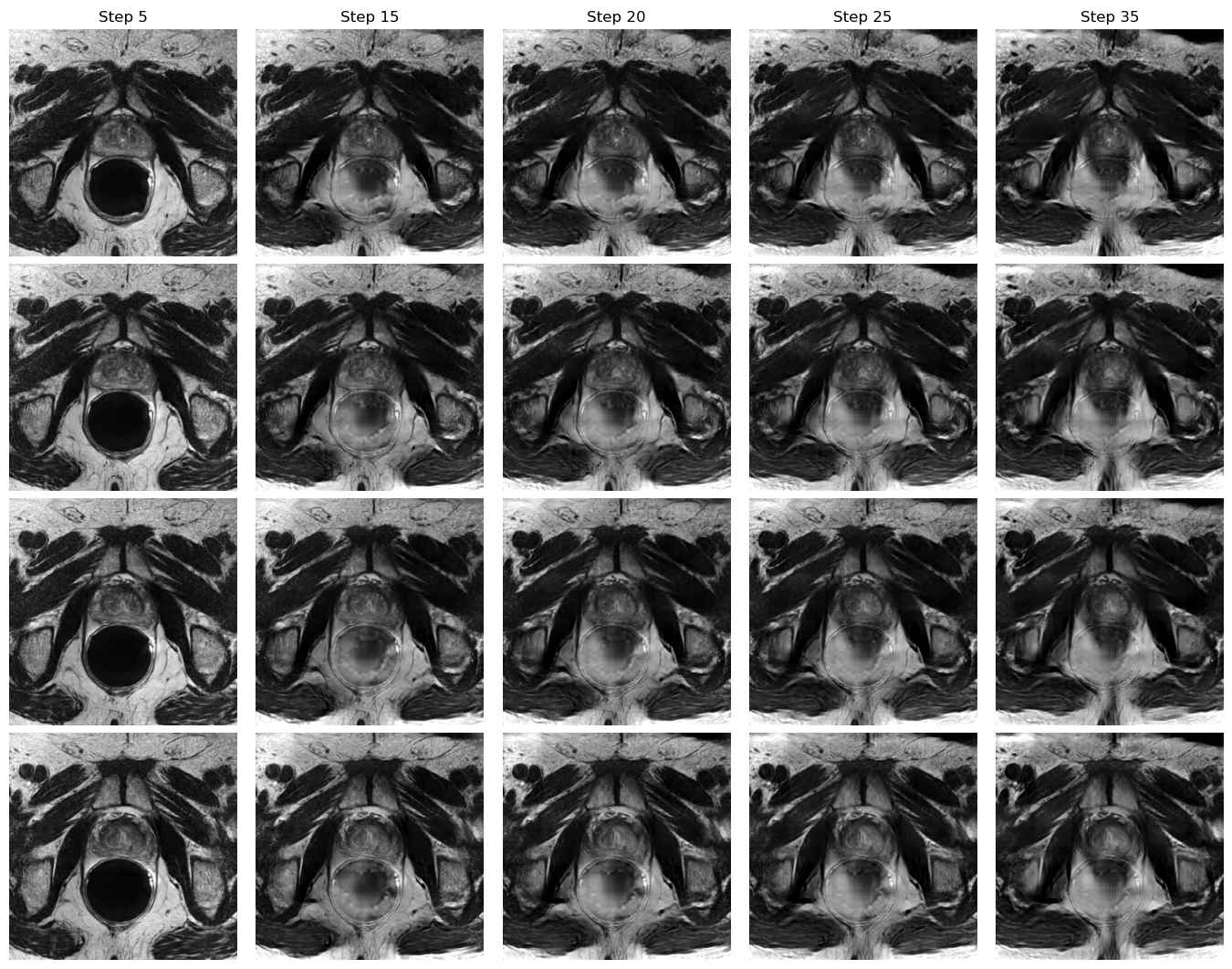}
        \caption{Ex. 1: Domain B $\rightarrow$ Domain A}
        \label{fig:figureBMC2RUNMC_1}
    \end{subfigure}
    \hspace{0.02\textwidth}
    \begin{subfigure}[b]{0.45\textwidth}
        \centering
        \includegraphics[width=\textwidth]{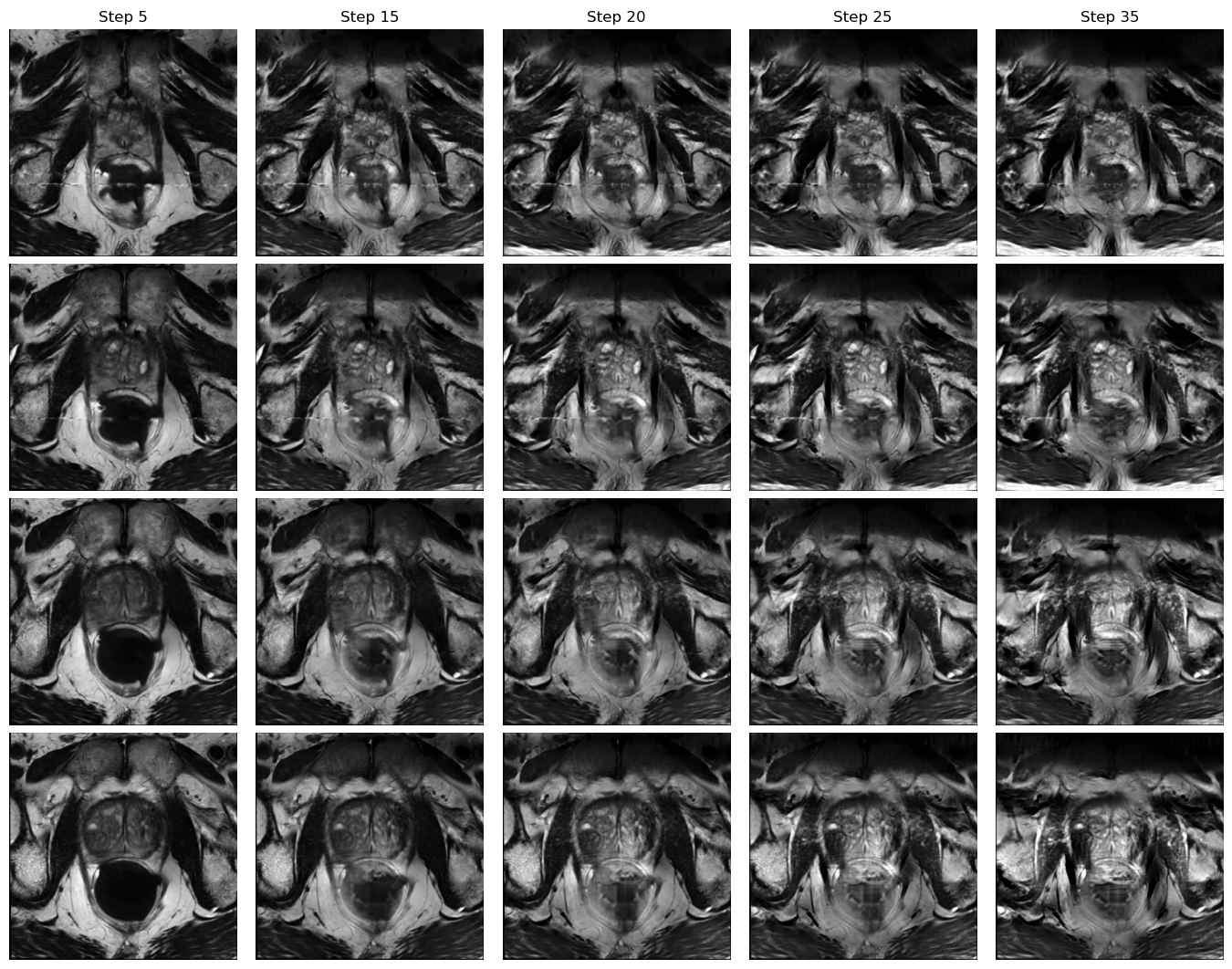}
        \caption{Ex. 2: Domain B $\rightarrow$ Domain A}
        \label{fig:figureBMC2RUNMC_2}
    \end{subfigure}
    
    \begin{subfigure}[b]{0.45\textwidth}
        \centering
        \includegraphics[width=\textwidth]{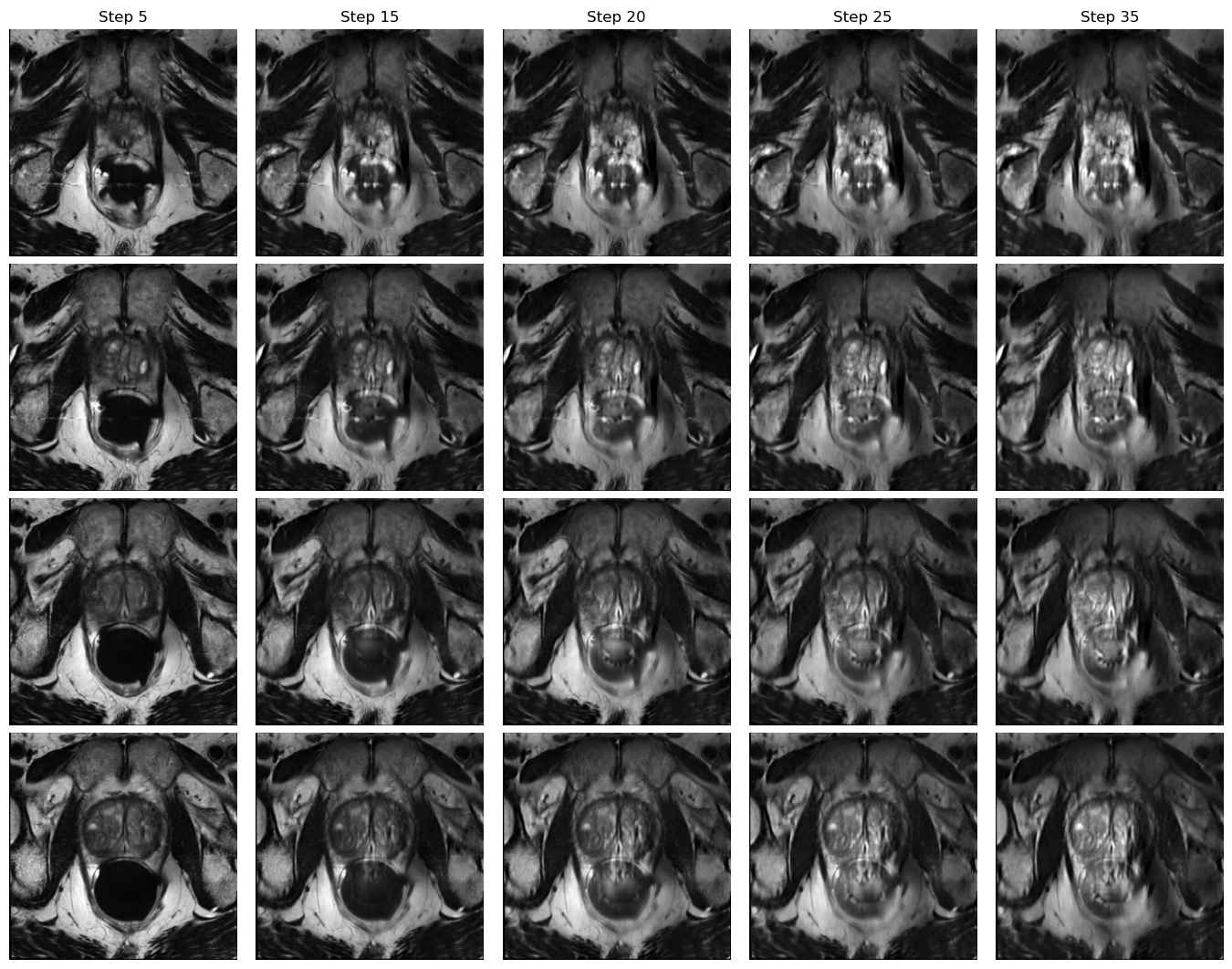}
        \caption{Ex. 1: Domain B $\rightarrow$ Domain C}
        \label{fig:figureBMC2I2CVB_1}
    \end{subfigure}
    \hspace{0.02\textwidth}
    \begin{subfigure}[b]{0.45\textwidth}
        \centering
        \includegraphics[width=\textwidth]{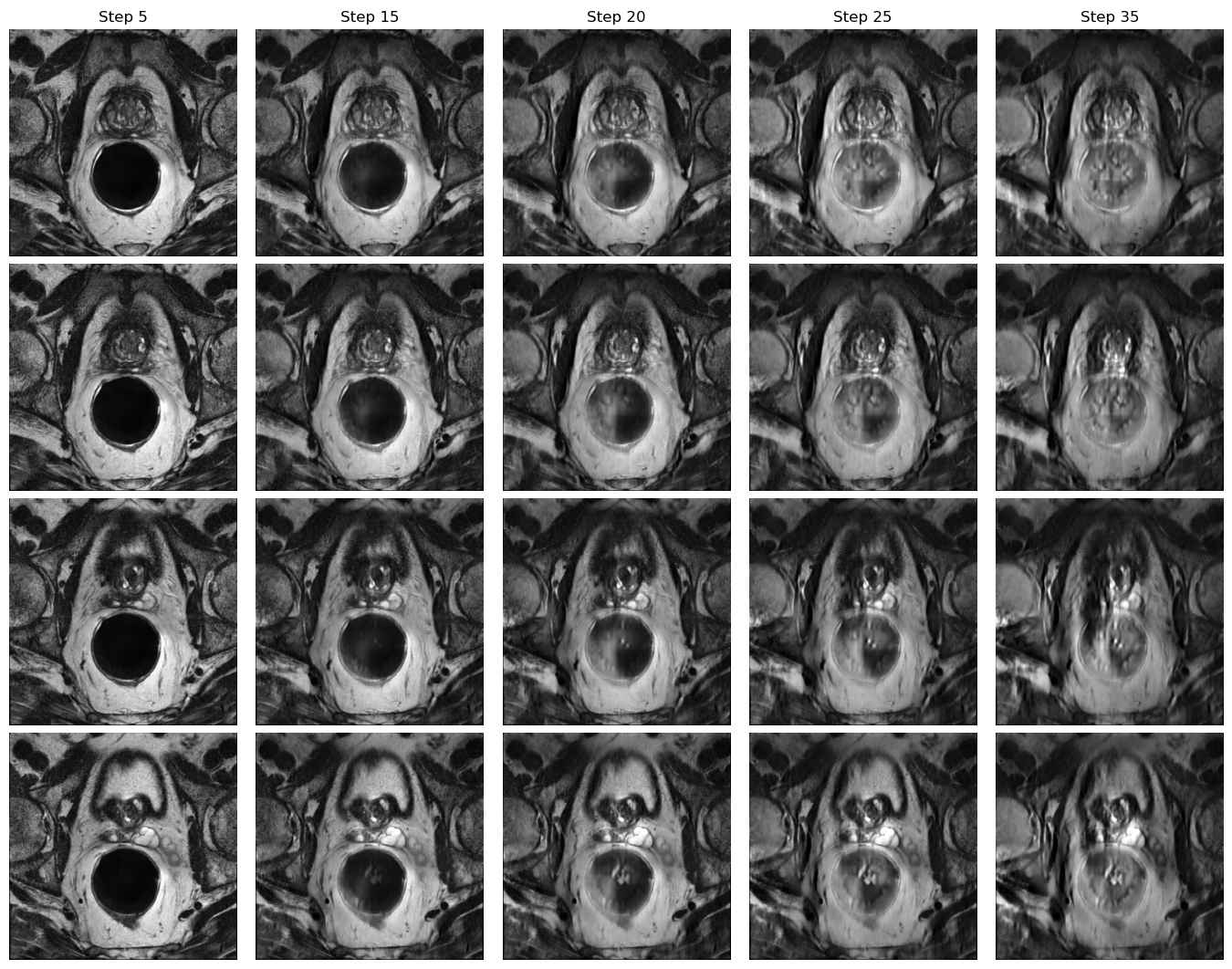}
        \caption{Ex. 2: Domain B $\rightarrow$ Domain C}
        \label{fig:figureBMC2I2CVB_2}
    \end{subfigure}

    \begin{subfigure}[b]{0.45\textwidth}
        \centering
        \includegraphics[width=\textwidth]{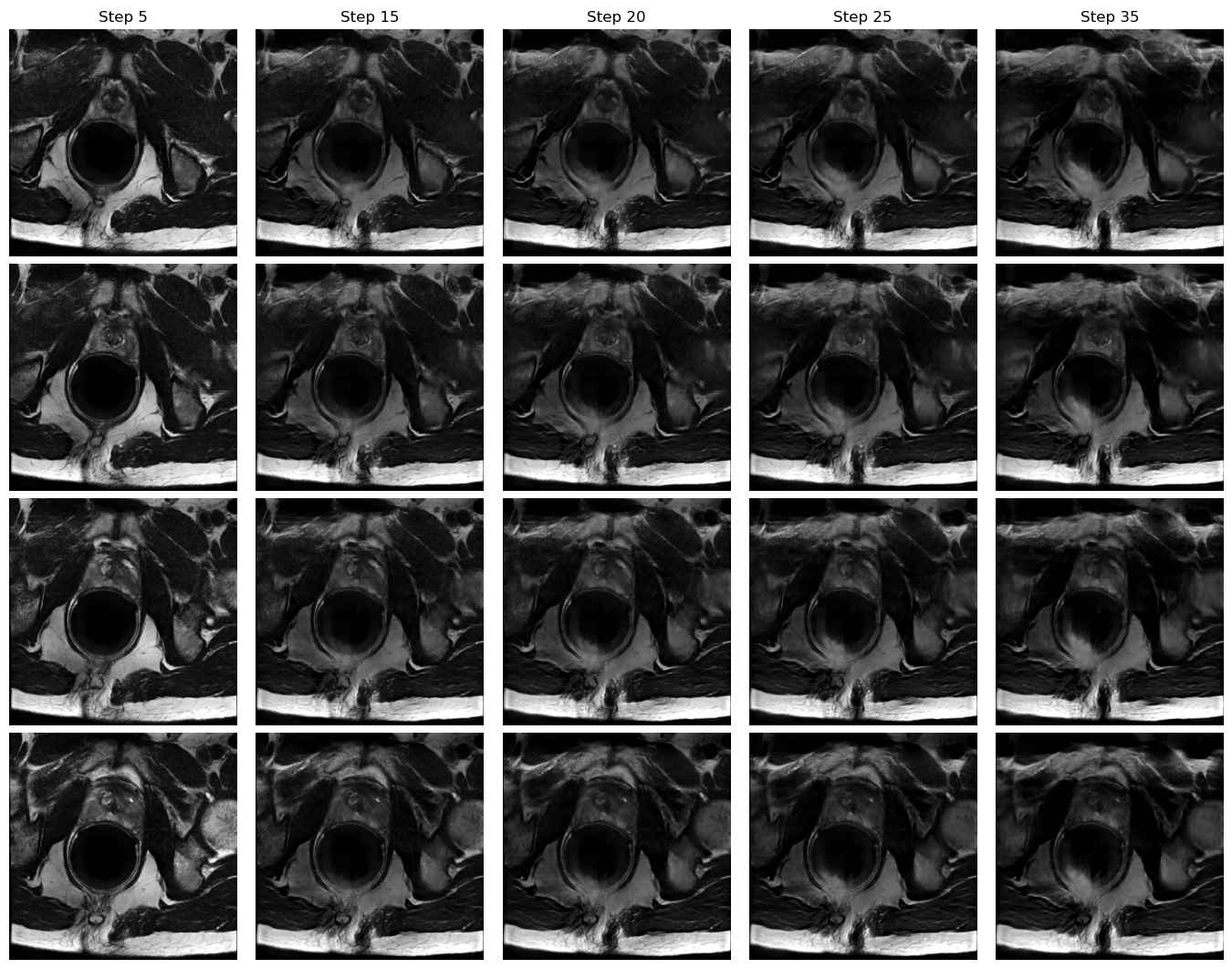}
        \caption{Ex. 1: Domain B $\rightarrow$ Domain D}
        \label{fig:figureBMC2UCL_1}
    \end{subfigure}
    \hspace{0.02\textwidth}
    \begin{subfigure}[b]{0.45\textwidth}
        \centering
        \includegraphics[width=\textwidth]{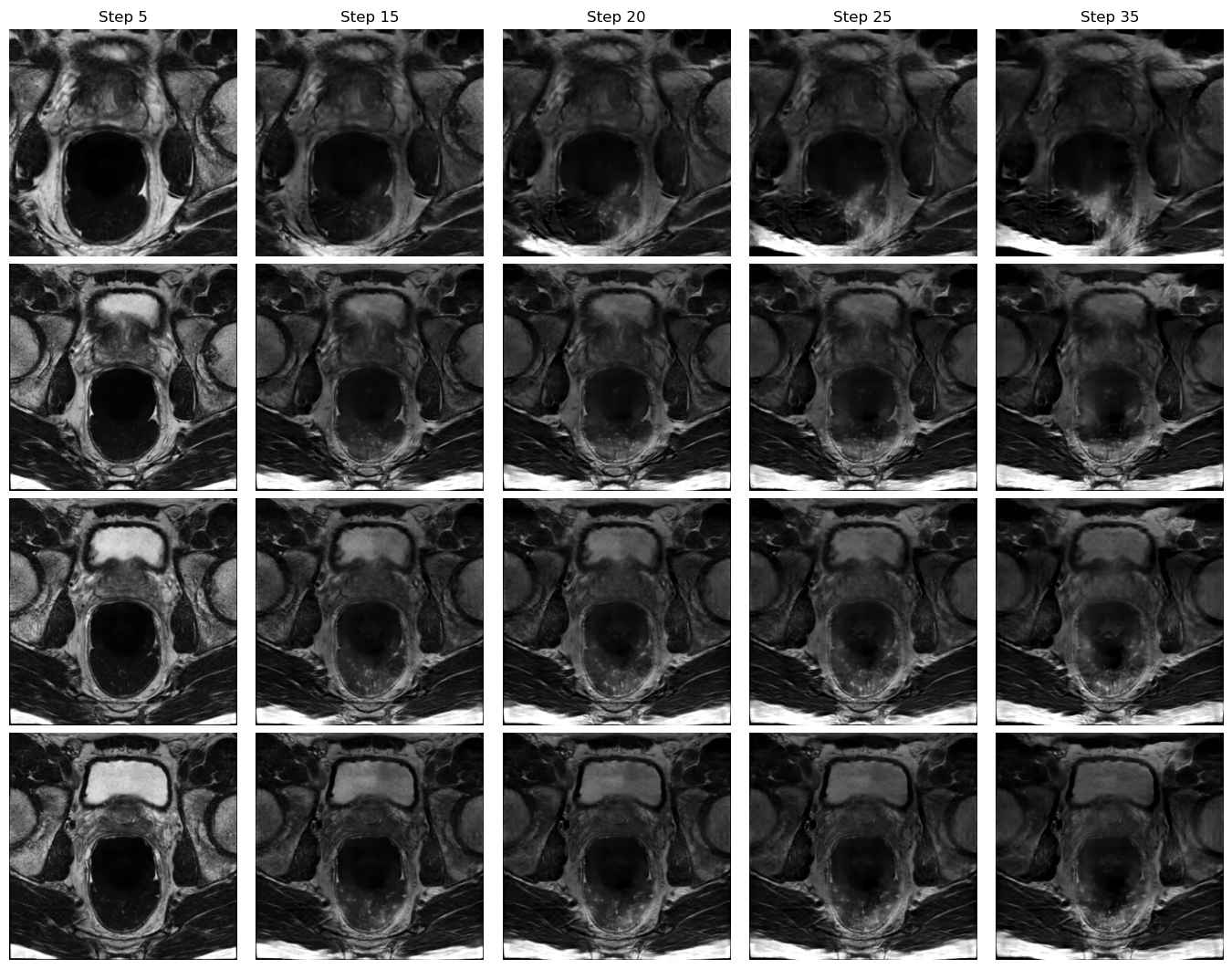}
        \caption{Ex. 2: Domain B $\rightarrow$ Domain D}
        \label{fig:figureBMC2UCL_2}
    \end{subfigure}
    
    \caption{Examples of translation from Domain B to Domains A, C and D using the proposed method on the prostate dataset.}
    \label{fig:prostate_source_BMC}
\end{figure*}

\begin{figure*}[htbp]
    \ContinuedFloat
    \centering

    \begin{subfigure}[b]{0.45\textwidth}
        \centering
        \includegraphics[width=\textwidth]{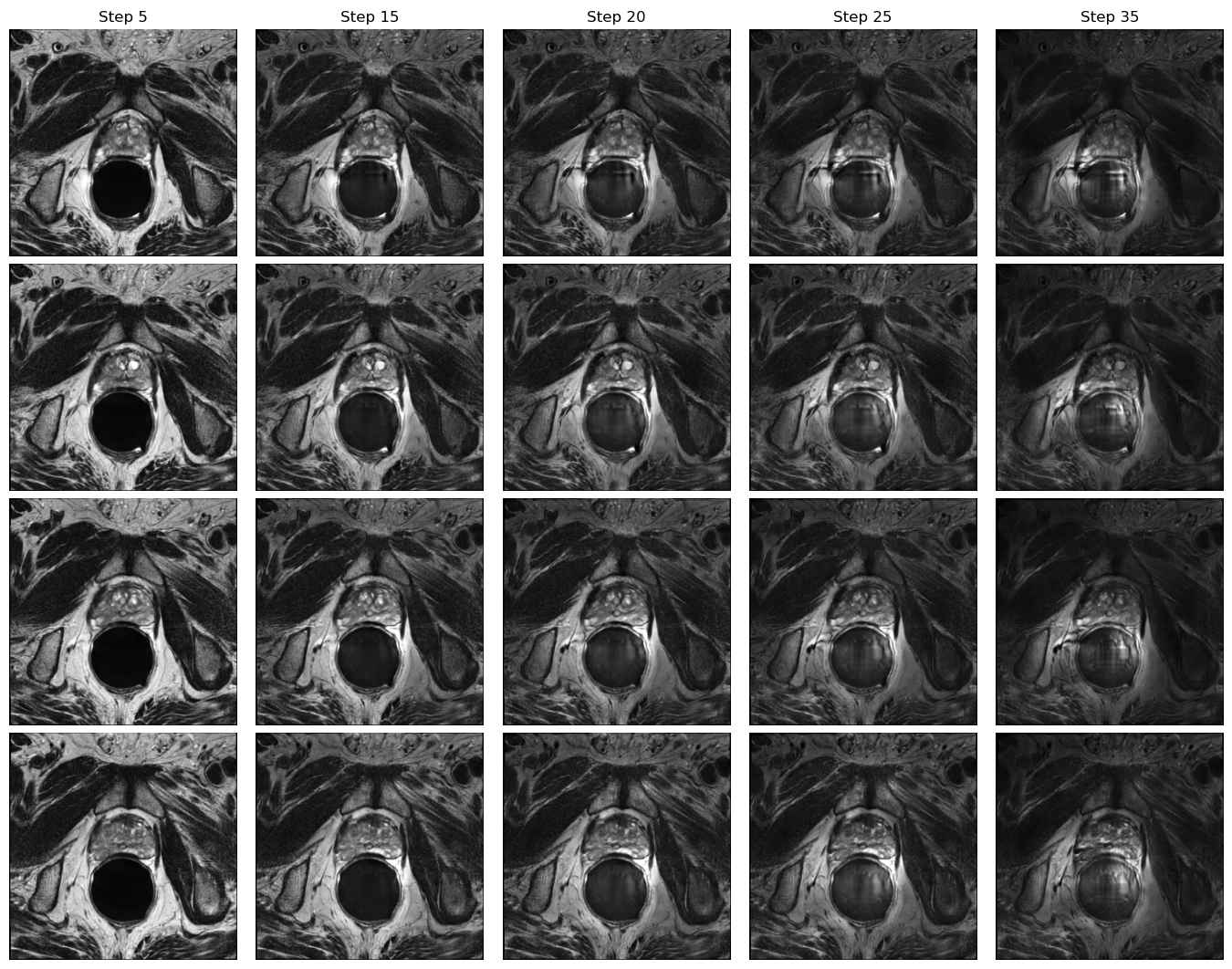}
        \caption{Ex. 1: Domain B $\rightarrow$ Domain E}
        \label{fig:figureBMC2BIDMC_1}
    \end{subfigure}
    \hspace{0.02\textwidth}
    \begin{subfigure}[b]{0.45\textwidth}
        \centering
        \includegraphics[width=\textwidth]{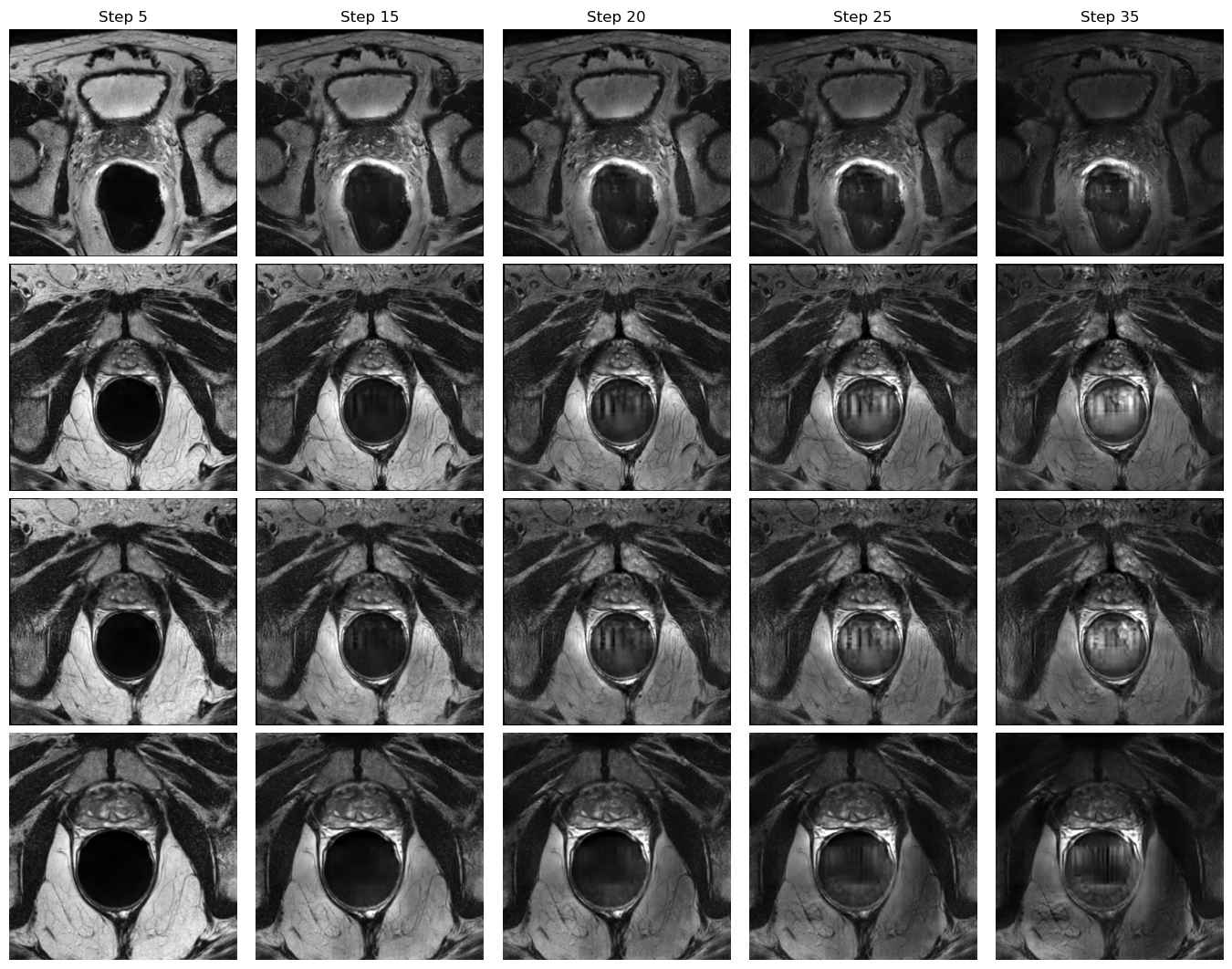}
        \caption{Ex. 2: Domain B $\rightarrow$ Domain E}
        \label{fig:figureBMC2BIDMC_2}
    \end{subfigure}
    
    \begin{subfigure}[b]{0.45\textwidth}
        \centering
        \includegraphics[width=\textwidth]{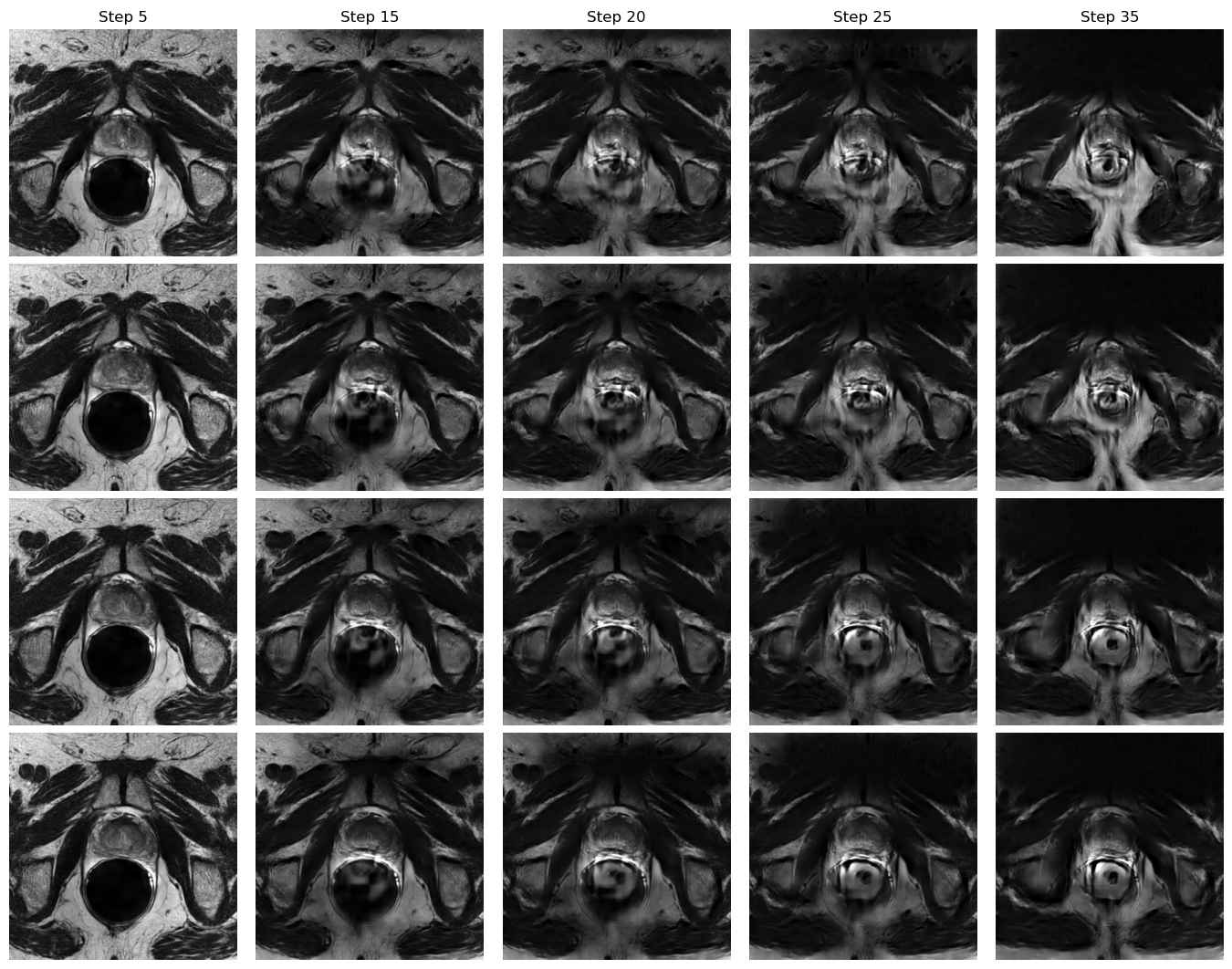}
        \caption{Ex. 1: Domain B $\rightarrow$ Domain F}
        \label{fig:figureBMC2HK_1}
    \end{subfigure}
    \hspace{0.02\textwidth}
    \begin{subfigure}[b]{0.45\textwidth}
        \centering
        \includegraphics[width=\textwidth]{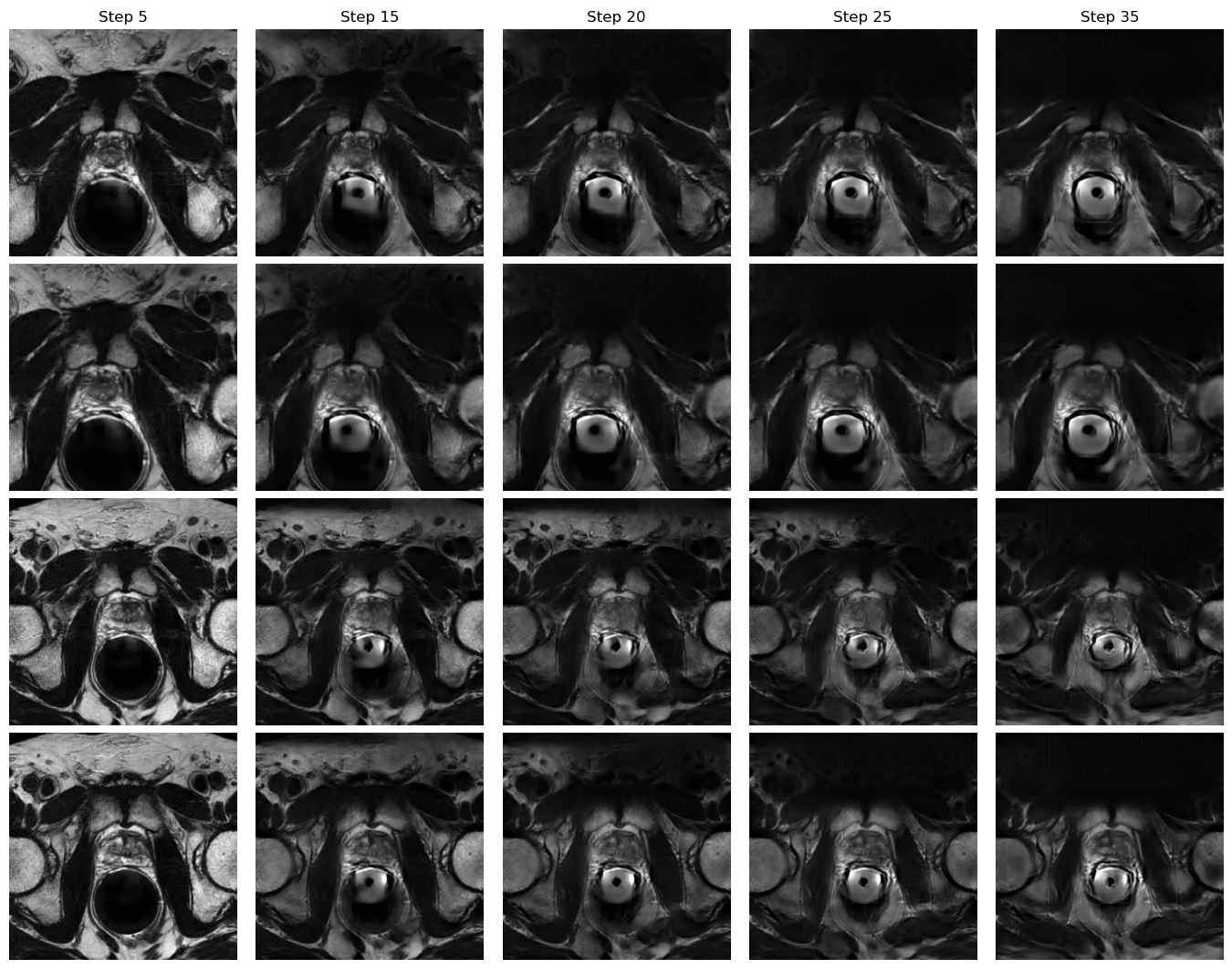}
        \caption{Ex. 2: Domain B $\rightarrow$ Domain F}
        \label{fig:figureBMC2HK_2}
    \end{subfigure}
    
    \caption{Examples of translation from Domain B to Domains E and F using the proposed method on the prostate dataset.}
\end{figure*}

\begin{figure*}[htbp]
    \centering
    
    \begin{subfigure}[b]{0.45\textwidth}
        \centering
        \includegraphics[width=\textwidth]{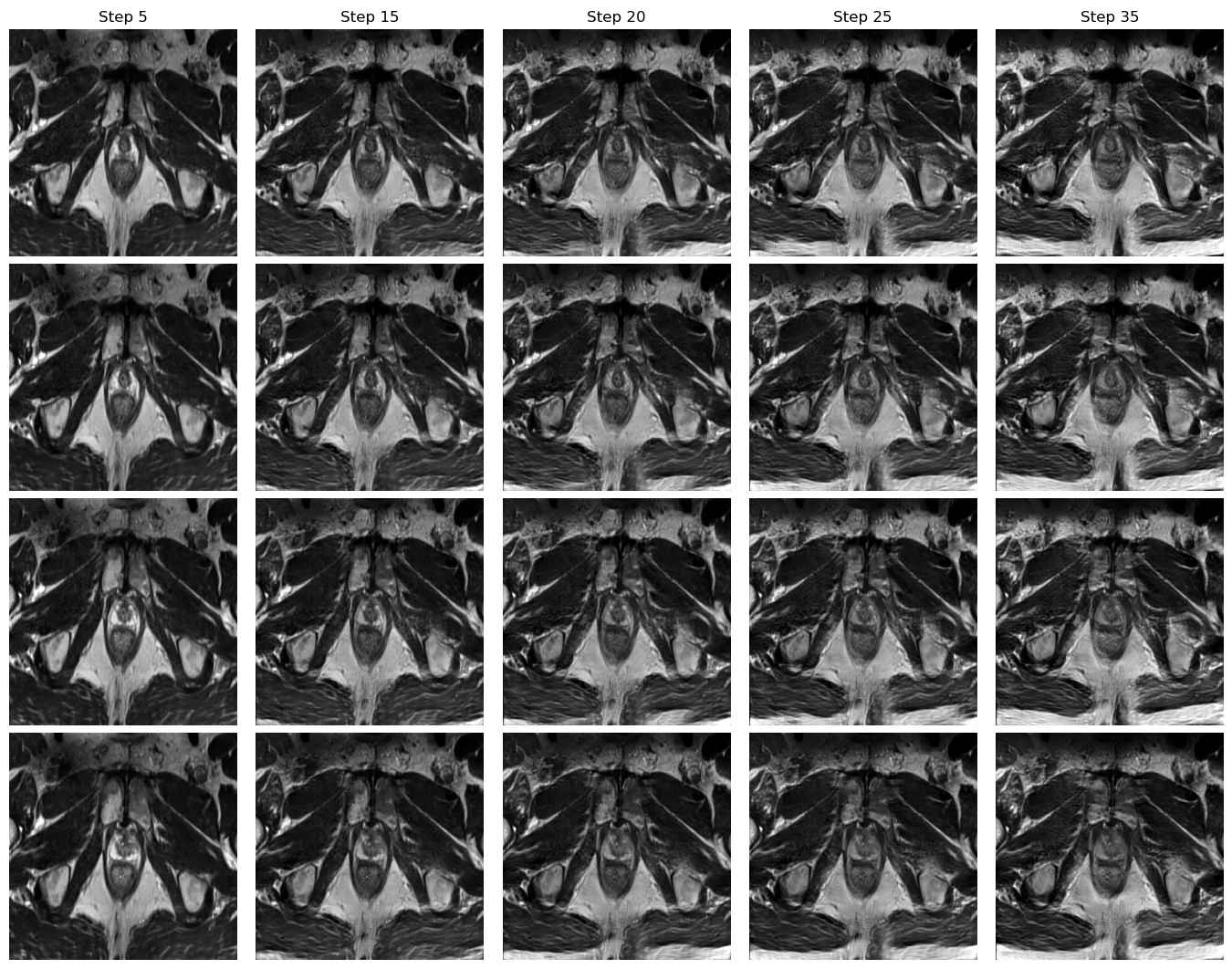}
        \caption{Ex. 1: Domain C $\rightarrow$ Domain A}
        \label{fig:figureI2CVB2RUNMC_1}
    \end{subfigure}
    \hspace{0.02\textwidth}
    \begin{subfigure}[b]{0.45\textwidth}
        \centering
        \includegraphics[width=\textwidth]{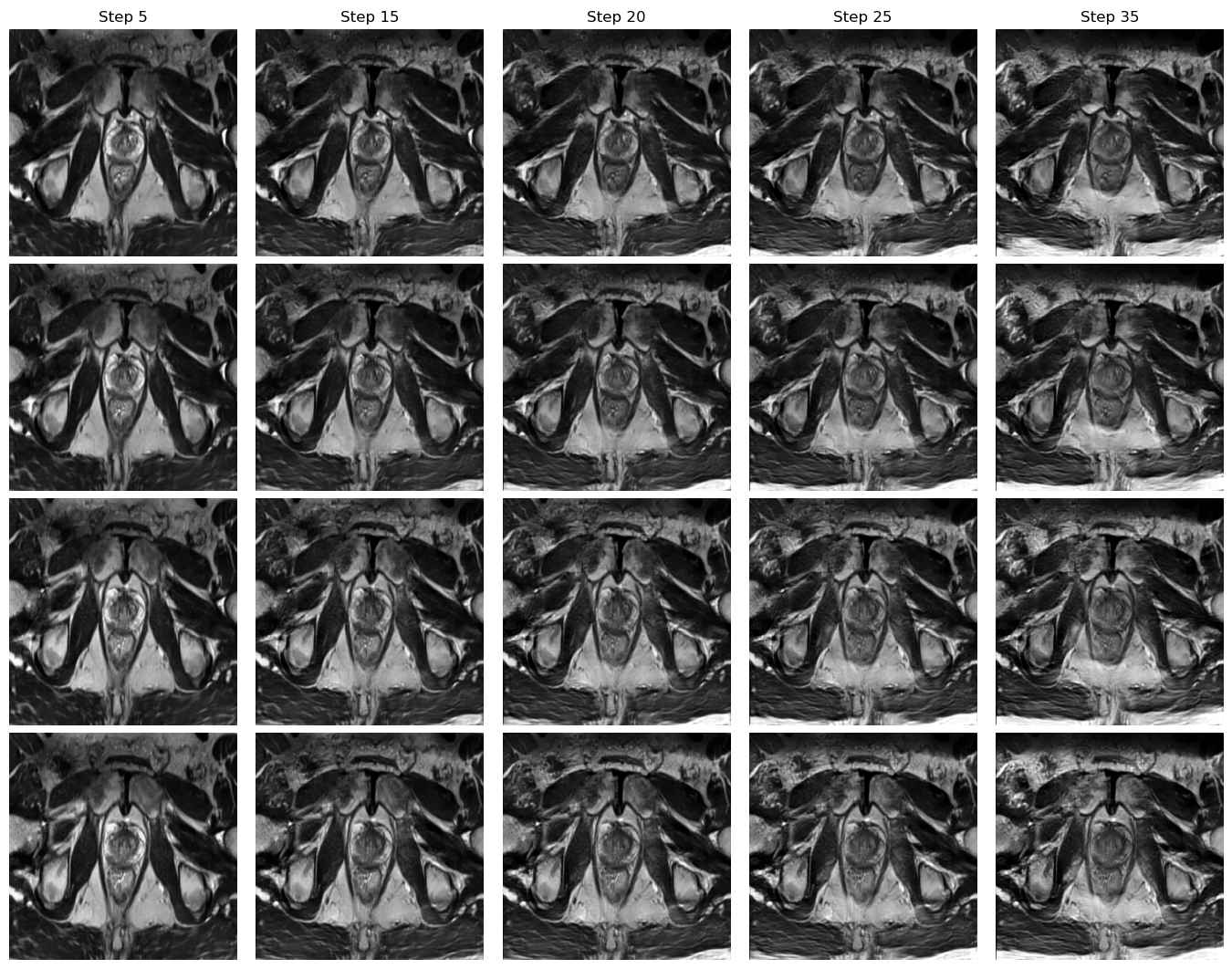}
        \caption{Ex. 2: Domain C $\rightarrow$ Domain A}
        \label{fig:figureI2CVB2RUNMC_2}
    \end{subfigure}
    
    \begin{subfigure}[b]{0.45\textwidth}
        \centering
        \includegraphics[width=\textwidth]{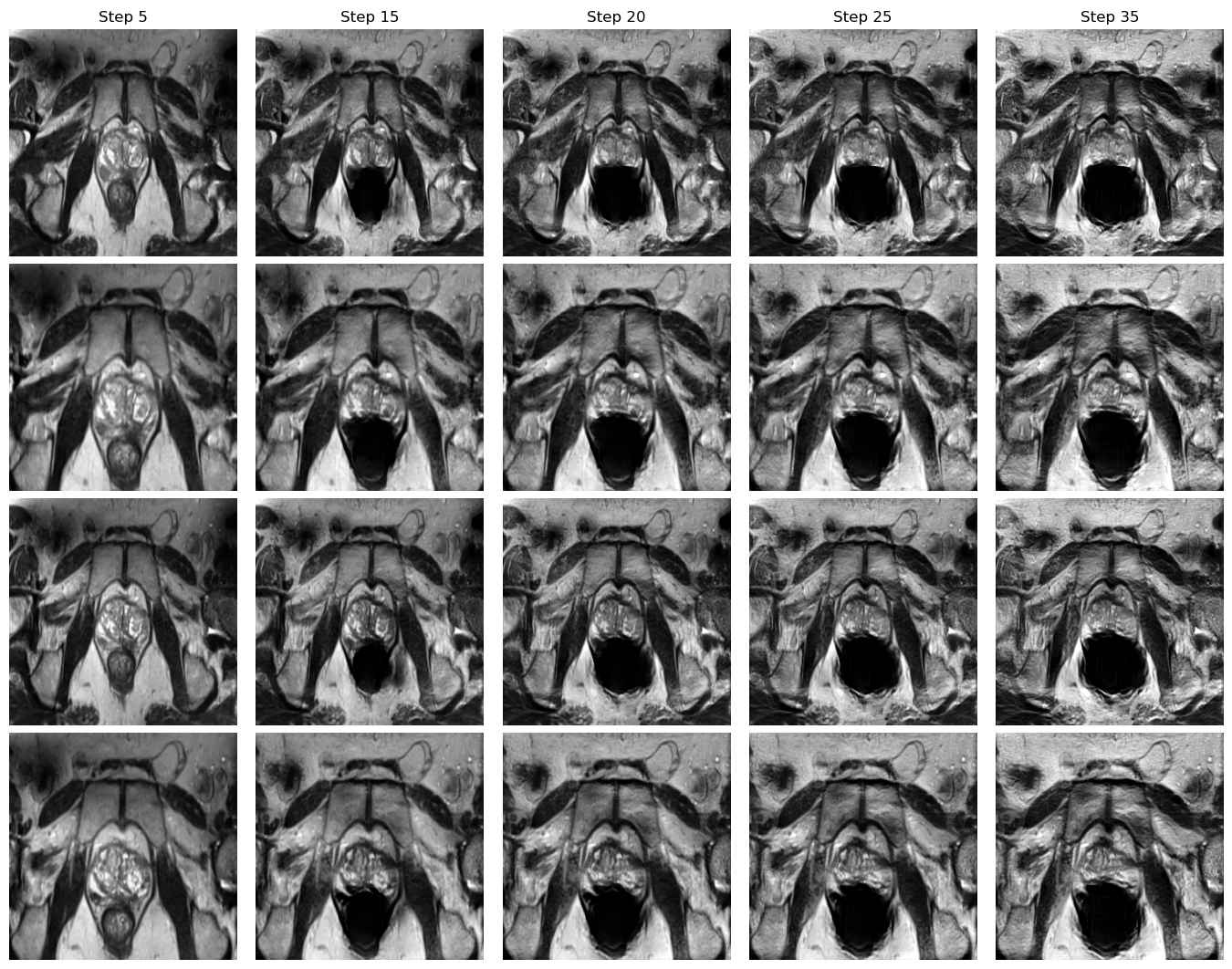}
        \caption{Ex. 1: Domain C $\rightarrow$ Domain B}
        \label{fig:figureI2CVB2BMC_1}
    \end{subfigure}
    \hspace{0.02\textwidth}
    \begin{subfigure}[b]{0.45\textwidth}
        \centering
        \includegraphics[width=\textwidth]{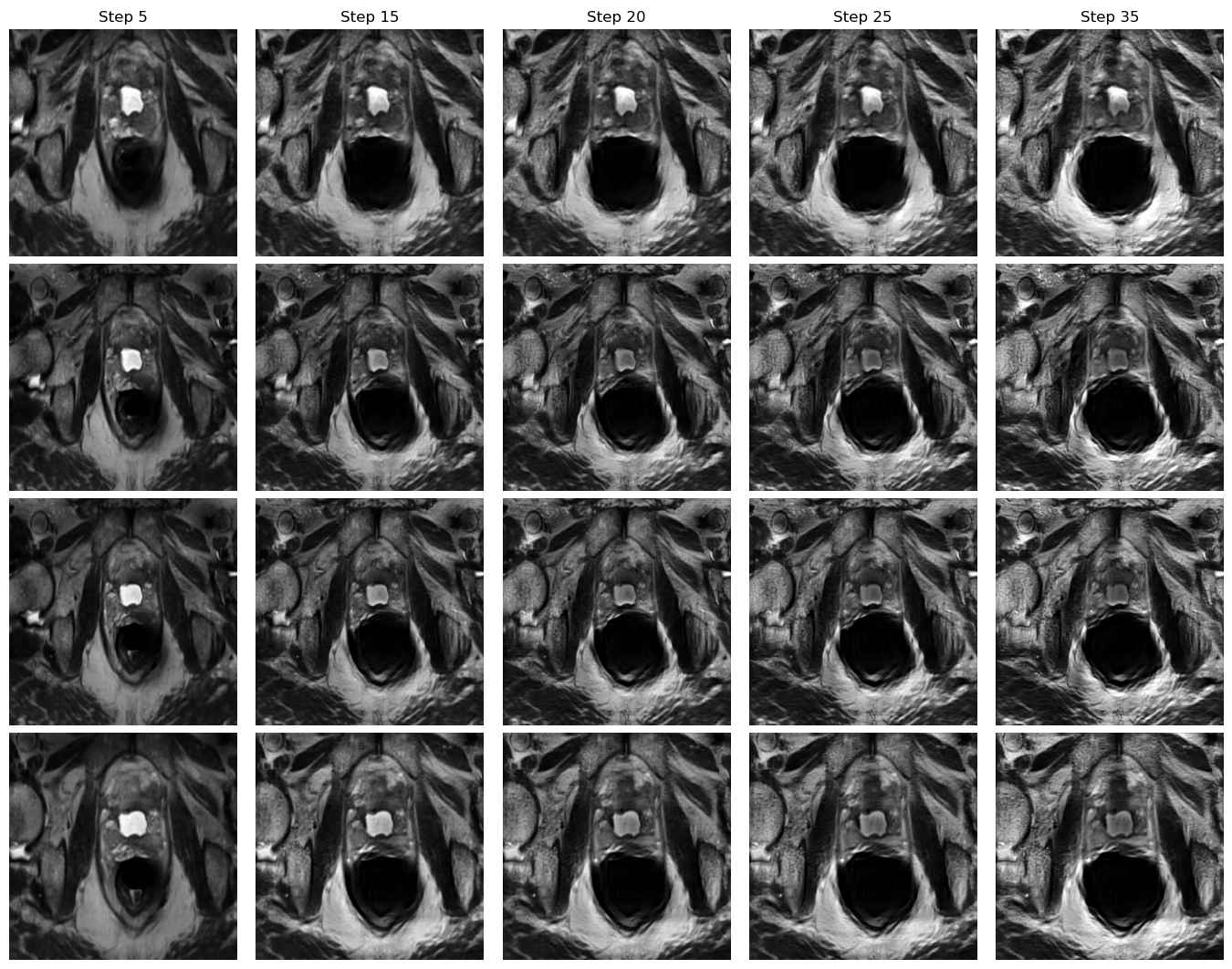}
        \caption{Ex. 2: Domain C $\rightarrow$ Domain B}
        \label{fig:figureI2CVB2BMC_2}
    \end{subfigure}

    \begin{subfigure}[b]{0.45\textwidth}
        \centering
        \includegraphics[width=\textwidth]{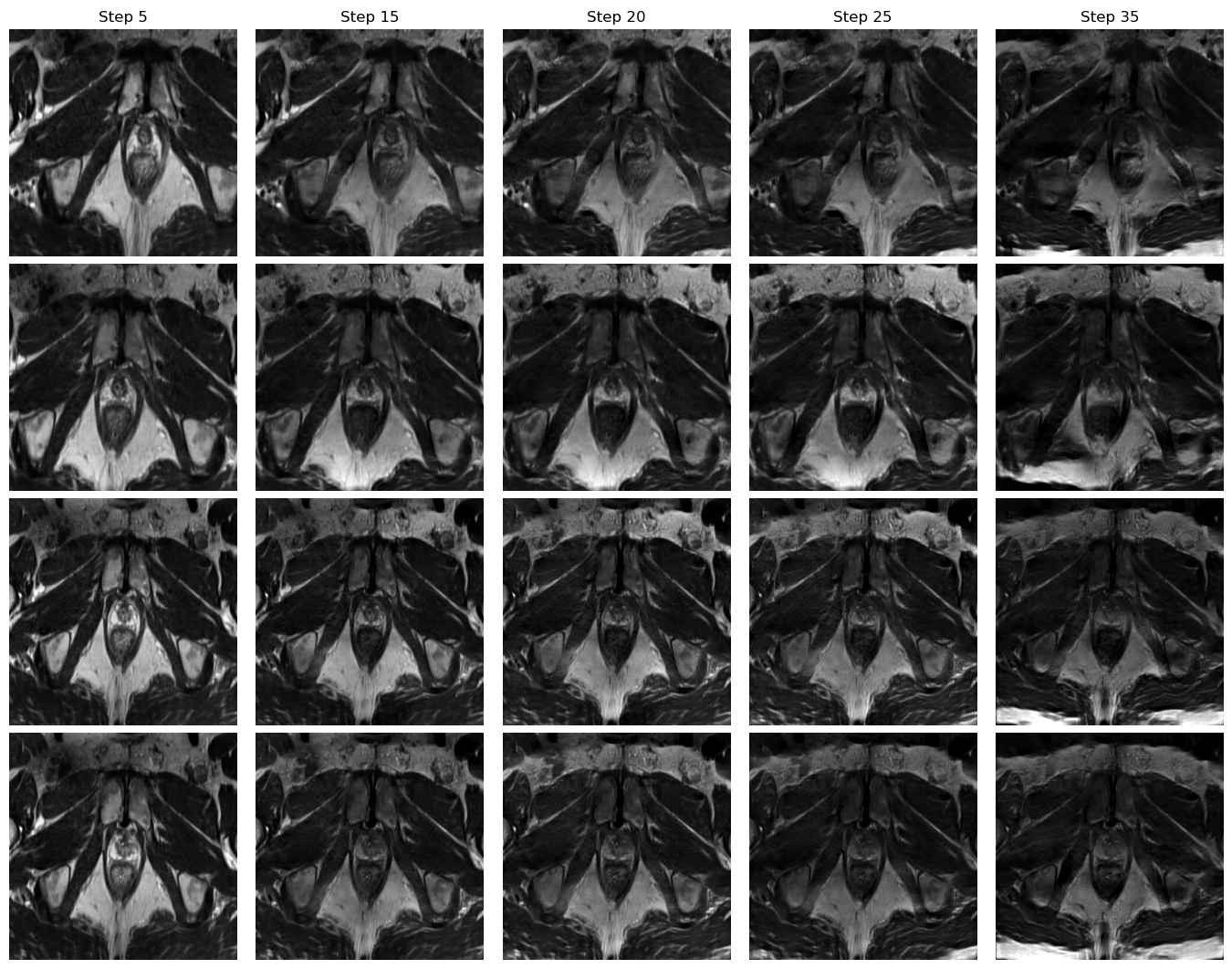}
        \caption{Ex. 1: Domain C $\rightarrow$ Domain D}
        \label{fig:figureI2CVB2UCL_1}
    \end{subfigure}
    \hspace{0.02\textwidth}
    \begin{subfigure}[b]{0.45\textwidth}
        \centering
        \includegraphics[width=\textwidth]{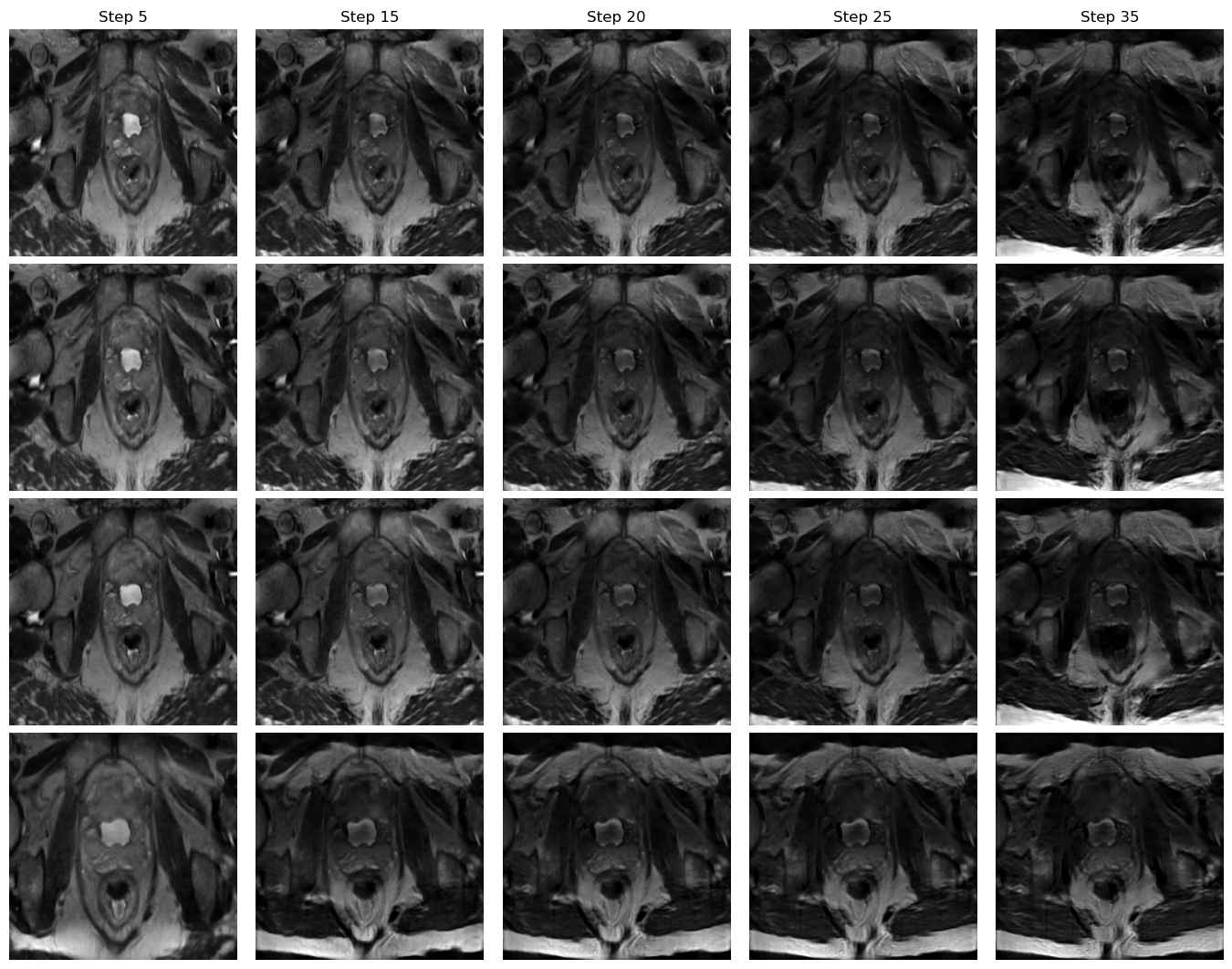}
        \caption{Ex. 2: Domain C $\rightarrow$ Domain D}
        \label{fig:figureI2CVB2UCL_2}
    \end{subfigure}
    
    \caption{Examples of translation from Domain C to Domains A, B, and D using the proposed method on the prostate dataset.}
    \label{fig:prostate_source_I2CVB}
\end{figure*}

\begin{figure*}[htbp]
    \ContinuedFloat
    \centering

    \begin{subfigure}[b]{0.45\textwidth}
        \centering
        \includegraphics[width=\textwidth]{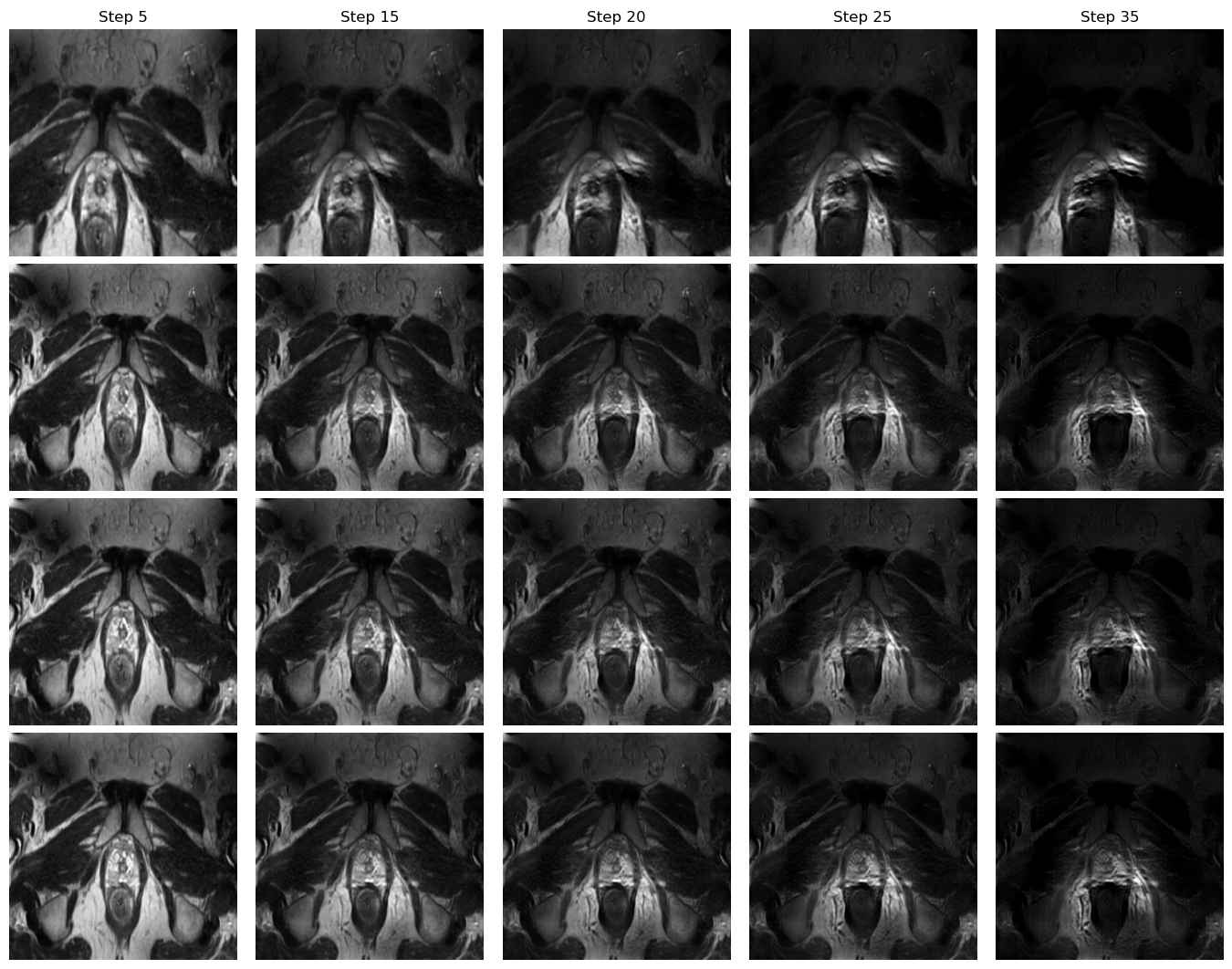}
        \caption{Ex. 1: Domain C $\rightarrow$ Domain E}
        \label{fig:figureI2CVB2BIDMC_1}
    \end{subfigure}
    \hspace{0.02\textwidth}
    \begin{subfigure}[b]{0.45\textwidth}
        \centering
        \includegraphics[width=\textwidth]{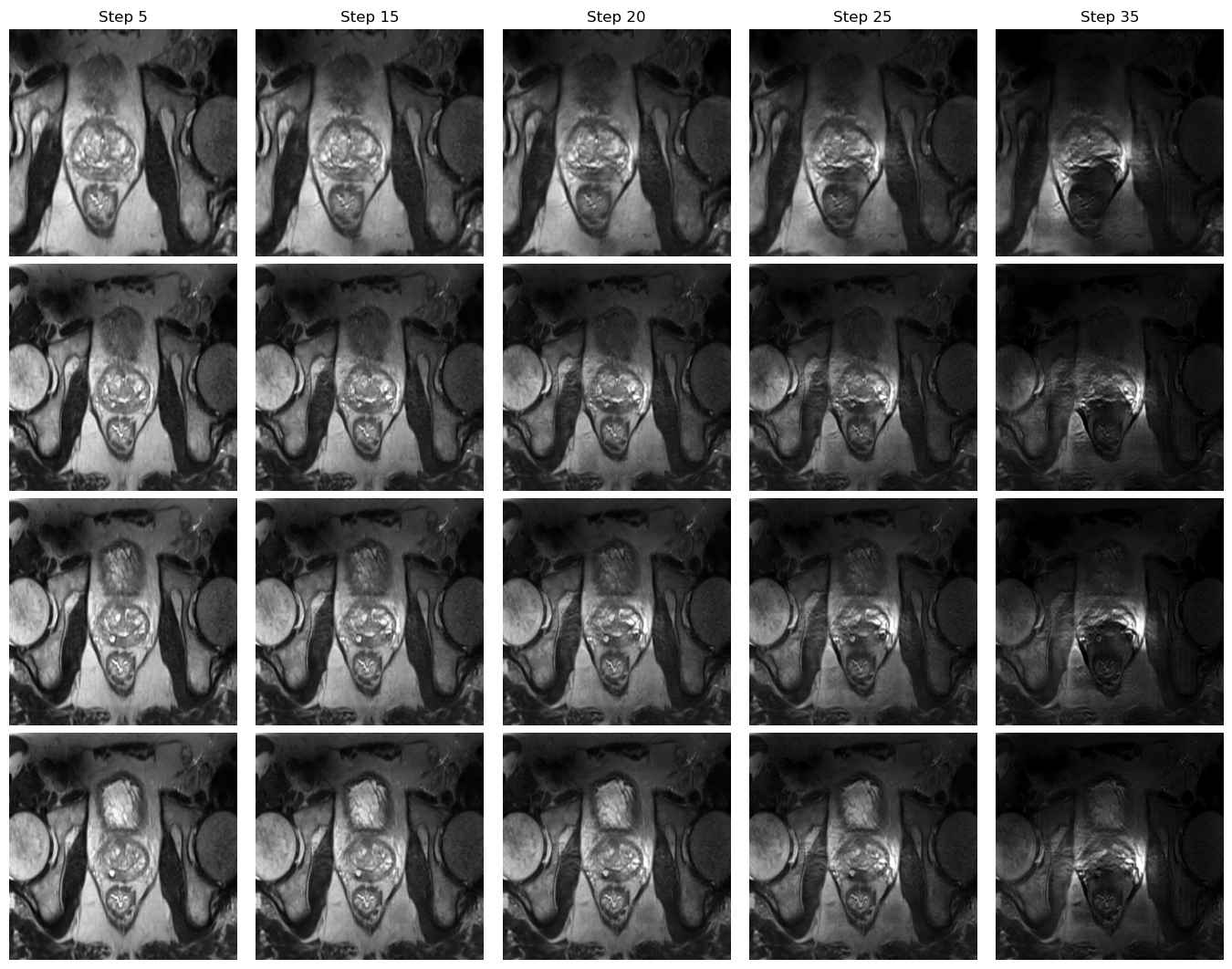}
        \caption{Ex. 2: Domain C $\rightarrow$ Domain E}
        \label{fig:figureI2CVB2BIDMC_2}
    \end{subfigure}
    
    \begin{subfigure}[b]{0.45\textwidth}
        \centering
        \includegraphics[width=\textwidth]{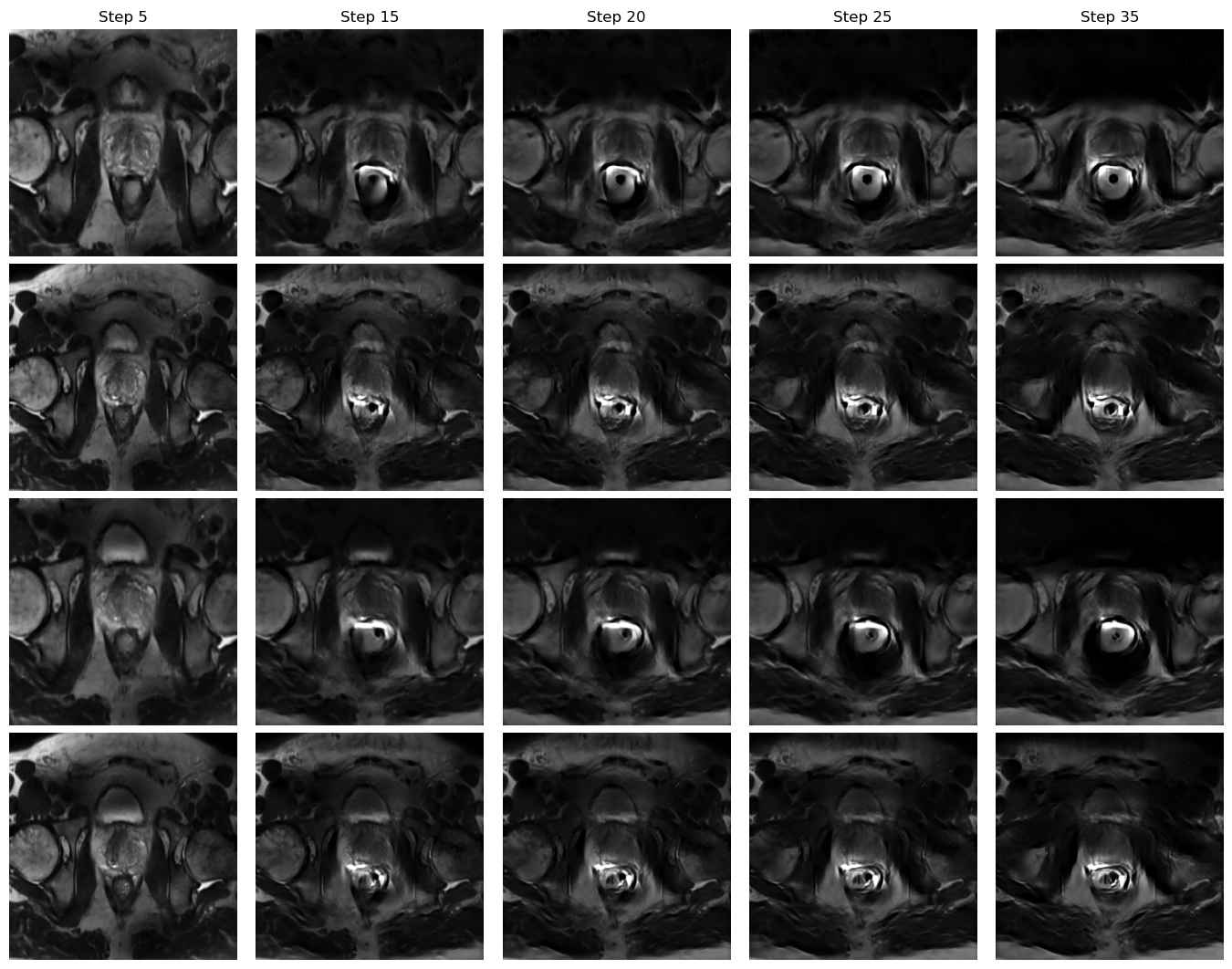}
        \caption{Ex. 1: Domain C $\rightarrow$ Domain F}
        \label{fig:figureI2CVB2HK_1}
    \end{subfigure}
    \hspace{0.02\textwidth}
    \begin{subfigure}[b]{0.45\textwidth}
        \centering
        \includegraphics[width=\textwidth]{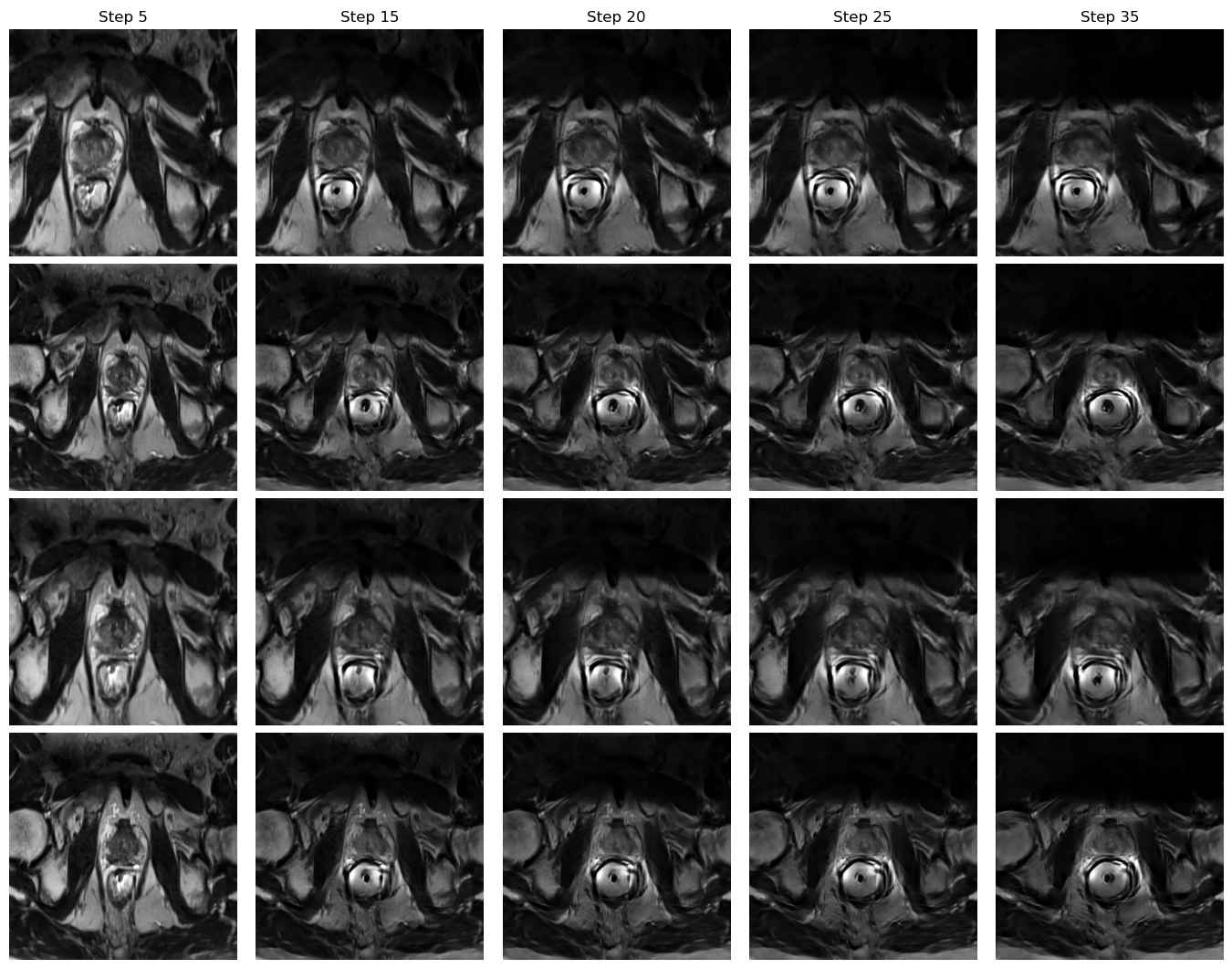}
        \caption{Ex. 2: Domain C $\rightarrow$ Domain F}
        \label{fig:figureI2CVB2HK_2}
    \end{subfigure}
    
    \caption{Examples of translation from Domain C to Domains E and F using the proposed method on the prostate dataset.}
\end{figure*}

\begin{figure*}[htbp]
    \centering
    \begin{subfigure}[b]{0.45\textwidth}
        \centering
        \includegraphics[width=\textwidth]{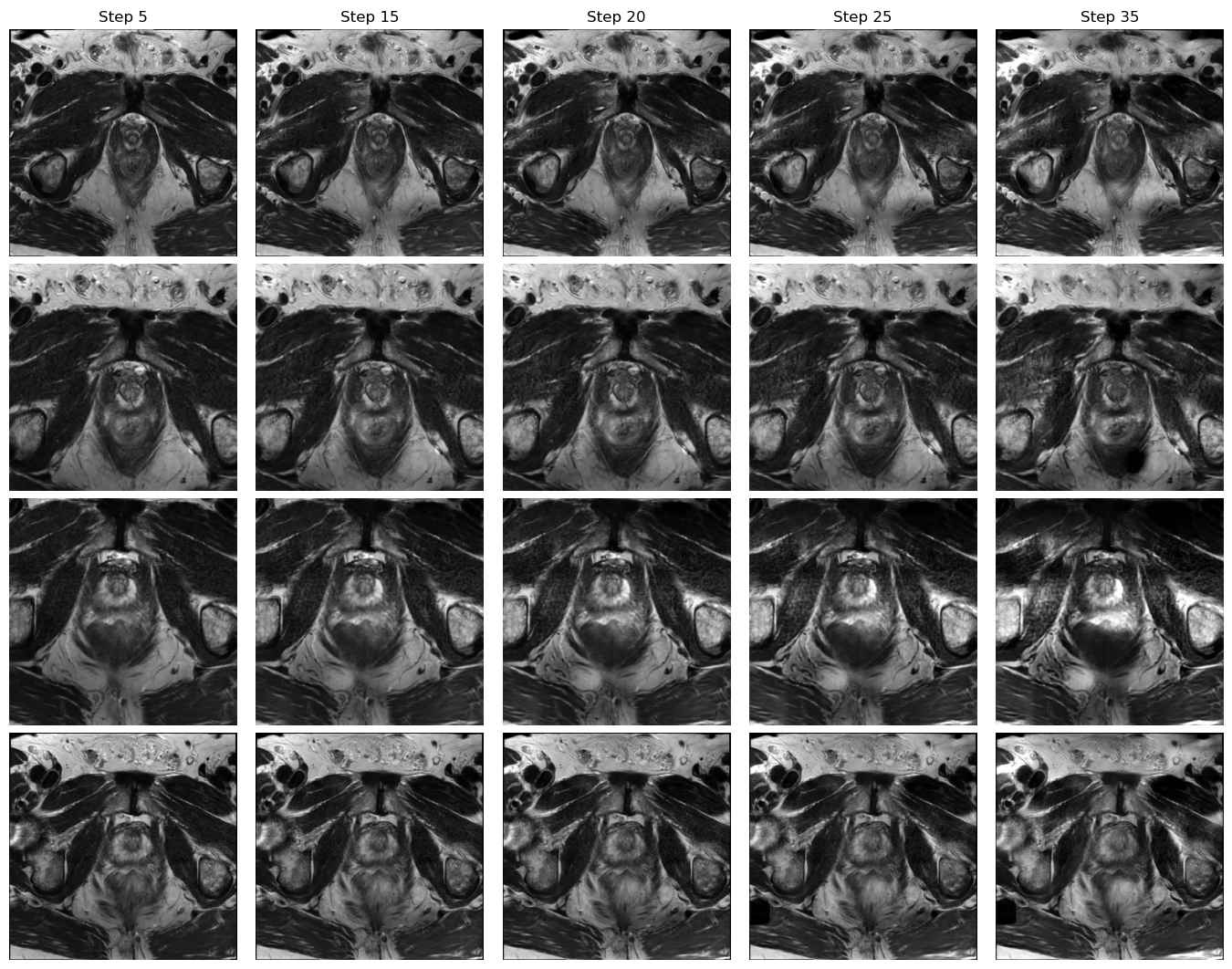}
        \caption{Ex. 1: Domain D $\rightarrow$ Domain A}
        \label{fig:figureUCL2RUNMC_1}
    \end{subfigure}
    \hspace{0.02\textwidth}
    \begin{subfigure}[b]{0.45\textwidth}
        \centering
        \includegraphics[width=\textwidth]{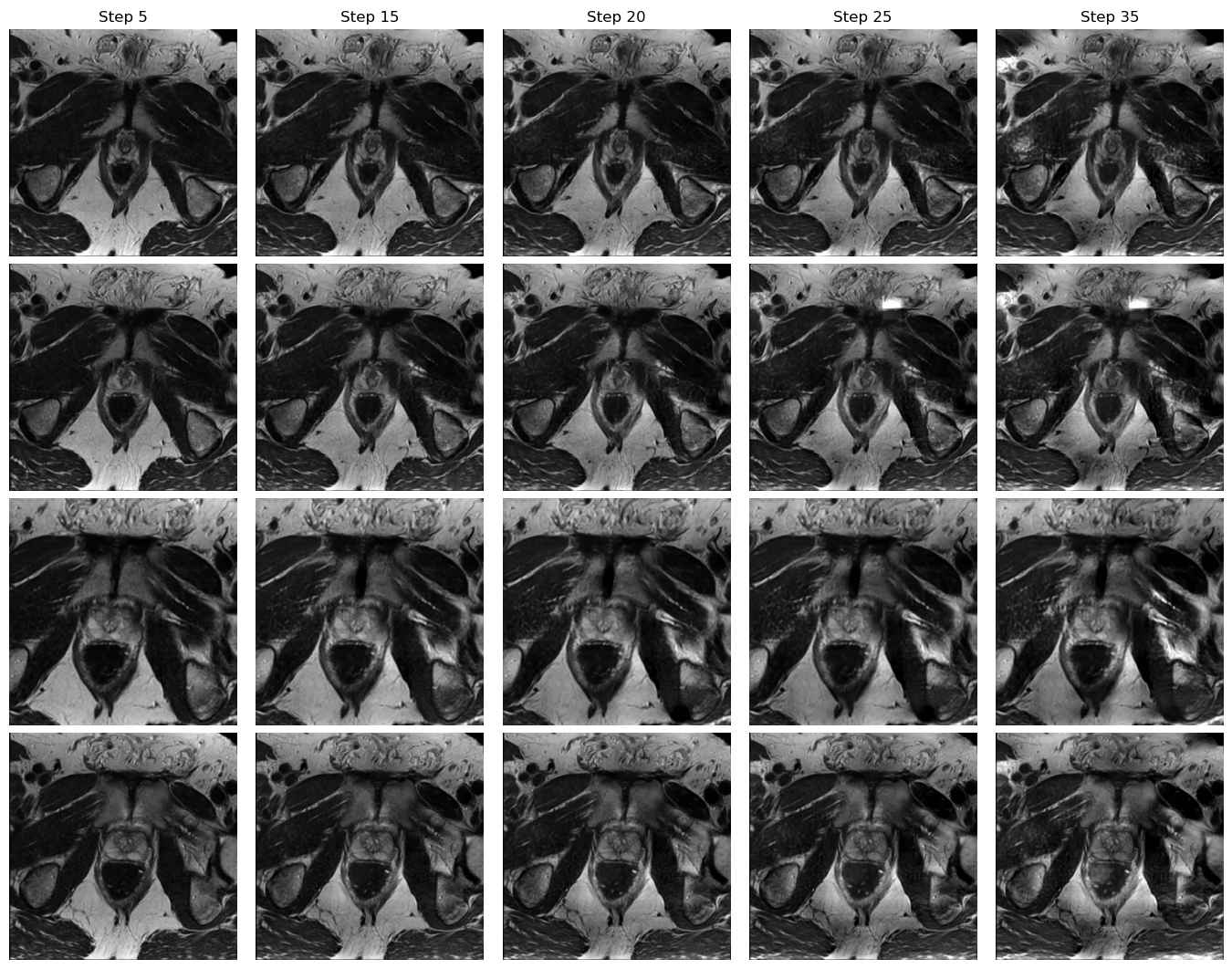}
        \caption{Ex. 2: Domain D $\rightarrow$ Domain A}
        \label{fig:figureUCL2RUNMC_2}
    \end{subfigure}
    
    \begin{subfigure}[b]{0.45\textwidth}
        \centering
        \includegraphics[width=\textwidth]{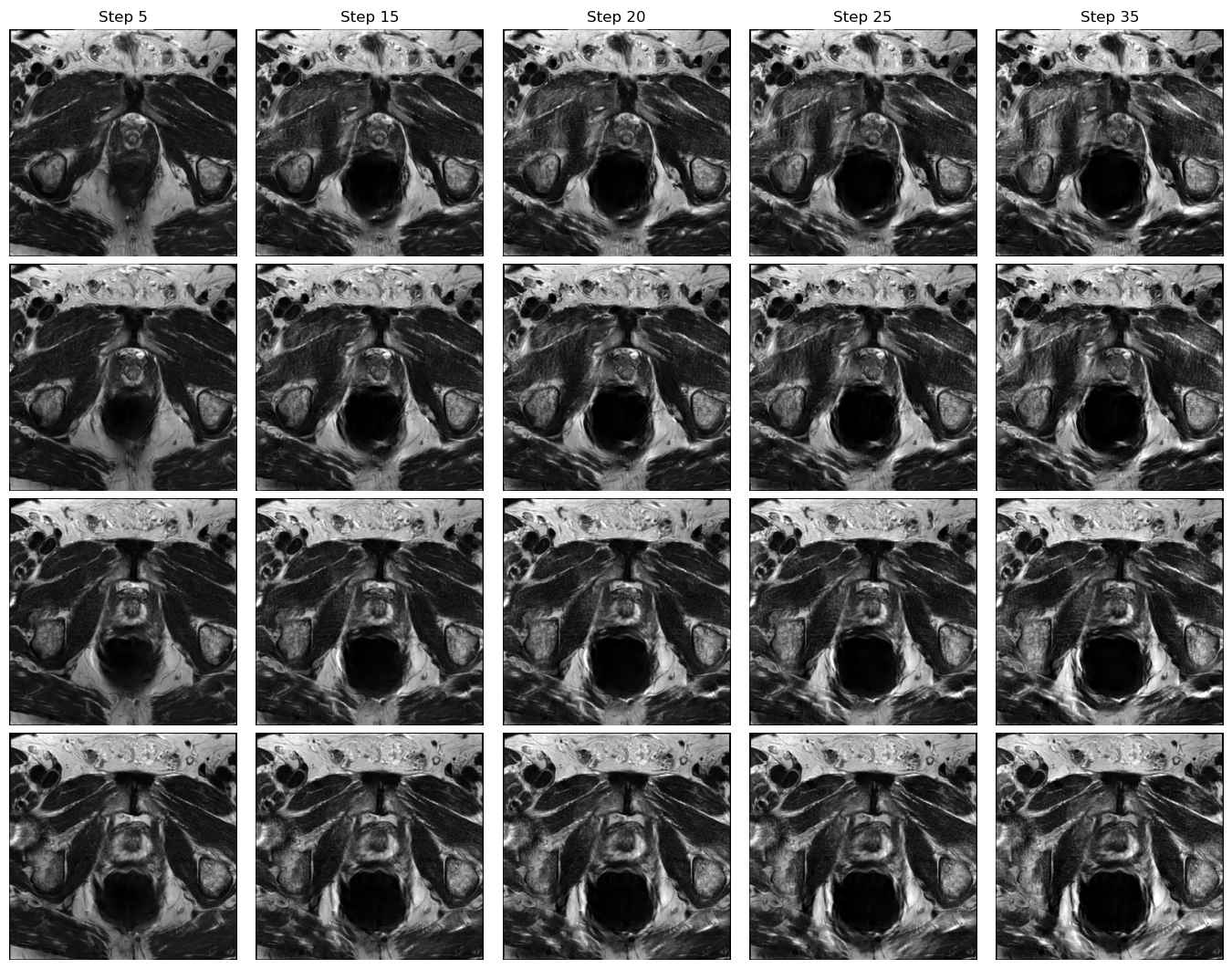}
        \caption{Ex. 1: Domain D $\rightarrow$ Domain B}
        \label{fig:figureUCL2BMC_1}
    \end{subfigure}
    \hspace{0.02\textwidth}
    \begin{subfigure}[b]{0.45\textwidth}
        \centering
        \includegraphics[width=\textwidth]{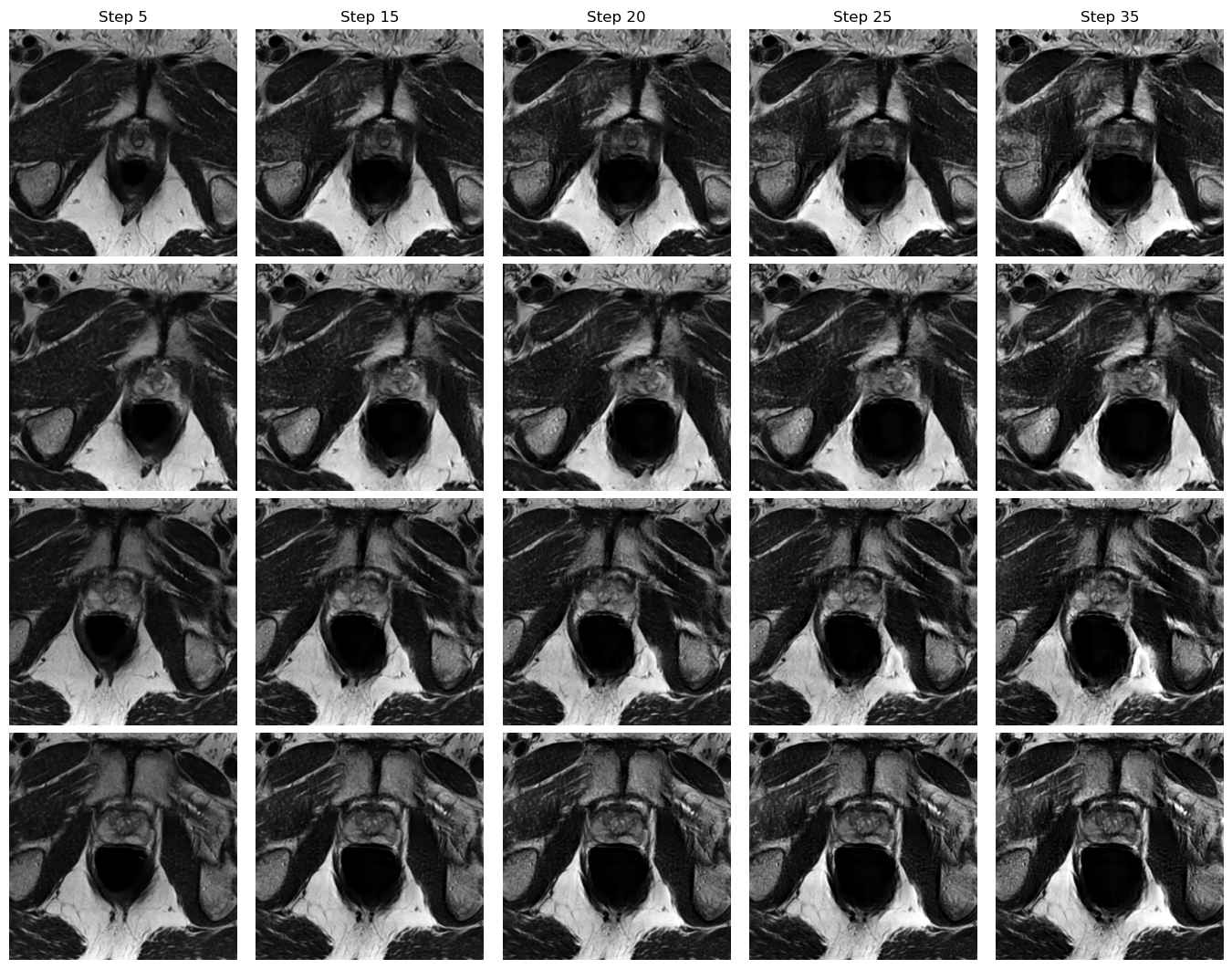}
        \caption{Ex. 2: Domain D $\rightarrow$ Domain B}
        \label{fig:figureUCL2BMC_2}
    \end{subfigure}

    \begin{subfigure}[b]{0.45\textwidth}
        \centering
        \includegraphics[width=\textwidth]{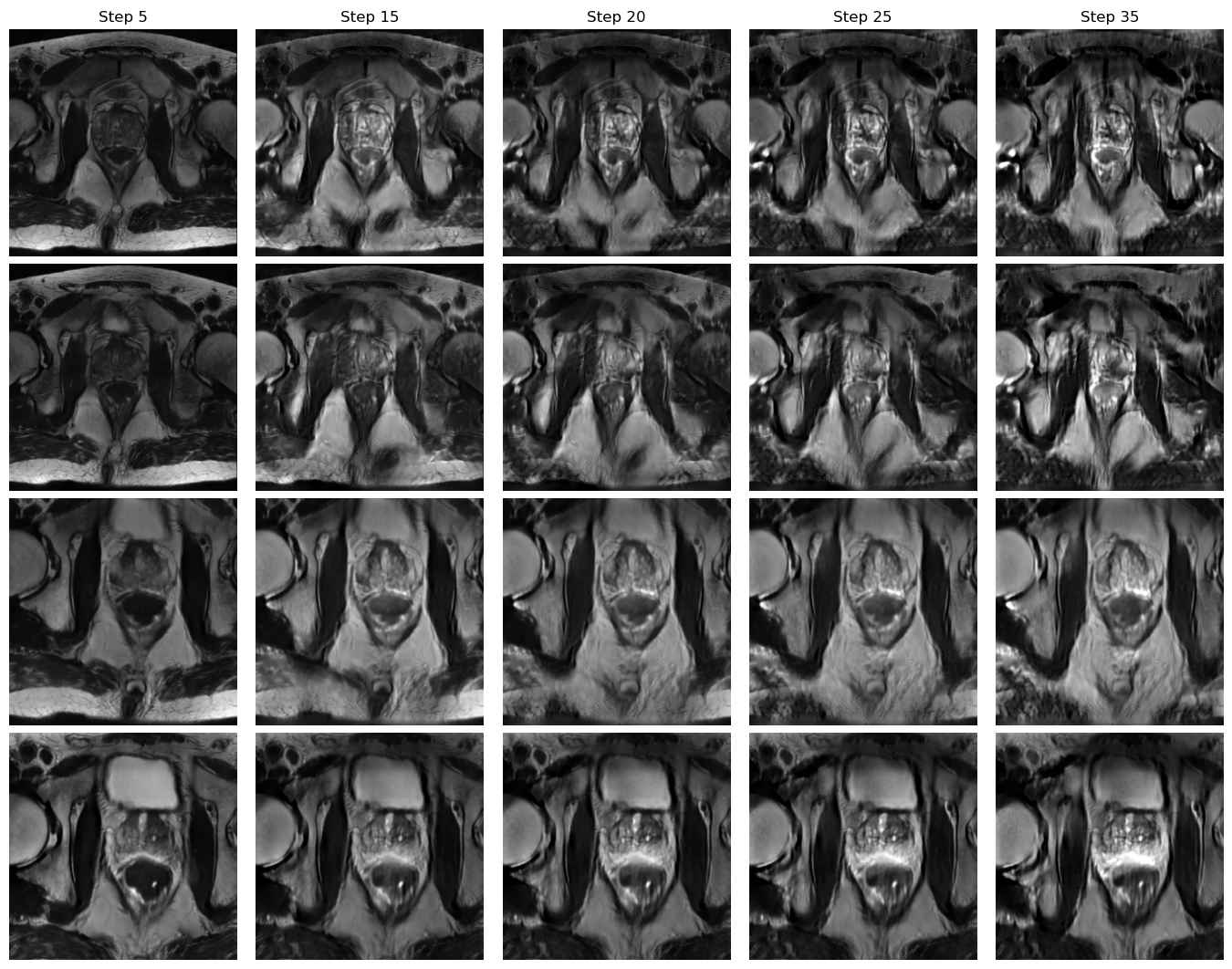}
        \caption{Ex. 1: Domain D $\rightarrow$ Domain C}
        \label{fig:figureUCL2I2CVB_1}
    \end{subfigure}
    \hspace{0.02\textwidth}
    \begin{subfigure}[b]{0.45\textwidth}
        \centering
        \includegraphics[width=\textwidth]{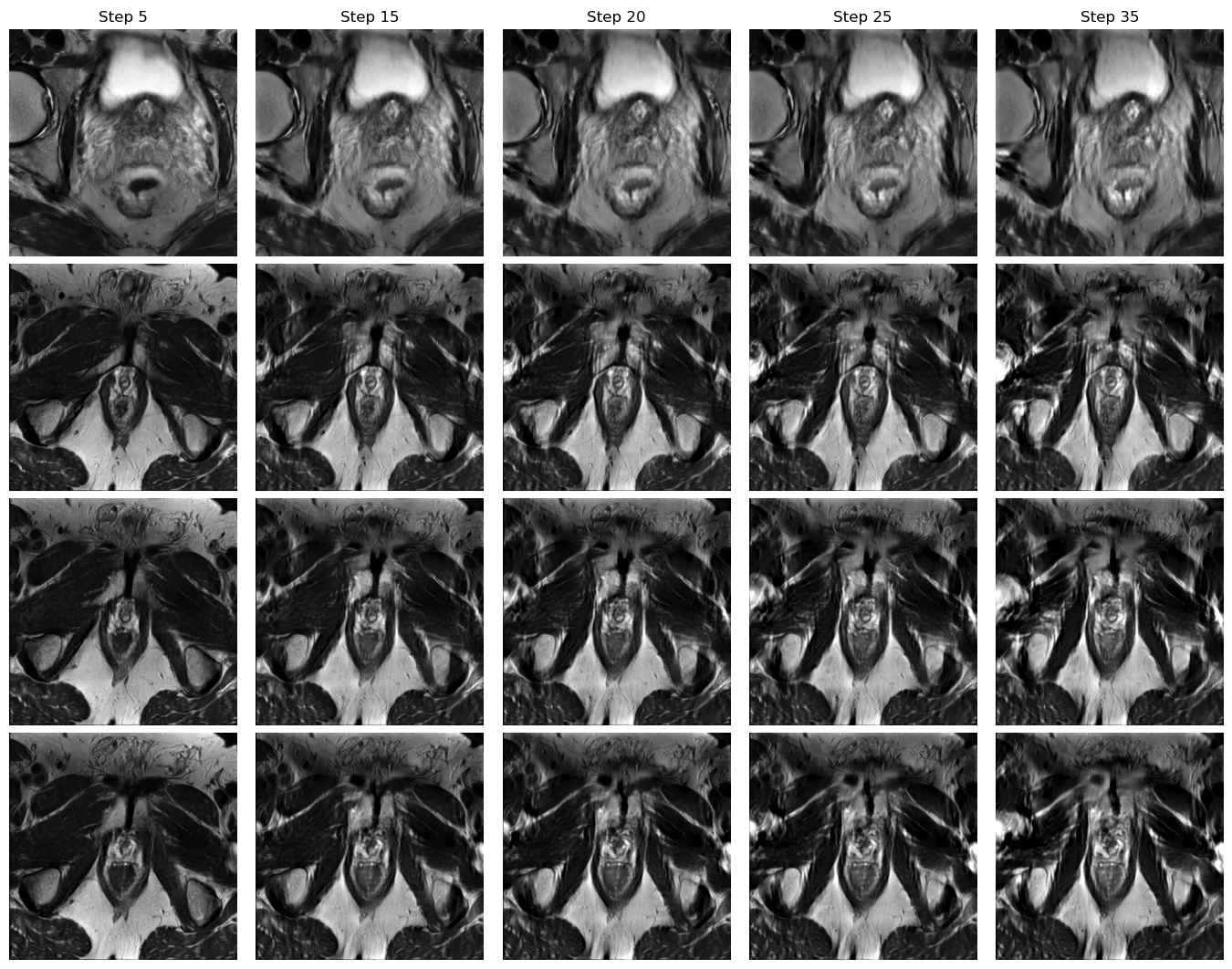}
        \caption{Ex. 2: Domain D $\rightarrow$ Domain C}
        \label{fig:figureUCL2I2CVB_2}
    \end{subfigure}
    
    \caption{Examples of translation from Domain D to Domains A, B and C using the proposed method on the prostate dataset.}
    \label{fig:prostate_source_UCL}
\end{figure*}

\begin{figure*}[htbp]
    \ContinuedFloat
    \centering

    \begin{subfigure}[b]{0.45\textwidth}
        \centering
        \includegraphics[width=\textwidth]{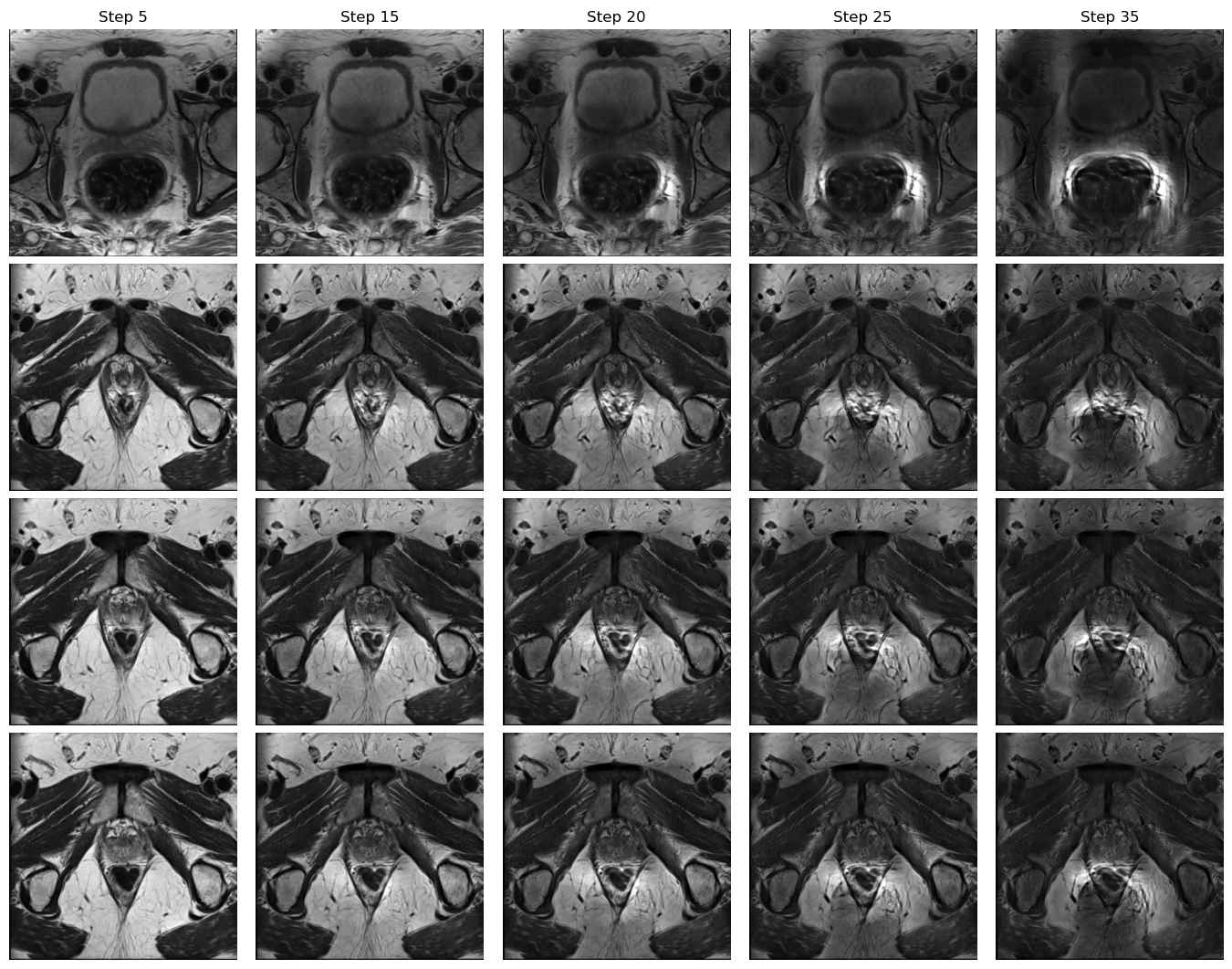}
        \caption{Ex. 1: Domain D $\rightarrow$ Domain E}
        \label{fig:figureUCL2BIDMC_1}
    \end{subfigure}
    \hspace{0.02\textwidth}
    \begin{subfigure}[b]{0.45\textwidth}
        \centering
        \includegraphics[width=\textwidth]{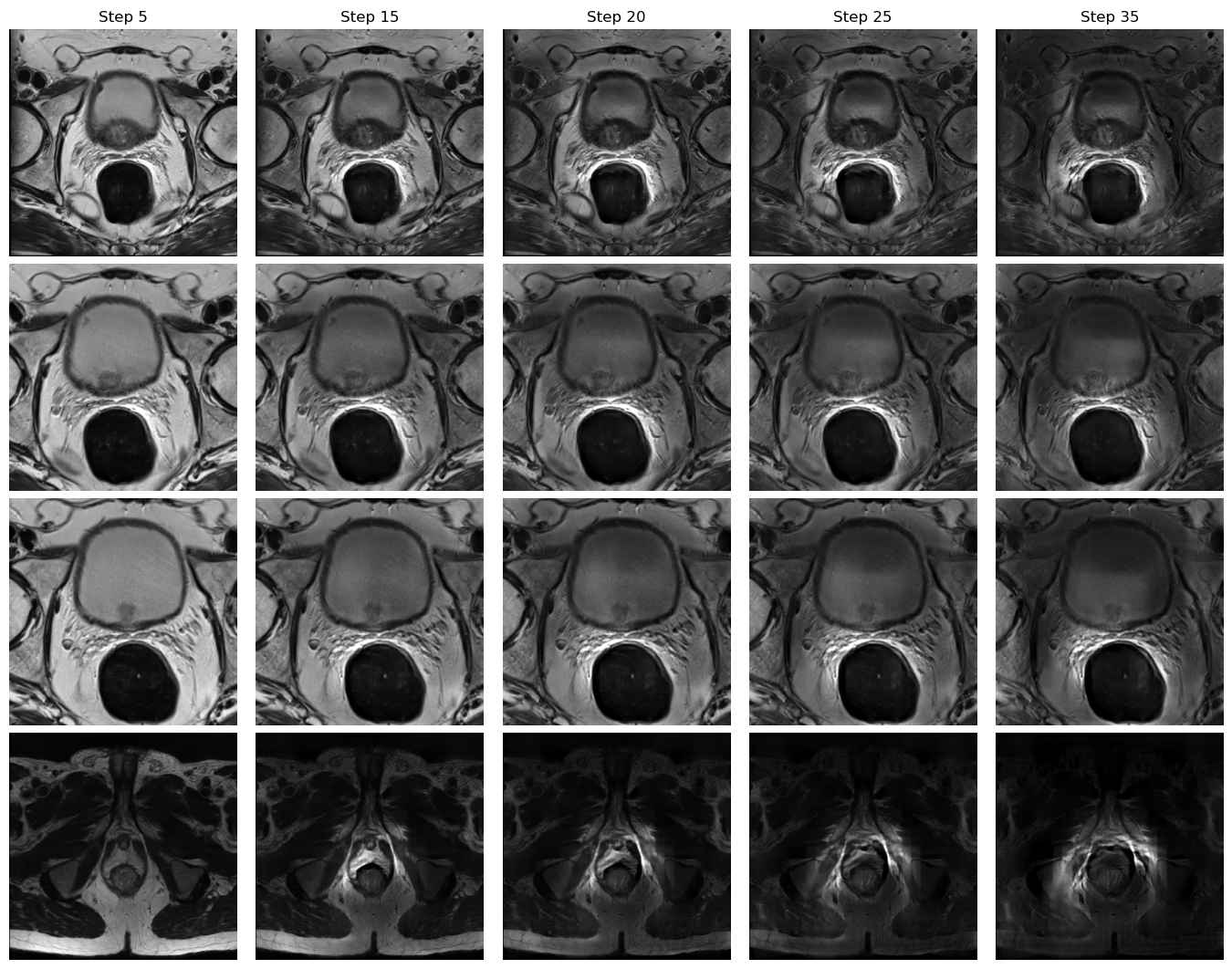}
        \caption{Ex. 2: Domain D $\rightarrow$ Domain E}
        \label{fig:figureUCL2BIDMC_2}
    \end{subfigure}

    \begin{subfigure}[b]{0.45\textwidth}
        \centering
        \includegraphics[width=\textwidth]{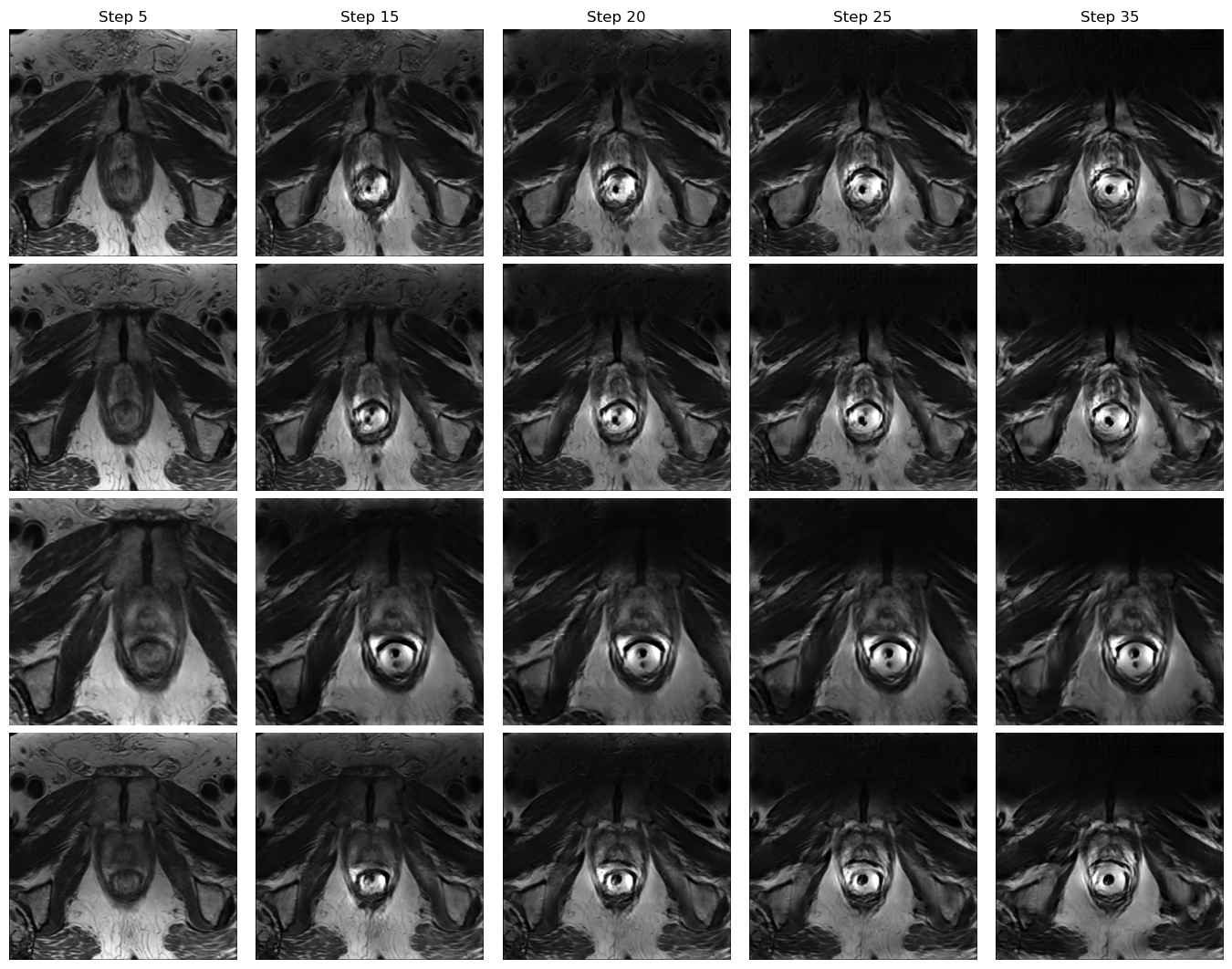}
        \caption{Ex. 1: Domain D $\rightarrow$ Domain F}
        \label{fig:figureUCL2HK_1}
    \end{subfigure}
    \hspace{0.02\textwidth}
    \begin{subfigure}[b]{0.45\textwidth}
        \centering
        \includegraphics[width=\textwidth]{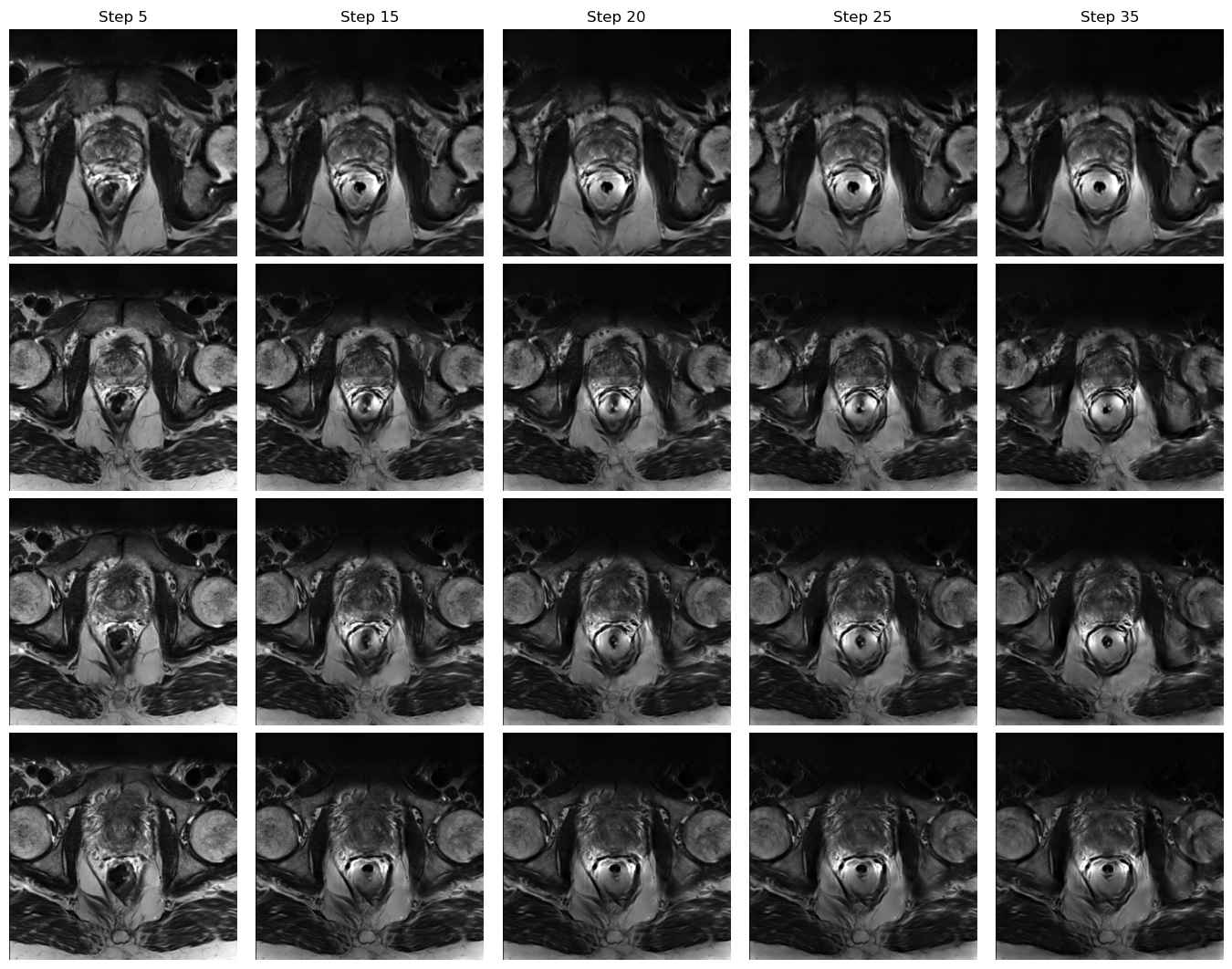}
        \caption{Ex. 2: Domain D $\rightarrow$ Domain F}
        \label{fig:figureUCL2HK_2}
    \end{subfigure}
    
    \caption{Examples of translation from Domain D to Domains E and F using the proposed method on the prostate dataset.}
\end{figure*}

\begin{figure*}[htbp]
    \centering
    \begin{subfigure}[b]{0.45\textwidth}
        \centering
        \includegraphics[width=\textwidth]{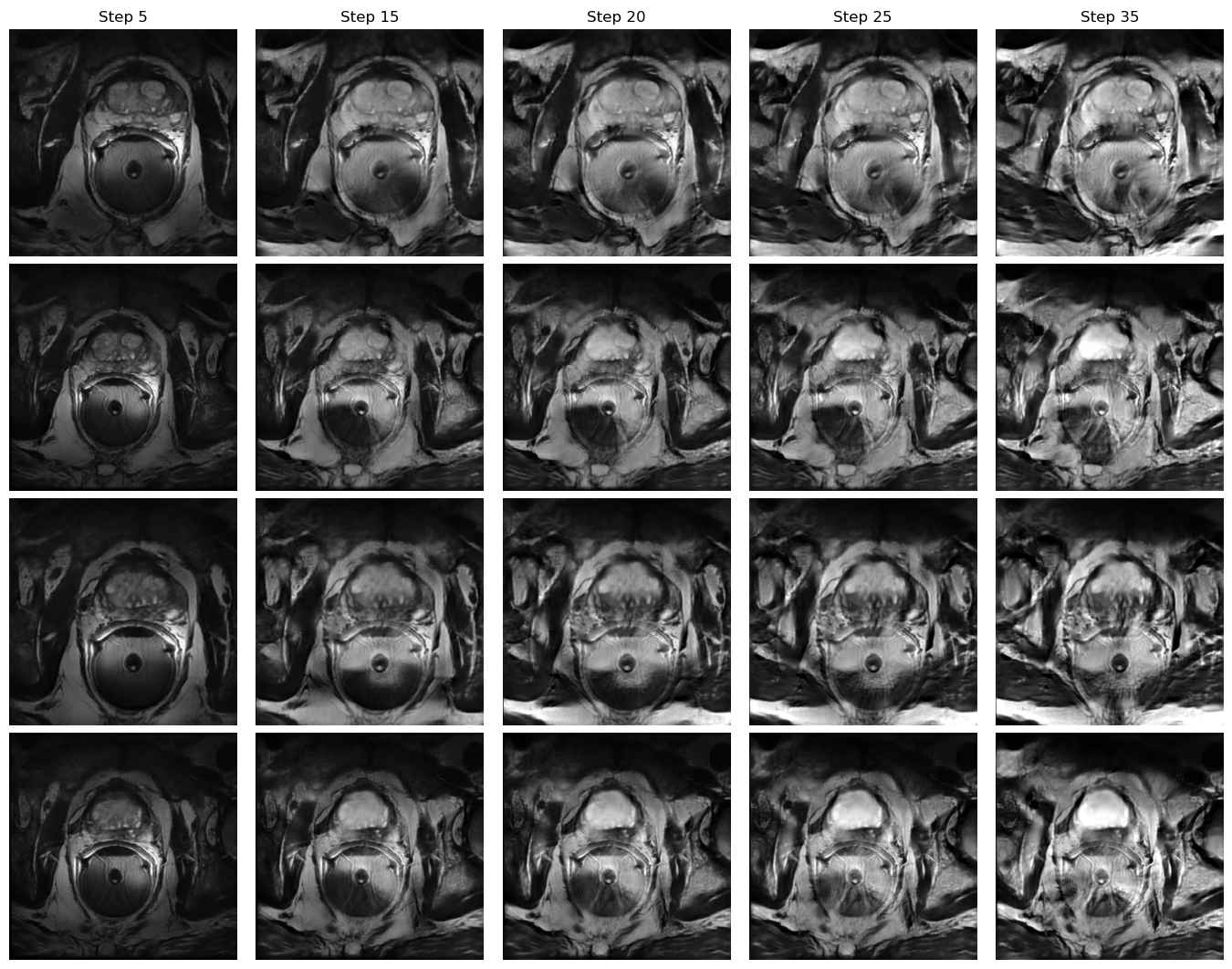}
        \caption{Ex. 1: Domain E $\rightarrow$ Domain A}
        \label{fig:figureBIDMC2RUNMC_1}
    \end{subfigure}
    \hspace{0.02\textwidth}
    \begin{subfigure}[b]{0.45\textwidth}
        \centering
        \includegraphics[width=\textwidth]{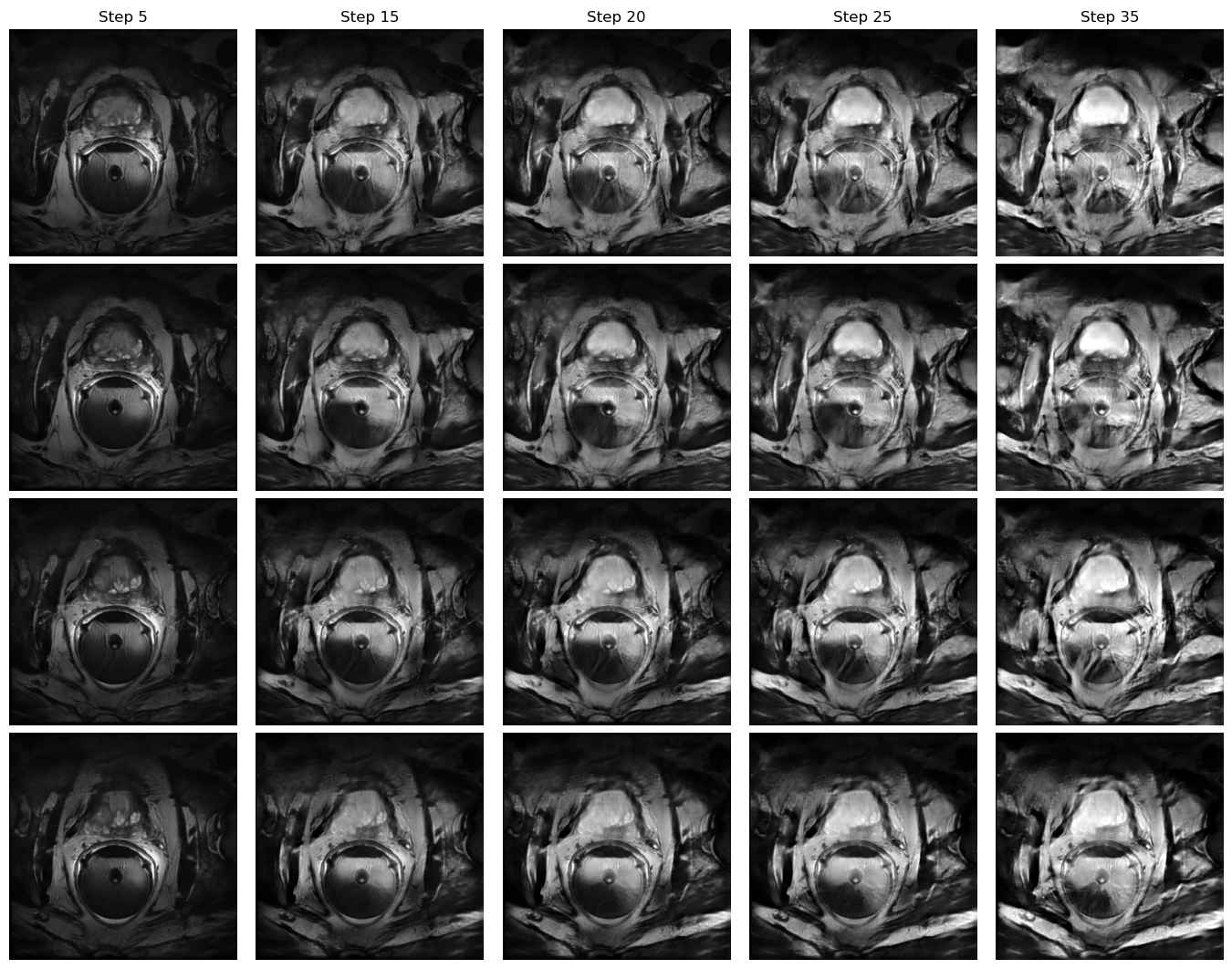}
        \caption{Ex. 2: Domain E $\rightarrow$ Domain A}
        \label{fig:figureBIDMC2RUNMC_2}
    \end{subfigure}
    
    \begin{subfigure}[b]{0.45\textwidth}
        \centering
        \includegraphics[width=\textwidth]{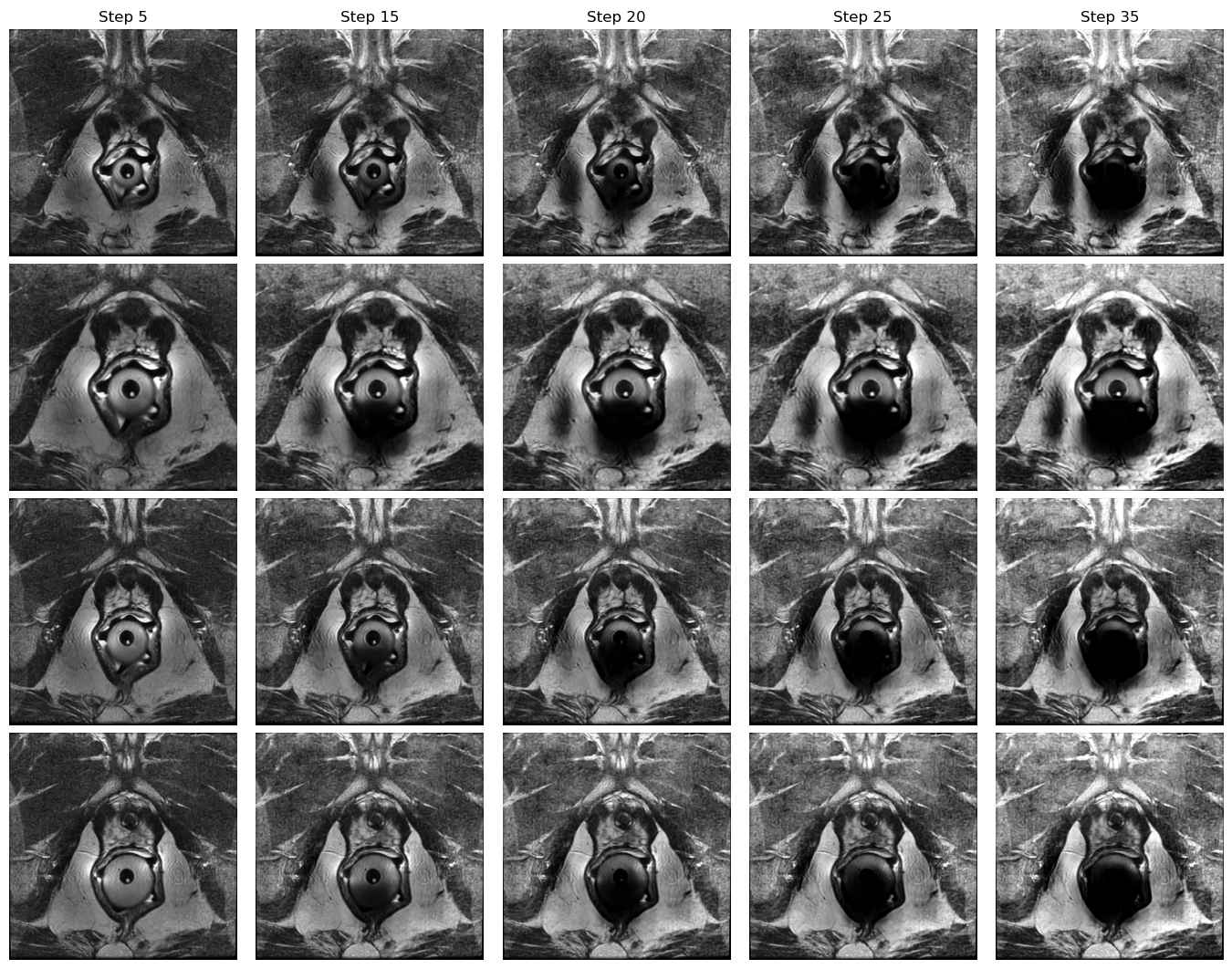}
        \caption{Ex. 1: Domain E $\rightarrow$ Domain B}
        \label{fig:figureBIDMC2BMC_1}
    \end{subfigure}
    \hspace{0.02\textwidth}
    \begin{subfigure}[b]{0.45\textwidth}
        \centering
        \includegraphics[width=\textwidth]{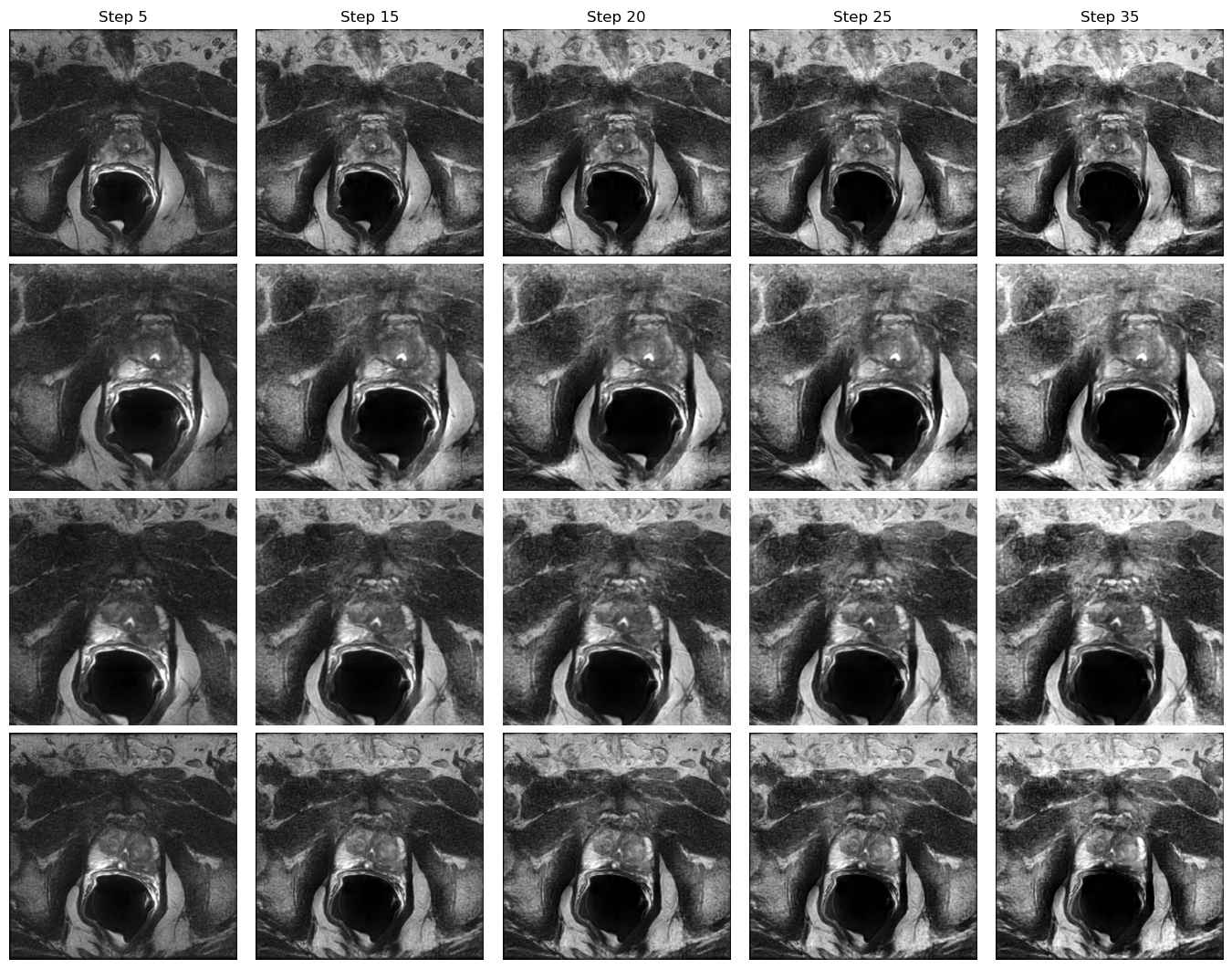}
        \caption{Ex. 2: Domain E $\rightarrow$ Domain B}
        \label{fig:figureBIDMC2BMC_2}
    \end{subfigure}
    
    \begin{subfigure}[b]{0.45\textwidth}
        \centering
        \includegraphics[width=\textwidth]{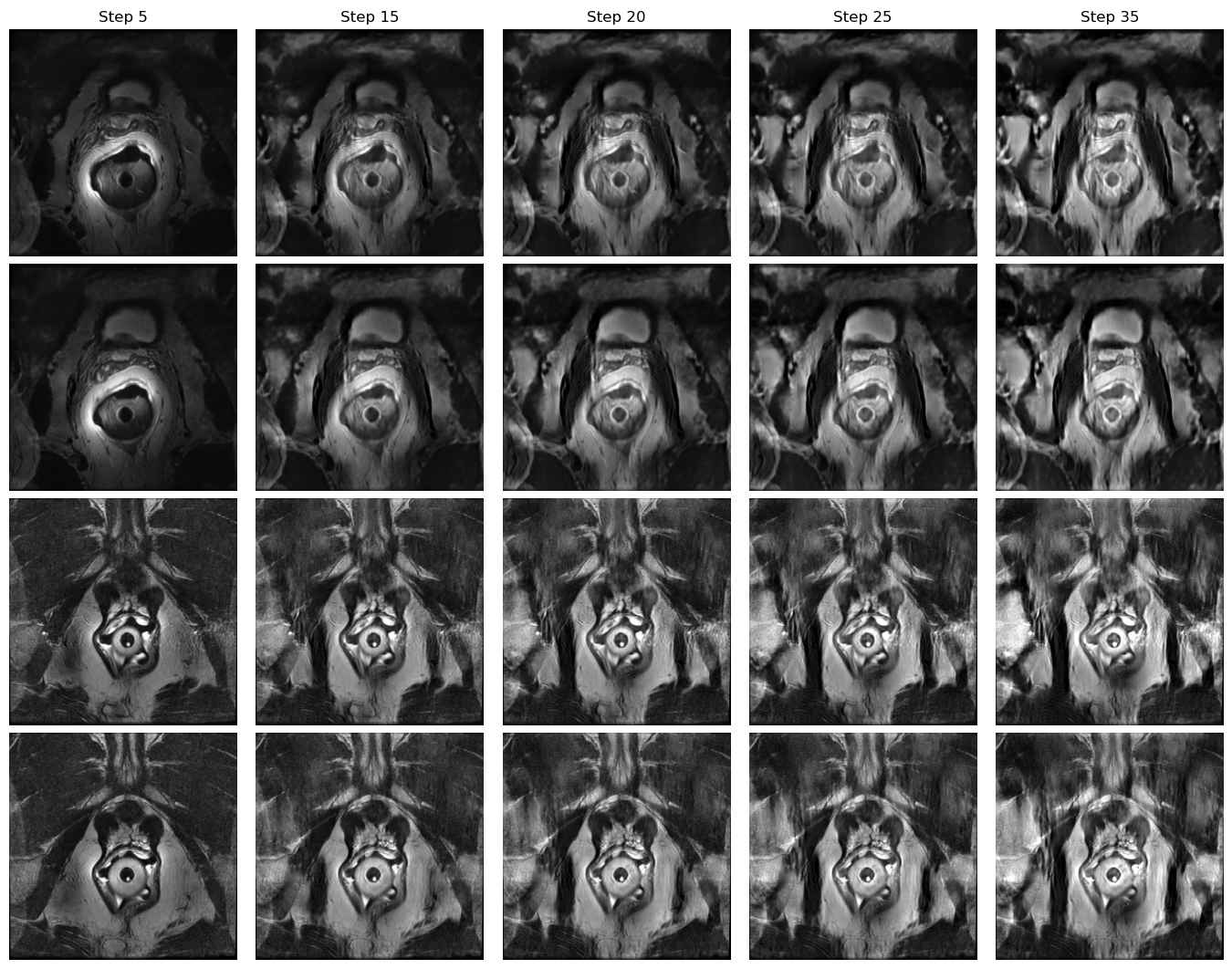}
        \caption{Ex. 1: Domain E $\rightarrow$ Domain C}
        \label{fig:figureBIDMC2I2CVB_1}
    \end{subfigure}
    \hspace{0.02\textwidth}
    \begin{subfigure}[b]{0.45\textwidth}
        \centering
        \includegraphics[width=\textwidth]{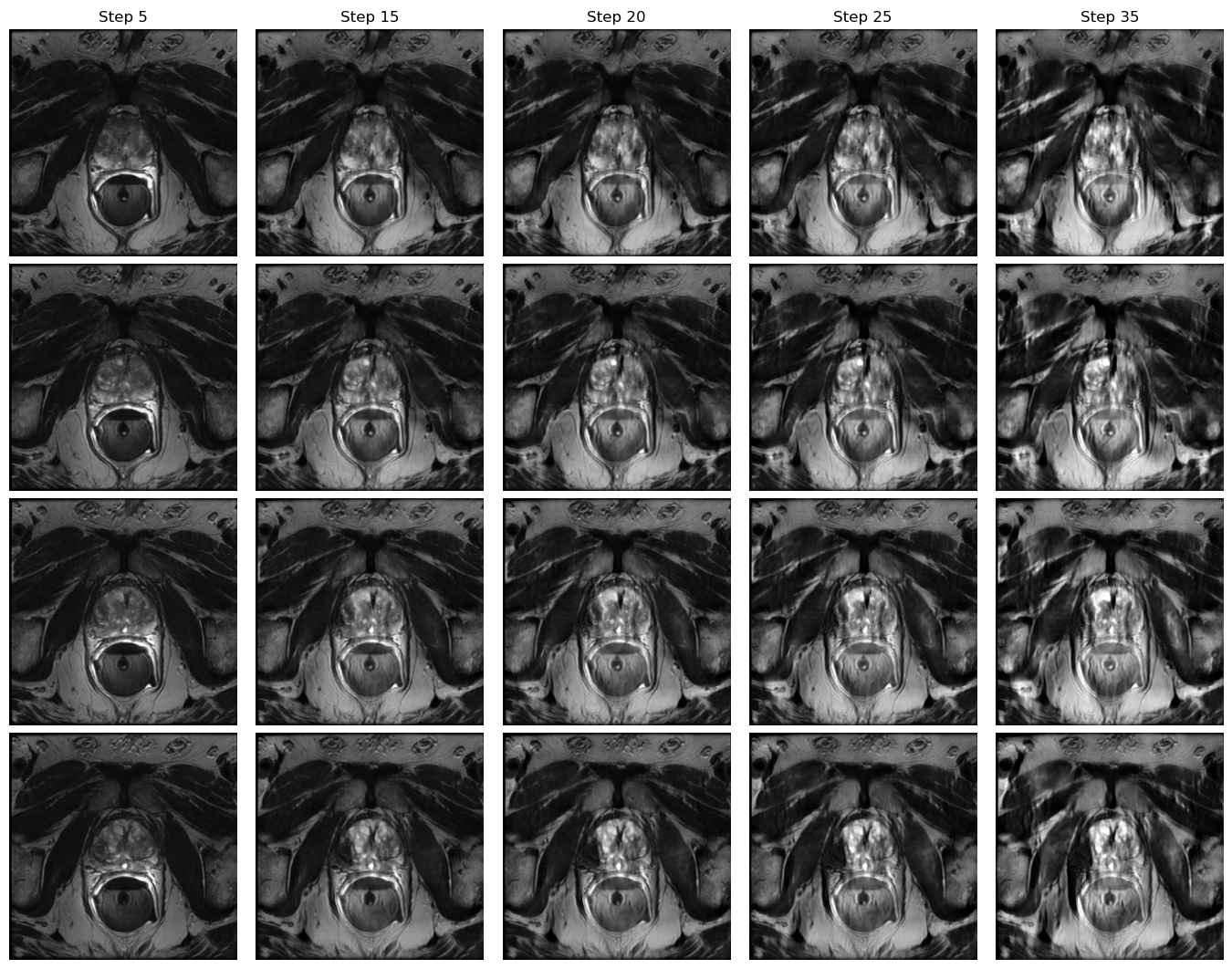}
        \caption{Ex. 2: Domain E $\rightarrow$ Domain C}
        \label{fig:figureBIDMC2I2CVB_2}
    \end{subfigure}
    
    \caption{Examples of translation from Domain E to Domains A, B and C using the proposed method on the prostate dataset.}
    \label{fig:prostate_source_BIDMC}
\end{figure*}

\begin{figure*}[htbp]
    \ContinuedFloat
    \centering

    \begin{subfigure}[b]{0.45\textwidth}
        \centering
        \includegraphics[width=\textwidth]{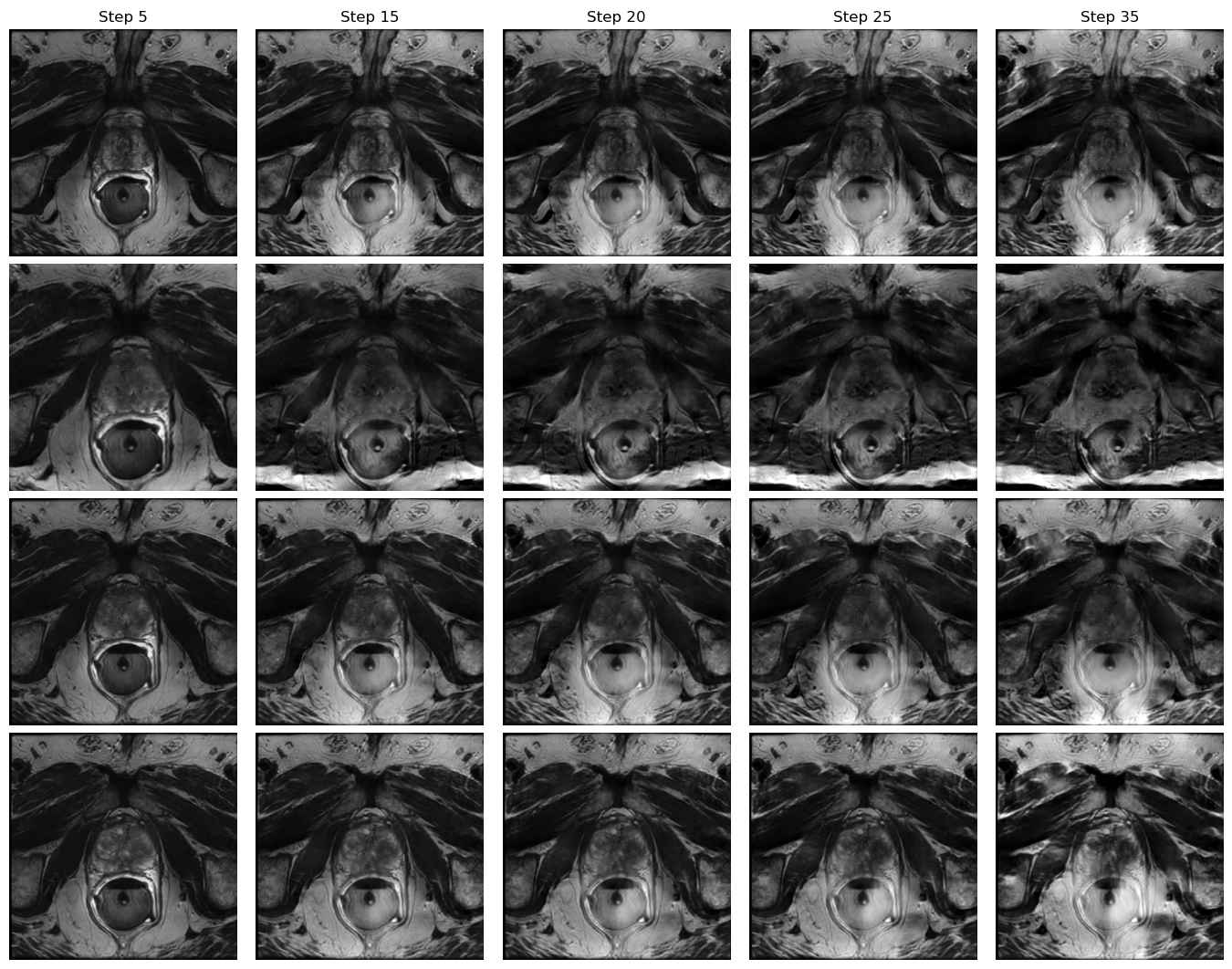}
        \caption{Ex. 1: Domain E $\rightarrow$ Domain D}
        \label{fig:figureBIDMC2UCL_1}
    \end{subfigure}
    \hspace{0.02\textwidth}
    \begin{subfigure}[b]{0.45\textwidth}
        \centering
        \includegraphics[width=\textwidth]{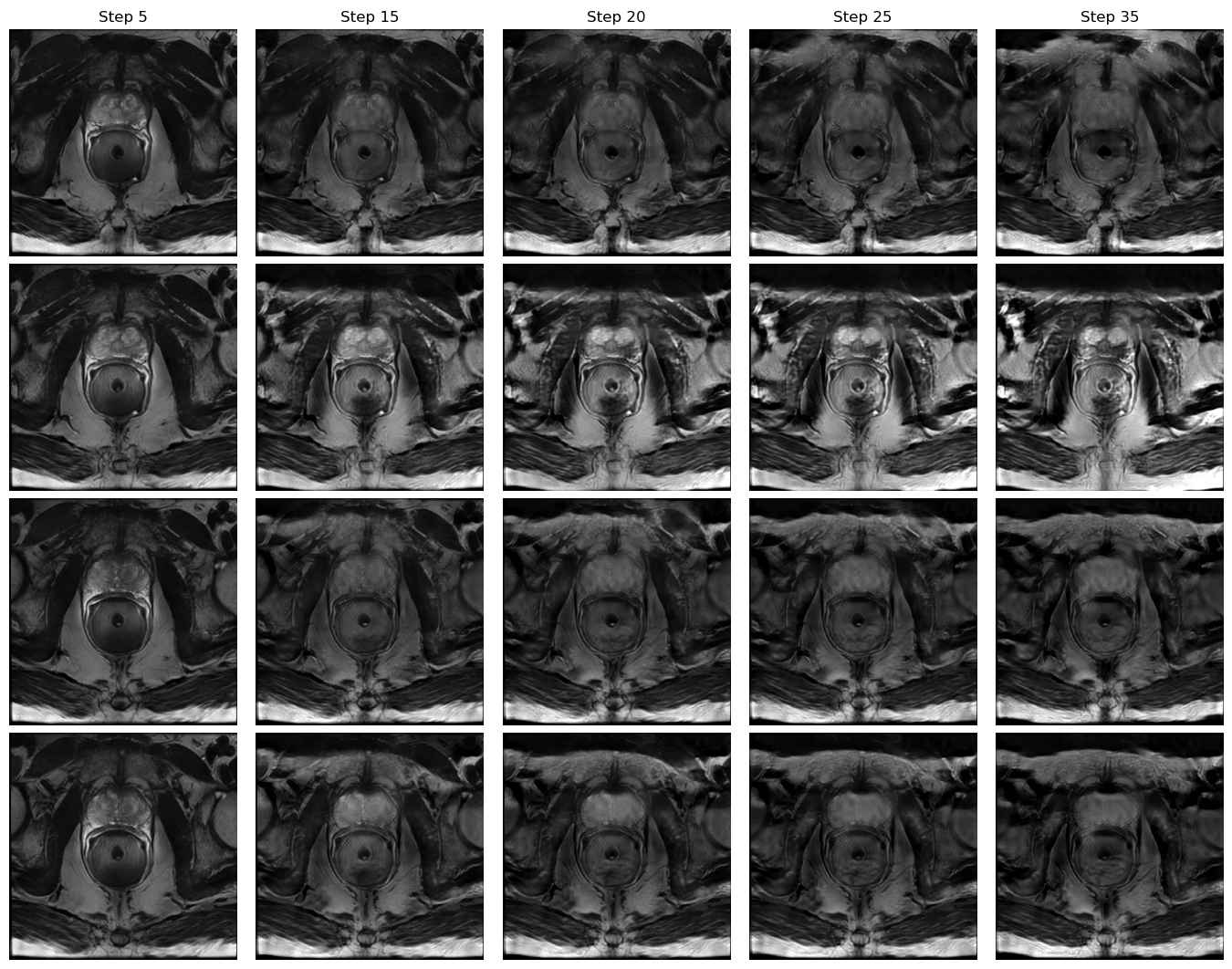}
        \caption{Ex. 2: Domain E $\rightarrow$ Domain D}
        \label{fig:figureBIDMC2UCL_2}
    \end{subfigure}

    \begin{subfigure}[b]{0.45\textwidth}
        \centering
        \includegraphics[width=\textwidth]{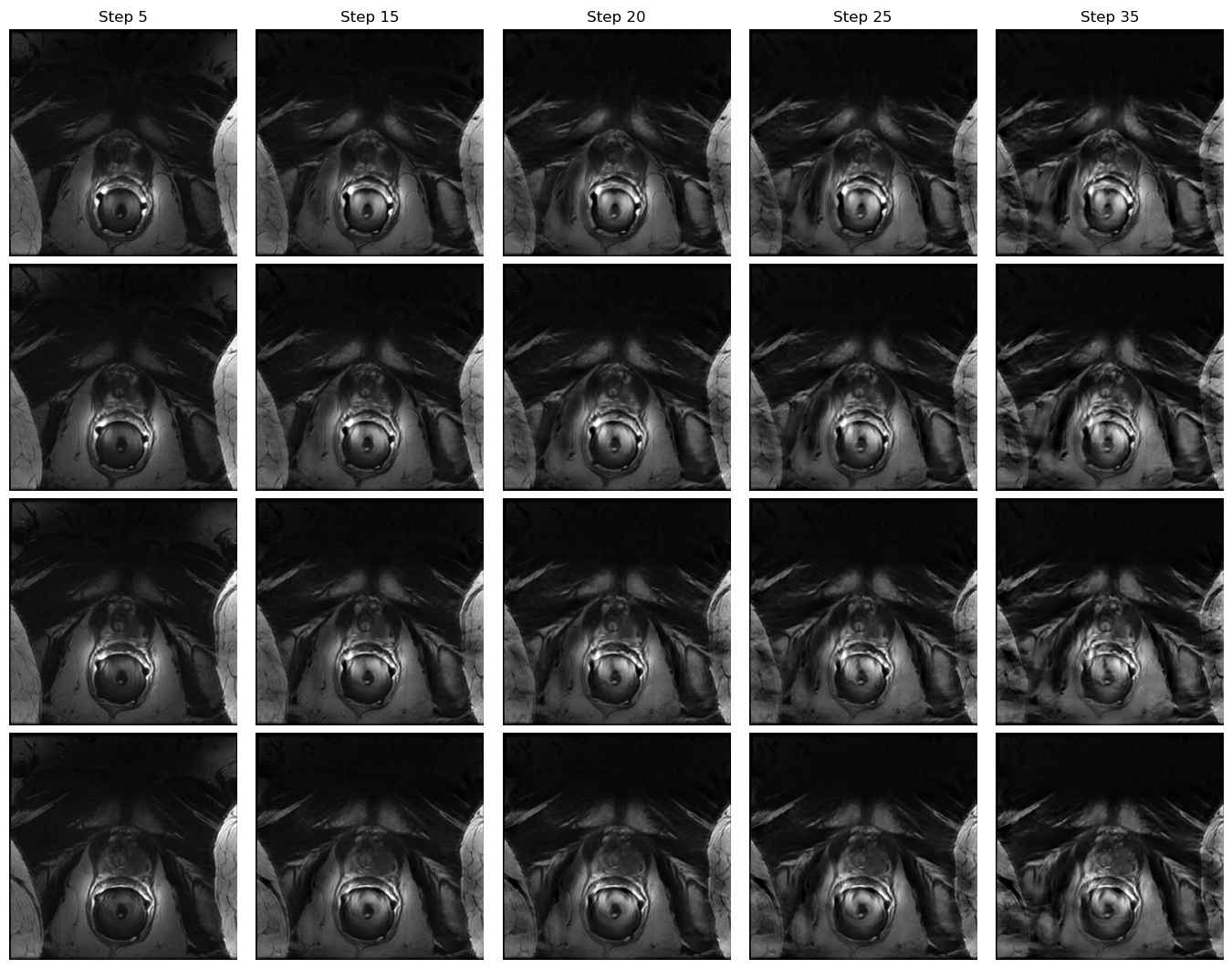}
        \caption{Ex. 1: Domain E $\rightarrow$ Domain F}
        \label{fig:figureBIDMC2HK_1}
    \end{subfigure}
    \hspace{0.02\textwidth}
    \begin{subfigure}[b]{0.45\textwidth}
        \centering
        \includegraphics[width=\textwidth]{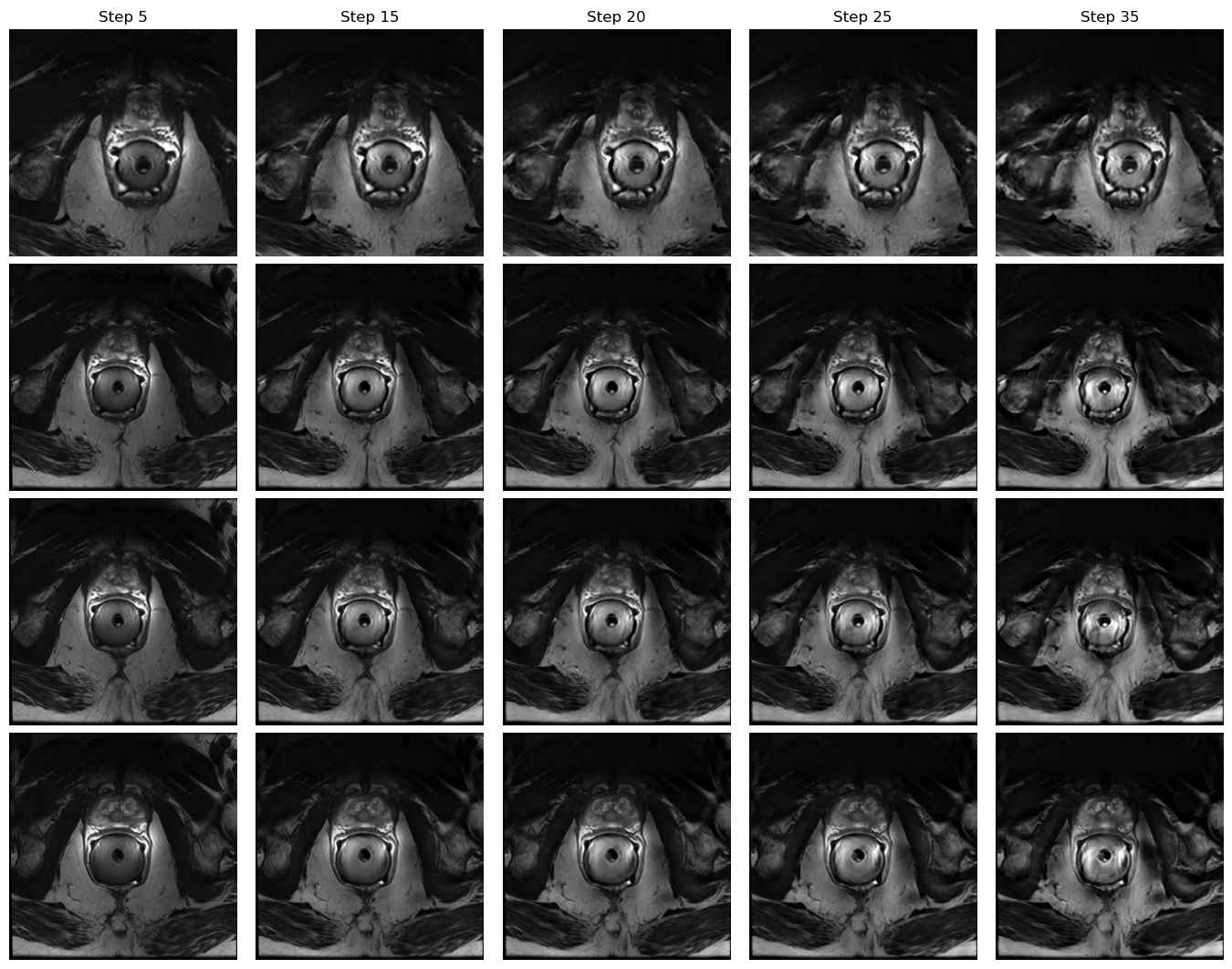}
        \caption{Ex. 2: Domain E $\rightarrow$ Domain F}
        \label{fig:figureBIDMC2HK_2}
    \end{subfigure}
    
    \caption{Examples of translation from Domain E to Domains D and F using the proposed method on the prostate dataset.}
\end{figure*}

\begin{figure*}[htbp]
    \centering

    \begin{subfigure}[b]{0.45\textwidth}
        \centering
        \includegraphics[width=\textwidth]{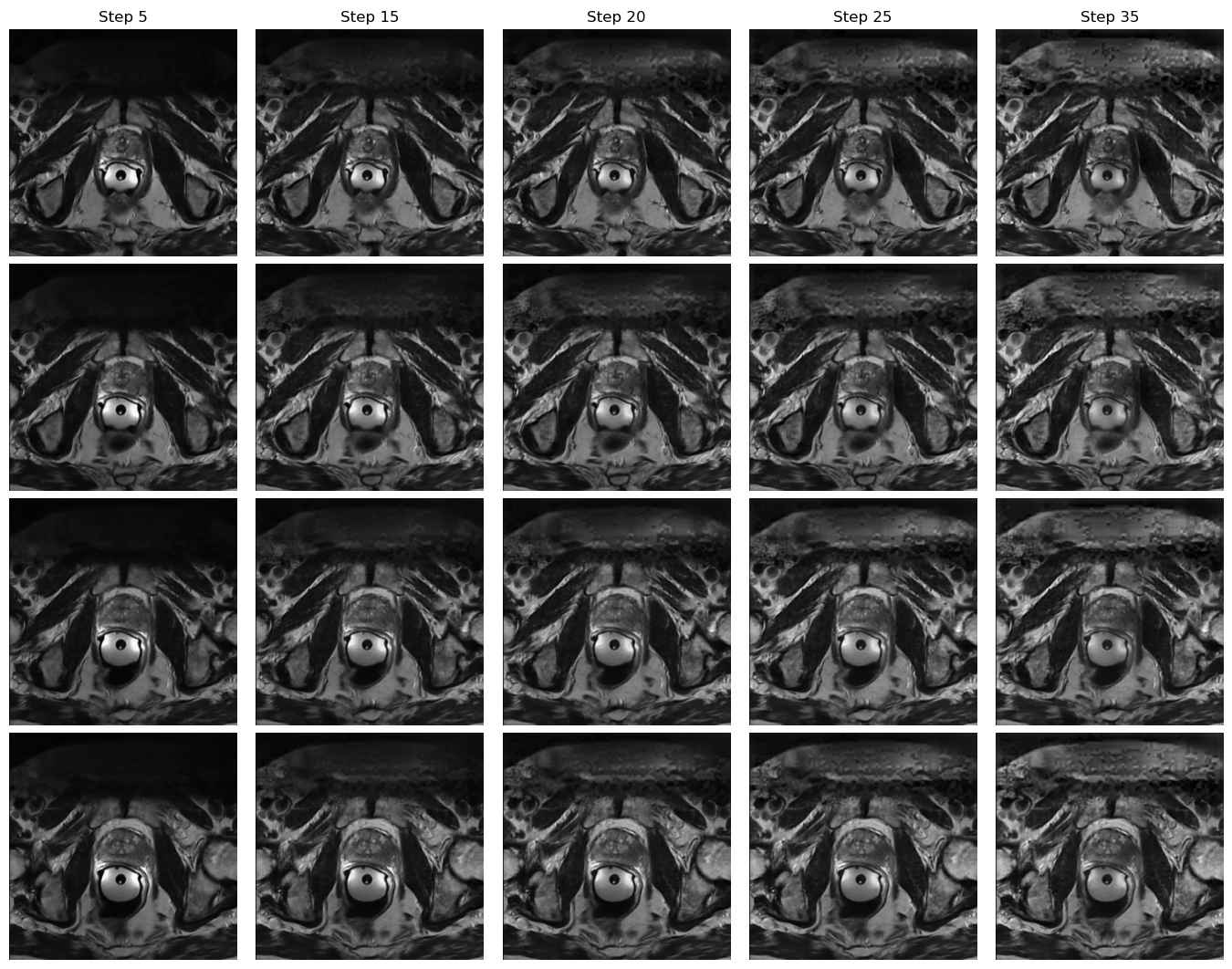}
        \caption{Ex. 1: Domain F $\rightarrow$ Domain A}
        \label{fig:figureHK2RUNMC_1}
    \end{subfigure}
    \hspace{0.02\textwidth}
    \begin{subfigure}[b]{0.45\textwidth}
        \centering
        \includegraphics[width=\textwidth]{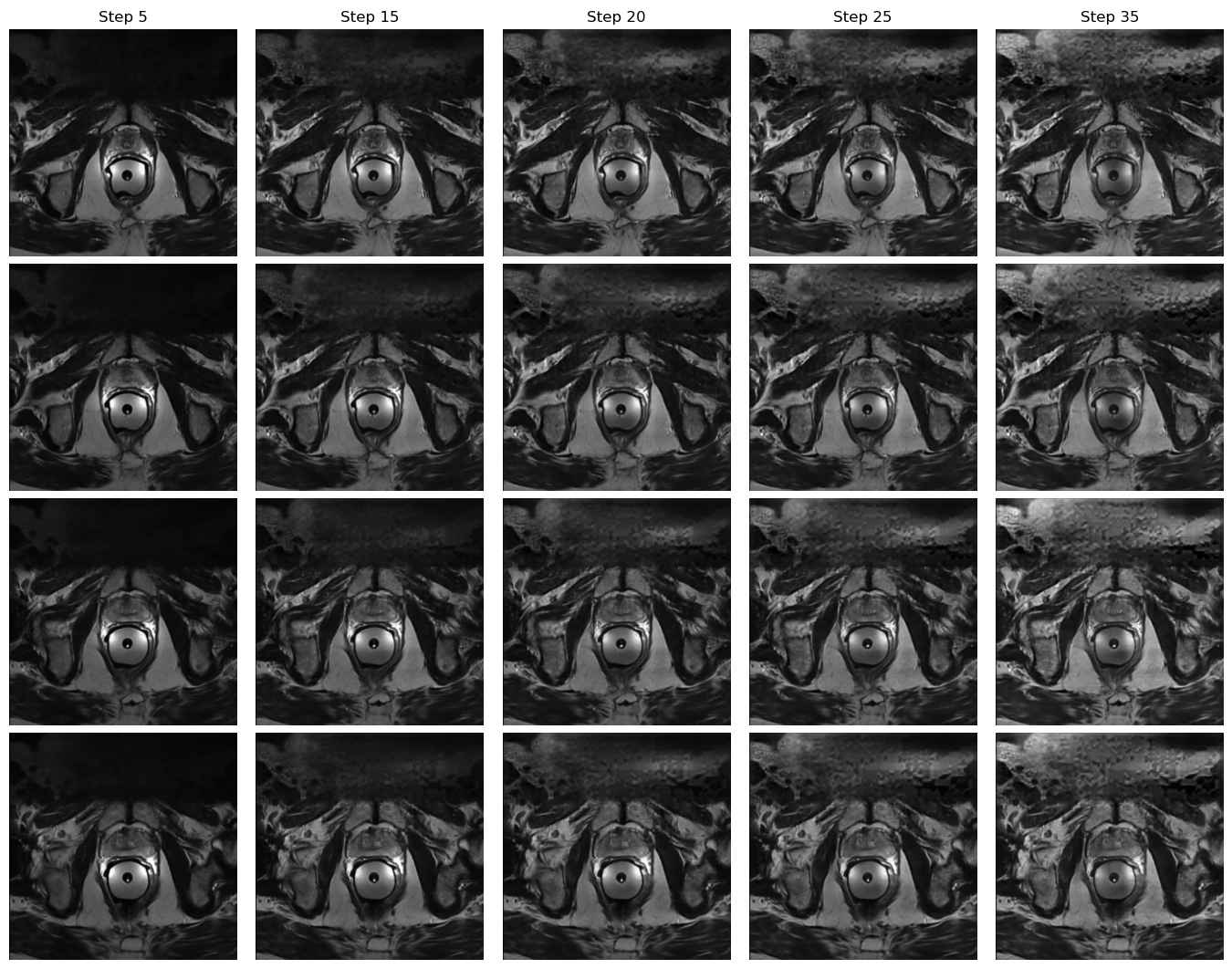}
        \caption{Ex. 2: Domain F $\rightarrow$ Domain A}
        \label{fig:figureHK2RUNMC_2}
    \end{subfigure}
    
    \begin{subfigure}[b]{0.45\textwidth}
        \centering
        \includegraphics[width=\textwidth]{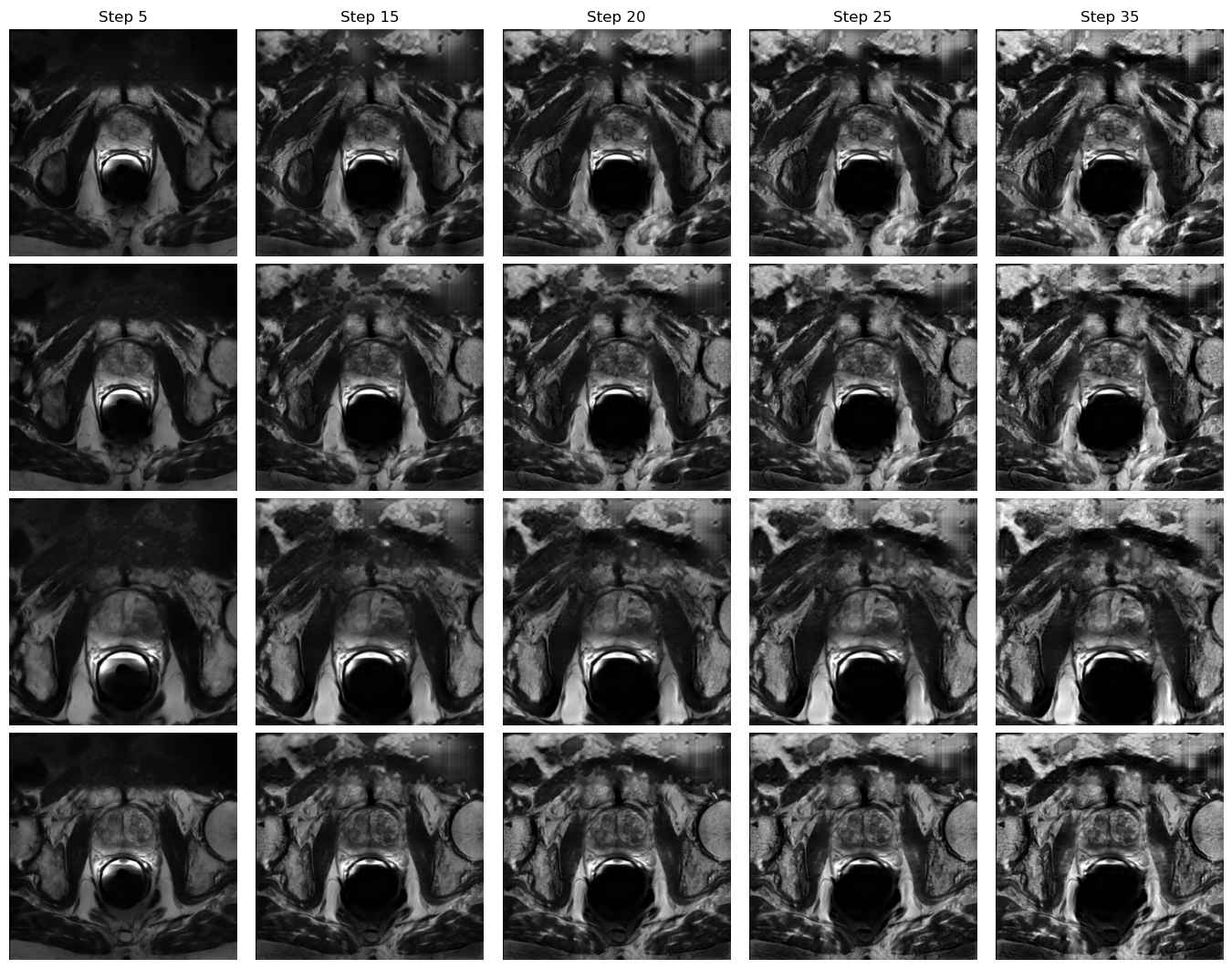}
        \caption{Ex. 1: Domain F $\rightarrow$ Domain B}
        \label{fig:figureHK2BMC_1}
    \end{subfigure}
    \hspace{0.02\textwidth}
    \begin{subfigure}[b]{0.45\textwidth}
        \centering
        \includegraphics[width=\textwidth]{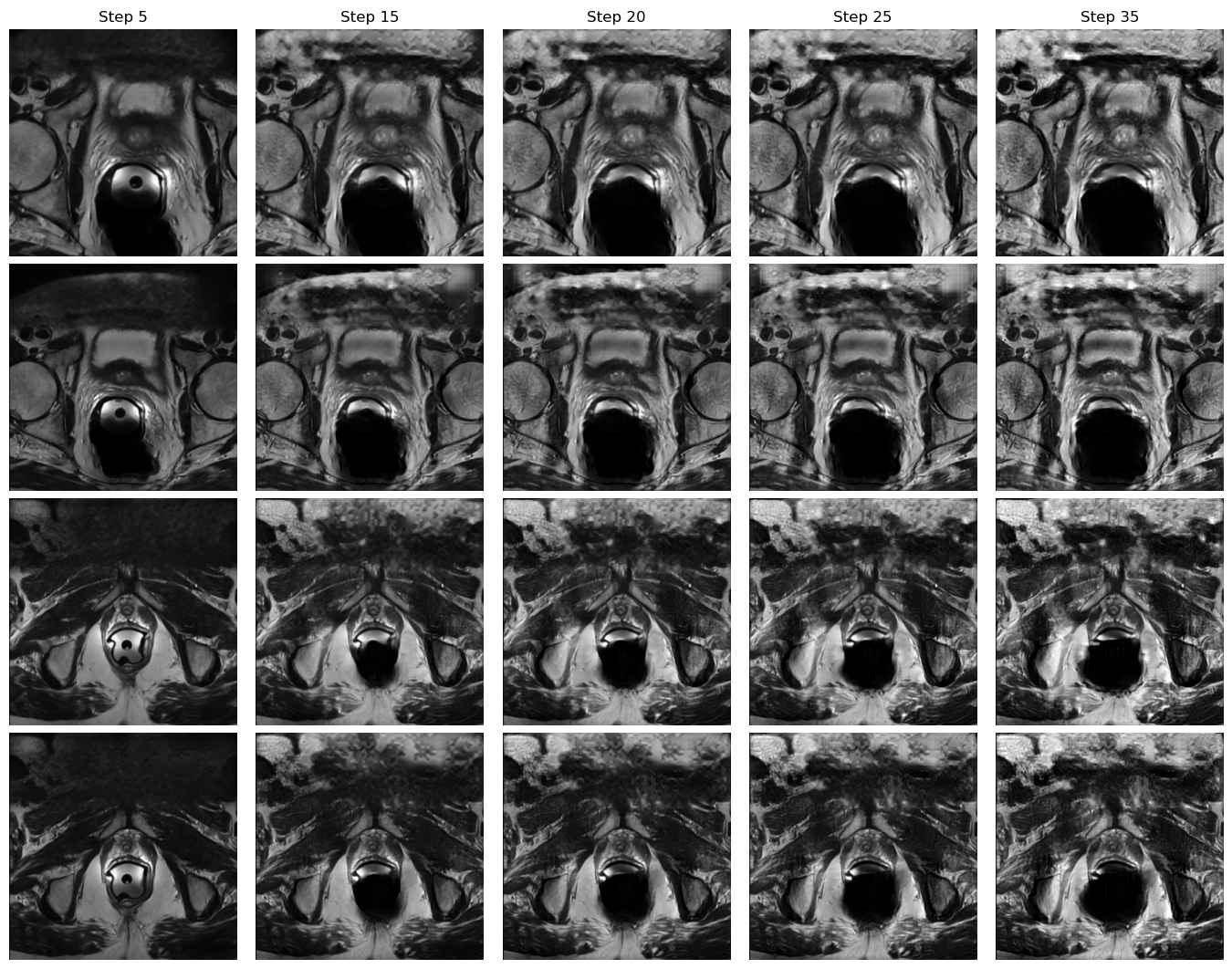}
        \caption{Ex. 2: Domain F $\rightarrow$ Domain B}
        \label{fig:figureHK2BMC_2}
    \end{subfigure}
    
    \begin{subfigure}[b]{0.45\textwidth}
        \centering
        \includegraphics[width=\textwidth]{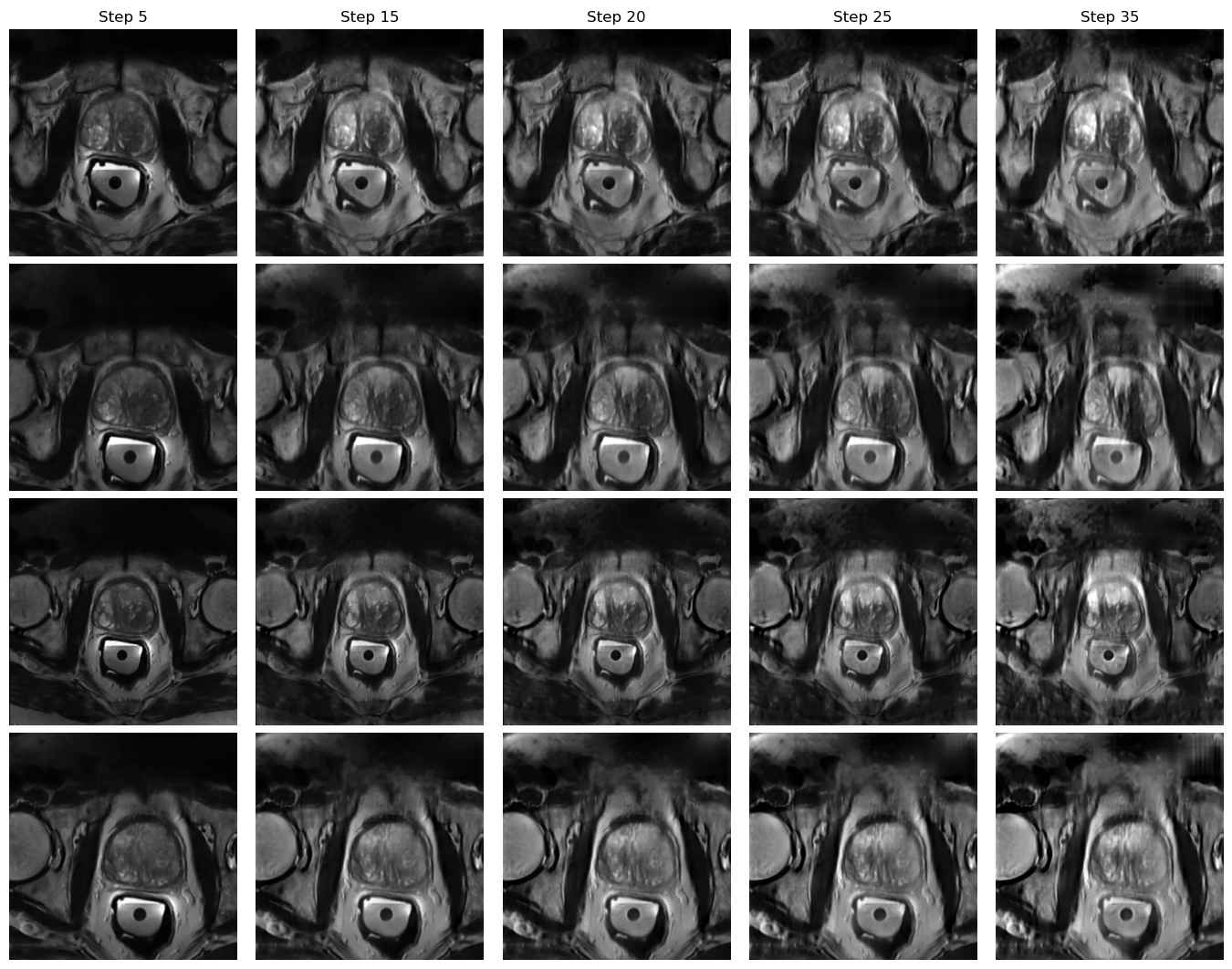}
        \caption{Ex. 1: Domain F $\rightarrow$ Domain C}
        \label{fig:figureHK2I2CVB_1}
    \end{subfigure}
    \hspace{0.02\textwidth}
    \begin{subfigure}[b]{0.45\textwidth}
        \centering
        \includegraphics[width=\textwidth]{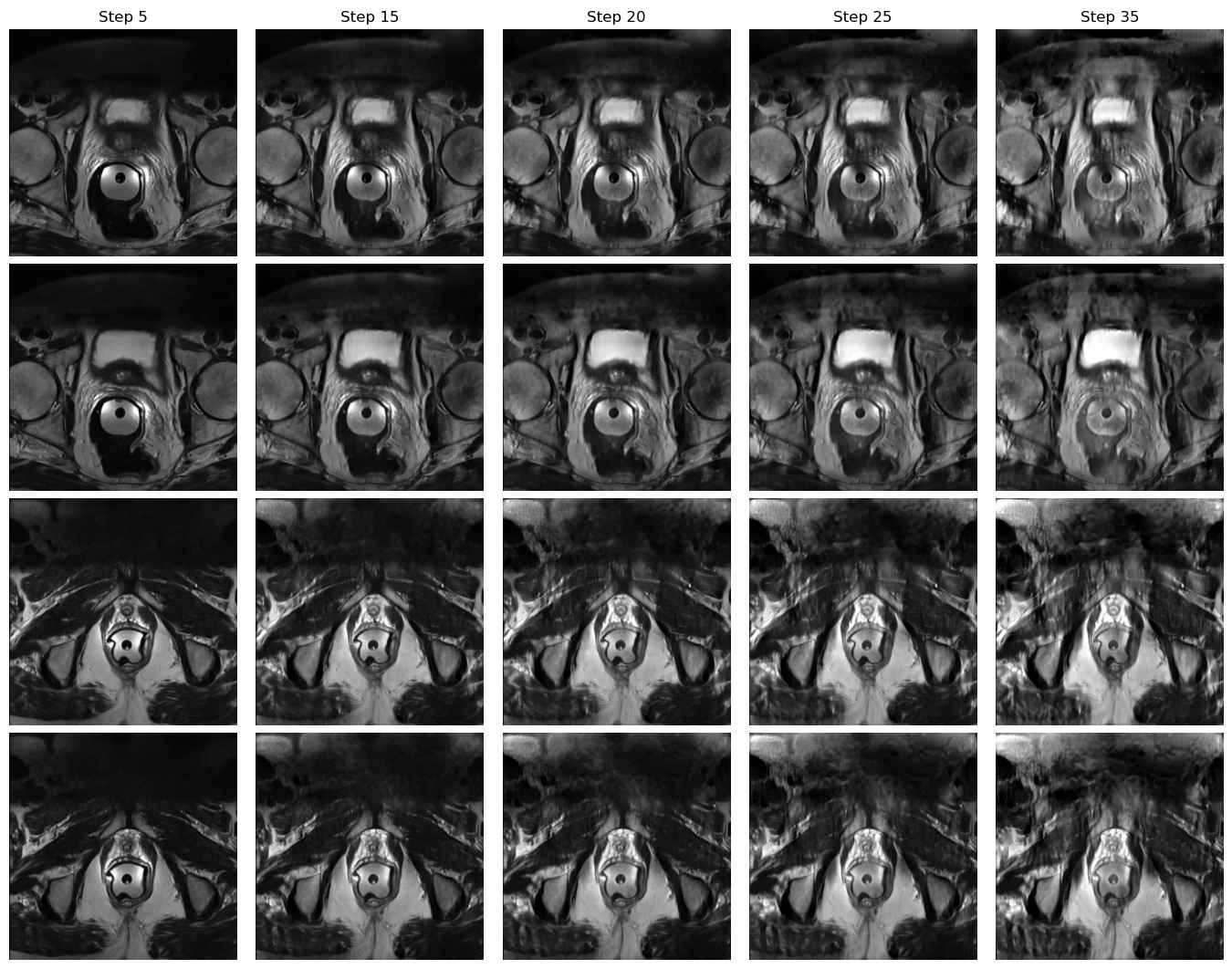}
        \caption{Ex. 2: Domain F $\rightarrow$ Domain C}
        \label{fig:figureHK2I2CVB_2}
    \end{subfigure}
    
    \caption{Examples of translation from Domain F to Domains A, B and C using the proposed method on the prostate dataset.}
    \label{fig:prostate_source_HK}
\end{figure*}

\begin{figure*}[htbp]
    \ContinuedFloat
    \centering

    \begin{subfigure}[b]{0.45\textwidth}
        \centering
        \includegraphics[width=\textwidth]{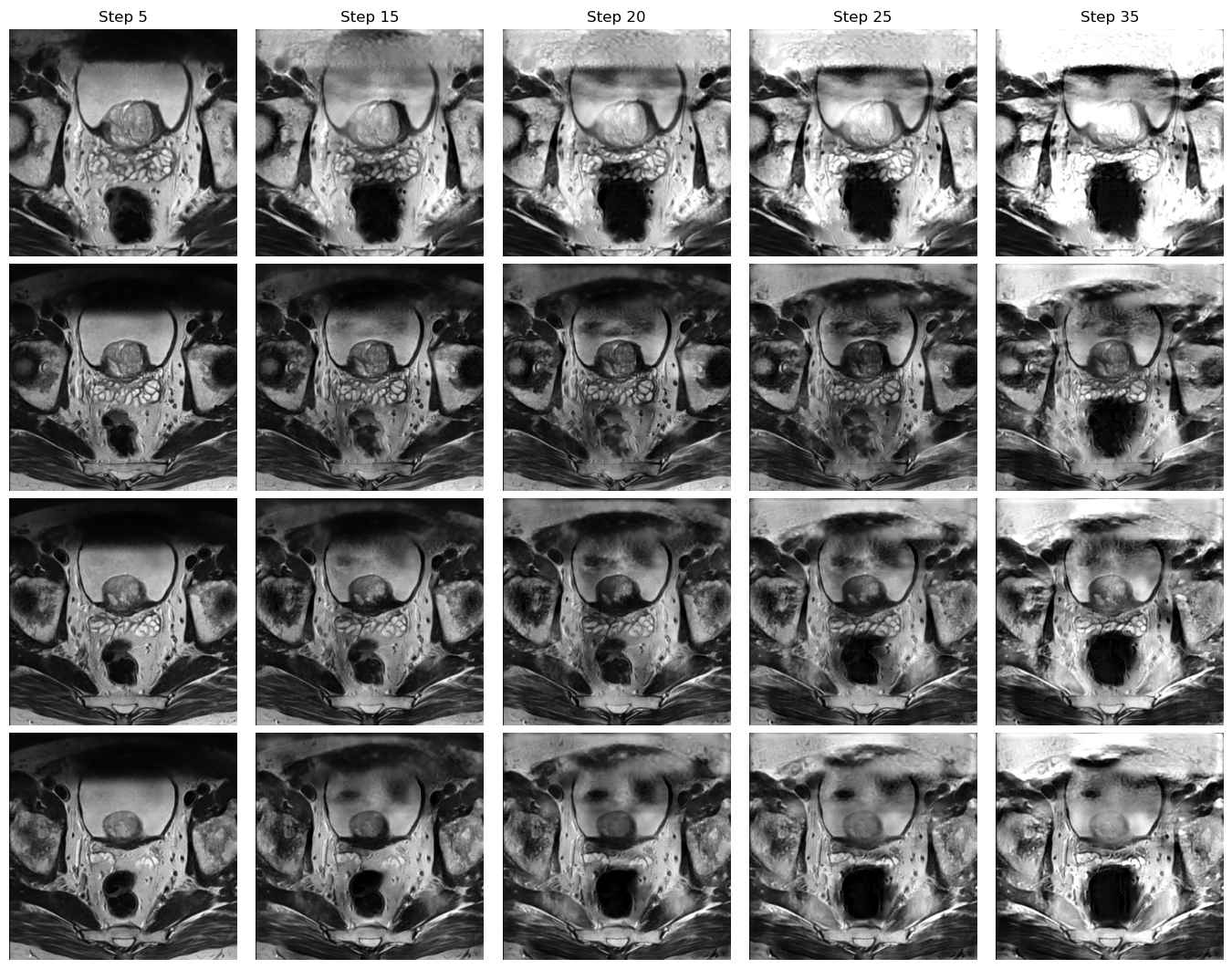}
        \caption{Ex. 1: Domain F $\rightarrow$ Domain D}
        \label{fig:figureHK2UCL_1}
    \end{subfigure}
    \hspace{0.02\textwidth}
    \begin{subfigure}[b]{0.45\textwidth}
        \centering
        \includegraphics[width=\textwidth]{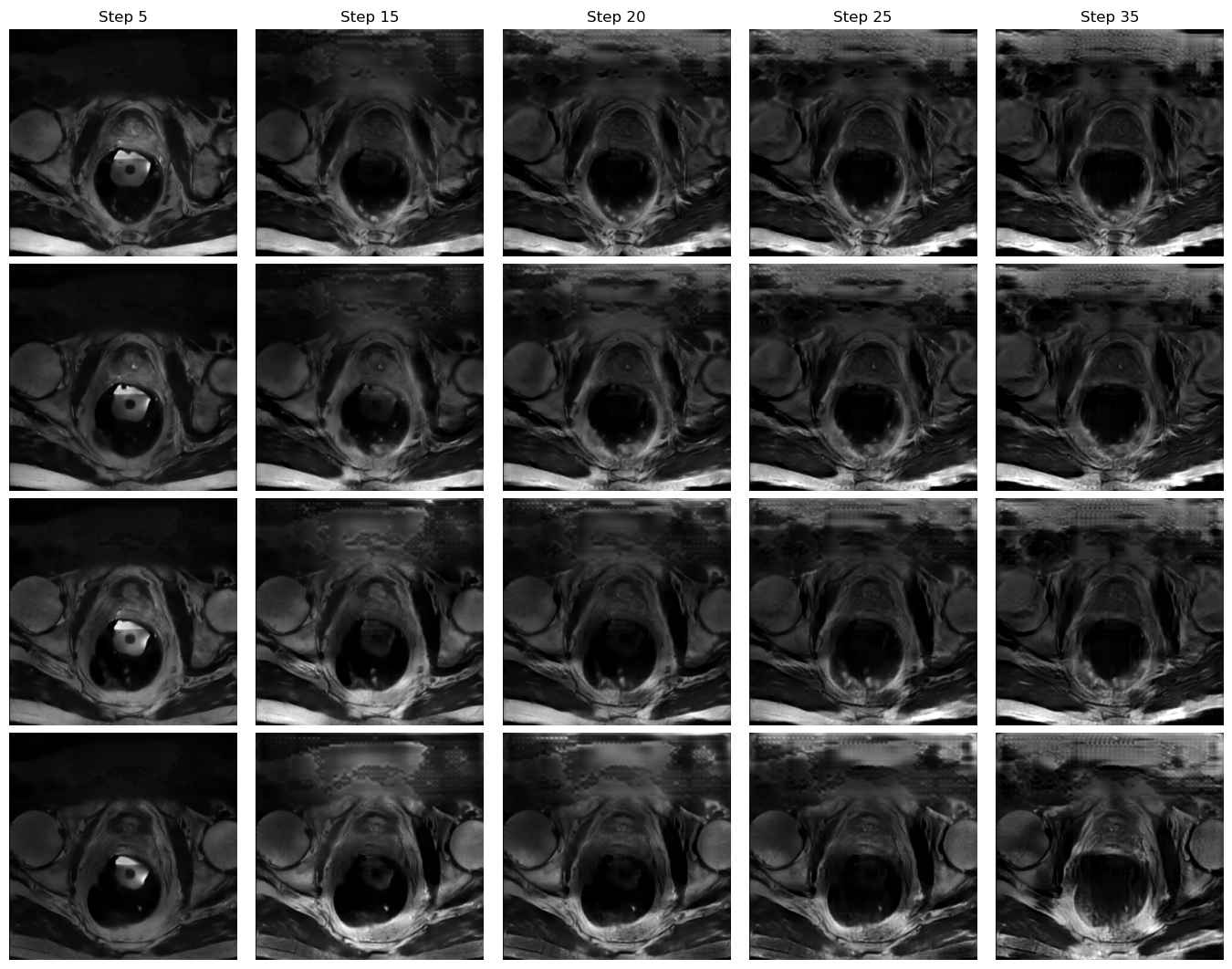}
        \caption{Ex. 2: Domain F $\rightarrow$ Domain D}
        \label{fig:figureHK2UCL_2}
    \end{subfigure}

    \begin{subfigure}[b]{0.45\textwidth}
        \centering
        \includegraphics[width=\textwidth]{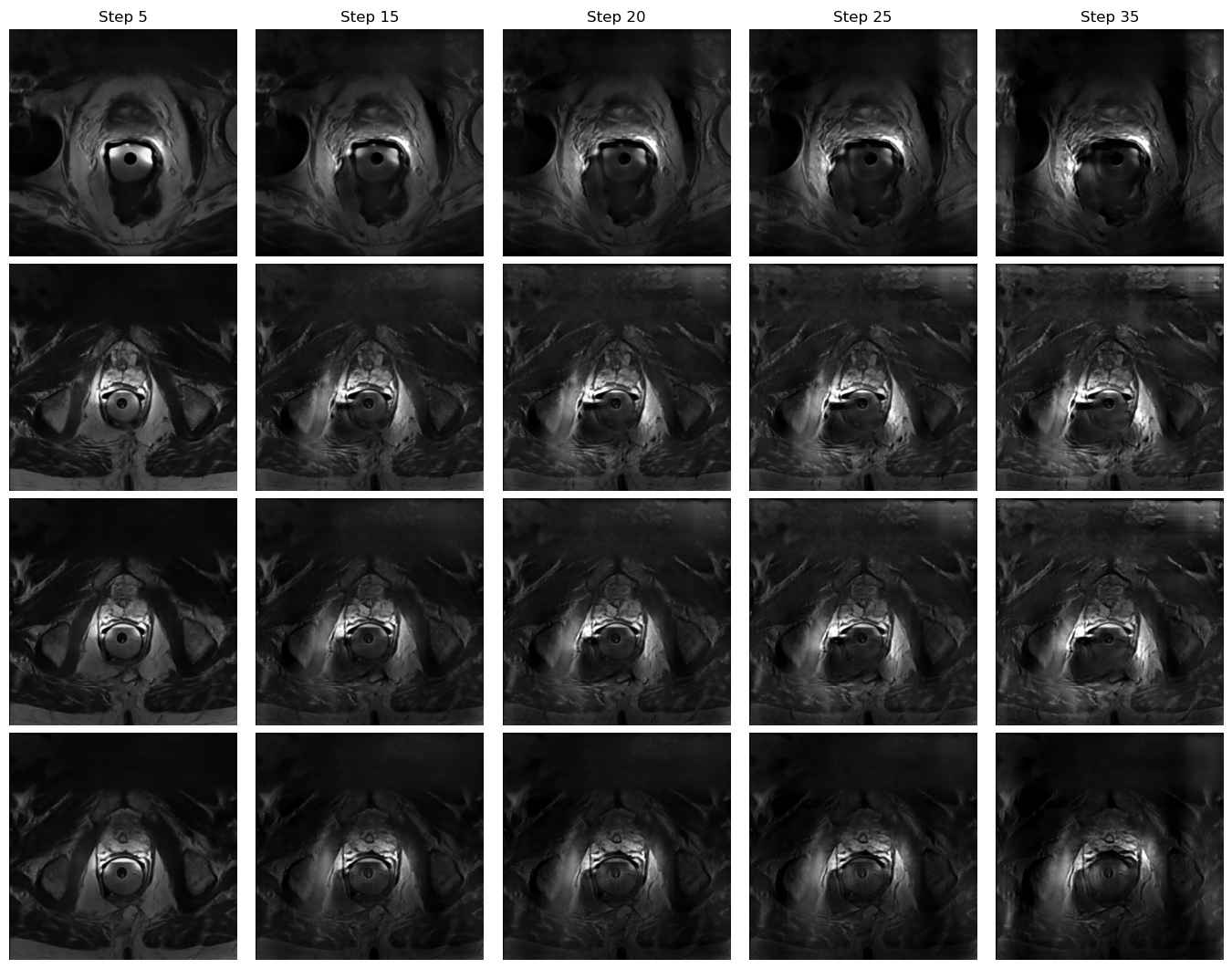}
        \caption{Ex. 1: Domain F $\rightarrow$ Domain E}
        \label{fig:figureHK2BIDMC_1}
    \end{subfigure}
    \hspace{0.02\textwidth}
    \begin{subfigure}[b]{0.45\textwidth}
        \centering
        \includegraphics[width=\textwidth]{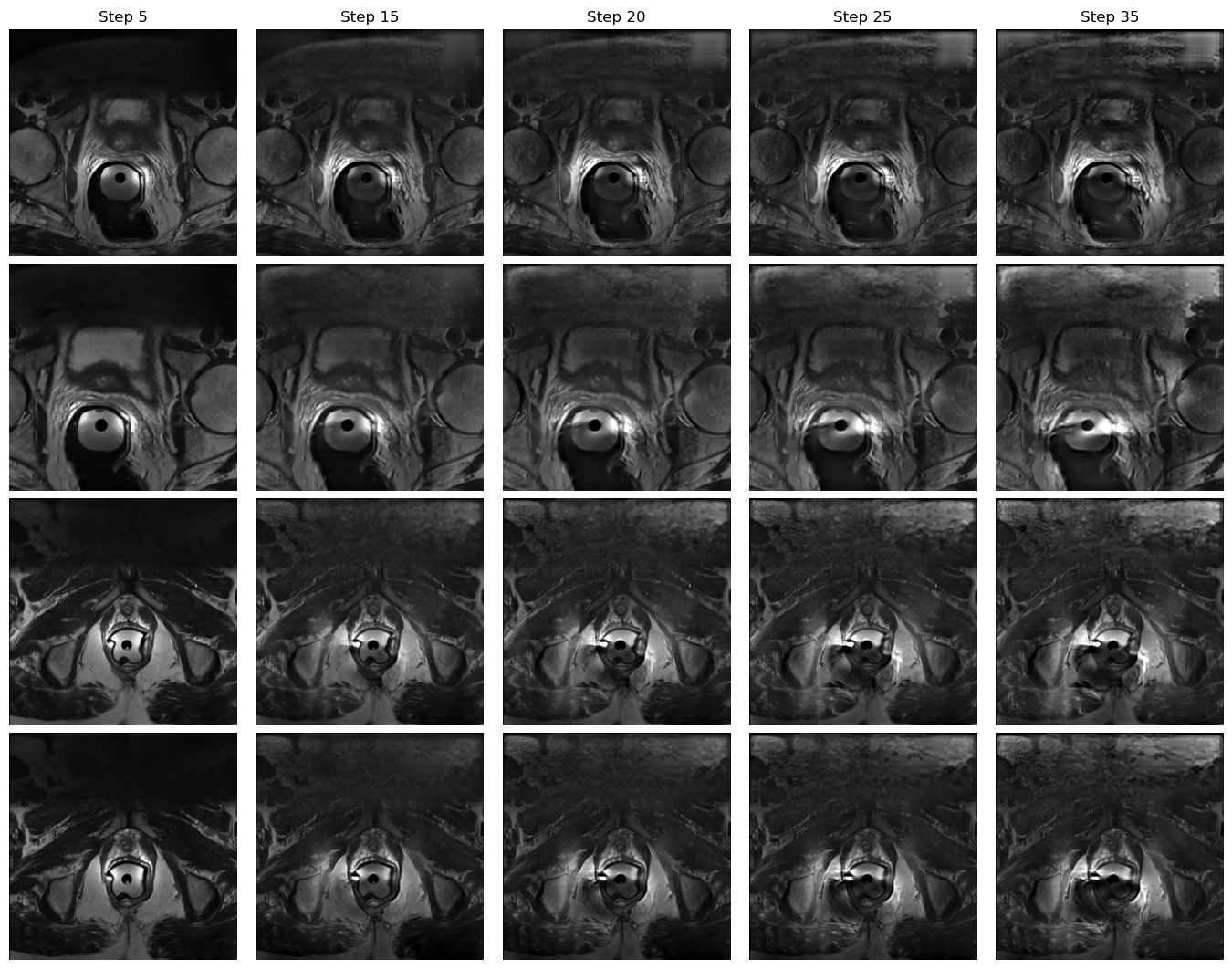}
        \caption{Ex. 2: Domain F $\rightarrow$ Domain E}
        \label{fig:figureHK2BIDMC_2}
    \end{subfigure}
    
    \caption{Examples of translation from Domain F to Domains D and E using the proposed method on the prostate dataset.}
\end{figure*}


\end{document}